\definecolor{cblue}{rgb}{0,0,0}
\definecolor{cmagenta}{rgb}{0,0,0}
\definecolor{navy}{rgb}{0,0,.5}
\definecolor{cmaroon}{rgb}{0.7,0.1,0.2}
\pgfplotsset{compat=1.14}
\def\cleartheorem#1{%
    \expandafter\let\csname#1\endcsname\relax
    \expandafter\let\csname c@#1\endcsname\relax
}
\newtheorem{theorem}{Theorem}
\newtheorem*{theorem*}{Theorem}
\newtheorem{lemma}[theorem]{Lemma}
\crefname{lemma}{Lemma}{Lemmas}
\newtheorem{remark}[theorem]{Remark}
\newtheorem{definition}[theorem]{Definition}
\crefname{definition}{Definition}{Definitions}
\newtheorem{fact}[theorem]{Fact}
\newtheorem{assumption}{Assumption} %
\crefname{assumption}{Assumption}{Assumptions}
\newcommand{\R}{\mathbb{R}}
\renewcommand{\O}{\mathcal{O}}
\newcommand{\N}{\mathcal{N}}
\newcommand{\tO}{\widetilde{\O}}
\newcommand{\E}{\mathcal{E}}
\newcommand{\eps}{\varepsilon}
\newcommand{\poly}{\mathrm{poly}}
\newenvironment{custproof}[1][]{%
  \renewcommand{\proofname}{Proof\if #1\empty \else \ (#1) \fi}\proof}{\endproof}
\newcommand{\EXP}[2][]{
    \ifthenelse{\equal{#1}{}}
    {\mathbb{E}\left[#2\right]}
    {\mathop{\mathbb{E}}_{#1}\left[#2\right]}
}
\newcommand{\PRO}[2][]{
    \ifthenelse{\equal{#1}{}}
    {\mathrm{Pr}\left[#2\right]}
    {\mathop{\mathrm{Pr}}_{#1}\left[#2\right]}
}
\newcommand{\parens}[1]{\left(#1\right)}
\newcommand{\braces}[1]{\left\{#1\right\}}
\newcommand{\logp}[1]{\log\left( #1 \right)}
\newcommand{\constOneStep}{c_{\mathrm{L}}}
\newcommand{\cCommon}{c_0}
\newcommand{\tcCommon}{\widetilde{c}_0}
\newcommand{\constBiasVal}{1}
\newcommand{\constBias}{\ifthenelse{\equal{\constBiasVal}{1}}{}{\constBiasVal}}
\newcommand{\constBiasInv}{\ifthenelse{\equal{\constBiasVal}{1}}{}{\frac{1}{\constBiasVal}}}
\newcommand{\constBiasValInv}{\ifthenelse{\equal{\constBiasVal}{1}}{1}{\frac{1}{\constBiasVal}}}
\newcommand{\constBiasSq}{\ifthenelse{\equal{\constBiasVal}{1}}{}{\constBiasVal^2}}
\newcommand{\cbadone}{c_{\mathrm{B}1}}
\newcommand{\cbadtwo}{c_{\mathrm{B}2}}
\newcommand{\Cbadone}{\mathrm{B}_1}
\newcommand{\bias}{\mathrm{bias}_t}
\newcommand{\F}{\mathcal{F}}
\newcommand{\Econd}[2]{\EXP{#2 \mid \F_{#1}}}
\newcommand{\Et}[1]{\Econd{t-1}{#1}}
\newcommand{\1}[1]{\mathbbm{1}{\left\{ #1 \right\}}}
\newcommand{\abs}[1]{\left| #1 \right|}
\newcommand{\norm}[1]{\left\lVert#1\right\rVert}
\newcommand{\normSq}[1]{\norm{#1}^2}
\newcommand{\innerProd}[2]{\left\langle #1, #2 \right\rangle}
\newcommand{\ceil}[1]{\left\lceil #1 \right\rceil}
\newcommand{\floor}[1]{\left\lfloor #1 \right\rfloor}
\newcounter{initialIterateT}
\newcounter{initialIterateTMinusOne}
  \numexpr\value{initialIterateT}-1\relax%
\newcommand{\initialIterateTime}{\the\value{initialIterateT}}
\newcommand{\initialIterateTimeMinusOne}{\the\value{initialIterateTMinusOne}}
\newcommand{\w}{\mathbf{w}}
\newcommand{\x}{\mathbf{x}}
\newcommand{\s}{\mathbf{s}}
\newcommand{\0}{\mathbf{0}}
\newcommand{\wzero}{\w_{\initialIterateTime}}
\newcommand{\fat}[1]{F(#1)}
\newcommand{\f}[1]{\fat{\w_{#1}}}
\newcommand{\fstar}{F^*}
\newcommand{\fzero}{\f{1}}
\newcommand{\ft}{\f{t}}
\newcommand{\ftplus}{\f{t+1}}
\newcommand{\sigmaZero}{\sigma_0}
\newcommand{\sigmaOne}{\sigma_1}
\newcommand{\Lzero}{L_0}
\newcommand{\Lone}{L_1}
\newcommand{\smoothBoth}{(\Lzero, \Lone)}
\newcommand{\gradat}[1]{\nabla F(#1)}
\newcommand{\grad}[1]{\gradat{\w_{#1}}}
\newcommand{\gradt}{\grad{t}}
\newcommand{\gradnorm}[1]{\norm{\grad{#1}}}
\newcommand{\gradnormat}[1]{\norm{\gradat{#1}}}
\newcommand{\gradtnorm}{\gradnorm{t}}
\newcommand{\gradzeronorm}{\gradnorm{\initialIterateTime}}
\newcommand{\gradsq}[1]{\normSq{\grad{#1}}}
\newcommand{\gradtsq}{\gradsq{t}}
\newcommand{\gradsqat}[1]{\normSq{\gradat{#1}}}
\newcommand{\gradzerosq}{\gradsq{\initialIterateTime}}
\newcommand{\hessat}[1]{\nabla^2 F(#1)}
\newcommand{\multnoiseat}[1]{\xi_{\textrm{mult}\ifthenelse{\equal{#1}{}}{}{,#1}}}
\newcommand{\multnoiset}{\multnoiseat{t}}
\newcommand{\multnoise}{\multnoiseat{}}
\newcommand{\addnoiseat}[1]{\xi_{\textrm{add}\ifthenelse{\equal{#1}{}}{}{,#1}}}
\newcommand{\addnoise}{\addnoiseat{}}
\newcommand{\sgradat}[1]{\boldsymbol{g}(#1)}
\newcommand{\sgrad}[1]{\boldsymbol{g}_{#1}}
\newcommand{\sgradt}{\sgrad{t}}
\newcommand{\sgradzero}{\sgrad{\initialIterateTime}}
\newcommand{\sgradnorm}[1]{\norm{\sgrad{#1}}}
\newcommand{\sgradtnorm}{\sgradnorm{t}}
\newcommand{\sgradsq}[1]{\normSq{\sgrad{#1}}}
\newcommand{\sgradsqat}[1]{\normSq{\sgradat{#1}}}
\newcommand{\sgradtsq}{\sgradsq{t}}
\newcommand{\sgradatj}[1]{\boldsymbol{g}_j(#1)}
\newcommand{\sgradatjp}[1]{\boldsymbol{g}_{j'}(#1)}
\newcommand{\tsgradat}[1]{\widetilde{\boldsymbol{g}}(#1)}
\newcommand{\m}{\boldsymbol{m}}
\newcommand{\mt}{\m_t}
\renewcommand{\u}{\boldsymbol{u}}
\newcommand{\ut}{\u_t}
\newcommand{\tgradnorm}[1]{\|\widetilde{\nabla}_{#1}\|}
\newcommand{\tgradtnorm}{\tgradnorm{t}}
\newcommand{\tgradsq}[1]{\tgradnorm{#1}^2}
\newcommand{\tgradtsq}{\tgradsq{t}}
\newcommand{\etaat}[1]{\eta_{#1}}
\newcommand{\etat}{\etaat{t}}
\newcommand{\tetaat}[1]{\tilde{\eta}_{#1}}
\newcommand{\tetat}{\tetaat{t}}
\newcommand{\tbt}{\widetilde{b}_t}
\newcommand{\tbtp}{\widetilde{b}_{t'}}
\newcommand{\tetalb}{\underline{\tetaat{\stopat{T+1}}}}
\newcommand{\stopatdel}[2]{\tau_{#1}(#2)}
\newcommand{\stopat}[1]{\stopatdel{#1}{\delta}}
\newcommand{\stopt}{\stopat{t}}
\newcommand{\stopthm}{\tau_{T+1}}
\newcommand{\stopetaat}[1]{\tau_{\mathrm{step}}\ifthenelse{\equal{#1}{}}{}{(#1)}}
\newcommand{\varianceEvent}{\E_{\sigmaZero}}
\newcommand{\ncEvent}{\E_{\text{nc}}}
\newcommand{\xClip}{x_{\mathrm{clip}}}
\newcommand{\clipStepMult}{\lambda_{\mathrm{clip}}}
\newcommand{\tN}{\widetilde{N}}
\newcommand{\tu}{\widetilde{u}}
\newcommand{\Sgat}[1]{S_{\mathrm{good}}(#1)}
\newcommand{\Sg}{\Sgat{T}}
\newcommand{\Sgstopped}{\Sgat{\stopat{T+1}}}
\newcommand{\Sgthm}{\Sgat{\stopthm}}
\newcommand{\Scomp}[1]{S_{[#1]}^{\mathrm{comp}}}
\newcommand{\Scat}[1]{S^{\mathrm{comp}}(#1)}
\newcommand{\Scstopped}{\Scat{\stopat{T+1}}}
\newcommand{\Scthm}{\Scat{\stopthm}}
\newcommand{\tSgat}[1]{\widetilde{S}(#1)}
\newcommand{\tSgstopped}{\tSgat{\stopat{T+1}}}
\newcommand{\tSgthm}{\tSgat{\stopthm}}
\newcommand{\Eg}{E_{\mathrm{good}}}
\newcommand{\Eb}{E_{\mathrm{bad}}}
\newcommand{\ttb}[1]{\widetilde{\tau}_{[#1]}}
\newcommand{\tb}[1]{\tau_{[#1]}^{\mathrm{bad}}}
\newcommand{\ttg}[1]{\widetilde{\tau}_{[#1]}^{\mathrm{good}}}
\newcommand{\tg}[1]{\tau_{[#1]}^{\mathrm{good}}}
\newcommand{\ncomp}{n_{\mathrm{comp}}}
\newcommand{\comp}[1]{\mathrm{comp}(#1)}
\newcommand{\compstop}{\comp{\stopat{T+1}}}
\newcommand{\compstopthm}{\comp{\stopthm}}
\newcommand{\compstopcor}{\comp{T}}
\newcommand*\samethanks[1][\value{footnote}]{\footnotemark[#1]}
\author{Matthew Faw\thanks{Equal contribution} \thanks{Department of Electrical and Computer Engineering, University of Texas at Austin.\newline{\tt\{\href{mailto:matthewfaw@utexas.edu}{matthewfaw},\href{mailto:litu.rout@utexas.edu}{litu.rout},\href{mailto:constantine@utexas.edu}{constantine},\href{mailto:sanjay.shakkottai@utexas.edu}{sanjay.shakkottai}\}@utexas.edu}}
\and
Litu Rout\samethanks[1] \samethanks[2]
\and
Constantine Caramanis\samethanks[2]
\and
Sanjay Shakkottai\samethanks[2]
}
\title{Beyond Uniform Smoothness: A Stopped Analysis of Adaptive SGD}
\date{}
\let\original@algocf@latexcaption\algocf@latexcaption
\long\def\algocf@latexcaption#1[#2]{%
  \@ifundefined{NR@gettitle}{%
    \def\@currentlabelname{#2}%
  }{%
    \NR@gettitle{#2}%
  }%
  \original@algocf@latexcaption{#1}[{#2}]%
}
\begin{document}
\maketitle

\begin{abstract}
This work considers the problem of finding a first-order stationary point of a non-convex function with potentially unbounded smoothness constant using a stochastic gradient oracle. We focus on the class of $(L_0,L_1)$-smooth functions proposed by Zhang et al.\ (ICLR'20). Empirical evidence suggests that these functions more closely captures practical machine learning problems as compared to the pervasive $L_0$-smoothness. This class is rich enough to include highly non-smooth functions, such as $\exp(L_1 x)$ which is $(0,\mathcal{O}(L_1))$-smooth. Despite the richness,  an emerging line of works achieves the $\widetilde{\mathcal{O}}(\nicefrac{1}{\sqrt{T}})$ rate of convergence when the noise of the stochastic gradients is deterministically and uniformly bounded. This noise restriction is not required in the $L_0$-smooth setting, and in many practical settings is either not satisfied, or results in weaker convergence rates with respect to the noise scaling of the convergence rate.

We develop a technique that allows us to prove $\mathcal{O}(\nicefrac{\mathrm{poly}\log(T)}{\sqrt{T}})$ convergence rates for $(L_0,L_1)$-smooth functions without assuming uniform bounds on the noise support. The key innovation behind our results is a carefully constructed stopping time $\tau$ which is simultaneously ``large'' on average, yet also allows us to treat the adaptive step sizes before $\tau$ as (roughly) independent of the gradients. For general $(L_0,L_1)$-smooth functions, our analysis requires the mild restriction that the multiplicative noise parameter $\sigma_1 < 1$. For a broad subclass of $(L_0,L_1)$-smooth functions, our convergence rate continues to hold when $\sigma_1 \geq 1$. By contrast, we prove that many algorithms analyzed by prior works on $(L_0,L_1)$-smooth optimization diverge with constant probability even for smooth and strongly-convex functions when $\sigma_1 > 1$.
\end{abstract}

\section{Introduction}
\label{sec:intro}
A fundamental problem in stochastic optimization is to characterize the convergence behavior of the Stochastic Gradient Descent algorithm:
\begin{align}\label{eq:sgd}\tag{SGD}
    \w_{t+1} = \w_t - \etat \sgradat{\w_t},
\end{align}
where $\etat$ is the step-size schedule, and $\sgradat{\w_t}$ is a stochastic gradient at iterate $\w_t$. Starting from \citep{RM51}, a long line of work has established conditions under which \eqref{eq:sgd} converges to a stationary point. A standard setting since \citep{PT73} used for this purpose has the following properties: $(a)$ The objective function $\fat{\cdot}$ is $\Lzero$-smooth, i.e., has $\Lzero$-Lipschitz gradients; $(b)$ $\fat{\cdot}$ has a finite lower bound, i.e., $\inf_{\w\in\R^d}\fat{\w} \geq \fstar > -\infty$; $(c)$ For each $\w\in\R^d$, the stochastic gradient $\sgradat{\w}$ is unbiased and has variance scaling at most affinely with $\gradsqat{\w}$, i.e.,
\begin{align}\label{eq:affineVarianceIntro}\tag{Affine-var}
    \EXP{\sgradat{\w}} = \gradat{\w}
    \quad\text{and}\quad
    \EXP{\normSq{\sgradat{\w} - \gradat{\w}}} \leq \sigmaZero^2 + \sigmaOne^2 \gradsq{}.
\end{align}

Much of the literature on stochastic optimization, e.g., \citep{NY83,GL13,B15,FSSSSW19},
focuses on a special case of \eqref{eq:affineVarianceIntro} where the variance is uniformly upper-bounded ($\sigmaOne = 0$):
\begin{align}\label{eq:boundedVarianceIntro}\tag{Bounded-var}
    \EXP{\sgradat{\w}} = \gradat{\w}
    \quad\text{and}\quad
    \sup_{\w\in\R^d}\EXP{\normSq{\sgradat{\w} - \gradat{\w}}} \leq \sigmaZero^2.
\end{align}
Rates of convergence to a first-order stationary point in these settings are now well-understood. Under \eqref{eq:boundedVarianceIntro} regime, \citet{GL13} prove an $\O(\nicefrac{\sqrt{\sigmaZero^2 \Lzero (\fzero - \fstar)}}{\sqrt{T}})$ rate of convergence with a fixed step-size schedule. Later, \citet{ACDFSW22} show that this rate is optimal up to constant factors. Further, as noted by \citet{BCN18}, a minor modification to this step-size gives nearly the same rate in the more general \eqref{eq:affineVarianceIntro} setting, i.e., $\sigmaOne > 0$. This rate is obtained by making trivial changes to the proof technique of \citet{GL13}.

One crucial assumption in these lines of work is $\Lzero$-smoothness, i.e., $\Lzero$-Lipschitz gradients of the loss landscape. However, recent works~\citep{ZJFW20,ZHSJ20} provide empirical evidence that this assumption is often not satisfied in practical machine learning problems. For instance, in large-scale language modeling including BERT~\citep{devlin2018bert} and other variants~\citep{radford2021learning,caron2021emerging,liu2023pre}, the loss landscape of transformer architectures either does not satisfy the $\Lzero$-smoothness assumption, or the value of $\Lzero$ becomes so large that it produces a significantly weaker rate of convergence~\citep{ZJFW20,ZHSJ20}. 

Aiming to address these issues, there has been a recent surge of interest in relaxing the standard $\Lzero$-smoothness assumption and characterizing the rate of convergence. One appealing relaxation proposed by \citet{ZHSJ20} is that of $\smoothBoth$-smoothness\footnote{For convenience, we state this assumption in terms of a bound on the hessian of $F$. The requirement that the hessian exists everywhere can be relaxed to a condition on the gradients \citep{ZJFW20}. This relaxation is the one we use for our main results, see \cref{assump:smooth}.}:
\begin{align}\label{eq:generalizedSmoothIntro}\tag{Generalized-smooth}
    \norm{\hessat{\w}} \leq \Lzero + \Lone\norm{\gradat{\w}}.
\end{align}
While every $\Lzero$-smooth function is also $(\Lzero, 0)$-smooth, this relaxation admits functions that grow significantly faster than a quadratic function, e.g., $\fat{w} = w^d$ is $(\nicefrac{d(d-1)}{\Lone^{d-2}}, (d-1)\Lone)$-smooth for any $\Lone > 0$, and $F(w) = \exp(\Lone w)$ is $(0,\Lone)$-smooth. With regards to convergence, recent works \citep{ZHSJ20,CLOZZ22} establish an $\O(\nicefrac{1}{\sqrt{T}})$ rate in the $\smoothBoth$-smooth setting, as long as the noise of the stochastic gradients has \emph{uniformly-bounded support}, i.e.,
\begin{align}\label{eq:boundedSuppIntro}\tag{Bounded-supp}
    \EXP{\sgradat{\w}} = \gradat{\w}
    \quad\text{and}\quad
    \sup_{\w\in\R^d}\norm{\sgradat{\w} - \gradat{\w}}^2 \overset{a.s.}{\leq} B^2.
\end{align}

The algorithms achieving the rate $\O(\nicefrac{1}{\sqrt{T}})$ in this setting use adaptive step size schedules -- i.e., variants of \eqref{eq:sgd} with $\etat$ chosen as a function of $\{\sgrad{s}\}_{s\in[t]}$. \eqref{eq:boundedSuppIntro} is a common assumption in these analyses~\citep{ZHSJ20,ZJFW20,CLOZZ22}. It is typically introduced to reason about the direction of $-\etat \sgradt$ relative to the true descent direction. In the analysis of standard SGD, the fixed step-size schedule does not depend upon the stochastic gradients, so $\EXP{-\etat \sgradt}=-\etat\EXP{\gradt}$. This, however, is not the case for adaptive methods, since $\etat$ depends on $\sgradt$. Thus, it is understandable why prior works~\citep{ZHSJ20,ZJFW20} assume \eqref{eq:boundedSuppIntro} to simplify this issue. Further, \eqref{eq:boundedSuppIntro} is natural in settings where the stochastic gradients satisfy
    $\sgradat{\w} = \gradat{\w} + \xi,$
where the random vector $\xi$ has bounded support (or bounded second moment in the related setting of \eqref{eq:boundedVarianceIntro}).

In many real-world scenarios, the \eqref{eq:boundedSuppIntro} assumption does not hold. For instance, when running SGD in standard least-squares regression settings, the stochastic gradients have multiplicative noise, as noted in~\citep{DFB17,FB17,JKKNS18,JT19}. 
Similar noise assumptions have also been considered, e.g., in convergence of stochastic proximal gradient methods \citep{RVV20}, Hilbert-valued stochastic subgradient methods \citep{BRS07}, and adaptive gradient methods \citep{FTCMSW22}.
Moreover, multiplicative noise naturally arises in machine learning problems with (additive or multiplicative) feature noise~\citep{LW11,H86,CRSC06}.
Thus, we believe that characterizing $\smoothBoth$-smooth functions under \eqref{eq:affineVarianceIntro} is an important step in extending the theory of non-convex stochastic optimization beyond the standard $\Lzero$-smooth setting.

\subsection{Contributions}
A major challenge in the analysis of adaptive stochastic gradient descent is the correlation between the stochastic gradients and the step-size. Here, we develop a technique to simplify this challenge. Our key innovation is a \emph{recursively-defined stopping time} which satisfies two crucial properties: (i) before the stopping time is reached, the step sizes behave roughly independently of the gradients, and (ii) on average, the stopping time is at least a constant fraction of the time horizon. As a consequence, instead of analyzing over the entire time horizon, we conduct the analysis over this sub-interval over which we exploit this convenient almost-independent property. This tool allows us to prove the first $\tO(\nicefrac{1}{\sqrt{T}})$ rate of convergence for $\smoothBoth$-smooth functions beyond the \eqref{eq:boundedSuppIntro} setting. Our main contributions are three-fold:

(a) \textbf{Convergence for $\smoothBoth$-smoothness when $\sigmaOne <1$.} We show in \cref{sec:conv-adagrad-norm} that AdaGrad-Norm converges at a rate $\tO(\nicefrac{1}{\sqrt{T}})$ when the stochastic gradient oracle satisfies \eqref{eq:affineVarianceIntro} with $\sigmaZero\geq 0$ and $\sigmaOne \in [0,1)$. This is the first convergence rate for any algorithm even under \eqref{eq:boundedVarianceIntro} (i.e., $\sigmaOne = 0$) for general $\smoothBoth$-smooth optimization. Note that the scaling of this bound with $T$ matches (up to poly-logarithmic factors) the best-known rate for $\Lzero$-smooth functions -- with a minor caveat that $\sigmaOne < 1$ is not needed in the $\Lzero$-smooth setting. Also, we show that the rate improves to $\tO(\nicefrac{1}{T})$ in the ``small variance'' regime when $\sigmaZero,\sigmaOne \to 0$ even without tuning the step-size.

(b) \textbf{Convergence for all $\sigmaOne$.} We establish a sufficient condition under which AdaGrad-Norm converges at a rate $\tO(\nicefrac{1}{\sqrt{T}})$ when $\sigmaOne \geq 1$, see \cref{sec:key-ideas}. This condition allows us to analyze a broad subset of $\smoothBoth$-smooth functions that includes all $\Lzero$-smooth functions as well as fixed-degree polynomials without any restrictions on $\sigmaOne$. This simultaneously generalizes the result and simplifies a key proof technique of \citet{FTCMSW22} for $\Lzero$-smooth functions. 

(c) \textbf{Negative results for known algorithms.} We prove a set of negative results in \cref{sec:challengesMultNoise} for most algorithms analyzed under $\smoothBoth$-smoothness and \eqref{eq:boundedSuppIntro}. We construct an oracle for Clipped and Normalized SGD~\citep{ZHSJ20,ZJFW20} and Sign SGD with Momentum \citep{CLOZZ22} that leads to failure with constant probability in a wide parameter regime.   
We also prove that AdaGrad-Norm can diverge with constant probability if the step-size is not carefully tuned in the ``large variance'' regime for $\smoothBoth$-smooth functions. By contrast, no parameter tuning is needed in the $\Lzero$-smooth setting in this noise regime.

\section{Related Works}
\label{sec:rel-work}

\textbf{Stochastic gradient descent.} \eqref{eq:sgd} has been well-studied for many decades \citep{RM51}. 
\citet{PT73} proved almost-sure convergence to a first-order stationary point of \eqref{eq:sgd} for non-convex and $\Lzero$-smooth functions with $\fat{\w} \geq \fstar$ with stochastic gradient oracle satisfying (a slightly weaker condition than) \eqref{eq:affineVarianceIntro}. \citet{BT00} extended the result to a setting where $\fat{\w}$ does not have a uniform lower-bound. \citet{GL13} proved that \eqref{eq:sgd} with step-size 
$\etat = \eta = \min\braces{1/\Lzero, \sqrt{\nicefrac{2(\fzero - \fstar)}{(\Lzero \sigmaZero^2 T)}}}$
achieves a convergence rate to a first-order stationary point of $\O\parens{\sqrt{\nicefrac{\Lzero \sigmaZero^2 (\fzero - \fstar)}{T}}}$, assuming $\Lzero$-smoothness and \eqref{eq:boundedVarianceIntro}. \citet{DS20} proved that this is the optimal rate for \eqref{eq:sgd} without further assumptions. Recently, \citet{ACDFSW22} proved that the convergence rate of \citep{GL13} is optimal among \emph{all first-order} methods, not just SGD.

\textbf{AdaGrad step-sizes.} This paper builds on a long line of work studying (variants of) the AdaGrad step size schedule introduced by \citet{DHS11,MS10}. In particular, we focus on the so-called AdaGrad-Norm step-size, which was introduced in \citet{SM10}. While these works focused on the setting of online convex optimization, \citet{WWB20} demonstrated that AdaGrad-Norm converges at a rate $\tO(\nicefrac{1}{\sqrt{T}})$ in the context of $\Lzero$-smoothness,  \eqref{eq:boundedVarianceIntro}, and $M$-Lipschitzness, i.e., $\sup_{\w\in\R^d} \gradnormat{\w} \leq M$.
Around the same time, \citet{LO19} proved that AdaGrad-Norm achieves an $\tO(\nicefrac{1}{\sqrt{T}})$ rate without $M$-Lipschitzness. But their analysis needs tuning of the step-size with respect to the smoothness constant $\Lzero$. Later, \citet{KLC22} proved that AdaGrad-Norm converges at rate $\tO(\nicefrac{1}{\sqrt{T}})$ without tuning the step-size (as in \citet{LO19}) or assuming $M$-Lipschitz objective (as in \citet{WWB20}). However, their analysis holds only when the noise of the stochastic gradients is uniformly sub-Gaussian. In a concurrent work, \citet{FTCMSW22} proved that AdaGrad-Norm achieves $\tO(\nicefrac{1}{\sqrt{T}})$ in a setting identical to standard SGD (i.e., $\Lzero$-smooth objective with stochastic gradients satisfying \eqref{eq:affineVarianceIntro}), and without tuning the step-size with respect to $\Lzero$, $\sigmaZero$, or $\sigmaOne$. This work thus established that AdaGrad-Norm is parameter-free and enjoys nearly the same convergence rate as SGD in the standard non-convex setting.

\textbf{$\smoothBoth$-smoothness in the \eqref{eq:boundedSuppIntro} regime.}
Recent work by \citet{ZHSJ20} argued that the $\Lzero$-smoothness assumption is not realistic for many practical machine learning tasks, e.g., large-scale natural language processing using transformer architectures. Instead, they demonstrated that $\smoothBoth$-smooth functions \eqref{eq:generalizedSmoothIntro} better capture the loss landscape, and proved that the gradient clipping algorithm converges at a rate $\O(\nicefrac{1}{\sqrt{T}})$ in the \eqref{eq:boundedSuppIntro} regime. \citet{ZJFW20} later proved convergences for a generalized class of gradient clipping algorithms. They used a slightly weaker definition of $\smoothBoth$-smoothness, which we use in \cref{assump:smooth}. Very recently, \citet{CLOZZ22} considered a ``coordinate-wise'' generalization of $\smoothBoth$-smoothness, and proved that a ``generalized SignSGD'' algorithm converges at a rate $\tO(\nicefrac{1}{\sqrt{T}})$. By contrast, they proved that gradient descent with fixed step-sizes must scale linearly in $M \Lone$, where $M = \sup\braces{\gradnormat{\w} : \fat{\w} \leq \fat{\wzero}}$ is the largest gradient in the sublevel set $\fat{\w} \leq \fat{\wzero}$. Interestingly, this line of work establishes that adaptive step-size schedules can avoid this dependence on $M$.

\section{Problem Setting}
\label{sec:prob-set}
We are interested in finding a first-order stationary point of a non-convex function, given access to a stochastic gradient oracle, using \eqref{eq:sgd}. 
For compactness, let $\sgradt := \sgradat{\w_t}$. Our objective function $\fat{\w}$ satisfies the following:
\begin{assumption}[Lower-boundedness]\label{assump:lowerBounded}
    There exists an $\fstar > -\infty$ such that $\inf_{\w\in\R^d} \fat{\w} \geq \fstar.$
\end{assumption}
\begin{assumption}[$\smoothBoth$-smooth objective]\label{assump:smooth}
The objective function $\fat{\w}$ is $\smoothBoth$-smooth, i.e., for every $\w,\w' \in\R^d$ such that $\norm{\w - \w'} \leq \nicefrac{1}{\Lone}$
\begin{align*}
    \norm{\gradat{\w} - \gradat{\w'}} \leq (\Lzero + \Lone \norm{\gradat{\w'}})\norm{\w - \w'}.
\end{align*}
\end{assumption}
We note that $\smoothBoth$-smoothness was originally defined in \citep{ZHSJ20} as a bound on the Hessian of $\fat{\cdot}$, as in \eqref{eq:generalizedSmoothIntro}. Following \citep[Remark 2.3]{ZJFW20}, we choose to adopt the alternative condition in \cref{assump:smooth} for two reasons. First, \cref{assump:smooth} is strictly weaker than $\Lzero$-smoothness, since $(\Lzero, 0)$-smoothness implies the gradients are $\Lzero$-Lipschitz. Second, whenever the objective is twice-differentiable, \cref{assump:smooth} implies \eqref{eq:generalizedSmoothIntro} (up to constant factors in the definitions of $\Lzero$ and $\Lone$):
\begin{restatable}{proposition}{restateSmoothnessPartialEquiv}\label{prop:smoothPartialEquiv}
    A function satisfying $\smoothBoth$-smoothness as per \eqref{eq:generalizedSmoothIntro} is also $(2\Lzero, (e-1)\Lone)$-smooth as per \cref{assump:smooth}. If $\fat{\cdot}$ is twice continuously differentiable and $\smoothBoth$-smooth as per \cref{assump:smooth}, then it is also $\smoothBoth$-smooth as per \eqref{eq:generalizedSmoothIntro}.
\end{restatable}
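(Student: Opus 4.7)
Both implications proceed by integrating the Hessian along the line segment $\gamma(t) := \w' + t(\w - \w')$ for $t \in [0,1]$; set $h := \norm{\w - \w'}$ and $g(t) := \gradnormat{\gamma(t)}$.

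\emph{From \eqref{eq:generalizedSmoothIntro} to \cref{assump:smooth}.} The plan is first to obtain a pointwise bound on $g(t)$ by a Gronwall argument, and then integrate the Hessian bound along $\gamma$. The chain rule together with the hypothesis $\norm{\hessat{\cdot}} \le \Lzero + \Lone \gradnormat{\cdot}$ gives $g'(t) \le h(\Lzero + \Lone g(t))$, and the standard integrating-factor argument applied to $g(t) + \Lzero/\Lone$ then yields $g(t) + \Lzero/\Lone \le (g(0) + \Lzero/\Lone)\, e^{\Lone h t}$ for $t \in [0,1]$. Next, the fundamental theorem of calculus and the Hessian bound give $\norm{\gradat{\w} - \gradat{\w'}} \le h \int_0^1 (\Lzero + \Lone g(t))\, dt$; substituting the Gronwall bound and computing the elementary integral collapse this to $(\Lzero + \Lone \gradnormat{\w'}) \cdot (e^{\Lone h} - 1)/\Lone$. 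The hypothesis $h \le 1/\Lone$ places $x := \Lone h$ in $[0,1]$, and because $x \mapsto e^x - 1$ is convex with endpoint values $0$ and $e-1$, the secant inequality $e^x - 1 \le (e-1)x$ holds on this interval. This delivers a bound with smoothness constants $((e-1)\Lzero, (e-1)\Lone)$, which is tighter than, and hence implies, the claimed $(2\Lzero, (e-1)\Lone)$-smoothness since $e - 1 < 2$.

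\emph{From \cref{assump:smooth} to \eqref{eq:generalizedSmoothIntro} under twice-continuous differentiability.} This direction is a standard derivative limit. For any $\w$ and unit vector $\v$, applying \cref{assump:smooth} to the pair $\w, \w + \delta \v$ with $\delta < 1/\Lone$ and dividing by $\delta$ yields $\norm{(\gradat{\w + \delta \v} - \gradat{\w})/\delta} \le \Lzero + \Lone \gradnormat{\w}$. Letting $\delta \to 0^+$ and taking the supremum over unit $\v$ recovers the operator-norm Hessian bound in \eqref{eq:generalizedSmoothIntro}.

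The main obstacle is the Gronwall step in the first direction: specifically, combining the integrating-factor solution with the $h \le 1/\Lone$ constraint so that the exponential factor can be traded for the $(e-1)$ constant via the convexity-based secant inequality. Once the bound on $g(t)$ is in hand, the rest of the argument is routine calculus, and the converse direction requires only a pointwise limit.
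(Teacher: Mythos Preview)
Your proposal is correct. For the second direction, your argument is essentially identical to the paper's: both pass to the directional derivative limit and take a supremum over unit directions. For the first direction, the paper does not actually give a proof---it simply cites \cite[Corollary~A.4]{ZJFW20}---so your explicit Gronwall-along-the-segment argument fills in what the paper outsources. That argument is standard and sound (the only minor technicality is the differentiability of $t\mapsto\gradnormat{\gamma(t)}$ at zeros of the gradient, which is handled by the usual upper-Dini-derivative or integral-form Gronwall workaround and does not affect the conclusion). As you note, your computation in fact yields the slightly sharper constants $((e-1)\Lzero,(e-1)\Lone)$, which implies the stated $(2\Lzero,(e-1)\Lone)$.
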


Let $\F_t$ be the sigma-algebra generated by the interaction between the algorithm and stochastic gradient oracle for $t$ rounds, i.e., $\F_t := \sigma\braces{\wzero, \sgradzero,\ldots,\w_t, \sgradt, \w_{t+1}}$. We impose the following conditions on the stochastic gradients:
\begin{assumption}[Unbiased gradients]\label{assump:unbiasedGrad}
The stochastic gradients satisfy
    $\Et{\sgradt} = \gradt.$
\end{assumption}

\begin{assumption}[Affine variance]\label{assump:affineVariance}
There exist constants $\sigmaZero, \sigmaOne \geq 0$ such that the variance of each stochastic gradient $\sgradt$ is bounded above as:
    $\Et{\normSq{\sgradt - \gradt}} \leq \sigmaZero^2 + \sigmaOne^2 \gradtsq.$
\end{assumption}
\cref{assump:unbiasedGrad,assump:affineVariance} imply the following bound on the stochastic gradients in terms of the true gradient:
\begin{equation}\label{eq:sgradBound}
    \Et{\sgradtsq} \leq \sigmaZero^2 + (1+\sigmaOne^2)\gradtsq.
\end{equation}
We are interested in studying algorithms which require as little hyper-parameter tuning as possible and, simultaneously, can handle potentially unbounded smoothness constant. To achieve this, we analyze AdaGrad-Norm \citep{SM10}, a step-size sequence $\etat$ for \eqref{eq:sgd} which, at each time $t$, depends on the current and past stochastic gradients $\braces{\sgrad{s}}_{s\in[t]}$:
\begin{equation}\tag{AG-Norm}\label{eq:alg}
    \etat = \frac{\eta}{b_t}, \quad \text{where} \quad b_t^2 = b_0^2 + \sum_{s=1}^t \sgradsq{s} = b_{t-1}^2 + \sgradtsq.
\end{equation}
As is increasingly common in the analysis of (variants of) \eqref{eq:sgd} with adaptive step-sizes \citep{WWB20,FTCMSW22,defossez2022a}, our analysis will rely on a ``decorrelated'' step-size $\tetat$. The key property of $\tetat$ is that it is independent of $\sgradt$ when conditioned on the filtration $\F_{t-1}$.
\begin{definition}[Decorrelated step-sizes]\label{def:stepSizeProxy}
For each step-size $\etat$ at time $t\geq 1$, the decorrelated step size $\tetat$ is defined to be 
$\tetaat{t} := \nicefrac{\eta}{\tbt},$
where $\tbt^2 := b_{t-1}^2 + \tgradtsq$, $b_0^2 > 0$, and $\tgradtsq := \sigmaZero^2 + \gradtsq$.
\end{definition}
This ``decorrelated'' step-size serves as a proxy in our analysis for the true step-size $\etat$. The main reason for its introduction is that, although $\EXP{\etat\sgradt}\neq\EXP{\etat\gradt}$ (since $\etat$ depends on $\sgradt$), the proxy satisfies $\Et{\tetat\sgradt}=\tetat\gradt$.

\section{Convergence of AdaGrad-Norm on $\smoothBoth$-smooth functions}
\label{sec:conv-adagrad-norm}

Our main results, \cref{thm:mainInformal,thm:polyBoundedConvergenceMain}, both establish $\tO(\nicefrac{1}{\sqrt{T}})$ convergence rates for \eqref{eq:alg} in the $\smoothBoth$-smooth regime under \eqref{eq:affineVarianceIntro}. \cref{thm:mainInformal} holds for any $\smoothBoth$-smooth function under a mild restriction that $\sigmaOne < 1$. It is easy to extend this result for $\sigmaOne \geq 1$ by computing mini-batch gradients with a batch size $B\approx \sigmaOne^2$, refer \cref{fact:minibatching} for a proof. Despite the restriction of \cref{thm:mainInformal} to $\sigmaOne < 1$, we emphasize that, prior to our work, no proof of convergence even for the \eqref{eq:boundedVarianceIntro} setting (i.e., $\sigmaOne = 0$) was known for a general class of $\smoothBoth$-smooth functions. Besides, \cref{thm:polyBoundedConvergenceMain} holds for all $\sigmaOne$ and a subclass of $\smoothBoth$-smooth functions, i.e., excluding functions like $\exp(\Lone x)$.

\begin{theorem}[Informal statement of \cref{thm:main}]\label{thm:mainInformal}
Fix any constants $\eps,\eps',\eps'',\eps''' \in (0,1)$ such that $\eps+\eps'+\eps''+\eps''' < 1$.
Consider \eqref{eq:alg} with any parameters $\eta \leq \nicefrac{2\eps'}{5\Lone}$ and $b_0^2 > 0$, running on an objective function satisfying \cref{assump:smooth}, and given access to a stochastic gradient oracle satisfying \cref{assump:unbiasedGrad,assump:affineVariance}. Assuming that $\sigmaOne \leq \constBiasInv\parens{1-(\eps+\eps'+\eps''+\eps''')}$, then for any $T\geq 1$ and $\delta' \in (0,1)$, with probability at least $1-\delta'$, the iterates of \eqref{eq:alg} satisfy 
\begin{align*}
    \min_{t\in[T]} \gradtsq 
    &\lesssim 
    \frac{\sigmaZero}{(\delta')^2 \sqrt{T}} h(T)
    +\frac{\sigmaOne\sqrt{\sigmaZero}}{(\delta')^{2.25}\sqrt{T}}h(T)^{\nicefrac{3}{2}}
    +\frac{\sigmaOne\sqrt{(1+\sigmaOne^2)}}{(\delta')^{2.5}\sqrt{T}}h(T)^2\\
    &+ \frac{1}{(\delta')^2 T}h(T)^2
    + \frac{b_0}{(\delta')^2 T}h(T)
    + \frac{\sigmaOne\sqrt{b_0 + \eta\Lzero} h(T)^{1.5}}{(\delta')^2 T},\\
    \text{where}\quad
    h(T) &\propto \frac{1}{\eps'''}\parens{\frac{\fzero - \fstar}{\eta} + \frac{\eps''\sigmaZero}{1+\sigmaOne^2} + \parens{\frac{\sigmaZero}{\eps} + \eta\Lzero}\log(g(T))},\\
    g(T) &\propto \frac{T (1+\sigmaOne^2) \parens{\frac{\sigmaZero}{\eps} + \eta\Lzero}}{\eps''b_0}.
\end{align*}
\end{theorem}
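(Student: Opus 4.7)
The plan is to apply the $(L_0,L_1)$-smooth one-step descent inequality, replace the correlated adaptive step-size $\eta_t$ by the $\mathcal{F}_{t-1}$-measurable proxy $\tilde\eta_t$ of \cref{def:stepSizeProxy}, and control the resulting error using a carefully constructed stopping time. The step-size hypothesis $\eta \leq 2\varepsilon'/(5L_1)$ guarantees $\|w_{t+1}-w_t\| \leq \eta \leq 1/L_1$, so \cref{assump:smooth} applies at every round. Writing $\eta_t = \tilde\eta_t + (\eta_t - \tilde\eta_t)$, the resulting descent inequality decomposes into a signal term, a proxy error, and a smoothness residual:
\begin{equation*}
F(w_{t+1}) - F(w_t) \leq -\tilde\eta_t\langle\nabla F(w_t),g_t\rangle + (\tilde\eta_t-\eta_t)\langle\nabla F(w_t),g_t\rangle + \tfrac{1}{2}(L_0+L_1\|\nabla F(w_t)\|)\eta_t^2\|g_t\|^2.
\end{equation*}
The first term has conditional expectation $-\tilde\eta_t\|\nabla F(w_t)\|^2$, which is the descent we seek; the third telescopes via the AdaGrad identity $\sum_t \|g_t\|^2/b_t^2 \leq \log(b_T^2/b_0^2)$.

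Next I would define the stopping time $\tau$ recursively as the first round at which either the cumulative martingale $\sum_{s\leq t}(\|g_s\|^2 - \sigma_0^2 - \|\nabla F(w_s)\|^2)$ escapes a window that forces $|b_t^2 - \tilde b_t^2|$ to remain controlled, or one of the auxiliary martingale differences appearing in the concentration step exceeds its threshold. The recursive nature is essential because the window itself scales with $\tilde b_t$, which evolves with the iterates. Two properties must be verified: (i) on $\{t<\tau\}$, the proxy error $(\tilde\eta_t - \eta_t)\langle\nabla F(w_t),g_t\rangle$ is pointwise dominated by a multiple of $\tilde\eta_t\|\nabla F(w_t)\|^2$ with a factor involving $\sigma_1$; and (ii) $\Pr(\tau > T) \geq 1 - \delta'$, which follows from a Freedman-type concentration inequality whose predictable quadratic variation is controlled using \cref{assump:affineVariance}.

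Summing the one-step inequality up to $\tau$ and applying Freedman/Bernstein to the residual martingale differences $\sum_{t\leq\tau}(\tilde\eta_t - \eta_t)\langle\nabla F(w_t), g_t\rangle$ (exploiting the $\mathcal{F}_{t-1}$-measurability of $\tilde\eta_t$) gives, with high probability,
\begin{equation*}
\sum_{t=1}^{\tau}\tilde\eta_t\|\nabla F(w_t)\|^2 \leq 2(F(w_1) - F^\star) + (\text{noise})\cdot\mathrm{polylog}(T) + \sigma_1\sum_{t\leq\tau}\tilde\eta_t\|\nabla F(w_t)\|^2.
\end{equation*}
The hypothesis $\sigma_1 \leq 1 - (\varepsilon+\varepsilon'+\varepsilon''+\varepsilon''')$ is precisely what allows the last term to be absorbed into the left-hand side. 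The claimed bound on $\min_t \|\nabla F(w_t)\|^2$ then follows by lower-bounding $\sum_{t\leq\tau}\tilde\eta_t$ on the order of $\tau/\tilde b_\tau \gtrsim \sqrt{T}/h(T)$, using a bootstrap inequality $\tilde b_\tau^2 \leq b_0^2 + \tau\sigma_0^2 + \sum_{t\leq\tau}\|\nabla F(w_t)\|^2$ together with the descent inequality itself to control the aggregate gradient norm.

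The main obstacle is the circularity underlying the stopping-time argument: the concentration bound $\tau \geq T$, the absorption of the $\sigma_1$-term, and the lower bound on $\sum_{t\leq\tau}\tilde\eta_t$ all couple back to aggregate gradient norms, which are the very quantities the theorem is bounding. Breaking this cycle is precisely what the recursive definition of $\tau$ is designed to accomplish, and the restriction $\sigma_1 < 1$ is the cost of doing so in full generality — for $\sigma_1 \geq 1$ one must either use mini-batching (\cref{fact:minibatching}) or restrict to a subclass of smooth functions as in \cref{thm:polyBoundedConvergenceMain}.
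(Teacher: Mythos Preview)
Your high-level architecture is right --- one-step smoothness, the $\tilde\eta_t$ proxy, a stopping time to decorrelate step-size and gradient, and absorption of the $\sigma_1$-term --- but two of the load-bearing steps differ from the paper in ways that would not go through as written.

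\textbf{The proxy error is controlled only in conditional expectation, not pointwise.} You claim that on $\{t<\tau\}$ the term $(\tilde\eta_t-\eta_t)\langle\nabla F(w_t),g_t\rangle$ is pointwise dominated by a multiple of $\tilde\eta_t\|\nabla F(w_t)\|^2$. Under \cref{assump:affineVariance} there is no almost-sure bound on $\|g_t\|$, so no pointwise estimate is available. What the paper actually proves (\cref{lem:startingPoint}, via \cref{lem:intermediateStartingPoint}) is
\[
\Et{(\tilde\eta_t-\eta_t)\langle\nabla F(w_t),g_t\rangle}
\;\leq\; \tilde\eta_t\bigl(\varepsilon+\sigma_1\,\mathrm{bias}_t\bigr)\|\nabla F(w_t)\|^2 \;+\; \tcCommon\,\Et{\|g_t\|^2/b_t^2},
\]
where $\mathrm{bias}_t = \sqrt{\Et{\|g_t\|^2/b_t^2}}\leq 1$. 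This is a conditional-expectation inequality obtained by Cauchy--Schwarz, and the entire analysis thereafter stays in expectation. Treating the residual as a martingale to be concentrated (as you propose) is also problematic because $(\tilde\eta_t-\eta_t)\langle\nabla F(w_t),g_t\rangle$ is not a martingale difference: $\eta_t$ depends on $g_t$.

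\textbf{The stopping time and the resolution of circularity do not use Freedman.} You propose $\Pr(\tau>T)\geq 1-\delta'$ via a Freedman-type inequality for $\sum_s(\|g_s\|^2-\sigma_0^2-\|\nabla F(w_s)\|^2)$. That inequality requires control of the predictable quadratic variation, i.e.\ fourth moments of $g_t$, which \cref{assump:affineVariance} does not supply. The paper instead builds $\stopat{T+1}$ (\cref{def:niceStopping}) as the first time a \emph{stopped} sum $S_t(\delta)$ exceeds $\EXP{S_t(\delta)}/\delta$ --- the threshold itself is an expectation of a randomly-stopped quantity, which is what makes $\tilde\eta_s$ for $s<\stopat{T+1}$ deterministically lower-bounded by $\eta/\sqrt{a + (T\sigma_0^2+b\,\EXP{\sum_{\ell<\stopat{T+1}}\|\nabla F(w_\ell)\|^2})/\delta}$ (see \cref{lem:niceStopping}, item~\ref{item:niceStopping7}, and the discussion in \cref{rem:niceStopping}). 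This yields a genuine quadratic inequality in $\sqrt{\EXP{\sum_{t<\stopat{T+1}}\|\nabla F(w_t)\|^2}}$ (\cref{lem:linearSumBound}), which is how the circularity is actually broken. One does \emph{not} obtain $\tau>T$ with high probability; one only gets $\EXP{\stopat{T+1}}\geq (T+1)(1-\delta T/2)$ via Markov (item~\ref{item:niceStopping8}), chooses $\delta\asymp \delta'/T$, and applies Markov twice at the very end of the proof of \cref{thm:main} to convert the expectation bounds into the claimed high-probability statement.
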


To extend our convergence proofs beyond $\sigmaOne < 1$, we consider a subclass of $\smoothBoth$-smooth functions which satisfy the following additional assumption:

\begin{restatable}{definition}{restatePolyBoundedDef}\label{def:polynomiallyBounded}
    A function $\fat{\cdot}$ is \emph{$k$-polynomially bounded} for $k\geq 2$ if  $\forall \w,\w'\in\R^d$, then there are constants $c_k\geq 1$ and $c_k', \Lzero> 0$ such that:
    \begin{align*}
        \gradnormat{\w} - c_k \gradnormat{\w'} \leq \max\braces{c_k' \norm{\w - \w'}^{k-1}, \Lzero\norm{\w - \w'}}.
    \end{align*}
\end{restatable}

Notice that, whereas \cref{assump:smooth} is a local constraint on the objective, \cref{def:polynomiallyBounded} enforces a global polynomial growth constraint -- thus ruling out such $\smoothBoth$-smooth functions as exponentials, while capturing a significantly broader class of functions than $\Lzero$-smoothness. We refer the interested reader to \cref{prop:polyBoundProperties} for some properties of this class of functions. Using this definition, we are able to prove the following:

\begin{theorem}[Informal statement of \cref{cor:polyBoundedConvergence}]\label{thm:polyBoundedConvergenceMain}
    Fix any constants $\eps,\eps',\eps'',\eps''' \in (0,1)$ such that $\eps+\eps'+\eps''+\eps''' < 1$.
    Consider \eqref{eq:alg} with any parameters $\eta \leq \nicefrac{2\eps'}{\Lone(4 + \sigmaOne^2)}$ and $b_0^2 > 0$, running on an objective function satisfying \cref{assump:smooth} and \cref{def:polynomiallyBounded} for some constants $k\geq 2, c_k\geq 1, c_k'>0$, and given access to a stochastic gradient oracle satisfying \cref{assump:unbiasedGrad,assump:affineVariance} for any $\sigmaZero, \sigmaOne \geq 0$. Then, for any $T\geq 1$ and $\delta' \in (0,1)$, with probability at least $1-\delta'-\tO(\nicefrac{1}{T})$, the iterates of \eqref{eq:alg} satisfy
    \begin{align*}
        \min_{t\in[T]} \gradtsq
        &\lesssim
        \frac{\sigmaZero}{(\delta')^2 \sqrt{T}} \widetilde{h}(T)
        +\frac{\sigmaOne\sqrt{\sigmaZero(1+c_k^2)}}{(\delta')^{2.25}\sqrt{T}}\widetilde{h}(T)^{\nicefrac{3}{2}}
        +\frac{\sigmaOne(1+c_k^2)\sqrt{(1+\sigmaOne^2)}}{(\delta')^{2.5}\sqrt{T}}\widetilde{h}(T)^2\\
        &\quad+\frac{\sigmaOne\sqrt{1 + c_k^2}\sqrt[4]{(1+\sigmaOne^2)\cbadone} \widetilde{h}(T)^{1.5}}{(\delta')^{2.25} T^{\nicefrac{3}{4}}}\\
        &+ \frac{\parens{b_0 + \sqrt{(1+\sigmaOne^2)\cbadone}}\widetilde{h}(T)}{(\delta')^2 T}
        + \frac{\sigmaOne\sqrt{(1+c_k^2)(b_0 + \eta\Lzero)} \widetilde{h}(T)^{1.5}}{(\delta')^2 T}
        + \frac{(1+c_k^2)\widetilde{h}(T)^2}{(\delta')^2 T}
    \end{align*}
    where $\widetilde{h}(T) = h(T) + \nicefrac{\compstopcor}{\eps'''\eta}$, where $h(T)$ is the function defined in \cref{thm:mainInformal}, and
    \begin{align*}
        \compstopcor &\propto \eta\sigmaOne c_k\parens{\gradzeronorm \ell_1(T) + (c_k'\eta^{k-1}+\Lzero\eta)\parens{\nicefrac{4 c_k^3\sigmaOne}{\eps'''}}^{k-1}\ell_k(T)}\\
        \cbadone &\propto (c_k'\eta^{k-1}+\eta\Lzero)^2(1 + \nicefrac{c_k^3 \sigmaOne}{\eps'''})^{2k-1}\ell_{2k-1}(T)
        + c_k^2\gradzerosq (1 + \nicefrac{c_k^3 \sigmaOne}{\eps'''})\ell_1(T)\\
        \ell_k(T) &\propto \parens{\frac{(k+1)\sigmaOne^2\logp{e + 8e\frac{\sigmaZero^2 T^2 + (1+\sigmaOne^2)(T^2 c_k^2 \gradzerosq + (c_k' \eta^{k-1} + \eta\Lzero)^2T^{2k-1})}{b_0^2 \delta'}}}{(1-(\eps+\eps'+\eps''+\eps'''))^2} }^k
    \end{align*}
\end{theorem}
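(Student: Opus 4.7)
The plan is to build directly on the proof of \cref{thm:mainInformal}, whose stopping-time construction already provides decorrelation between $\etat$ and $\sgradt$ on a good event but requires the noise expansion factor $\constBias\sigmaOne$ to be strictly less than $1$ so that the gradient-squared terms in the descent inequality can be absorbed. \cref{def:polynomiallyBounded} should give the extra leverage needed to overcome this restriction: rather than using a pointwise $(1+\sigmaOne^2)$ blowup of $\gradtsq$, I would bound the stochastic gradient magnitudes along the trajectory in terms of $\gradzeronorm$ and the cumulative displacement $\norm{\w_t - \wzero}$, which, for $k$-polynomially bounded objectives, grows only polynomially in $t$ and hence only polylogarithmically once AdaGrad-Norm's self-normalization is accounted for.

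First, I would define an additional high-probability event under which $\norm{\w_t - \wzero}$ is bounded by a polylog quantity uniformly in $t\leq T$. The reasoning is self-referential: AdaGrad-Norm satisfies $\norm{\w_{t+1} - \w_t} = \etat \sgradtnorm = \eta \sgradtnorm/\bt$, and $\sum_t \sgradtnorm/\bt$ is controlled by a standard telescoping argument plus martingale concentration on $\sum_t \sgradtsq$. Using \cref{def:polynomiallyBounded} with $\w'=\wzero$, this gives $\gradtnorm \leq c_k \gradzeronorm + c_k' \norm{\w_t-\wzero}^{k-1} + \Lzero\norm{\w_t-\wzero}$, and together with \eqref{eq:sgradBound} this bounds $\Et{\sgradtsq}$ in terms of the (polylog) displacement rather than the unrestricted $\gradtsq$. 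I would close this bootstrap by induction on $t$: if the displacement is polylog through time $t-1$, then $\sgradtsq$ is polylog in expectation, so by Freedman-type concentration the cumulative displacement at time $t$ is polylog with probability $1-\tO(1/T^2)$; union-bounding gives the $\tO(1/T)$ slack in the failure probability.

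Second, I would modify the ``bad event'' used to terminate the stopping time $\stopthm$ in the proof of \cref{thm:mainInformal} to also trigger if the displacement bootstrap fails. The extra terms $\compstopcor$ and $\cbadone$ in the theorem statement are exactly the compensation needed: $\compstopcor$ captures the additional bias incurred because the decorrelation argument now has to absorb a $\sigmaOne c_k$-scaled polylog factor ($\ell_k(T)$), while $\cbadone$ captures the martingale noise bound on $\sum_t (\sgradtsq - \Et{\sgradtsq})$ with the polylog gradient surrogate in place. Because the surrogate is bounded (conditional on the good event), the original decorrelation calculus from \cref{thm:mainInformal} carries through with $(1+\sigmaOne^2)\gradtsq$ replaced by its polylog upper bound, which is what drops the $\sigmaOne<1$ restriction.

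Third, I would assemble everything by invoking \cref{thm:mainInformal}'s bias/variance decomposition on the sub-interval $[1,\stopthm]$ with the augmented $\tilde h(T)=h(T)+\compstopcor/(\eps'''\eta)$, and then translating the resulting min-over-$[1,\stopthm]$ bound into a min-over-$[1,T]$ bound using the fact that $\stopthm\geq \Omega(T)$ in expectation. The main obstacle will be the bootstrap in the second step: the displacement, the gradients, and the step-size denominators are mutually coupled, and one has to make the induction close with constants that do not explode in $k$. I expect the factor $(k+1)\sigmaOne^2$ inside the log in $\ell_k(T)$ to arise from exactly this recursion, one logarithmic factor per power of the polynomial bound, together with a Freedman/Bernstein concentration at each induction step absorbing the $\delta'$ factors in the denominator of $\ell_k(T)$.
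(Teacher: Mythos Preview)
Your proposal has a genuine gap at the bootstrap step. You claim that $\sum_{t}\sgradtnorm/b_t$ (and hence $\norm{\w_t-\wzero}=\eta\sum_{s<t}\sgradnorm{s}/b_s$) is controlled by ``telescoping plus martingale concentration'' to a polylog quantity. This is false: the log-sum inequality bounds $\sum_t \sgradtsq/b_t^2$ by a logarithm, but $\sum_t \sgradtnorm/b_t$ is in general $\Theta(\sqrt{T})$ (take $\sgradtnorm\equiv 1$, so $b_t\asymp\sqrt{t}$). Thus the displacement can be $\Theta(\eta\sqrt{T})$, and under \cref{def:polynomiallyBounded} you only get $\gradtnorm\lesssim c_k\gradzeronorm+c_k'\eta^{k-1}t^{k-1}$, a polynomial in $t$, not a polylog. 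Substituting a polynomial surrogate for $\gradtsq$ into the decorrelation argument does not drop the $\sigmaOne<1$ restriction; it just moves the blowup into $T$.

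The paper's route is structurally different and does not attempt a uniform bound on $\gradtnorm$. It keeps the good/bad partition of \cref{def:goodTimesMain}: a time $t$ is bad when $\Et{\sgradtsq/b_t^2}$ exceeds a fixed threshold, and since $\sum_t \sgradtsq/b_t^2$ is logarithmic, the number of bad times before the stopping time has all moments bounded by $\poly\log(T)$ (this is \cref{lem:badSetBound}, and is where the $(k{+}1)\sigmaOne^2$ factor inside $\ell_k(T)$ actually comes from). The polynomial-boundedness assumption is then used \emph{locally}, not globally: each bad time $t$ is paired with $\ncomp\approx c_k^3\sigmaOne/\eps'''$ nearby good times $t'$, and \cref{def:polynomiallyBounded} bounds $\gradtnorm-c_k\gradnorm{t'}$ by $\max\{c_k'\eta^{k-1},\Lzero\eta\}(t-t')^{k-1}$, which is polylog because $t-t'\le \ncomp|\Sgat{\tau}^c|$. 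This yields $\compstopcor$ as a genuine \emph{compensation} term in the descent lemma (not a bias correction from decorrelation, as you describe), and a relative bound $\sum_{t\in\tSgthm^c}\gradtsq\le \Cbadone+2c_k^2\sum_{t\in\tSgthm}\gradtsq$ that feeds into the quadratic inequality of \cref{lem:linearSumBound}. To repair your argument you would need to replace the displacement bootstrap by this bad-set-size argument.
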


There are several notable takeaways from the above results.

\textbf{Noise adaptivity.}
Both \cref{thm:mainInformal,thm:polyBoundedConvergenceMain} provide ``noise-adaptive'' convergence rates, in a sense that as $\sigmaZero,\sigmaOne \to 0$, the convergence rates automatically improve from $\tO(\nicefrac{1}{\sqrt{T}})$ to $\tO(\nicefrac{1}{T})$ without any additional hyperparameter tuning.

\textbf{Less hyperparameter tuning.} These rates hold without tuning the parameters $b_0$ or $\eta$ with respect to $\sigmaZero$ or $\Lzero$, unlike all prior algorithms for $\smoothBoth$-smoothness that we are aware of \citep{ZHSJ20,ZJFW20,CLOZZ22}\footnote{This feature, however, manifests into a worse dependence on $\Lone$ unlike \citep{ZJFW20,CLOZZ22}.}. 

\textbf{Generalization of prior work.} We remark that \cref{thm:polyBoundedConvergenceMain} strictly generalizes the result of \citep{FTCMSW22} beyond the uniform $\Lzero$-smooth setting. Further, our stopped analysis simplifies their ``recursive improvement'' technique \citep[Lemma 13]{FTCMSW22}.

\textbf{Large variance regime.} One may wonder if tuning the \eqref{eq:alg} step-size with respect to $\Lone(1+\sigmaOne^2)$ is necessary. When $\sigmaOne = \poly\log(T)$, the answer is yes. As we prove in \cref{lem:divergenceAGNormMain}, if $\eta \geq \nicefrac{1}{\Lone\sqrt{\sigmaOne}}$, then it can diverge with constant probability. By contrast, no tuning is necessary for it to converge for $\Lzero$-smooth functions in this noise regime.

\section{Key technical ideas}
\label{sec:key-ideas}
As discussed earlier, the main technical tool we use to obtain our convergence rates in \cref{thm:mainInformal,thm:polyBoundedConvergenceMain} is a recursively-defined stopping time. Before we are ready to define this time and discuss its utility, we first give a brief overview of the main initial steps of our analysis.

The standard first step in the analysis of SGD-like algorithms for $\Lzero$-smooth non-convex optimization is to prove that, at least on average, each update makes sufficient progress. This argument typically relies on the following inequality for $\Lzero$-smooth functions: for any $\w,\w'\in\R^d$,
\begin{align*}
    \fat{\w'} - \fat{\w} \leq \innerProd{\gradat{\w}}{\w'-\w} + \frac{\Lzero}{2}\normSq{\w'-\w}.
\end{align*}
This inequality is no longer true for $\smoothBoth$-smooth functions. Indeed, it is clearly not satisfied for all $\w,\w'$ on the $(0,\Lone)$-smooth function $\exp(\Lone x)$. However, \citet{ZHSJ20,ZJFW20} note that a similar variant holds ``locally'' for $\norm{\w-\w'}\leq \nicefrac{1}{\Lone}$ (see \cref{lem:localSmoothnessBound}). Using this variant, we obtain the following inequality, which is our first tool for studying the convergence of \eqref{eq:alg}.

\begin{restatable}{lemma}{restateStartingPoint}\label{lem:startingPoint}
Fix any $\eps,\eps'\in (0,1)$.
Suppose that $\eta \leq \frac{2\eps'}{\Lone(4+\sigmaOne^2)}$. Then, for any $t$,
\begin{align*}
    \Et{\ftplus - \ft}
    &\leq -\tetat\parens{1-\eps-\eps' - \sigmaOne\bias}\gradtsq + \tcCommon \Et{\nicefrac{\sgradtsq}{b_t^2}},
\end{align*}
where $\tcCommon = \frac{\eta\sigmaZero}{2\eps} + \eta^2\frac{\Lzero + \sigmaZero\Lone}{2}$ and $\bias = \constBias\sqrt{\Et{\nicefrac{\sgradtsq}{b_t^2}}}$.
\end{restatable}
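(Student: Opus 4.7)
The plan is to open with the local quadratic consequence of $(L_0, L_1)$-smoothness from \cref{assump:smooth}: whenever $\norm{\w' - \w} \leq 1/\Lone$,
$F(\w') - F(\w) \leq \innerProd{\gradat{\w}}{\w' - \w} + \tfrac{1}{2}(\Lzero + \Lone\gradnormat{\w})\norm{\w' - \w}^2$.
I instantiate this at $\w = \w_t, \w' = \w_{t+1} = \w_t - \etat\sgradt$. The precondition $\etat\sgradtnorm \leq 1/\Lone$ is verified from $\etat\sgradtnorm = \eta\sgradtnorm/b_t \leq \eta$ together with $\eta \leq 2\eps'/(\Lone(4+\sigmaOne^2)) \leq 1/\Lone$ (since $\eps' < 1$ and $\sigmaOne^2 \geq 0$). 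After substitution and using $\etat^2\sgradtsq = \eta^2\sgradtsq/b_t^2$, this yields the pointwise bound
$F(\w_{t+1}) - F(\w_t) \leq -\etat\innerProd{\gradt}{\sgradt} + \tfrac{\eta^2(\Lzero + \Lone\gradtnorm)}{2}\cdot\sgradtsq/b_t^2$.

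After taking $\Et{\cdot}$, the main work is handling the correlation between $\etat$ and $\sgradt$ in the linear term. I decompose $\etat = \tetat + (\etat - \tetat)$; since $\tetat$ is $\F_{t-1}$-measurable by \cref{def:stepSizeProxy}, we get the clean identity $\Et{\tetat\innerProd{\gradt}{\sgradt}} = \tetat\gradtsq$, which is exactly the leading $-\tetat\gradtsq$ term of the lemma. The residual $-\Et{(\etat - \tetat)\innerProd{\gradt}{\sgradt}}$ is controlled by $\gradtnorm\,\Et{|\etat - \tetat|\sgradtnorm}$ via Cauchy--Schwarz on the inner product. The key estimate on $|\etat - \tetat|$ comes from the identity $\etat - \tetat = \tetat(\tbt - b_t)/b_t$ and the elementary bound $|b_t - \tbt| \leq \sqrt{|b_t^2 - \tbt^2|} = \sqrt{|\sgradtsq - \tgradtsq|} \leq \sqrt{\sgradtsq + \tgradtsq}$, valid because $b_t, \tbt \geq 0$. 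Splitting the square root and applying Cauchy--Schwarz inside $\Et{\cdot}$ (using $b_t \geq \sgradtnorm$ where needed) extracts a factor of $\bias = \sqrt{\Et{\sgradtsq/b_t^2}}$. Combining this with \cref{assump:affineVariance} through $\sqrt{\Et{\sgradtsq}} \leq \sigmaZero + \sqrt{1+\sigmaOne^2}\gradtnorm$, $\tgradnorm{t} \leq \sigmaZero + \gradtnorm$, and $\sqrt{1+\sigmaOne^2} \leq 1 + \sigmaOne$ produces an upper bound of the shape $\tetat\bias\parens{O(\sigmaZero\gradtnorm) + O((1+\sigmaOne)\gradtsq)}$ on $\gradtnorm\,\Et{|\etat - \tetat|\sgradtnorm}$.

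The last step is assembly. The $\sigmaOne\gradtsq\bias\tetat$ portion of the residual is folded directly into the $\sigmaOne\bias\tetat\gradtsq$ slack in the statement. The $\sigmaZero\gradtnorm\bias\tetat$ cross term is handled by the Young inequality $\sigmaZero\gradtnorm\bias \leq \eps\gradtsq + \sigmaZero^2\bias^2/(4\eps)$: the first half becomes the $\eps\tetat\gradtsq$ slack, and the second half, using $\tetat \leq \eta/\sigmaZero$ (which follows from $\tbt \geq \sigmaZero$), produces a term proportional to $(\eta\sigmaZero/\eps)\Et{\sgradtsq/b_t^2}$ that is absorbed into the $\eta\sigmaZero/(2\eps)$ part of $\tcCommon$ after tracking constants. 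The deferred smoothness piece $(\eta^2\Lone/2)\gradtnorm\,\Et{\sgradtsq/b_t^2}$ is split via $\gradtnorm \leq \tgradnorm{t} \leq \sigmaZero + \gradtnorm$: the $\sigmaZero$ contribution becomes the $\eta^2\sigmaZero\Lone/2$ summand of $\tcCommon$, while the $\gradtnorm$ contribution is absorbed into $\eps'\tetat\gradtsq$ by combining $\tetat\gradtnorm \leq \eta$ (from $\tbt \geq \gradtnorm$) with the hypothesis $\eta\Lone(4+\sigmaOne^2) \leq 2\eps'$. The $4+\sigmaOne^2$ factor in the step-size hypothesis is precisely what absorbs the $\sigmaOne$-inflated terms generated in the bias step, so the main obstacle is the careful bookkeeping of AM-GM weights in this absorption; the elementary $|b_t - \tbt| \leq \sqrt{|b_t^2 - \tbt^2|}$ estimate is crucial for producing a \emph{first-order} $\sigmaOne$ dependence in the $\sigmaOne\bias$ slack rather than the $\sigmaOne^2$ that a naive $|b_t - \tbt| \leq (b_t + \tbt)$--style decomposition would yield.
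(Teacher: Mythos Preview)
Your proposal has a genuine gap in the bias estimate. After Cauchy--Schwarz with your bound $|b_t-\tbt|\le\sqrt{\sgradtsq+\tgradtsq}$, the cross term becomes
\[
\gradtnorm\,\tetat\,\bias\cdot\sqrt{\Et{\sgradtsq}+\tgradtsq}
\;\le\;
\gradtnorm\,\tetat\,\bias\cdot\sqrt{2\sigmaZero^2+(2+\sigmaOne^2)\gradtsq},
\]
so the coefficient of $\bias\,\tetat\gradtsq$ is $\sqrt{2+\sigmaOne^2}$ (at best $1+\sigmaOne$ after further loosening), not $\sigmaOne$. The residual piece of order $\bias\,\tetat\gradtsq$ (present even when $\sigmaOne=0$) has no home: it cannot go into the $\eps$ or $\eps'$ slots because $\bias$ can be as large as $1$, and it does not carry the $\eta\Lone$ prefactor that the step-size hypothesis would need to kill it. The statement requires the coefficient to be exactly $\sigmaOne$, and this matters downstream (the ``good time'' threshold and the entire $\sigmaOne<1$ regime in \cref{lem:descentLemma} hinge on it).

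The missing idea is to factor $|\tgradtsq-\sgradtsq|=|\tgradtnorm-\sgradtnorm|\,(\tgradtnorm+\sgradtnorm)$ \emph{before} applying Cauchy--Schwarz, placing $|\tgradtnorm-\sgradtnorm|$ alone in one factor. Then unbiasedness gives, via Jensen, $\Et{\sgradtnorm}\ge\norm{\Et{\sgradt}}=\gradtnorm$, and hence
\[
\Et{(\tgradtnorm-\sgradtnorm)^2}
=\tgradtsq+\Et{\sgradtsq}-2\tgradtnorm\,\Et{\sgradtnorm}
\le 2\sigmaZero^2+\sigmaOne^2\gradtsq,
\]
which is exactly how the clean $\sigmaOne$ coefficient appears (see \cref{lem:intermediateStartingPoint}). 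Your square-root bound $|b_t-\tbt|\le\sqrt{|b_t^2-\tbt^2|}$ throws away the $(b_t+\tbt)$ denominator and, more importantly, never isolates a quantity whose second moment can exploit unbiasedness. A secondary issue: your handling of the $L_1$ smoothness piece $(\eta^2\Lone/2)\gradtnorm\,\Et{\sgradtsq/b_t^2}$ via ``$\gradtnorm\le\sigmaZero+\gradtnorm$'' is circular as written; the paper's proof of \cref{lem:startingPoint} splits on the event $\{\gradtnorm>\sigmaZero\}$ and, on that event, decomposes $\etat^2=\tetat^2+(\etat^2-\tetat^2)$ to convert $\Et{\sgradtsq}$ into $(2+\sigmaOne^2)\gradtsq$ and then uses $\tetat\gradtnorm\le\eta$, which is what actually produces the $\eta\Lone(4+\sigmaOne^2)/2$ factor that the step-size hypothesis absorbs.
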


Notice that \cref{lem:startingPoint} only guarantees that the algorithm makes progress on average moving from $\w_t$ to $\w_{t+1}$ when $\sigmaOne\bias < 1$, and is essentially vacuous otherwise. To handle this issue, we use the notion of ``good times'' from \citep{FTCMSW22}:
\begin{restatable}[Good times]{definition}{restateGoodTimesMain}\label{def:goodTimesMain}
A time $t\in [T]$ is ``good'' if, for fixed parameters $\eps,\eps',\eps'',\eps'''\in (0,1)$ satisfying $\eps+\eps'+\eps''+\eps'''<1$, $1-\eps-\eps'-\eps''-\sigmaOne\bias \geq \eps'''$.
We denote, for any stopping time $\tau$ with respect to $(\F_{s-1})_{s\geq 1}$, $\Sgat{\tau} = \braces{1 \leq t < \tau : \text{$t$ is ``good''}}$ as the set of all such ``good'' times before $\tau$, and $\Sgat{\tau}^c = [\tau-1]\setminus\Sgat{\tau}$ to be the remaining ``bad'' times before $\tau$.
\end{restatable}

Intuitively, the ``good'' times are those times when \cref{lem:startingPoint} is non-vacuous. Using \cref{def:goodTimesMain}, we sum the expression \cref{lem:startingPoint} until any stopping time $\tau$ to obtain the following more useful form.

\begin{restatable}[Descent lemma]{lemma}{restateDescentLemma}\label{lem:descentLemma}
Fix any $\eps,\eps',\eps'',\eps'''\in (0,1)$ such that $\eps + \eps'+\eps''+\eps'''<1$.
For any $\smoothBoth$-function, if we run AdaGrad-Norm with parameters $\eta \leq \frac{2\eps'}{\Lone\parens{4+\sigmaOne^2}}$ and $b_0^2 > 0$ for $T$ time steps, then, for any stopping time $\tau \in [2,T+1]$ with respect to $(\F_{s-1})_{s\geq 1}$, and any $\tSgat{\tau} \subseteq \Sgat{\tau}$:
\begin{align*}
    \eps'''\EXP{\sum_{t\in\tSgat{\tau}}\tetat\gradtsq} 
    &\leq \fzero - \fstar
    + 2\tcCommon\logp{\frac{(2+\sigmaOne^2)\tcCommon \EXP{\tau-1}}{\eta\eps'' b_0}}
    + \frac{2\eta\eps''\sigmaZero}{(2+\sigmaOne^2)}
    + \comp{\tau},
\end{align*}
where $\comp{\tau} := \EXP{\sum\limits_{t\in\Sgat{\tau}^c}(\sigmaOne-(1-\eps-\eps'))\tetat\gradtsq - \sum\limits_{t'\in\Scat{\tau}} \eps'''\tetaat{t'}\gradsq{t'}}$, the set $\Scat{\tau} := \Sgat{\tau} \setminus \tSgat{\tau}$ consists of the ``good'' times used to compensate for the bad times $\Sgat{\tau}^c$, and $\tcCommon = \frac{\eta \sigmaZero}{2\eps} + \eta^2 \frac{\Lzero + \sigmaZero \Lone}{2}$.
In particular, whenever $\sigmaOne \leq 1-(\eps+\eps')$, then $\comp{\tau} \leq 0$, and when $\sigmaOne \leq 1-(\eps+\eps'+\eps''+\eps''')$, then additionally $\Sgat{\tau} = [\tau-1]$.
\end{restatable}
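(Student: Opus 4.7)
The plan is to iterate the per-step bound of \cref{lem:startingPoint} up to the stopping time $\tau$, and then carefully split the resulting sum over ``good'' and ``bad'' times. Since $\tetat$, $\bias$, and $\gradtsq$ are all $\F_{t-1}$-measurable, taking total expectation and telescoping $F$-values along the bounded stopping time $\tau \leq T+1$ (via the optional stopping / tower property), together with $\f{\tau} \geq \fstar$, gives
\begin{equation*}
    \EXP{\sum_{t=1}^{\tau-1}\tetat\parens{1-\eps-\eps'-\sigmaOne\bias}\gradtsq} \leq \fzero - \fstar + \tcCommon\EXP{\sum_{t=1}^{\tau-1}\frac{\sgradtsq}{b_t^2}}.
\end{equation*}

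Next I partition the left-hand sum over the disjoint sets $\tSgat{\tau}$, $\Scat{\tau} = \Sgat{\tau}\setminus\tSgat{\tau}$, and $\Sgat{\tau}^c$. On good times, \cref{def:goodTimesMain} gives $1-\eps-\eps'-\sigmaOne\bias \geq \eps''+\eps'''$, so I lower-bound the coefficient by $\eps''+\eps'''$ on $\tSgat{\tau}$ (keeping the $\eps''$ portion as a ``reserve'' to be consumed later) and by $\eps'''$ on $\Scat{\tau}$. On bad times, the AdaGrad update $b_t^2 = b_{t-1}^2 + \sgradtsq$ implies $\sgradtsq/b_t^2 \leq 1$ and hence $\bias \leq 1$, which (regardless of the sign of the coefficient on bad times) lets me lower-bound it by $-(\sigmaOne - (1-\eps-\eps'))$. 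Collecting the bad-time loss on $\Sgat{\tau}^c$ together with the $\eps'''$-scaled $\Scat{\tau}$ progress produces exactly $\comp{\tau}$, and I arrive at
\begin{equation*}
    \eps'''\EXP{\sum_{t\in\tSgat{\tau}}\tetat\gradtsq} + \eps''\EXP{\sum_{t\in\tSgat{\tau}}\tetat\gradtsq} \leq \fzero - \fstar + \comp{\tau} + \tcCommon\EXP{\sum_{t=1}^{\tau-1}\frac{\sgradtsq}{b_t^2}}.
\end{equation*}

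The principal technical step is to convert the noise term $\tcCommon\EXP{\sum \sgradtsq/b_t^2}$ into the specified logarithmic expression. The classical AdaGrad telescoping inequality $\sum_{t<\tau}(b_t^2-b_{t-1}^2)/b_t^2 \leq \log(b_\tau^2/b_0^2)$, followed by Jensen's inequality on $\log$ and the second-moment bound $\EXP{b_\tau^2} \leq b_0^2 + \sigmaZero^2\EXP{\tau-1} + (1+\sigmaOne^2)\EXP{\sum_{t<\tau}\gradtsq}$ (a consequence of \eqref{eq:sgradBound}), yields a bound whose log-argument still contains $\EXP{\sum\gradtsq}$ --- the very quantity we are trying to lower-bound. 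This self-referential coupling is the main obstacle. I resolve it with a Young-type linearization $\log(x) \leq \alpha x + \log(1/\alpha) - 1$: tuning $\alpha$ so that the linear-in-$\EXP{\sum\gradtsq}$ contribution is absorbed by the $\eps''$ reserve set aside above, the leftover additive constant produces the residual $\frac{2\eta\eps''\sigmaZero}{2+\sigmaOne^2}$, and the factor of $2$ in front of $\tcCommon\log(\cdot)$ reflects that only a fraction of the reserve is available to the log after absorption.

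Finally, the two ``in particular'' claims follow directly from $\bias \leq 1$. If $\sigmaOne \leq 1-(\eps+\eps')$, then $\sigmaOne - (1-\eps-\eps') \leq 0$, making the bad-time contribution inside $\comp{\tau}$ non-positive; combined with the non-positive sign on the $\Scat{\tau}$ term this gives $\comp{\tau} \leq 0$. If additionally $\sigmaOne \leq 1-(\eps+\eps'+\eps''+\eps''')$, then $\sigmaOne\bias \leq \sigmaOne \leq 1-(\eps+\eps'+\eps''+\eps''')$ at every $t$, so the ``good'' condition of \cref{def:goodTimesMain} is always satisfied and $\Sgat{\tau} = [\tau-1]$.
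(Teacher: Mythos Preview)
Your high-level structure matches the paper: iterate \cref{lem:startingPoint}, telescope via optional stopping and $\f{\tau}\geq\fstar$, split $[\tau-1]$ into good and bad times, and absorb the noise term using an $\eps''$-fraction of the good-time progress. The two ``in particular'' clauses are handled correctly.

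The gap is in your ``principal technical step.'' Your Young-type linearization of the log yields a contribution proportional to $\EXP{\sum_{t<\tau}\gradtsq}$, with \emph{no} $\tetat$ weight, whereas the reserve you set aside is $\eps''\EXP{\sum_{t\in\tSgat{\tau}}\tetat\gradtsq}$. These quantities are not comparable without a uniform lower bound on $\tetat$, which you do not have a priori: $\tbt$ can be arbitrarily large along the trajectory, so no fixed choice of your $\alpha$ can make the linear term dominated by the reserve while simultaneously keeping $\log(1/\alpha)$ equal to the stated $\log\big((2+\sigmaOne^2)\tcCommon\EXP{\tau-1}/(\eta\eps'' b_0)\big)$. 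Your heuristic explanations for the factor $2$ and for the residual $2\eta\eps''\sigmaZero/(2+\sigmaOne^2)$ do not correspond to what this linearization would actually produce.

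The paper closes this gap with a genuinely different device, \cref{lem:refinedStartingPoint}. It introduces the step-size threshold time $\stopetaat{\lambda}=\min\{t:\etat\leq\lambda\}$ and treats the two phases separately. Before $\stopetaat{\lambda}$ one has $b_t<\eta/\lambda$, so the log-sum inequality (\cref{lem:logSumIneq}) gives the fixed constant $2\log(\eta/(\lambda b_0))$; this is where the factor $2$ comes from ($b_t^2$ versus $b_t$), not from any reserve splitting. After $\stopetaat{\lambda}$ one has $\etat\leq\lambda$, so $\etat^2\sgradtsq\leq\lambda\etat\sgradtsq$; an algebraic comparison of $\etat$ with $\tetat$ plus \eqref{eq:sgradBound} then converts this into $\lambda(2+\sigmaOne^2)\tetat\gradtsq/\eta^2$ together with the $\sigmaZero$-residual. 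Choosing $\lambda=\eps''\eta^2/\big((2+\sigmaOne^2)\tcCommon(\tau-1)\big)$ makes the coefficient exactly $\eps''$ and produces both constants in the statement. The key point is that this argument manufactures the weight $\tetat$ on $\gradtsq$ directly, which is precisely what your linearization cannot do; once you have \cref{lem:refinedStartingPoint}, the absorption step (over all of $[\tau-1]$, not just $\tSgat{\tau}$) goes through and the rest of your outline is correct.
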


\subsection{Using the descent lemma when $\sigmaOne < 1$}
Let us first analyze \cref{lem:descentLemma} in the simpler setting where $\sigmaOne \leq 1-(\eps+\eps'+\eps''+\eps''')$. Recall from \cref{lem:descentLemma} that this implies $\comp{\tau} \leq 0$ and we can take $\tSgat{\tau} = \Sgat{\tau} = [\tau-1]$. Thus, \cref{lem:descentLemma} loosely becomes $\EXP{\sum_{t < \tau} \tetat\gradtsq} \lesssim \log(T)$. At this point, if we choose $\tetat \approx \nicefrac{1}{\sqrt{T}}$ and $\tau=T+1$, then we could conclude that $\EXP{\frac{1}{T}\sum_{t\in[T]} \gradtsq} \lesssim \nicefrac{\log(T)}{\sqrt{T}}$. Unfortunately, as we discussed earlier, the first step of our analysis relies on the inequality $\norm{\w_{t+1} - \w_t} = \norm{\etat\sgradt}\leq \nicefrac{1}{\Lone}$ -- a condition which is clearly no longer satisfied when $\etat$ is a fixed constant independent of the gradients. We thus need a different idea to make use of \cref{lem:descentLemma}.

We leverage the fact that \cref{lem:descentLemma} holds for \emph{any} stopping time $\tau\in[2,T+1]$ as follows. Suppose there were some stopping time $\tau \in [2,T+1]$ such that:
\begin{align}\label{eq:idealStepSizeProperty}
    \EXP{\sum_{t<\tau}\tetat \gradtsq} \geq \EXP{\tetaat{\tau}}\EXP{\sum_{t < \tau} \gradtsq}.
\end{align} 
Notice that this inequality would imply that, until $\tau$, we may treat $\tetat$ and $\gradtsq$ as roughly uncorrelated.
If \eqref{eq:idealStepSizeProperty} were true, we could apply Jensen's inequality and \cref{assump:affineVariance} to obtain:
\begin{align*}
    \EXP{\sum_{t < \tau} \tetat\gradtsq}
    \geq \frac{\EXP{\sum_{t < \tau}\gradtsq}}{\sqrt{\EXP{b_0^2 + \sigmaZero^2 T + (1+\sigmaOne^2)\EXP{\sum_{t < \tau}\gradtsq}}}}.
\end{align*}
This, combined with \cref{lem:descentLemma}, yields a quadratic inequality in $\sqrt{\EXP{\sum_{t<\tau}\gradtsq}}$, which can be solved to obtain $\EXP{\sum_{t<\tau} \gradtsq} \lesssim (1+\sigmaOne^2)\log(T)^2 + \log(T)\sqrt{b_0^2 + \sigmaZero^2 T}$.
Thus, if we additionally knew that $\EXP{\tau} = \Omega(T)$, then a straightforward application of Markov's inequality would imply that, with constant probability, $\min_{t\in[T]} \gradtsq \lesssim \tO(\nicefrac{1}{\sqrt{T}})$.

It turns out that constructing a time $\tau$ (roughly) satisfying \eqref{eq:idealStepSizeProperty} is possible -- however, there is a tension in simultaneously satisfying this and $\EXP{\tau} = \Omega(T)$, as the following construction reveals.
\begin{restatable}[Nice stopping]{definition}{restateNiceStoppingDef}\label{def:niceStopping}
Fix any $\delta \in (0,1]$, and consider the following sequence of random times $\stopt$ defined recursively as follows: let $X_0(\delta) = 1$, and define, for every $t\geq 1$ (denoting $\constOneStep = 2(1+\eta\Lone)^2$):
\begin{align*}
    \stopt 
    &= \min\braces{t,\min\braces{s\geq 0 : X_s(\delta) = 0}}\\
    S_t(\delta) &= \sum_{s =1}^{\stopat{t}-1} \sgradsq{s} + \constOneStep\gradsq{s}\\
    X_{t}(\delta) &= X_{t-1}(\delta)\1{S_t(\delta) \leq \nicefrac{\EXP{S_t(\delta)}}{\delta}}.
\end{align*}
\end{restatable}
Notice that $\stopat{1}=1$, $S_1(\delta)=0$, $X_1(\delta)=1$, and $\stopat{2}=2$ deterministically. Further, one can show that $S_t(\delta), X_t(\delta)$, and $\stopat{t+1}$ are $\F_{t-1}$-measurable for every $t\geq 1$.
Intuitively, $\stopat{T+1}$ is the first time that the sum of stochastic gradient norms is significantly larger than its expectation, where the expectation is crucially over the random summation range (refer to \cref{rem:niceStopping} for a further discussion). The following result shows the utility of this recursive construction:
\begin{lemma}[Key properties of nice stopping; Simplified version of \cref{lem:niceStopping}]\label{lem:niceStoppingMain}
    For any $T\geq 1$ and $\delta \in (0,1]$, let $\stopat{T+1}$ be the stopping time from \cref{def:niceStopping}. Then, we have the following:
    \begin{enumerate}
        \item $\stopat{T+1}$ is a stopping time with respect to $(\F_{s-1})_{s\geq 1}$, i.e., $\forall s\geq 1$, $\braces{s < \stopat{T+1}} \in \F_{s-1}$.
        \item $\stopat{T+1} \in [2,T+1]$, and $\EXP{\stopat{t+1}} \geq (T+1)(1 - \nicefrac{\delta T}{2})$.
        \item For every $s < \stopat{T+1}$, denoting $a = b_0^2 + 2\eta^2\Lzero^2$ and $b=1+\sigmaOne^2+\constOneStep$,
        \begin{align*}
            \tetaat{s} \overset{\text{a.s.}}{\geq} \frac{\eta}{\sqrt{a + \frac{T\sigmaZero^2 + b\EXP{\sum_{\ell < \stopat{T+1}}\gradsq{\ell}}}{\delta}}},
        \end{align*}
    \end{enumerate}
\end{lemma}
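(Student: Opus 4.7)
My plan decomposes into the three claims of the lemma. For Part 1, induction on $t$ shows that $X_t(\delta)$ and $\stopat{t+1}$ are $\F_{t-1}$-measurable. The base case is deterministic: $X_0(\delta)=1$ and $\stopat{1}=1$. For the step, suppose $X_{t-1}(\delta)$ and $\stopat{t}$ are $\F_{t-2}$-measurable. The summands of $S_t(\delta)$ are indexed by $j\leq\stopat{t}-1\leq t-1$, and since $\sgradsq{j}\in\F_j$ and $\gradsq{j}\in\F_{j-1}$ we have $S_t(\delta)\in\F_{t-1}$; because $\EXP{S_t(\delta)}$ is deterministic, the indicator in the definition of $X_t(\delta)$ is $\F_{t-1}$-measurable, giving $X_t(\delta)\in\F_{t-1}$. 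Finally, $\stopat{t+1}$ is a function of $X_0,\dots,X_t$, so it too lies in $\F_{t-1}$. Consequently $\{s<\stopat{T+1}\}=\bigcap_{j\leq s}\{X_j(\delta)=1\}\in\F_{s-1}$.

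For Part 2, Markov's inequality on the nonnegative $S_t(\delta)$ gives $\PRO{X_t(\delta)=0\mid X_{t-1}(\delta)=1}\leq\delta$. Since $\stopat{1}=1$ forces $S_1(\delta)=0$ and hence $X_1(\delta)=1$, a union bound yields $\PRO{\stopat{T+1}<j}\leq(j-2)\delta$ for $j\geq 2$. Then
\begin{align*}
\EXP{\stopat{T+1}}=\sum_{j=1}^{T+1}\PRO{\stopat{T+1}\geq j}\geq 2+\sum_{j=3}^{T+1}(1-(j-2)\delta)=(T+1)-\delta\tfrac{T(T-1)}{2}\geq (T+1)\left(1-\tfrac{\delta T}{2}\right),
\end{align*}
using $T-1\leq T+1$.

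For Part 3, on $\{s<\stopat{T+1}\}$ no zero of $X$ has occurred, so $\stopat{s}=s$ and $X_s(\delta)=1$, which gives $\sum_{j=1}^{s-1}(\sgradsq{j}+\constOneStep\gradsq{j})\leq \EXP{S_s(\delta)}/\delta$ almost surely. The only piece of $\widetilde{b}_s^2=b_0^2+\sum_{j=1}^{s-1}\sgradsq{j}+\sigmaZero^2+\gradsq{s}$ not yet controlled is $\gradsq{s}$; I would bound it via \cref{assump:smooth} applied at $\w_{s-1}$. The displacement satisfies $\|\w_s-\w_{s-1}\|=\etaat{s-1}\sgradnorm{s-1}\leq\eta\leq 1/\Lone$ (first inequality from $\sgradsq{s-1}\leq b_{s-1}^2$; second from the standing hypothesis on $\eta$), so expanding the local smoothness bound together with $(a+b)^2\leq 2a^2+2b^2$ gives exactly $\gradsq{s}\leq 2(1+\eta\Lone)^2\gradsq{s-1}+2\eta^2\Lzero^2=\constOneStep\gradsq{s-1}+2\eta^2\Lzero^2$, which folds into $S_s(\delta)$ and into the constant $a=b_0^2+2\eta^2\Lzero^2$. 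Bounding $\EXP{S_s(\delta)}$ then uses \cref{assump:affineVariance} and optional stopping: since $\1{j<\stopat{s}}\in\F_{j-1}$ by Part 1, the tower property yields $\EXP{\sgradsq{j}\1{j<\stopat{s}}}\leq\EXP{\1{j<\stopat{s}}(\sigmaZero^2+(1+\sigmaOne^2)\gradsq{j})}$; summing, using $\stopat{s}\leq\stopat{T+1}$ and $s\leq T$, and using $\delta\leq 1$ to absorb the standalone $\sigmaZero^2$, produces the claimed $\widetilde{b}_s^2\leq a+(T\sigmaZero^2+b\EXP{\sum_{\ell<\stopat{T+1}}\gradsq{\ell}})/\delta$ with $b=1+\sigmaOne^2+\constOneStep$.

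The hardest part is Part 3, and the entire argument hinges on a single design choice: the constant $\constOneStep=2(1+\eta\Lone)^2$ inside the definition of $S_t(\delta)$ is picked to exactly match the one-step $\smoothBoth$-smoothness growth of $\gradsq{s}$ from $\gradsq{s-1}$, so the current-gradient term is absorbed into $S_s(\delta)$ rather than blowing up the bound. A secondary subtlety is that $S_s(\delta)$ is a sum over the random range $[1,\stopat{s}-1]$, requiring the measurability of $\1{j<\stopat{s}}$ from Part 1 before the affine-variance bound can be applied termwise under expectation. Parts 1 and 2 are then routine bookkeeping once this design is in hand.
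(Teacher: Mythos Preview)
Your proposal is correct and follows essentially the same approach as the paper's proof of the full \cref{lem:niceStopping}: the same induction for measurability, the same Markov-plus-union-bound for the expectation lower bound, and the same one-step $(L_0,L_1)$-smoothness estimate (the paper's \cref{lem:instantaneousGradBnd}) to absorb $\gradsq{s}$ into $S_s(\delta)$. The only cosmetic difference is that in Part~3 the paper routes through the intermediate identity $\stopat{\stopat{t}-1}=\stopat{t}-1$ and the uniform bound $S_{\stopat{t}-1}(\delta)\le \EXP{S_{t-1}(\delta)}/\delta$, whereas you invoke $X_s(\delta)=1$ directly for each fixed $s<\stopat{T+1}$; both give the stated inequality.
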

Notice that an immediate consequence of \cref{lem:niceStoppingMain} is that:
\begin{align}\label{eq:niceStoppingConsequenceSmallNoise}
    \EXP{\sum_{t < \stopat{T+1}} \tetat\gradtsq}
    & \geq \frac{\eta\EXP{\sum_{t < \stopat{T+1}} \gradtsq}}{\sqrt{a + \frac{T\sigmaZero^2 + b\EXP{\sum_{t < \stopat{T+1}}\gradtsq}}{\delta}}}.
\end{align}
When $\nicefrac{1}{\delta} = \O(1)$, then \eqref{eq:niceStoppingConsequenceSmallNoise} essentially has the desired form \eqref{eq:idealStepSizeProperty}. Recall that we also needed $\EXP{\tau} = \Omega(T)$ to use \eqref{eq:idealStepSizeProperty}. However, \cref{lem:niceStoppingMain} gives a vacuous lower bound on $\EXP{\stopat{T+1}}$ when $\delta \geq \nicefrac{2}{T}$.

Nevertheless, choosing $\delta = \Theta(\nicefrac{1}{T})$ and solving the resulting quadratic inequality as before, \eqref{eq:niceStoppingConsequenceSmallNoise} implies $\EXP{\sum_{t < \stopat{T+1}} \gradtsq} \lesssim T\poly\log(T)$.
Given that $\EXP{\stopat{T+1}} \gtrsim T$ in this $\delta$ regime, this bound tells us something quite strong -- that the sum of gradients before this stopping time scales (roughly) linearly in expectation. This is an \emph{exponential} improvement over the worst-case growth of $\smoothBoth$-smooth functions after $T$ time steps, which is approximately $\exp(\Lone \eta T)$. Moreover, this bound implies (via Jensen's inequality) that $\EXP{\tetaat{\stopat{T+1}}} \gtrsim \nicefrac{1}{\sqrt{T\poly\log(T)}}$! Thus, at least in expectation, the step sizes that we care about for our analysis are essentially scaling as $\nicefrac{1}{\sqrt{T}}$. It turns out that this scaling is crucial to obtain \cref{thm:mainInformal} in the regime of $\sigmaOne < 1$.

\subsection{Using the descent lemma when $\sigmaOne \geq 1$}
The arguments discussed above heavily relied on being able to take $\comp{\tau} \leq 0$ and $\Sgat{\tau} = [\stopat{T+1}-1]$, which were trivially true for any stopping time when $\sigmaOne < 1$. However, when $\sigmaOne \geq 1$, then new ideas are needed, since \cref{lem:startingPoint} does not guarantee any meaningful descent inequality for $t\not\in\Sgat{\tau}$. In the context of $\Lzero$-smooth optimization, \citep{FTCMSW22} showed how to circumvent this issue -- indeed, they showed that $\comp{T} \lesssim \EXP{|\Sg^c|^2}$ and $\EXP{|\Sg^c|^2}\lesssim \log(T)$. At the core of their proofs for these arguments was the fact that, by $\Lzero$-smoothness and properties of \eqref{eq:alg}, $\abs{\gradtnorm - \gradnorm{t'}} \lesssim \eta\Lzero|t - t'|$.

General $\smoothBoth$-smooth functions clearly violate this inequality. Indeed, $\gradtnorm$ can potentially be a multiplicative factor of $\exp(\eta\Lone \abs{t-t'})$ times larger than $\gradnorm{t'}$ (for instance, when the $(0,\Lone)$-smooth objective is $\exp(\Lone x)$. Thus, even if we could guarantee deterministically that only the first $\O(\log(T))$ time-steps are ``bad'', the objective function (and also the norm of the gradient) could grow by polynomial factor in $T$ during this interval! In fact, this is exactly the intuition behind our negative result for \eqref{eq:alg} in the ``large $\sigmaOne$'' regime (see \cref{lem:divergenceAGNormMain}).

In spite of this, not every $\smoothBoth$-smooth function is an exponential function, as polynomials of constant degree also satisfy $\smoothBoth$-smoothness for constant $\Lzero, \Lone$ (see \cref{prop:polyBoundedScaling}). Motivated by this, \cref{def:polynomiallyBounded} aims to generalize the inequality $\abs{\gradtnorm - \gradnorm{t'}} \lesssim \eta\Lzero\abs{t-t'}$ to allow this difference to have larger polynomial scaling in $t-t'$. Indeed, the constraint of \cref{def:polynomiallyBounded} allows us to bound $\comp{\tau}$ as follows:

\begin{restatable}{lemma}{restateCompBound}\label{lem:compBound}
    Suppose that $\fat{\cdot}$ satisfies \cref{def:polynomiallyBounded} for some constants $k\geq 2$, $c_k\geq 1$, and $c_k' > 0$. Let $\tau\in[2,T+1]$ be any (possibly random) time. Then, recalling $\comp{\tau}$ and $\Scat{\tau}$ from \cref{lem:descentLemma}, there is an explicit construction of $\Scat{\tau}$ (the subset of ``good'' times used to compensate for $\Sgat{\tau}^c$) such that, for any $\eps,\eps',\eps'''\in (0,1)$ such that $\eps+\eps'<1$ and $\ncomp = \ceil{\nicefrac{4c_k^3(\sigmaOne-(1-\eps-\eps'))_+}{\eps'''}}$ (and taking $(x)_+:=\max\braces{0,x}$) $\comp{\tau}$ can be bounded as follows:
    \begin{align*}
        \comp{\tau}
        &\leq \eta(\sigmaOne - (1-\eps-\eps'))_+ c_k\gradzeronorm \EXP{|\Sgat{\tau}^c|}\\
        &\quad+ \eta\ncomp^{k-1}\max\braces{c_k'\eta^{k-1},\Lzero\eta}\parens{(\sigmaOne - (1-\eps-\eps'))_+ + \frac{\eps'''\ncomp}{2 c_k^3}}\EXP{|\Sgat{\tau}^c|^k}.
    \end{align*}
\end{restatable}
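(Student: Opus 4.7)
The plan is to use \cref{def:polynomiallyBounded} as a ``local triangle inequality'' for $\norm{\gradw}$ along the iterate trajectory, comparing the gradient at each bad time $t \in \Sgat{\tau}^c$ to gradients at a cluster of nearby good times, and then absorbing the dominant part of that comparison into the negative contribution $-\sum_{t'\in\Scat{\tau}}\eps'''\tetaat{t'}\gradsq{t'}$ already present in $\comp{\tau}$.

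\textbf{Explicit construction of $\Scat{\tau}$.} I would build $\Scat{\tau}$ greedily: scan $s=1,\dots,\tau-1$ in order, maintaining a FIFO queue of bad times awaiting compensation; each good time encountered is allocated to the oldest bad time in the queue, and a bad time exits the queue after it has been paired with $\ncomp$ good times. Then $\Scat{\tau}$ is the union of matched good times. By construction, each good time is matched at most once, each bad time is matched with up to $\ncomp$ good times, and for any matched pair $(t,t')$ the distance satisfies $|t-t'|\leq \ncomp\cdot|\Sgat{\tau}^c|$, since in the worst case all other bad times lie between $t$ and $t'$ and each consumes up to $\ncomp$ good times from the pool. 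Any bad times unmatched at termination are handled separately via a $\gradzero$-based comparison, which produces the first term in the claim.

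\textbf{Applying polynomial boundedness and absorbing.} Along the \eqref{eq:alg} trajectory $\etaat{s}\sgradnorm{s}\leq\eta$, hence $\norm{\w_t-\w_{t'}}\leq\eta|t-t'|$, and \cref{def:polynomiallyBounded} yields
\[
\gradtnorm \leq c_k\gradnorm{t'} + \max\braces{c_k'(\eta|t-t'|)^{k-1},\ \Lzero\eta|t-t'|}.
\]
Writing $\gradtsq=\gradtnorm\cdot\gradtnorm$, I would bound one factor by instantiating this at $t'=1$ (giving $\gradtnorm\leq c_k\gradzeronorm+\poly(t-1)$) and the other factor by averaging over the $\ncomp$ compensating good times in $S_t^{\mathrm{comp}}\subseteq\Sgat{\tau}$. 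Distributing produces three contribution types: (a) a ``pure'' term proportional to $c_k\gradzeronorm\cdot c_k\gradnorm{t'}$, (b) polynomial-in-distance cross terms handled by Young's inequality, and (c) pure polynomial terms in $(t-1)$ and $|t-t'|$. Using monotonicity of $\tetaat{\cdot}$ together with the uniform bound $\tetaat{s}\leq \eta/b_0$, the contributions in (a), summed over the $\ncomp$ compensating good times per bad time, reduce to at most $c_k^3(\sigmaOne-(1-\eps-\eps'))_+ \tetaat{t'}\gradsq{t'}/\ncomp$ per matched pair; the choice $\ncomp=\ceil{4c_k^3(\sigmaOne-(1-\eps-\eps'))_+/\eps'''}$ then makes this strictly dominated by $\eps'''\tetaat{t'}\gradsq{t'}$, which is subtracted in $\comp{\tau}$. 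The residual polynomial pieces in (b) and (c), using $|t-t'|\leq \ncomp|\Sgat{\tau}^c|$ and summing over all bad times, yield a bound of order $\eta\ncomp^{k-1}\max\braces{c_k'\eta^{k-1},\Lzero\eta}\cdot|\Sgat{\tau}^c|^k$; taking expectations gives the second term in the claim.

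\textbf{Main obstacle.} The subtlest step is balancing $\ncomp$: it must be large enough that the $c_k^2$-scaled comparison term can be absorbed into the $\eps'''$ slack after Young cross-products, yet small enough that the residual polynomial pieces (which scale like $\ncomp^{k-1}$) remain controlled. The cubic factor $c_k^3$ in the definition of $\ncomp$ is exactly the slack needed to absorb both the ``pure'' $(c_k\gradnorm{t'})^2$ contribution and the $c_k$-weighted cross terms arising from the split $\gradtsq=\gradtnorm\cdot\gradtnorm$. A secondary subtlety is that $\tetat$ at a bad time $t$ may exceed $\tetaat{t'}$ at its compensating good times when $t'>t$; the uniform $\eta/b_0$ bound and a careful choice of constants close this gap.
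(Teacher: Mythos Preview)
Your proposal has two structural gaps that prevent the argument from closing.

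\textbf{Direction of the greedy matching.} Your forward FIFO construction pairs each bad time $t$ with good times $t'>t$ that arrive \emph{after} it. The paper's construction does the opposite: it scans bad times from largest to smallest and assigns to each the nearest available good times \emph{before} it, so $t'<t$ always. This direction matters for two reasons. First, the absorption step needs $\tetaat{t'}\gtrsim\tetat$; since $\tbt$ is non-decreasing in $t$ through $b_{t-1}^2$, one obtains such a comparison only when $t'<t$ (concretely, the paper uses $b_{t'-1}^2\leq b_{t-1}^2\leq c_k^2 b_{t-1}^2$ when bounding $\tetat/c_k-\tetaat{t'}$). When $t'>t$, $\tetaat{t'}$ can be arbitrarily smaller than $\tetat$, and your fallback to the uniform bound $\tetat\leq\eta/b_0$ does not help: you would be trying to dominate a large $\tetat\gradtsq$ by a possibly tiny $\tetaat{t'}\gradsq{t'}$. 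Second, with forward matching the \emph{unmatched} bad times sit near the end of $[\tau-1]$, so their indices can be of order $\tau$; comparing them to $\gradzero$ via \cref{def:polynomiallyBounded} then picks up a polynomial in $\tau$, not in $|\Sgat{\tau}^c|$, and the claimed $\EXP{|\Sgat{\tau}^c|^k}$ scaling is lost. In the paper's backward scheme, unmatched bad times satisfy $\tb{i}\leq\ncomp|\Sgat{\tau}^c|$, which is precisely what confines the $\gradzero$-comparison to the stated moment.

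\textbf{The product decomposition.} Bounding the two factors of $\gradtsq=\gradtnorm\cdot\gradtnorm$ by \emph{different} reference points---one by $c_k\gradzeronorm+\poly(t-1)$ and the other by $c_k\gradnorm{t'}+\poly(|t-t'|)$---produces a leading term proportional to $\gradzeronorm\cdot\gradnorm{t'}$, not $\gradsq{t'}$. This cannot be absorbed into $\eps'''\tetaat{t'}\gradsq{t'}$ without an additional inequality $\gradzeronorm\lesssim\gradnorm{t'}$, which is false in general. The paper avoids this entirely: it compares $\tetat\gradtsq/(4c_k^3)$ directly to $\tetaat{t'}\gradsq{t'}$ via a single application of \cref{def:polynomiallyBounded} at the \emph{same} $t'<t$, splitting into the case $\gradtnorm\geq 2\max\{c_k'\norm{\w_t-\w_{t'}}^{k-1},\Lzero\norm{\w_t-\w_{t'}}\}$ (where $c_k\gradnorm{t'}\geq\gradtnorm/2$ and the step-size comparison carries the weight) and its complement (where $\gradtnorm$ itself is already bounded by the small polynomial). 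This is \cref{lem:compensationInsight}, and it is what makes $4c_k^3$ in $\ncomp$ the correct constant.
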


\cref{lem:compBound} reveals that, as long as $\EXP{|\Sgstopped^c|^k}$ can be bounded by $\poly\log(T)$ for any constant $k\geq 1$, then it is still possible to bound $\compstop$, even when the function is not $\Lzero$-smooth!

\begin{restatable}{lemma}{restateBadSetBound}\label{lem:badSetBound}
    Let $\stopat{T+1} \leq T+1$ be the stopping time with respect to $(\F_{s-1})_{s\geq 1}$ from \cref{def:niceStopping}.
    Recall the set $\Sgstopped$ from \cref{def:goodTimes}, and denote $\Sgstopped^c=[\stopat{T+1}-1]\setminus\Sgstopped$. For any $k\geq 1$, the iterates of \eqref{eq:alg} satisfy (under \cref{assump:affineVariance}):
    \begin{align*}
        \EXP{|\Sgstopped^c|^k} \leq \parens{\frac{(k+1) \sigmaOne^2 \log(f(\stopat{T+1}))}{(1- (\eps+\eps'+\eps''+\eps'''))^2}}^k,
    \end{align*}
    where $f(T) = e + \frac{e\sigmaZero^2 (T-1) + e(1+\sigmaOne^2 + \constOneStep)\EXP{\sum_{t < \stopat{T}}\gradtsq}}{b_0^2\delta}$.
\end{restatable}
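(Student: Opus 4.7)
The plan is to reduce the moment bound to one on $\sum_{t<\stopat{T+1}}\bias^2$, and then control this sum by combining an AdaGrad telescoping identity with the almost-sure upper bound on $b_t^2$ guaranteed by the nice-stopping construction. By \cref{def:goodTimesMain}, a time $t$ belongs to $\Sgstopped^c$ precisely when $\sigmaOne \bias > 1-(\eps+\eps'+\eps''+\eps''')$, so $\bias^2 > \beta^2$ where $\beta := (1-(\eps+\eps'+\eps''+\eps'''))/\sigmaOne$. Summing this strict lower bound over bad times yields $\beta^2\, |\Sgstopped^c| \leq \sum_{t<\stopat{T+1}}\bias^2$, and raising to the $k$-th power gives $|\Sgstopped^c|^k \leq \beta^{-2k}\bigl(\sum_{t<\stopat{T+1}}\bias^2\bigr)^k$. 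Hence it suffices to bound $\EXP{\bigl(\sum_{t<\stopat{T+1}}\bias^2\bigr)^k}$ by a constant multiple of $((k+1)\log f(\stopat{T+1}))^k$.

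For this, I would use a standard AdaGrad telescoping identity. Since $b_t^2 = b_{t-1}^2 + \sgradtsq$, the elementary inequality $x/(1+x)\leq \log(1+x)$ gives $\sgradtsq/b_t^2 \leq \log(b_t^2/b_{t-1}^2)$, so pointwise $\sum_{t<\stopat{T+1}}\sgradtsq/b_t^2 \leq \log(b_{\stopat{T+1}-1}^2/b_0^2)$. By \cref{lem:niceStoppingMain}(3), for every $s<\stopat{T+1}$ the quantity $b_{s}^2$ is bounded almost surely by a deterministic expression of the form $b_0^2 f(\stopat{T+1})$ (up to absolute constants): the $\constOneStep$ coefficient inside $f$ corresponds to the $\constOneStep\gradsq{s}$ term in the definition of $S_t(\delta)$ from \cref{def:niceStopping}, while the $1/\delta$ factor comes from the Markov-type cutoff in $X_t(\delta)$. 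Consequently $\sum_{t<\stopat{T+1}}\sgradtsq/b_t^2 \leq \log f(\stopat{T+1})$ almost surely.

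Now, since $\bias^2 = \Et{\sgradtsq/b_t^2}$, the random sum $\sum_{t<\stopat{T+1}}\bias^2$ is exactly the compensator of $\sum_{t<\stopat{T+1}}\sgradtsq/b_t^2$. Decomposing $\sum_{t<\stopat{T+1}}\bias^2 = \sum_{t<\stopat{T+1}}\sgradtsq/b_t^2 - M_{\stopat{T+1}-1}$, where $M_t := \sum_{s\leq t}\bigl(\sgradsq{s}/b_s^2 - \mathrm{bias}_s^2\bigr)$ is a bounded-difference martingale with $|\Delta M_t|\leq 1$ and predictable quadratic variation $[M]_{\stopat{T+1}-1} \leq \sum \sgradtsq/b_t^2 \leq \log f(\stopat{T+1})$, Burkholder-Davis-Gundy (or Freedman's inequality followed by tail integration) yields $\EXP{|M_{\stopat{T+1}-1}|^k}\lesssim (k\log f(\stopat{T+1}))^{k/2}$. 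Combining via $(a+b)^k \leq 2^{k-1}(a^k + b^k)$ and absorbing numerical constants into the $(k+1)$ factor delivers the claim.

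The main obstacle I anticipate is tracking the exact $(k+1)^k$ constant cleanly through BDG; a potentially cleaner alternative is to establish an exponential-moment bound $\EXP{\exp(\lambda\, |\Sgstopped^c|)}\leq \mathrm{const}$ for $\lambda$ of order $\beta^2/\log f(\stopat{T+1})$ by a Chernoff-type argument directly on the sub-Gaussian martingale, from which all polynomial moments follow at once. A secondary subtlety is that $f$ is applied at the random time $\stopat{T+1}$; since $\stopat{T+1}\leq T+1$ almost surely and $f$ is monotone in its argument, one may replace $\stopat{T+1}$ by $T+1$ inside $f$ to obtain a deterministic envelope whenever convenient.
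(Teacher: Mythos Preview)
Your reduction to $(\sum_{t<\tau}\bias^2)^k$ is correct, but the BDG step has a genuine gap. You claim $[M]_{\tau-1}\leq\sum_{t<\tau}\sgradtsq/b_t^2$, but this is false in general: when $\sgradtsq/b_t^2<\bias^2$, the squared increment $(\sgradtsq/b_t^2-\bias^2)^2$ need not be dominated by $\sgradtsq/b_t^2$. The bound that is actually available is on the \emph{predictable} quadratic variation, $\langle M\rangle_\tau=\sum_{t<\tau}(\Et{(\sgradtsq/b_t^2)^2}-\bias^4)\leq\sum_{t<\tau}\bias^2$, which is exactly the quantity you are trying to control---so the argument is circular as written. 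Your Freedman alternative can rescue this, but only with a localization you have not spelled out: introduce the stopping time $\sigma=\min\{t:\sum_{s\leq t}\mathrm{bias}_s^2>u\}$, use that $\langle M\rangle_{\sigma\wedge\tau}\leq u+1$ together with the a.s.\ bound $\sum_{t<\tau}\sgradtsq/b_t^2\leq\log f(T)$ to obtain sub-exponential tails for $\sum\bias^2$ at scale $\log f(T)$, and then integrate the tails. This route works but yields $k!$-type constants that would need separate care to match the stated $(k+1)^k$.

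The paper takes a different and more elementary route that avoids martingale concentration entirely. It expands $|\Sgstopped^c|^k$ via the multinomial theorem into sums of products $\prod_{\ell=1}^s\1{t_\ell\in\Sgstopped^c}$ over increasing tuples $t_1<\cdots<t_s$ with $s\leq k$, and proves by induction on $s$---conditioning on $\F_{t_s-1}$, pulling the first $s$ indicators out, and using $\1{t\text{ bad}}\leq\sigmaOne^2(1-(\eps+\eps'+\eps''+\eps'''))^{-2}\Et{\sgradtsq/b_t^2}$ on the innermost one---that each such expectation is at most $(\sigmaOne^2\log f(T)/(1-(\eps+\eps'+\eps''+\eps'''))^2)^s$. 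The almost-sure telescoping bound on $\sum\sgradtsq/b_t^2$ is applied inside the conditional expectation at every step of the induction, so no control of the compensator is ever needed. The $(k+1)^k$ factor then drops out of the elementary estimate $\sum_{s=1}^k s^k\leq (k+1)^k$. This is both simpler and gives the constant cleanly; your approach, once patched, would be a legitimate alternative but with more overhead.
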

Notice that \cref{lem:badSetBound} does not explicitly require that \cref{def:polynomiallyBounded} be satisfied. However, we use this constraint on the objective to easily guarantee that $f(T) = \O(\nicefrac{T^{2(k-1)}}{\delta})$, and thus that $\EXP{|\Sgstopped^c|^k}\lesssim \poly\log(T)$.
\cref{lem:badGradBound} demonstrates that the bound of $\EXP{|\Sg^c|^2}$ from \citep{FTCMSW22} can be generalized to any moment $k$, and requires bounding only $\EXP{\sum_{t<\stopat{T}}\gradtsq}$ instead of the sum over the entire time horizon, which might give tighter bounds in some scenarios.
With the bounds from \cref{lem:compBound,lem:badSetBound} in place, it is now clear that a useful descent inequality is still obtainable from \cref{lem:descentLemma} when $\sigmaOne \geq 1$, at least under the added assumption of \cref{def:polynomiallyBounded}. There is still a (small) problem in translating these results into a convergence result. Indeed, the analogous bound from \eqref{eq:niceStoppingConsequenceSmallNoise} now becomes:
\begin{align}\label{eq:niceStoppingConsequenceLargeNoise}
    \EXP{\sum_{t \in \tSgstopped} \tetat\gradtsq}
    & \geq \frac{\eta\EXP{\sum_{t \in \tSgstopped} \gradtsq}}{\sqrt{a + \frac{T\sigmaZero^2 + b\EXP{\sum_{t < \stopat{T+1}}\gradtsq}}{\delta}}}.
\end{align}
Specifically, while the numerator depends on a sum over $\tSgstopped$, the denominator depends on the sum of these good times, as well as the compensating ``good'' times $\Scstopped$ and the ``bad'' times before $\stopat{T+1}$, $\Sgstopped^c$. \citet{FTCMSW22} dealt with a similar issue by using the fact that, by $\Lzero$-smoothness and properties of \eqref{eq:alg}, $\gradtsq \lesssim T\log(\nicefrac{T}{\delta})$ with probability at least $1-\delta$, and $\gradtsq\lesssim T^2$ deterministically. Combining this with their bound $\EXP{|\Sg^c|}\lesssim\log(T)$, they proved that the sum of ``bad'' gradients satisfies $\EXP{\sum_{t\in\Sg^c} \gradtsq} \lesssim T\poly\log(T)$. However, it is not clear how to prove such a bound in our setting, since $\gradtnorm$ can scale as $t^{k-1}$, which is too large to be useful. Instead, we prove the following \emph{relative} upper bound, which is sufficient for our purposes:
\begin{align*}
    \sum_{t \in \tSgstopped^c}\gradtsq \leq \Cbadone + \cbadtwo\sum_{t\in\tSgstopped}\gradtsq,
\end{align*}
where $\EXP{\Cbadone} \lesssim\poly\log(T)$. As a consequence, \eqref{eq:niceStoppingConsequenceLargeNoise} becomes:
\begin{align*}
    \EXP{\sum_{t \in \tSgstopped} \tetat\gradtsq}
    & \geq \frac{\eta\EXP{\sum_{t \in \tSgstopped} \gradtsq}}{\sqrt{a + \frac{T\sigmaZero^2 + \EXP{\Cbadone} + b\EXP{\sum_{t \in \tSgstopped}\gradtsq}}{\delta}}}.
\end{align*}
Since the numerator and denominator both depend on the same summation, we can apply essentially the same arguments from the $\sigmaOne < 1$ case to obtain our convergence rate for $\sigmaOne \geq 1$ in \cref{thm:polyBoundedConvergenceMain}.

\section{The challenges of multiplicative noise for $\smoothBoth$-smooth optimization}
\label{sec:challengesMultNoise}
Given our positive results for \eqref{eq:alg} from the previous sections, we now turn our focus to algorithms which have been analyzed in prior works on $\smoothBoth$-smooth optimization.
Some of the first-studied algorithms for $\smoothBoth$-smooth optimization take the following forms: for parameters $\eta > 0$ and $\gamma \geq 0$:
\begin{align}\label{eq:typicalAlgs}
    \w_{t+1} \leftarrow \w_t - \frac{\eta \sgradt}{\gamma + \sgradtnorm}
    \quad\text{and}\quad
    \w_{t+1} \leftarrow \w_t - \frac{\eta \sgradt}{\max\braces{\gamma,\sgradtnorm}}.
\end{align}
These closely-related updates are referred to as Normalized SGD and Clipped SGD respectively. One motivation for considering these specific updates, at least in the noiseless setting where $\sgradt = \gradt$, comes through a comparison with the natural SGD step-size for $\Lzero$-smooth non-convex optimization. Indeed, \citet{GL13} show that a constant step-size of $\etat=\nicefrac{1}{\Lzero}$ yields a $\nicefrac{1}{T}$ rate of convergence to a first-order stationary point. Further, a simple extension of this result (see, e.g., \citep{BCN18} for a proof) is that, under $\Lzero$-smoothness and \cref{assump:affineVariance} with $\sigmaZero=0$ and $\sigmaOne \geq 0$, the step size $\etat = \nicefrac{1}{\Lzero(1+\sigmaOne^2)}$ still achieves the $\nicefrac{1}{T}$ convergence rate. Thus, by analogy, in the $\smoothBoth$-smooth setting, $\etat = \nicefrac{1}{(\Lzero + \Lone\gradtnorm)}$ (and, in the multiplicative noise regime, $\etat = \nicefrac{1}{(1+\sigmaOne^2)(\Lzero + \Lone\gradtnorm)}$) is a natural candidate step size. 

A number of works, including \citep{ZHSJ20,ZJFW20,CLOZZ22}, have proved that (variants of) these algorithms converge whenever the noise of the stochastic gradient satisfies \eqref{eq:boundedSuppIntro}. It turns out, however, that these algorithms can diverge under the noise model considered in this paper, \eqref{eq:affineVarianceIntro}. To see this, it is useful to consider a specific stochastic gradient oracle which satisfies \cref{assump:unbiasedGrad,assump:affineVariance}:
\begin{restatable}[A stochastic gradient oracle satisfying \cref{assump:affineVariance}]{proposition}{restateStochasticOracleConstruction}\label{lem:stochasticOracleConstruction}
    Fix any $\sigmaZero,\sigmaOne \geq 0$, and consider the following stochastic gradient oracle: fix any $\eps\geq 0$, and let, for every $\w\in\R^d$:
    \begin{align*}
        \multnoise(\w) = \begin{cases}
            \parens{1 + \frac{\sigmaOne^2}{1+\eps}} & \text{w.p. } \delta = \frac{1}{1 + \nicefrac{\sigmaOne^2}{(1+\eps)^2}}\\
            -\eps & \text{w.p. } 1-\delta
        \end{cases}
        \quad\text{and}\quad
        \addnoise(\w) \sim \N(0,\sigmaZero^2 I_{d\times d}).
    \end{align*}
    We can then take the output of the oracle to be $\sgradat{\w} := \addnoise(\w) + \multnoise(\w) \gradat{\w}$.
    Then, this construction satisfies \cref{assump:unbiasedGrad,assump:affineVariance} with the specified $\sigmaZero$ and $\sigmaOne$.
\end{restatable}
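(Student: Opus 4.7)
The plan is to verify both properties by direct calculation, exploiting that the two support points and the probability $\delta$ are engineered so that $\EXP{\multnoise} = 1$ and $\EXP{(\multnoise-1)^2} = \sigmaOne^2$ hit their targets exactly.

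First I would check $\delta = 1/(1+\sigmaOne^2/(1+\eps)^2) \in (0,1]$, so the two-point law for $\multnoise$ is well-defined. Since $\addnoise$ and $\multnoise$ are independent and $\EXP{\addnoise} = \0$, unbiasedness reduces to showing $\EXP{\multnoise} = 1$. Rewriting $\delta = (1+\eps)^2/((1+\eps)^2 + \sigmaOne^2)$ and $1-\delta = \sigmaOne^2/((1+\eps)^2 + \sigmaOne^2)$, the two-point expectation $\delta(1+\sigmaOne^2/(1+\eps)) + (1-\delta)(-\eps)$ simplifies (the numerator collapses to $(1+\eps)^2 + (1+\eps)\sigmaOne^2 - \eps\sigmaOne^2 = (1+\eps)^2 + \sigmaOne^2$) to $1$, so $\EXP{\sgradat{\w}} = \EXP{\multnoise}\gradat{\w} = \gradat{\w}$.

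Next, I would decompose $\sgradat{\w} - \gradat{\w} = \addnoise(\w) + (\multnoise(\w) - 1)\gradat{\w}$. By independence and the zero-mean of $\addnoise$, the cross term drops, leaving $\EXP{\normSq{\sgradat{\w} - \gradat{\w}}} = \EXP{\normSq{\addnoise}} + \EXP{(\multnoise - 1)^2}\gradsqat{\w}$. The Gaussian term contributes $\sigmaZero^2$ (reading the covariance $\sigmaZero^2 I_{d\times d}$ as the specification of the total second moment). The two-point computation $\EXP{(\multnoise-1)^2} = \delta\cdot(\sigmaOne^2/(1+\eps))^2 + (1-\delta)(1+\eps)^2$ then simplifies, via the same expressions for $\delta$ and $1-\delta$, to $\sigmaOne^2((1+\eps)^2 + \sigmaOne^2)/((1+\eps)^2 + \sigmaOne^2) = \sigmaOne^2$, completing the affine-variance bound with equality rather than inequality.

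There is essentially no technical obstacle: the statement is a bookkeeping check on a tailor-made two-point distribution, and the only real content is that the two degrees of freedom in the two-point law (the probability and one support location, given the other support point and the target mean) suffice to simultaneously match both the first and the second moments. The parameter $\eps$ is genuinely free for these calculations and is kept only to control the relative weight and magnitude of the two branches of $\multnoise$ when these oracles are fed into the divergence constructions for clipped, normalized, and sign-SGD in Section~\ref{sec:challengesMultNoise}.
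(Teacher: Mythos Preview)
Your proposal is correct and follows essentially the same approach as the paper: a direct two-point calculation to verify $\EXP{\multnoise}=1$ and the second-moment identity, combined with independence of the additive Gaussian term. The only cosmetic difference is that you compute $\EXP{(\multnoise-1)^2}=\sigmaOne^2$ directly, whereas the paper computes $\EXP{\multnoise^2}=1+\sigmaOne^2$ and then expands $\EXP{\sgradsqat{\w}}$; both routes land on the same equality, and your parenthetical about reading $\N(0,\sigmaZero^2 I_{d\times d})$ as having total second moment $\sigmaZero^2$ matches the paper's own convention.
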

Consider the above oracle with $\sigmaZero = 0$ and $\sigmaOne \gg 1+\eps$. This oracle outputs stochastic gradients with the same sign as the true gradient for only roughly a $\nicefrac{1}{\sigmaOne^2}$ fraction of the times it is queried. The majority of stochastic gradients thus have the \emph{opposite} sign of the true gradient! This turns out to be quite problematic for algorithms of the form \eqref{eq:typicalAlgs}. Indeed, consider the behavior of \eqref{eq:typicalAlgs} when $\sgradtnorm \geq \gamma$. In this regime, both algorithms discard the magnitude of the stochastic gradients $\sgradt$, and use only their sign to perform updates. Since the stochastic gradients $\sgradt$ of \cref{lem:stochasticOracleConstruction} have the opposite sign of $\gradt$ for almost all time steps $t$, one can prove that algorithms of the form \eqref{eq:typicalAlgs} do not converge to a stationary point with constant probability under \cref{assump:unbiasedGrad,assump:affineVariance}, even when the objective function is a $1$-dimensional quadratic function (i.e., both smooth and strongly-convex). We give a proof of (a slightly more general version of) this fact in \cref{lem:divergenceNormSGD}.

\section*{Acknowledgements}
This research is supported in part by NSF Grants 2019844 and 2112471, the Machine Learning Lab (MLL) at UT Austin, and the Wireless Networking and Communications Group (WNCG) Industrial Affiliates Program.

\printbibliography

@InProceedings{FTCMSW22,
  title = 	 {The Power of Adaptivity in SGD: Self-Tuning Step Sizes with Unbounded Gradients and Affine Variance},
  author =       {Faw, Matthew and Tziotis, Isidoros and Caramanis, Constantine and Mokhtari, Aryan and Shakkottai, Sanjay and Ward, Rachel},
  booktitle = 	 {Proceedings of Thirty Fifth Conference on Learning Theory},
  pages = 	 {313--355},
  year = 	 {2022},
  editor = 	 {Loh, Po-Ling and Raginsky, Maxim},
  volume = 	 {178},
  series = 	 {Proceedings of Machine Learning Research},
  month = 	 {02--05 Jul},
  publisher =    {PMLR},
  pdf = 	 {https://proceedings.mlr.press/v178/faw22a/faw22a.pdf},
  abstract = 	 {We study convergence rates of AdaGrad-Norm as an exemplar of adaptive stochastic gradient methods (SGD), where the step sizes  change based on observed stochastic gradients, for minimizing non-convex, smooth objectives. Despite their popularity, the analysis of adaptive SGD lags behind that of non adaptive methods in this setting. Specifically, all prior works rely on some subset of the following assumptions: (i) uniformly-bounded gradient norms, (ii) uniformly-bounded stochastic gradient variance (or even noise support), (iii) conditional independence between the step size and stochastic gradient. In this work, we show that AdaGrad-Norm exhibits an order optimal convergence rate of $\mathcal{O}\left(\frac{\mathrm{poly}\log(T)}{\sqrt{T}}\right)$ after $T$ iterations under the same assumptions as optimally-tuned non adaptive SGD (unbounded gradient norms and affine noise variance scaling), and crucially, without needing any tuning parameters. We thus establish that adaptive gradient methods exhibit order-optimal convergence in much broader regimes than previously understood.}
}

@article{ZJFW20,
  title={Improved analysis of clipping algorithms for non-convex optimization},
  author={Zhang, Bohang and Jin, Jikai and Fang, Cong and Wang, Liwei},
  journal={Advances in Neural Information Processing Systems},
  volume={33},
  pages={15511--15521},
  year={2020}
}

@inproceedings{ZHSJ20,
title={Why Gradient Clipping Accelerates Training: A Theoretical Justification for Adaptivity},
author={Jingzhao Zhang and Tianxing He and Suvrit Sra and Ali Jadbabaie},
booktitle={International Conference on Learning Representations},
year={2020},
}

@article{GL13,
  title={Stochastic first-and zeroth-order methods for nonconvex stochastic programming},
  author={Ghadimi, Saeed and Lan, Guanghui},
  journal={SIAM Journal on Optimization},
  volume={23},
  number={4},
  pages={2341--2368},
  year={2013},
  publisher={SIAM}
}

@article{CLOZZ22,
  title={Robustness to Unbounded Smoothness of Generalized SignSGD},
  author={Crawshaw, Michael and Liu, Mingrui and Orabona, Francesco and Zhang, Wei and Zhuang, Zhenxun},
  journal={arXiv preprint arXiv:2208.11195},
  year={2022}
}

@article{WWB20,
  title={AdaGrad stepsizes: Sharp convergence over nonconvex landscapes},
  author={Ward, Rachel and Wu, Xiaoxia and Bottou, L{\'e}on},
  journal={Journal of Machine Learning Research},
  volume={21},
  pages={1--30},
  year={2020}
}

@inproceedings{LO19,
  title={On the convergence of stochastic gradient descent with adaptive stepsizes},
  author={Li, Xiaoyu and Orabona, Francesco},
  booktitle={The 22nd international conference on artificial intelligence and statistics},
  pages={983--992},
  year={2019},
  organization={PMLR}
}

@inproceedings{DS20,
  title={The complexity of finding stationary points with stochastic gradient descent},
  author={Drori, Yoel and Shamir, Ohad},
  booktitle={International Conference on Machine Learning},
  pages={2658--2667},
  year={2020},
  organization={PMLR}
}

@article{ACDFSW22,
  title={Lower bounds for non-convex stochastic optimization},
  author={Arjevani, Yossi and Carmon, Yair and Duchi, John C and Foster, Dylan J and Srebro, Nathan and Woodworth, Blake},
  journal={Mathematical Programming},
  pages={1--50},
  year={2022},
  publisher={Springer}
}

@article{BT00,
  title={Gradient convergence in gradient methods with errors},
  author={Bertsekas, Dimitri P and Tsitsiklis, John N},
  journal={SIAM Journal on Optimization},
  volume={10},
  number={3},
  pages={627--642},
  year={2000},
  publisher={SIAM}
}

@article{BCN18,
  title={Optimization methods for large-scale machine learning},
  author={Bottou, L{\'e}on and Curtis, Frank E and Nocedal, Jorge},
  journal={Siam Review},
  volume={60},
  number={2},
  pages={223--311},
  year={2018},
  publisher={SIAM}
}

@article{RM51,
  title={A stochastic approximation method},
  author={Robbins, Herbert and Monro, Sutton},
  journal={The annals of mathematical statistics},
  pages={400--407},
  year={1951},
  publisher={JSTOR}
}

@article{JT19,
  title={On variance reduction for stochastic smooth convex optimization with multiplicative noise},
  author={Jofr{\'e}, Alejandro and Thompson, Philip},
  journal={Mathematical Programming},
  volume={174},
  number={1},
  pages={253--292},
  year={2019},
  publisher={Springer}
}

@article{DFB17,
  title={Harder, better, faster, stronger convergence rates for least-squares regression},
  author={Dieuleveut, Aymeric and Flammarion, Nicolas and Bach, Francis},
  journal={The Journal of Machine Learning Research},
  volume={18},
  number={1},
  pages={3520--3570},
  year={2017},
  publisher={JMLR. org}
}

@inproceedings{FB17,
  title={Stochastic Composite Least-Squares Regression with Convergence Rate $ O (1/n) $},
  author={Flammarion, Nicolas and Bach, Francis},
  booktitle={Conference on Learning Theory},
  pages={831--875},
  year={2017},
  organization={PMLR}
}

@article{PT73,
  title={Pseudogradient adaptation and training algorithms},
  author={Polyak, BT and Tsypkin, Ya Z},
  journal={Automation and remote control},
  volume={34},
  pages={45--67},
  year={1973}
}

@article{RVV20,
  title={Convergence of stochastic proximal gradient algorithm},
  author={Rosasco, Lorenzo and Villa, Silvia and V{\~u}, Bang C{\^o}ng},
  journal={Applied Mathematics \& Optimization},
  volume={82},
  number={3},
  pages={891--917},
  year={2020},
  publisher={Springer}
}

@inproceedings{JKKNS18,
  title={Accelerating stochastic gradient descent for least squares regression},
  author={Jain, Prateek and Kakade, Sham M and Kidambi, Rahul and Netrapalli, Praneeth and Sidford, Aaron},
  booktitle={Conference On Learning Theory},
  pages={545--604},
  year={2018},
  organization={PMLR}
}

@article{LW11,
  title={High-dimensional regression with noisy and missing data: Provable guarantees with non-convexity},
  author={Loh, Po-Ling and Wainwright, Martin J},
  journal={Advances in neural information processing systems},
  volume={24},
  year={2011}
}

@book{CRSC06,
  title={Measurement error in nonlinear models: a modern perspective},
  author={Carroll, Raymond J and Ruppert, David and Stefanski, Leonard A and Crainiceanu, Ciprian M},
  year={2006},
  publisher={Chapman and Hall/CRC}
}

@article{H86,
  title={Multiplicative errors-in-variables models with applications to recent data released by the US Department of Energy},
  author={Hwang, Jiunn T},
  journal={Journal of the American Statistical Association},
  volume={81},
  number={395},
  pages={680--688},
  year={1986},
  publisher={Taylor \& Francis}
}

@article{DHS11,
  title={Adaptive subgradient methods for online learning and stochastic optimization.},
  author={Duchi, John and Hazan, Elad and Singer, Yoram},
  journal={Journal of machine learning research},
  volume={12},
  number={7},
  year={2011}
}

@inproceedings{MS10,
  author    = {H. Brendan McMahan and
               Matthew J. Streeter},
  editor    = {Adam Tauman Kalai and
               Mehryar Mohri},
  title     = {Adaptive Bound Optimization for Online Convex Optimization},
  booktitle = {{COLT} 2010 - The 23rd Conference on Learning Theory, Haifa, Israel,
               June 27-29, 2010},
  pages     = {244--256},
  publisher = {Omnipress},
  year      = {2010},
  timestamp = {Tue, 25 Jan 2022 13:23:33 +0100},
  biburl    = {https://dblp.org/rec/conf/colt/McMahanS10.bib},
  bibsource = {dblp computer science bibliography, https://dblp.org}
}

@article{SM10,
  title={Less regret via online conditioning},
  author={Streeter, Matthew and McMahan, H Brendan},
  journal={arXiv preprint arXiv:1002.4862},
  year={2010}
}

@inproceedings{
KLC22,
title={High Probability Bounds for a Class of Nonconvex Algorithms with AdaGrad Stepsize},
author={Ali Kavis and Kfir Yehuda Levy and Volkan Cevher},
booktitle={International Conference on Learning Representations},
year={2022},
}

@article{BRS07,
  title={Hilbert-valued perturbed subgradient algorithms},
  author={Barty, Kengy and Roy, Jean-S{\'e}bastien and Strugarek, Cyrille},
  journal={Mathematics of Operations Research},
  volume={32},
  number={3},
  pages={551--562},
  year={2007},
  publisher={INFORMS}
}

@book{N03,
  title={Introductory lectures on convex optimization: A basic course},
  author={Nesterov, Yurii},
  volume={87},
  year={2003},
  publisher={Springer Science \& Business Media}
}

@article{H63,
  title={Probability Inequalities for Sums of Bounded Random Variables},
  author={Hoeffding, Wassily},
  journal={Journal of the American Statistical Association},
  volume={58},
  number={301},
  pages={13--30},
  year={1963}
}

@article{devlin2018bert,
  title={Bert: Pre-training of deep bidirectional transformers for language understanding},
  author={Devlin, Jacob and Chang, Ming-Wei and Lee, Kenton and Toutanova, Kristina},
  journal={arXiv preprint arXiv:1810.04805},
  year={2018}
}

@article{liu2023pre,
  title={Pre-train, prompt, and predict: A systematic survey of prompting methods in natural language processing},
  author={Liu, Pengfei and Yuan, Weizhe and Fu, Jinlan and Jiang, Zhengbao and Hayashi, Hiroaki and Neubig, Graham},
  journal={ACM Computing Surveys},
  volume={55},
  number={9},
  pages={1--35},
  year={2023},
  publisher={ACM New York, NY}
}

@inproceedings{radford2021learning,
  title={Learning transferable visual models from natural language supervision},
  author={Radford, Alec and Kim, Jong Wook and Hallacy, Chris and Ramesh, Aditya and Goh, Gabriel and Agarwal, Sandhini and Sastry, Girish and Askell, Amanda and Mishkin, Pamela and Clark, Jack and  Krueger, Gretchen and Sutskever, Ilya},
  booktitle={International conference on machine learning},
  pages={8748--8763},
  year={2021},
  organization={PMLR}
}

@inproceedings{caron2021emerging,
  title={Emerging properties in self-supervised vision transformers},
  author={Caron, Mathilde and Touvron, Hugo and Misra, Ishan and J{\'e}gou, Herv{\'e} and Mairal, Julien and Bojanowski, Piotr and Joulin, Armand},
  booktitle={Proceedings of the IEEE/CVF international conference on computer vision},
  pages={9650--9660},
  year={2021}
}

@article{NY83,
  title={Problem complexity and method efficiency in optimization},
  author={Nemirovski, Arkadi and Yudin, David Borisovich},
  year={1983},
  publisher={Wiley-Interscience}
}

@article{B15,
  title={Convex optimization: Algorithms and complexity},
  author={Bubeck, S{\'e}bastien},
  journal={Foundations and Trends{\textregistered} in Machine Learning},
  volume={8},
  number={3-4},
  pages={231--357},
  year={2015},
  publisher={Now Publishers, Inc.}
}

@inproceedings{FSSSSW19,
  title={The complexity of making the gradient small in stochastic convex optimization},
  author={Foster, Dylan J and Sekhari, Ayush and Shamir, Ohad and Srebro, Nathan and Sridharan, Karthik and Woodworth, Blake},
  booktitle={Conference on Learning Theory},
  pages={1319--1345},
  year={2019},
  organization={PMLR}
}

@article{
defossez2022a,
title={A Simple Convergence Proof of Adam and Adagrad},
author={Alexandre D{\'e}fossez and Leon Bottou and Francis Bach and Nicolas Usunier},
journal={Transactions on Machine Learning Research},
issn={2835-8856},
year={2022},
note={}
}

\newpage

\appendix
\section*{ Overview of Appendix}
\startcontents[sections]
\printcontents[sections]{l}{1}{\setcounter{tocdepth}{2}}
\newpage

\section{Auxiliary Lemmas}

\subsection{Useful facts for AdaGrad}

\begin{fact}\label{fact:logSumIneq}
Let $\{a_i\}_{i=1}^\infty$ be a sequence of non-negative integers such that $a_1 > 0$. Then, for any $T$,
\begin{align*}
    \sum_{t\in[T]} \frac{a_t}{\sum_{s=1}^t a_s} \leq 1 + \logp{\frac{\sum_{t\in[T]} a_t}{a_1}}.
\end{align*}
\end{fact}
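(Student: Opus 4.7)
The plan is to prove this with a classical telescoping argument based on the inequality $1-u \leq \log(1/u)$ for $u > 0$. Denote $S_t := \sum_{s=1}^t a_s$ so that $a_t = S_t - S_{t-1}$. Since $a_1 > 0$ and $a_s \geq 0$ for all $s$, we have $S_t \geq a_1 > 0$ for every $t \geq 1$, so all quotients and logarithms below are well defined.

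First I would handle the $t=1$ term separately: $\tfrac{a_1}{S_1} = 1$ exactly. For $t \geq 2$, I would rewrite
\begin{align*}
\frac{a_t}{S_t} = 1 - \frac{S_{t-1}}{S_t}
\end{align*}
and apply the elementary inequality $1 - u \leq \log(1/u) = -\log u$, valid for all $u > 0$, with $u = S_{t-1}/S_t \in (0,1]$. This yields the key per-step bound
\begin{align*}
\frac{a_t}{S_t} \leq \log\!\left(\frac{S_t}{S_{t-1}}\right) = \log S_t - \log S_{t-1}.
\end{align*}
(If $a_t = 0$ the bound is $0 \leq 0$ and the argument is trivially fine.)

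Summing this over $t = 2, \dots, T$ telescopes to $\log S_T - \log S_1 = \log(S_T/a_1)$. Adding the $t=1$ contribution of $1$ gives exactly the claimed bound $\sum_{t \in [T]} \tfrac{a_t}{S_t} \leq 1 + \log\!\bigl(\sum_{t\in[T]} a_t / a_1\bigr)$. There is no real obstacle here beyond observing that the positivity of $a_1$ keeps all the partial sums strictly positive so that the $\log$ is defined throughout; the inequality $1 - u \leq -\log u$ does all the work, and the rest is a one-line telescope.
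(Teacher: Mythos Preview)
Your proof is correct and is essentially the same as the paper's: both establish the per-step bound $\tfrac{a_t}{S_t} \leq \log(S_t/S_{t-1})$ and telescope. The only cosmetic differences are that the paper frames the argument as an induction rather than a direct sum, and writes the key inequality as $\exp(x) \leq \tfrac{1}{1-x}$ for $x<1$, which with $x = a_t/S_t = 1 - S_{t-1}/S_t$ is exactly your $1-u \leq -\log u$.
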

\begin{proof}
We proceed via induction. The base case of $T=1$ holds trivially, with equality. Assuming the hypothesis holds at some time $T\geq 1$, we have that
\begin{align*}
    \sum_{t\in[T+1]} \frac{a_t}{\sum_{s=1}^t a_s} 
    &\leq 1 + \logp{\frac{\sum_{t\in[T]} a_t}{a_1}} + \frac{a_{T+1}}{\sum_{s\in[T+1]} a_s}.
\end{align*}
Now, using the fact that $\exp(x) \leq \nicefrac{1}{(1-x)}$ for any $x<1$, we have that
\begin{align*}
    \frac{a_{T+1}}{\sum_{s\in[T+1]} a_s}
    =  \logp{\exp\parens{\frac{a_{T+1}}{\sum_{s\in[T+1]} a_s}}}
    \leq \logp{\frac{1}{1-\frac{a_{T+1}}{\sum_{s\in[T+1]} a_s}}}
    = \logp{\frac{\sum_{s\in[T+1]} a_s}{\sum_{s\in[T]} a_s}}.
\end{align*}
Combining these two bounds, we conclude that
\begin{align*}
    \sum_{t\in[T+1]} \frac{a_t}{\sum_{s=1}^t a_s} 
    \leq 1 + \logp{\frac{\sum_{t\in[T]} a_t}{a_1}} + \logp{\frac{\sum_{s\in[T+1]} a_s}{\sum_{s\in[T]} a_s}}
    = 1 + \logp{\frac{\sum_{t\in[T+1]} a_t}{a_1}},
\end{align*}
so the claim holds also for $T+1$.
Thus, the claim holds for all $T$ by induction.
\end{proof}

\begin{lemma}[Log sum inequality]\label{lem:logSumIneq}
    The \eqref{eq:alg} step-sizes satisfy, for any (possibly random) times $1\leq t_0 \leq t_1$ and $s \geq 0$,
    \begin{align*}
        \sum_{t = t_0}^{t_1} \frac{\sgradtsq}{b_t^2} \leq s + \logp{\frac{b_0^2 + \sum_{t=t_0}^{t_1 - s}\sgradtsq}{b_0^2}}
    \end{align*}
\end{lemma}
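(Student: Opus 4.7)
The plan is to split the sum at index $t_1 - s$, treating the last $s$ terms trivially and attacking the prefix with a telescoping-log argument in the same spirit as Fact~\ref{fact:logSumIneq}. Since $b_0^2 > 0$ and $b_t^2 = b_0^2 + \sum_{u=1}^t \sgradsq{u} \geq \sgradsq{t}$, each ratio $\sgradsq{t}/b_t^2$ is at most $1$, so $\sum_{t=t_1-s+1}^{t_1} \sgradsq{t}/b_t^2 \leq s$. This accounts for the additive $s$ on the right-hand side.

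For the prefix $\sum_{t=t_0}^{t_1-s} \sgradsq{t}/b_t^2$, I will introduce a ``restarted'' denominator $\tilde b_t^2 := b_0^2 + \sum_{u=t_0}^{t} \sgradsq{u}$, with the convention $\tilde b_{t_0-1}^2 = b_0^2$. Since $b_t^2 \geq \tilde b_t^2$, replacing $b_t^2$ by $\tilde b_t^2$ only enlarges the sum. Each summand then becomes a telescoping log increment via the elementary bound $1 - y/x \leq \logp{x/y}$ for $0 < y \leq x$:
\begin{align*}
    \frac{\sgradsq{t}}{\tilde b_t^2} \;=\; \frac{\tilde b_t^2 - \tilde b_{t-1}^2}{\tilde b_t^2} \;\leq\; \logp{\frac{\tilde b_t^2}{\tilde b_{t-1}^2}}.
\end{align*}
Summing from $t = t_0$ to $t_1 - s$ collapses to $\logp{\tilde b_{t_1-s}^2/\tilde b_{t_0-1}^2} = \logp{(b_0^2 + \sum_{t=t_0}^{t_1-s}\sgradsq{t})/b_0^2}$, which is exactly the claimed log term. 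This restart step is the reason we cannot invoke Fact~\ref{fact:logSumIneq} verbatim: the stated bound requires the log to depend only on $\sgradsq{t}$ for $t \geq t_0$, not on the full partial sum starting from $t = 1$.

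The corner cases where $s \geq t_1 - t_0 + 1$ (so that $t_1 - s < t_0$) are handled by noting that the prefix sum inside the log is empty, hence the log term is zero, while the left-hand side has at most $s$ nontrivial terms each bounded by $1$. Because the entire argument is pathwise, the extension to random $t_0, t_1$ is immediate. I do not anticipate a real obstacle here; the only subtlety is the bookkeeping around the restarted sequence $\tilde b_t^2$ to land exactly on the form $\logp{(b_0^2 + \sum_{t=t_0}^{t_1-s}\sgradsq{t})/b_0^2}$ rather than the weaker $\logp{b_{t_1-s}^2/b_{t_0-1}^2}$ that a naive telescoping would yield.
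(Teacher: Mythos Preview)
Your proof is correct and follows essentially the same route as the paper: restart the denominator at $t_0$ (your $\tilde b_t^2$), bound the trailing $s$ terms by $1$ each, and telescope the prefix. The only cosmetic difference is that the paper packages your telescoping step as a direct invocation of Fact~\ref{fact:logSumIneq} with the relabeling $a_1 = b_0^2$, $a_{\ell+1} = \sgradsq{t_0+\ell-1}$, so your remark that the fact ``cannot be invoked verbatim'' is slightly too strong---it applies after exactly the restart you already performed.
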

\begin{proof}
We first note that, by definition of $b_t$:
\begin{align*}
    \sum_{t = t_0}^{t_1} \frac{\sgradtsq}{b_t^2}
    \leq \sum_{t = t_0}^{t_1} \frac{\sgradtsq}{b_0^2 + \sum_{\ell=t_0}^t}
    \leq s + \sum_{t = t_0}^{t_1 - s} \frac{\sgradtsq}{b_0^2 + \sum_{\ell=t_0}^t\sgradsq{\ell}}.
\end{align*}
Thus, applying \cref{fact:logSumIneq}, with $a_1 = b_0^2$ and $a_{\ell+1} = \sgradsq{t_0 + \ell - 1}$ for $\ell \geq 1$, we obtain the claimed inequality.
\end{proof}

\begin{fact}[Bounded Steps]\label{fact:boundedStep}
The iterates $\braces{\w_s}_{s=1}^\infty$ generated by \eqref{eq:alg} satisfy, for every $t\geq 1$,
\begin{align*}
    \norm{\w_{t+1} - \w_t} \leq \eta.
\end{align*}
Moreover, for any $k\geq 2$ and $t > t'$,
\begin{align*}
    \norm{\w_{t} - \w_{t'}}^{k-1} \leq \eta^{k-1}(t-t')^{k-1}.
\end{align*}
\end{fact}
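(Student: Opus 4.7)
The plan is to prove both bounds by exploiting the explicit form of the AdaGrad-Norm step size. First I would handle the one-step bound. By the \eqref{eq:alg} update, $\w_{t+1} - \w_t = -\etat \sgradt$, so $\norm{\w_{t+1} - \w_t} = \etat \sgradtnorm = \eta \sgradtnorm / b_t$. Then I would observe from the recursive definition $b_t^2 = b_{t-1}^2 + \sgradtsq \geq \sgradtsq$ that $b_t \geq \sgradtnorm$ deterministically, which immediately gives $\norm{\w_{t+1} - \w_t} \leq \eta$.

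Next I would lift this to the multi-step bound by the triangle inequality applied to the telescoping sum:
\begin{align*}
\norm{\w_t - \w_{t'}} = \norm{\sum_{s=t'}^{t-1}(\w_{s+1} - \w_s)} \leq \sum_{s=t'}^{t-1} \norm{\w_{s+1} - \w_s} \leq \eta(t - t').
\end{align*}
Raising both sides to the power $k-1 \geq 1$ (which preserves the inequality since both sides are non-negative) yields the second claim.

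There is essentially no obstacle here: the entire argument hinges on the observation that the AdaGrad-Norm denominator $b_t$ dominates the numerator $\sgradtnorm$ by construction, which is a defining feature of adaptive normalization. The only minor care required is that $b_0^2 > 0$ (which is part of the definition of \eqref{eq:alg}) so that $b_t$ is well-defined and the trivial lower bound $b_t \geq \sgradtnorm$ already suffices; I do not even need the positivity of $b_0$ for this particular inequality. Both parts are one-liners once the update is written out.
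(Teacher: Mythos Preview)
Your proof is correct. The first part is identical to the paper's. For the second part you take a slightly different route: you apply the triangle inequality to get $\norm{\w_t - \w_{t'}} \leq \eta(t-t')$ and then raise both sides to the $(k-1)$-th power, whereas the paper applies Jensen's inequality to the convex function $\norm{\cdot}^{k-1}$ evaluated at the average of the increments, obtaining the intermediate bound $(t-t')^{k-2}\sum_{s=t'}^{t-1}\norm{\w_{s+1}-\w_s}^{k-1}$ before plugging in the one-step bound. Your approach is more elementary and reaches the same conclusion; the paper's Jensen step yields a marginally sharper intermediate inequality, but that extra strength is not used anywhere.
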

\begin{proof}
By definition of \eqref{eq:alg},
\begin{align*}
    \norm{\w_{t+1} - \w_t} 
    = \etat \sgradtnorm
    = \eta \frac{\sgradtnorm}{\sqrt{b_0^2 + \sum_{s=1}^t \sgradsq{s}}}
    \leq \eta,
\end{align*}
which establishes the first inequality. To obtain the second, we apply the first, together with Jensen's inequality (noting that $\norm{\cdot}^{k-1}$ is convex), to obtain:
\begin{align*}
    \norm{\w_{t} - \w_{t'}}^{k-1}
    =(t-t')^{k-1}\norm{\frac{1}{t-t'}\sum_{s=t'}^{t-1}\w_{s+1} - \w_{s}}^{k-1}
    &\leq(t-t')^{k-2}\sum_{s=t'}^{t-1}\norm{\w_{s+1} - \w_{s}}^{k-1}\\
    &\leq \eta^{k-1}(t-t')^{k-1},
\end{align*}
as claimed.
\end{proof}

\subsection{Useful facts for $\smoothBoth$-smooth optimization}

\begin{restatable}[Local smoothness bound]{lemma}{restateLocalSmoothnessBound}\label{lem:localSmoothnessBound}
For any function $F$ satisfying \cref{assump:smooth}, the sequence of iterates $\{\w_s\}_{s=1}^{\infty}$ generated by \eqref{eq:alg} with $\eta \leq \nicefrac{1}{\Lone}$ satisfy
\begin{align*}
    \fat{\w_{t+1}} \leq \fat{\w_t} + \innerProd{\gradat{\w_t}}{\w_{t+1} - \w_t} + \frac{\Lzero + \Lone\gradnorm{t}}{2} \normSq{\w_{t+1}-\w_t}
\end{align*}
\end{restatable}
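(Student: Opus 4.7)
The plan is to follow the standard descent-lemma derivation, using the fundamental theorem of calculus along the segment from $\w_t$ to $\w_{t+1}$ and then bounding the integrand via the local Lipschitz condition from \cref{assump:smooth}. The only nonstandard point is verifying that the segment stays inside the $\nicefrac{1}{\Lone}$-ball around $\w_t$ so that \cref{assump:smooth} actually applies at every point of integration; this is exactly what \cref{fact:boundedStep} plus the hypothesis $\eta\leq \nicefrac{1}{\Lone}$ give us.

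Concretely, I would first observe that by \cref{fact:boundedStep}, $\norm{\w_{t+1}-\w_t}\leq \eta \leq \nicefrac{1}{\Lone}$. Since this is a norm bound on a whole segment's endpoint, every point $\w_t + \theta(\w_{t+1}-\w_t)$ for $\theta\in[0,1]$ satisfies $\norm{(\w_t+\theta(\w_{t+1}-\w_t))-\w_t} = \theta\norm{\w_{t+1}-\w_t}\leq \nicefrac{1}{\Lone}$, so \cref{assump:smooth} applies with $\w'=\w_t$ at every such point.

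Next, I would write, by the fundamental theorem of calculus and then subtracting off the first-order term,
\begin{align*}
    \fat{\w_{t+1}} - \fat{\w_t} - \innerProd{\gradat{\w_t}}{\w_{t+1}-\w_t}
    &= \int_0^1 \innerProd{\gradat{\w_t + \theta(\w_{t+1}-\w_t)} - \gradat{\w_t}}{\w_{t+1}-\w_t}\, d\theta.
\end{align*}
Applying Cauchy--Schwarz inside the integral and then \cref{assump:smooth} (valid by the previous paragraph) to each integrand, the norm difference is bounded by $(\Lzero + \Lone\gradnorm{t})\,\theta\,\norm{\w_{t+1}-\w_t}$, so the integral is at most $(\Lzero + \Lone\gradnorm{t})\norm{\w_{t+1}-\w_t}^2 \int_0^1 \theta\,d\theta = \tfrac{1}{2}(\Lzero + \Lone\gradnorm{t})\norm{\w_{t+1}-\w_t}^2$, which gives the claim after rearranging.

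There is essentially no serious obstacle; the only item worth care is the domain check for \cref{assump:smooth}, which is immediate from \cref{fact:boundedStep} together with $\eta\leq\nicefrac{1}{\Lone}$. One alternative route that avoids integrating is to work from the Hessian formulation \eqref{eq:generalizedSmoothIntro}, but since \cref{assump:smooth} is the weaker (gradient-only) hypothesis we rely on throughout the paper, the integration-along-a-segment argument above is the cleanest way to establish the bound exactly as stated.
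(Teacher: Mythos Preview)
Your proof is correct and follows essentially the same approach as the paper: use \cref{fact:boundedStep} together with $\eta\leq\nicefrac{1}{\Lone}$ to ensure $\norm{\w_{t+1}-\w_t}\leq\nicefrac{1}{\Lone}$, then invoke the local descent inequality that \cref{assump:smooth} yields. The only difference is that the paper cites this descent inequality as \citep[Lemma~A.3]{ZJFW20}, whereas you reprove it directly via the fundamental theorem of calculus and Cauchy--Schwarz; your derivation is exactly how that cited lemma is established.
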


\begin{proof}
By \citep[Lemma A.3]{ZJFW20}, we know that, for any function $F(\cdot)$ satisfying \cref{assump:smooth}, and for any $\w,\w' \in \R^d$ satisfying $\norm{\w - \w'}\leq \nicefrac{1}{\Lone}$,
\begin{align*}
    \fat{\w'} \leq \fat{\w} + \innerProd{\gradat{\w}}{\w' - \w} + \frac{\Lzero + \Lone\gradnormat{\w}}{2} \normSq{\w'-\w}.
\end{align*}
Thus, by choosing $\eta \leq \nicefrac{1}{\Lone}$, the claim is an immediate consequence of \cref{fact:boundedStep}.
\end{proof}

\begin{lemma}[One-step gradient bound]\label{lem:instantaneousGradBnd}
For any $\smoothBoth$-smooth function $F(\cdot)$, assuming that $\eta \leq \nicefrac{1}{\Lone}$, the gradient $\gradtsq$ evaluated at the iterate of \eqref{eq:alg} at time $t$ satisfies:
\begin{align*}
    \gradtsq \leq 2\eta^2\Lzero^2 + 2(1+\eta\Lone)^2\gradsq{t-1}.
\end{align*}
\end{lemma}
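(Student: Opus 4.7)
The plan is to combine the bounded-step property from \cref{fact:boundedStep} with the local $\smoothBoth$-smoothness condition from \cref{assump:smooth}, then apply the triangle inequality and the elementary bound $(a+b)^2 \leq 2a^2 + 2b^2$. Specifically, I would first invoke \cref{fact:boundedStep} to get $\norm{\w_t - \w_{t-1}} \leq \eta$. Under the hypothesis $\eta \leq \nicefrac{1}{\Lone}$, this yields $\norm{\w_t - \w_{t-1}} \leq \nicefrac{1}{\Lone}$, which is precisely the neighborhood condition needed to apply \cref{assump:smooth}.

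Next, I would apply \cref{assump:smooth} with $\w = \w_t$ and $\w' = \w_{t-1}$ to obtain
\begin{align*}
\norm{\gradt - \grad{t-1}} \leq (\Lzero + \Lone \gradnorm{t-1})\norm{\w_t - \w_{t-1}} \leq \eta\Lzero + \eta\Lone\gradnorm{t-1}.
\end{align*}
By the triangle inequality, $\gradtnorm \leq \gradnorm{t-1} + \norm{\gradt - \grad{t-1}} \leq \eta\Lzero + (1+\eta\Lone)\gradnorm{t-1}$.

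Finally, squaring both sides and using $(a+b)^2 \leq 2a^2 + 2b^2$ with $a=\eta\Lzero$ and $b=(1+\eta\Lone)\gradnorm{t-1}$ gives the claimed bound
\begin{align*}
\gradtsq \leq 2\eta^2 \Lzero^2 + 2(1+\eta\Lone)^2 \gradsq{t-1}.
\end{align*}
I do not anticipate any real obstacle; the argument is essentially three lines of routine manipulation, and the only subtlety is verifying that the step-size assumption $\eta \leq \nicefrac{1}{\Lone}$ is exactly what is required to apply \cref{fact:boundedStep} and \cref{assump:smooth} in tandem.
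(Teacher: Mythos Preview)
Your proposal is correct and essentially identical to the paper's own proof: both invoke \cref{fact:boundedStep} to get $\norm{\w_t-\w_{t-1}}\leq\eta\leq\nicefrac{1}{\Lone}$, apply \cref{assump:smooth} with $\w'=\w_{t-1}$, use the triangle inequality to bound $\gradtnorm$, and then square via $(a+b)^2\leq 2a^2+2b^2$. There is no meaningful difference in strategy or detail.
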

\begin{proof}
Since $\eta \leq \nicefrac{1}{\Lone}$, $\norm{\w_{t+1}-\w_t} \leq \nicefrac{1}{\Lone}$ by \cref{fact:boundedStep}.
Thus, we may apply \cref{assump:smooth} to obtain
\begin{align*}
    \gradtnorm 
    &\leq \gradnorm{t-1} + \norm{\gradt - \grad{t-1}}\\
    &\leq \gradnorm{t-1} + (\Lzero + \Lone \gradnorm{t-1})\etaat{t-1}\sgradnorm{t-1},
\end{align*}
from which we conclude that:
\begin{align*}
    \gradtsq \leq 2\eta^2\Lzero^2 + 2(1+\eta\Lone)^2 \gradsq{t-1}.
\end{align*}
\end{proof}

\restateSmoothnessPartialEquiv*
\begin{proof}
    The proof of the first statement is from \citep[Corollary A.4]{ZJFW20}. The proof of the second statement closely follows the analogous proof for $\Lzero$-smooth functions from \citep[Lemma 1.2.2]{N03}. We give a proof of this claim for completeness.

    Consider any $\x,\s\in\R^d$ such that $0<\norm{\s} \leq \nicefrac{1}{\Lone}$, and let $\alpha\in (0,1]$. Then, by \cref{assump:smooth},  
    \begin{align*}
        \norm{\frac{\gradat{\x + \alpha\s}-\gradat{\x}}{\alpha}} \leq (\Lzero + \Lone\gradnormat{\x})\norm{\s}.
    \end{align*}
    Therefore, we have the following:
    \begin{align*}
        &\lim_{\alpha\to 0} \norm{\frac{\gradat{\x + \alpha\s}-\gradat{\x}}{\alpha}}\\
        &= \norm{\lim_{\alpha\to 0} \frac{\gradat{\x + \alpha\s}-\gradat{\x}}{\alpha}} && \text{by continuity of $\norm{\cdot}$ and twice differentiability of $\fat{\cdot}$}\\
        &= \norm{\hessat{\x} \cdot \s} && \text{by definition of directional derivative}
    \end{align*}
    Hence, by the limit inequality theorem, we have that, for any $0<\norm{\s}\leq\nicefrac{1}{\Lone}$,
    \begin{align*}
        \frac{\norm{\hessat{\x} \cdot \s}}{\norm{\s}}
        \leq \Lzero + \Lone \gradnormat{\x}.
    \end{align*}
    In particular, by taking the supremum over all such $\s$, we conclude that
    \begin{align*}
        \norm{\hessat{\x}} = \norm{\hessat{\x}^\top} \leq \Lzero + \Lone \gradnormat{\x},
    \end{align*}
    as claimed, where the first equality follows by observing that $\hessat{\x}\hessat{\x}^\top$ and $\hessat{\x}^\top\hessat{\x}$ have the same non-zero eigenvalues (since all entries of $\hessat{\x}$ are real, and by appealing to the singular value decomposition), which implies that $\hessat{x}$ and $\hessat{x}^\top$ have the same spectral norm.
\end{proof}

\subsection{A note on enforcing $\sigmaOne < 1$}

\begin{restatable}[Reducing $\sigmaOne$ through mini-batching]{fact}{restateMinibatchingFact}\label{fact:minibatching}
    Suppose that the stochastic gradient oracle satisfies \cref{assump:unbiasedGrad,assump:affineVariance} for some $\sigmaZero \geq 0$ and $\sigmaOne \geq 1$. Then, assuming this oracle returns independent stochastic gradients each time $\sgradat{\w}$ is sampled, one can construct, for any $\eps \in (0,1)$, a new stochastic gradient oracle from this one through mini-batching which satisfies \cref{assump:unbiasedGrad,assump:affineVariance} with $\widetilde{\sigmaZero}\leq \sigmaZero$ and $\widetilde{\sigmaOne}=1-\eps$, and where each call to the new gradient requires only $B = \ceil{\nicefrac{\sigmaOne^2}{(1-\eps)^2}}$ calls to the old one.
\end{restatable}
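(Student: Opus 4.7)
The plan is to define the new oracle as the standard mini-batch average and verify the two assumptions directly using independence. Specifically, given the batch size $B = \lceil \sigma_1^2 / (1-\eps)^2 \rceil$, I would define
\[
    \tsgradat{\w} := \frac{1}{B}\sum_{j=1}^{B} \sgradatj{\w},
\]
where $\sgradatj{\w}$ for $j\in[B]$ are independent samples returned by the original stochastic gradient oracle at the point $\w$. By construction, a single call to $\tsgradat{\w}$ uses exactly $B$ calls to the old oracle.

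First I would verify unbiasedness: by linearity of expectation and the assumption that the original oracle satisfies \cref{assump:unbiasedGrad},
\[
    \EXP{\tsgradat{\w}} = \frac{1}{B}\sum_{j=1}^B \EXP{\sgradatj{\w}} = \gradat{\w},
\]
so \cref{assump:unbiasedGrad} holds for the new oracle. Next I would bound the variance. By independence of the samples $\sgradatj{\w}$, the variance of the average is the average of the variances divided by $B$:
\[
    \EXP{\normSq{\tsgradat{\w} - \gradat{\w}}}
    = \frac{1}{B^2}\sum_{j=1}^B \EXP{\normSq{\sgradatj{\w} - \gradat{\w}}}
    \leq \frac{\sigmaZero^2}{B} + \frac{\sigmaOne^2}{B}\gradsqat{\w},
\]
using \cref{assump:affineVariance} for the original oracle on each term. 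By the choice of $B$, we have $\nicefrac{\sigmaOne^2}{B} \leq (1-\eps)^2$, so setting $\tsigmaOne := 1-\eps$ and $\widetilde{\sigmaZero} := \nicefrac{\sigmaZero}{\sqrt{B}} \leq \sigmaZero$ (valid since $B\geq 1$), the new oracle satisfies \cref{assump:affineVariance} with the claimed parameters.

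There is no real obstacle here: the only subtlety is ensuring the independence hypothesis is explicitly invoked (so that cross terms in the variance calculation vanish), which is guaranteed by the assumption stated in the claim. I would finish by noting the degenerate case $\sigmaOne = 1$ (in which $B = \lceil 1/(1-\eps)^2 \rceil \geq 2$ still satisfies the bound) and observing that the construction is oblivious to $\sigmaZero$, so the reduction to $\tsigmaOne < 1$ can be combined with any subsequent analysis (e.g., \cref{thm:mainInformal}) at the cost of a multiplicative $B$ factor in the oracle complexity.
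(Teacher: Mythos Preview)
Your proposal is correct and follows essentially the same argument as the paper: define the mini-batch average, verify unbiasedness by linearity, and bound the variance using independence to kill the cross terms, then invoke $B \geq \sigmaOne^2/(1-\eps)^2$. The only cosmetic difference is that the paper further bounds $1/B \leq (1-\eps)^2/\sigmaOne^2$ to write $\widetilde{\sigmaZero}^2 = (1-\eps)^2\sigmaZero^2/\sigmaOne^2$, whereas you keep the (slightly tighter) $\widetilde{\sigmaZero} = \sigmaZero/\sqrt{B}$; both satisfy $\widetilde{\sigmaZero}\leq\sigmaZero$.
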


\begin{proof}
    Fix any $\eps\in (0,1)$ and $\w\in\R^d$. Let $B=\ceil{\nicefrac{\sigmaOne^2}{(1-\eps)^2}}$, and let $\braces{\sgradatj{\w}}_{j\in [B]}$ be a set of $B$ independent stochastic gradients corresponding to $\gradat{\w}$ from an oracle satsifying \cref{assump:unbiasedGrad,assump:affineVariance} with $\sigmaZero \geq 0$ and $\sigmaOne \geq 1$. Then, we take the response of the new oracle as:
    \begin{align*}
        \tsgradat{\w} := \frac{1}{B}\sum_{j\in [B]} \sgradatj{\w}.
    \end{align*}
    Now, since $\EXP{\sgradatj{\w}}=\gradat{\w}$ and applying linearity of expectation, $\EXP{\tsgradat{\w}}=\gradat{\w}$. Further, notice that:
    \begin{align*}
        \EXP{\normSq{\tsgradat{\w}-\gradat{\w}}} 
        &=\EXP{\normSq{\frac{1}{B}\sum_{j\in [B]} \sgradatj{\w} - \gradat{\w}}}\\
        &=\frac{1}{B^2}\sum_{j\in [B]}\EXP{\normSq{ \sgradatj{\w} - \gradat{\w}}}\\
        &\quad+\frac{2}{B^2}\sum_{B\geq j > j'\geq 1}\EXP{\innerProd{\sgradatj{\w}-\gradat{\w}}{ \sgradatjp{\w} - \gradat{\w}}}\\
        &\leq \frac{1}{B^2}\sum_{j\in [B]}\sigmaZero^2 + \sigmaOne^2\gradsqat{\w}\\
        &\leq \frac{(1-\eps)^2\sigmaZero^2}{\sigmaOne^2} + (1-\eps)^2\gradsqat{\w},
    \end{align*}
    where the first inequality follows by \cref{assump:unbiasedGrad} and since $\sgradatj{\w}$ and $\sgradatjp{\w}$ are independent. The second inequality follows by \cref{assump:affineVariance} and our choice of $B \geq \nicefrac{\sigmaOne^2}{(1-\eps)^2}$. Thus, \cref{assump:affineVariance} is satisfied with $\widetilde{\sigmaZero}^2 = \nicefrac{(1-\eps)^2\sigmaZero^2}{\sigmaOne^2} \leq \sigmaZero^2$ and $\widetilde{\sigmaOne}^2 = (1-\eps)^2$.
\end{proof}

\section{Proofs for general $\smoothBoth$-smooth functions}

\subsection{Deriving the descent inequality}

\restateStartingPoint*
\begin{proof}
An immediate consequence of \cref{lem:localSmoothnessBound} and \citep[Lemma 5]{FTCMSW22} is that, as long as $\eta \leq \nicefrac{1}{\Lone}$,
\begin{align}
    \Et{\ftplus - \ft} 
    &\leq - \tetat\parens{1-\eps - \sigmaOne\bias}\gradtsq
    + \cCommon \Et{\frac{\sgradtsq}{b_t^2}}\nonumber\\
    &\quad+\frac{\Lone \gradtnorm}{2} \Et{\etat^2 \sgradtsq},\label{eq:intermediateStartingPoint}
\end{align}
where 
\begin{align*}
    \cCommon = \frac{\eta \sigmaZero}{2\eps} + \frac{\eta^2 \Lzero}{2}
    \quad\text{and}\quad
    \bias = \constBias\sqrt{ \Et{\frac{\sgradtsq\parens{\tgradtnorm+\sgradtnorm}^2}{b_t^2(\tbt+b_t)^2}}}.
\end{align*}
We provide a proof of this inequality in \cref{lem:intermediateStartingPoint}\footnote{A careful reader may notice that the inequality in \cref{lem:intermediateStartingPoint} is actually slightly smaller than the one from \citep[Lemma 5]{FTCMSW22}, since the dependence on constants is strictly better.}.

Now, let's focus on bounding the final term above. We start by rewriting it as follows:
Let us take $\varianceEvent = \{\gradtnorm > \sigmaZero\}$. Then,
we can decompose the final term (trivially) as
\begin{align*}
    \frac{\Lone\gradtnorm}{2} \Et{\etat^2\sgradtsq}
    = \frac{\Lone\gradtnorm}{2} \Et{\etat^2\sgradtsq}(\1{\varianceEvent^c} + \1{\varianceEvent}).
\end{align*}
Now, whenever $\varianceEvent$ is false, then this expression is easy to bound, since
\begin{align*}
    \frac{\Lone\gradtnorm}{2} \Et{\etat^2\sgradtsq}\1{\varianceEvent^c}
    &\leq 
    \frac{\Lone\sigmaZero}{2} \Et{\etat^2\sgradtsq}.
\end{align*}
Notice that this term can be absorbed into the second term in \eqref{eq:intermediateStartingPoint}.
The case when $\varianceEvent$ is true requires slightly more care.
However, we can deal with this case by adding and subtracting $\tetat^2$, and using the bound \eqref{eq:sgradBound}:
\begin{align*}
    \frac{\Lone\gradtnorm}{2} \Et{\etat^2\sgradtsq}\1{\varianceEvent}
    &= \frac{\Lone}{2}  \tetat^2 \gradtnorm\Et{\sgradtsq}\1{\varianceEvent}\\
    &\quad+ \frac{\Lone}{2}  \gradtnorm\Et{(\etat^2 - \tetat^2)\sgradtsq}\1{\varianceEvent}\\
    &\leq \frac{\Lone(2+\sigmaOne^2)}{2} \tetat^2 \gradtnorm^{3}\\
    &\quad+ \frac{\Lone}{2}  \gradtnorm\Et{(\etat^2 - \tetat^2)\sgradtsq}\1{\varianceEvent}\\
    &\leq \frac{\eta \Lone(2+\sigmaOne^2)}{2}   \tetat \gradtnorm^{2}\\
    &\quad+ \frac{\Lone}{2}  \gradtnorm\Et{(\etat^2 - \tetat^2)\sgradtsq}\1{\varianceEvent}.
\end{align*}
Notice that the first term above can be absorbed into the first term in \eqref{eq:intermediateStartingPoint}, assuming $\eta$ is sufficiently small. For the remaining term, we begin by noticing that
\begin{align*}
    \frac{\etat^2 - \tetat^2}{\eta^2}\1{\varianceEvent} 
    &= \frac{\1{\varianceEvent}}{b_{t-1}^2 + \sgradtsq} - \frac{\1{\varianceEvent}}{b_{t-1}^2 + \tgradtsq}\\
    &= \frac{(\tgradtsq - \sgradtsq)\1{\varianceEvent}}{(b_{t-1}^2 + \sgradtsq)(b_{t-1}^2 + \tgradtsq)}\\
    &\leq \frac{\tgradtsq\1{\varianceEvent}}{(b_{t-1}^2 + \sgradtsq)(b_{t-1}^2 + \tgradtsq)}\\
    &\leq \frac{2\gradtsq}{(b_{t-1}^2 + \sgradtsq)(b_{t-1}^2 + \tgradtsq)}\\
    &\leq \frac{2\gradtnorm}{(b_{t-1}^2 + \sgradtsq)\sqrt{b_{t-1}^2 + \tgradtsq}},
\end{align*}
which implies that
\begin{align*}
    \frac{\Lone(\etat^2 - \tetat^2)}{2}\gradtnorm \sgradtsq\1{\varianceEvent}
    &\leq \eta \Lone \tetat \gradtsq \frac{\sgradtsq}{b_t^2}\\
    &\leq \eta \Lone \tetat \gradtsq.
\end{align*}
Therefore, collecting these results and choosing $\eta < \nicefrac{2\eps'}{\Lone(4+\sigmaOne^2)}$,
we have that
\begin{align*}
    & \Et{\ftplus - \ft}
    \leq -\tetat\parens{1 - \eps - \sigmaOne\bias - \frac{\eta\Lone(4+\sigmaOne^2)}{2}}\gradtsq + \tcCommon \Et{\frac{\sgradtsq}{b_t^2}}\\
    &\leq -\tetat\parens{1-\eps-\eps' - \sigmaOne\bias}\gradtsq + \tcCommon \Et{\frac{\sgradtsq}{b_t^2}},
\end{align*}
where $\tcCommon = \cCommon + \eta^2\Lone \sigmaZero$.
\end{proof}

\begin{restatable}{lemma}{restateRefinedStartingPoint}\label{lem:refinedStartingPoint}
Fix any $\eps''\in(0,1)$, and let $2 \leq \tau \leq T+1$ be any stopping time with respect to $(\F_{s-1})_{s\geq 1}$. Then, we have that
\begin{align*}
    \tcCommon\Et{\sum_{t=1}^{\tau-1}\frac{\sgradtsq}{b_t^2}}
    &\leq \eps'' \EXP{\sum_{t=1}^{\tau-1}\tetat\gradtsq}
    + 2\tcCommon \logp{\frac{(2 + \sigmaOne^2)\tcCommon \EXP{\tau-1}}{\eta \eps'' b_0}}
    + \frac{2\eta\eps'' \sigmaZero}{2+\sigmaOne^2}.
\end{align*}
\end{restatable}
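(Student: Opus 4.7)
The plan is to apply the log-sum inequality (\cref{lem:logSumIneq}) pathwise, take expectations (using Jensen for the concave $\log$), bound the expected sum of stochastic-gradient norms by its affine-variance surrogate, and finally trade the resulting logarithm for an $\eps''$-fraction of the progress term $\EXP{\sum \tetat\gradtsq}$ plus an explicit $\log(\EXP{\tau-1})$-style correction.

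\textbf{Step 1 (Pathwise log-sum).} Applying \cref{lem:logSumIneq} with $t_0=1$, $t_1=\tau-1$, and $s=0$ gives, pathwise,
\begin{align*}
    \sum_{t=1}^{\tau-1}\frac{\sgradtsq}{b_t^2} \leq \logp{\frac{b_{\tau-1}^2}{b_0^2}} = \logp{1 + \frac{\sum_{t=1}^{\tau-1}\sgradtsq}{b_0^2}}.
\end{align*}

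\textbf{Step 2 (Expectation, Jensen, and optional sampling).} Taking expectations and using concavity of $\log$ via Jensen's inequality moves the expectation inside the logarithm. Since $\braces{t<\tau}\in\F_{t-1}$, so that $\tau$ is a stopping time with respect to $(\F_{s-1})_{s\geq 1}$, the tower property together with \eqref{eq:sgradBound} yields $\EXP{\sum_{t<\tau}\sgradtsq} \leq \sigmaZero^2\EXP{\tau-1} + (1+\sigmaOne^2)\EXP{\sum_{t<\tau}\gradtsq}$. Writing $E:=\EXP{\tau-1}$ and $G:=\EXP{\sum_{t<\tau}\gradtsq}$, we thus reduce the problem to establishing
\begin{align*}
    \tcCommon\logp{1 + \tfrac{\sigmaZero^2 E + (1+\sigmaOne^2)G}{b_0^2}} \leq \eps''\EXP{\sum_{t<\tau}\tetat\gradtsq} + 2\tcCommon\logp{\tfrac{(2+\sigmaOne^2)\tcCommon E}{\eta\eps'' b_0}} + \tfrac{2\eta\eps''\sigmaZero}{2+\sigmaOne^2}.
\end{align*}

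\textbf{Step 3 (Convert the logarithm to progress plus correction).} I will first apply the elementary bound $\log(1+u) \leq 2\log(1+\sqrt{u})$ (valid since $(1+\sqrt{u})^2 \geq 1+u$) together with $\sqrt{a+b}\leq\sqrt{a}+\sqrt{b}$ to pass to $2\tcCommon\logp{1 + \sigmaZero\sqrt{E}/b_0 + \sqrt{(1+\sigmaOne^2)G}/b_0}$. Then I use $\log(1+x+y)\leq \log(1+x)+\log(1+y)$ to split noise from signal. For the noise piece I use $\log(1+z)\leq z/\alpha + \log(\alpha) + 1/\alpha - 1$ with $\alpha$ tuned so that $\sigmaZero\sqrt{E}/(\alpha b_0)$ becomes the additive residual $\frac{2\eta\eps''\sigmaZero}{2+\sigmaOne^2}$. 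For the gradient piece I combine the same inequality with Young's inequality $2xy\leq x^2/\beta + \beta y^2$ and the Cauchy-Schwarz identity $\parens{\sum \gradtsq}^2\leq \parens{\sum \tetat\gradtsq}\parens{\sum \gradtsq/\tetat}$, using the pathwise bound $\gradtsq/\tetat = \gradtsq\,\tbt/\eta$ and $\tbt\leq b_0+\sigmaZero+\gradtnorm+(\sum_{s<t}\sgradsq{s})^{1/2}$, to convert $\sqrt{(1+\sigmaOne^2)G}/b_0$ into the $\eps''$-scaled progress term $\eps''\EXP{\sum\tetat\gradtsq}$ plus a residual that collapses into the $\log$ of $E$. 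Choosing $\beta\propto 1+\sigmaOne^2$ and $\alpha\propto \eta\eps''/((2+\sigmaOne^2)\tcCommon)$ exactly matches coefficients to produce the claimed argument $(2+\sigmaOne^2)\tcCommon E/(\eta\eps'' b_0)$ inside the final logarithm.

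\textbf{Main obstacle.} The bookkeeping in Step 3 is the crux: we must simultaneously (i) convert the $(1+\sigmaOne^2)G$ contribution into $\eps''\EXP{\sum \tetat\gradtsq}$ rather than $\eps'' G$, which requires inserting a Cauchy-Schwarz step to exchange the raw gradient sum for the AdaGrad progress, and (ii) keep the surviving logarithmic term scaling with $\EXP{\tau-1}$ rather than with $G$ (which could be polynomial in $T$). The precise constants $(2+\sigmaOne^2)/(\eps'')$ inside the log and $2\eta\eps''\sigmaZero/(2+\sigmaOne^2)$ in the residual pin down the allowable choice of the Young's-inequality parameters $\alpha,\beta$; I expect verifying that these constants really do emerge from the $\sqrt{\cdot}$-log trick (and not a slightly worse bound) to be the most delicate piece of the calculation.
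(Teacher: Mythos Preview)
Your Steps 1--2 are correct and yield the bound
\[
\tcCommon\EXP{\sum_{t<\tau}\frac{\sgradtsq}{b_t^2}}
\;\leq\;
\tcCommon\logp{1+\frac{\sigmaZero^2 E + (1+\sigmaOne^2)G}{b_0^2}},
\]
with $E=\EXP{\tau-1}$ and $G=\EXP{\sum_{t<\tau}\gradtsq}$. The gap is entirely in Step~3, and it is a real one rather than a matter of bookkeeping. Once you have pushed the expectation inside the logarithm, the right-hand side is a deterministic function of the two numbers $E$ and $G$, while the target upper bound involves the third number $P=\EXP{\sum_{t<\tau}\tetat\gradtsq}$. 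There is no general inequality relating $G$ to $P$ (or $\sqrt G$ to $P$) that you can appeal to: the step sizes $\tetat=\eta/\tbt$ depend on $b_{t-1}^2=b_0^2+\sum_{s<t}\sgradsq{s}$, which is driven by the realized stochastic gradients and admits no useful deterministic upper bound. Your proposed Cauchy--Schwarz step $(\sum\gradtsq)^2\leq(\sum\tetat\gradtsq)(\sum\gradtsq/\tetat)$ is a \emph{pathwise} inequality, but by Step~2 you are already working with the expectations $G$ and $P$; even if applied pathwise it would introduce $\sum_t\gradtsq\,\tbt/\eta$, for which your bound $\tbt\leq b_0+\sigmaZero+\gradtnorm+\sqrt{\sum_{s<t}\sgradsq{s}}$ again brings back the uncontrolled cumulative stochastic-gradient sum. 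There is no mechanism in your sketch that makes this ``residual collapse into the $\log$ of $E$.''

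The paper's argument bypasses this obstruction by never aggregating the sum first. It introduces a step-size threshold $\lambda$ and the random time $\stopetaat{\lambda}=\min\{t:\etat\leq\lambda\}$, then handles each term $\sgradtsq/b_t^2$ separately: on $\{t<\stopetaat{\lambda}\}$ one has $b_t<\eta/\lambda$, so the log-sum inequality over these times contributes at most $2\log(\eta/(\lambda b_0))$; on $\{t\geq\stopetaat{\lambda}\}$ one has $\etat\leq\lambda$, whence $\etat^2\sgradtsq\leq\lambda\etat\sgradtsq$, and a short add-and-subtract of $\tetat$ converts $\lambda\etat\sgradtsq$ directly into $\lambda(2+\sigmaOne^2)\tetat\gradtsq+2\lambda\eta\sigmaZero$ in conditional expectation. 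Choosing $\lambda=\eps''\eta^2/\bigl((2+\sigmaOne^2)\tcCommon(\tau-1)\bigr)$ makes the coefficient of $\tetat\gradtsq$ exactly $\eps''$, the log term exactly $2\tcCommon\log\bigl((2+\sigmaOne^2)\tcCommon(\tau-1)/(\eta\eps''b_0)\bigr)$, and Jensen over $\tau-1$ finishes. The point is that the conversion from $\sgradtsq/b_t^2$ to $\tetat\gradtsq$ is done \emph{term-by-term, before summing}, which is why no global relation between $G$ and $P$ is ever needed.
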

\begin{proof}
Let us define, for a parameter $\lambda$ to be determined,
\begin{align*}
    \stopetaat{\lambda} = \min\braces{T+1,\min\braces{t \geq 1 : \etat \leq \lambda}}.
\end{align*}
By construction, $\stopetaat{\lambda}$ is the first time when the step size $\etat$ is smaller than some threshold $\lambda$ (or $T+1$ in the case that $\etat$ remains larger than $\lambda$ for every $t\in[T]$). Observe that $\etat > \lambda$ is equivalent to $b_t < \nicefrac{\eta}{\lambda}$. Thus, we divide our analysis into two phases: times before $\stopetaat{\lambda}$, and those after. For the earlier times, since we have $b_{\stopetaat{\lambda}-1} < \nicefrac{\eta}{\lambda}$, we can bound these using \cref{lem:logSumIneq}. We use the fact that $\etat \leq \lambda$ together with \eqref{eq:sgradBound} to handle the remaining terms.

More specifically, for any $t$, we can decompose
\begin{align*}
    \tcCommon \frac{\sgradtsq}{b_t^2}
    =\tcCommon \frac{\sgradtsq}{b_t^2}(\1{t < \stopetaat{\lambda}} + \1{t\geq \stopetaat{\lambda}}).
\end{align*}
Now, note that, by definition of $\stopetaat{\lambda}$, $\etaat{\stopetaat{\lambda} - 1} > \lambda$, i.e., $b_{\stopetaat{\lambda}-1} < \nicefrac{\eta}{\lambda}$. Hence, by \cref{lem:logSumIneq},
\begin{align*}
    \sum_{t=1}^{\tau-1} \frac{\sgradtsq}{b_t^2} \1{t < \stopetaat{\lambda}}
    \leq \sum_{t < \stopetaat{\lambda}} \frac{\sgradtsq}{b_t^2}
    \leq 2\log(\nicefrac{b_{\stopetaat{\lambda}-1}}{b_0})
    \leq 2\logp{\frac{\eta}{\lambda b_0}}.
\end{align*}
In the other case, for any fixed $t\in[T]$, we have that
\begin{align*}
    \etat^2 \sgradtsq\1{t \geq \stopetaat{\lambda}}
    &\leq \lambda \etat \sgradtsq\\
    &= \lambda (\etat - \tetat) \sgradtsq + \lambda \tetat \sgradtsq\\
    &\leq \eta\lambda \frac{\tgradtsq}{b_t \sqrt{b_{t-1}^2 + \tgradtsq}(b_t + \sqrt{b_{t-1}^2 + \tgradtsq})} \sgradtsq
    + \lambda \tetat \sgradtsq\\
    &\leq \lambda\tetat \gradtsq + \lambda\eta\sigmaZero + \lambda \tetat\sgradtsq,
\end{align*}
where, in the first inequality, we used the fact that $\etat \leq \lambda$ for every $t \geq \stopetaat{\lambda}$, and in the second, we used the fact that
\begin{align*}
    \frac{\etat - \tetat}{\eta} 
    = \frac{1}{\sqrt{b_{t-1}^2 + \sgradtsq}} - \frac{1}{\sqrt{b_{t-1}^2 + \tgradtsq}}
    &= \frac{\tgradtsq - \sgradtsq}{b_t\sqrt{b_{t-1}^2 + \tgradtsq}(b_t+\sqrt{b_{t-1}^2 + \tgradtsq})} \\
    &\leq \frac{\tgradtsq}{b_t^2\sqrt{b_{t-1}^2 + \tgradtsq}},
\end{align*}
and in the third, we used the fact that $\tgradtsq = \sigmaZero^2 + \gradtsq$.
Then, noting that
\begin{align*}
    \lambda \tetat \Et{\sgradtsq} 
    \leq \lambda(1+\sigmaOne^2)\tetat\gradtsq + \lambda \tetat \sigmaZero^2
    \leq \lambda(1+\sigmaOne^2)\tetat\gradtsq + \lambda \eta \sigmaZero,
\end{align*}
we have that
\begin{align*}
    \Et{\etat^2\sgradtsq \1{t\geq \stopetaat{\lambda}}}
    \leq 2\lambda\eta\sigmaZero + \lambda(2+\sigmaOne^2)\tetat \gradtsq.
\end{align*}
Combining these bounds, we obtain:
\begin{align*}
    \tcCommon\Et{\frac{\sgradtsq}{b_t^2}}
    \leq \tetat\frac{\lambda\tcCommon(2+\sigmaOne^2)}{\eta^2}\gradtsq
    + \tcCommon\Et{\frac{\sgradtsq}{b_t^2}\1{t < \stopetaat{\lambda}}}
    + \frac{2\tcCommon\lambda\sigmaZero}{\eta} 
\end{align*}
Thus, if we choose $\lambda = \frac{\eps''\eta^2}{(2+\sigmaOne^2)\tcCommon (\tau-1)},$
then we obtain
\begin{align*}
    \tcCommon\Et{\frac{\sgradtsq}{b_t^2}}
    &\leq \eps''\tetat\gradtsq 
    + \tcCommon\Et{\frac{\sgradtsq}{b_t^2}\1{t<\stopetaat{\lambda}}}
    + \frac{2\eta\eps'' \sigmaZero}{(2+\sigmaOne^2)(\tau-1)}.
\end{align*}
Now, summing over $t\in [\tau-1]$, and using the fact that $\braces{t < \tau}\in\F_{t-1}$ by assumption on $\tau$, we have:
\begin{align*}
    \tcCommon\EXP{\sum_{t=1}^{\tau - 1} \frac{\sgradtsq}{b_t^2}}
    &=\tcCommon\sum_{t=1}^{T} \EXP{\Et{\frac{\sgradtsq}{b_t^2}\1{t < \tau}}}\\
    &=\sum_{t=1}^{T} \EXP{\tcCommon\Et{\frac{\sgradtsq}{b_t^2}}\1{t < \tau}}\\
    &\leq \EXP{\sum_{t=1}^{\tau - 1} \eps''\tetat\gradtsq} 
    + \tcCommon\EXP{\sum_{t=1}^{\tau-1} \frac{\sgradtsq}{b_t^2}\1{t < \stopetaat{\lambda}} + \frac{2\eta\eps''\sigmaZero}{(2+\sigmaOne^2)(\tau-1)}}.
\end{align*}
Focusing on the last term in the above inequality, and recalling that (deterministically) $\tau > 1$ by assumption, we may apply the above bounds together with Jensen's inequality to obtain:
\begin{align*}
    \tcCommon\EXP{\sum_{t=1}^{\tau-1} \frac{\sgradtsq}{b_t^2}\1{t < \stopetaat{\lambda}} + \frac{2\eta\eps''\sigmaZero}{(2+\sigmaOne^2)(\tau-1)}}
    &\leq 2\tcCommon\EXP{\logp{\frac{(2+\sigmaOne^2)\tcCommon(\tau-1)}{\eta\eps'' b_0}}} + \frac{2\eta\eps''\sigmaZero}{2+\sigmaOne^2}\\
    &\leq 2\tcCommon\logp{\frac{(2+\sigmaOne^2)\tcCommon \EXP{\tau-1}}{\eta \eps'' b_0}}
    + \frac{2\eta\eps''\sigmaZero}{2+\sigmaOne^2}.
\end{align*}
Combining these bounds yields the claimed inequality.
\end{proof}

In the following, we restate \cref{def:goodTimesMain} with an equivalent characterization that is sometimes more convenient for our analysis.
\begin{restatable}[Good times (extended version of \cref{def:goodTimesMain})]{definition}{restateGoodTimes}\label{def:goodTimes}
A time $t\in [T]$ is ``good'' if, for fixed parameters $\eps,\eps',\eps'',\eps'''\in (0,1)$, satisfying $\eps+\eps'+\eps''+\eps''' < 1$
\begin{align*}
    1-\eps-\eps'-\eps'' - \sigmaOne\bias \geq \eps'''
    \quad\text{or, equivalently,}\quad
    \Et{\frac{\sgradtsq}{b_t^2}} \leq \frac{(1-(\eps+\eps'+\eps''+\eps'''))^2}{\sigmaOne^2}.
\end{align*}
We take, for any stopping time $\tau$ with respect to $(\F_{s-1})_{s\geq 1}$, the set $\Sgat{\tau} = \braces{1 \leq t < \tau : \text{$t$ is ``good''}}$ to be the ``good'' times before $\tau$, and $\Sgat{\tau}^c = [\tau-1]\setminus\Sgat{\tau}$ to be the remaining ``bad'' times before $\tau$.
\end{restatable}

\begin{lemma}[Bounds for ``good'' and ``bad'' times]\label{lem:boundGoodBad}
Consider the same setting as \cref{lem:startingPoint,lem:refinedStartingPoint}. Let $\tau \in [T+1]$ be any stopping time with respect to $(\F_{s-1})_{s\geq 1}$. Then, for any $t\in\Sgat{\tau}$, 
\begin{align*}
    (\eps''+\eps''')\tetat\gradtsq \leq \Et{\ft - \ftplus} 
    + \tcCommon \Et{\frac{\sgradtsq}{b_t^2}}.
\end{align*}
For every other time $t\not\in\Sgat{\tau}$, we have that
\begin{align*}
    \Et{\ftplus - \ft} 
    &\leq \constBias\tetat\parens{\sigmaOne - \constBiasInv\parens{1-(\eps+\eps')}}\gradtsq + \tcCommon \Et{\frac{\sgradtsq}{b_t^2}}.
\end{align*}
In particular, whenever $\sigmaOne \leq  \constBiasInv\parens{1-\eps-\eps'}$, then we have the following bound for each $t\not\in\Sgat{\tau}$.
\begin{align*}
    \Et{\ftplus - \ft} \leq \tcCommon \Et{\frac{\sgradtsq}{b_t^2}} \leq \tcCommon.
\end{align*}
\end{lemma}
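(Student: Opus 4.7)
The plan is to derive all three statements as direct consequences of \cref{lem:startingPoint} combined with the characterization of good/bad times in \cref{def:goodTimes}. The starting point in every case is the descent inequality
\begin{align*}
    \Et{\ftplus - \ft} \leq -\tetat\parens{1-\eps-\eps' - \sigmaOne\bias}\gradtsq + \tcCommon \Et{\nicefrac{\sgradtsq}{b_t^2}},
\end{align*}
which holds under the hypothesis $\eta \leq \nicefrac{2\eps'}{\Lone(4+\sigmaOne^2)}$ inherited from \cref{lem:refinedStartingPoint}. No new technical machinery is needed; the argument is a case analysis on whether the multiplicative coefficient $(1-\eps-\eps' - \sigmaOne\bias)$ is at least $\eps''+\eps'''$ or not, together with the elementary bound $\bias \leq \constBias$, which follows from the definition $\bias = \constBias\sqrt{\Et{\sgradtsq/b_t^2}}$ and the trivial observation $\sgradtsq \leq b_t^2$.

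For $t\in\Sgat{\tau}$, by definition $1-\eps-\eps'-\eps''-\sigmaOne\bias \geq \eps'''$, hence $1-\eps-\eps'-\sigmaOne\bias \geq \eps''+\eps'''$. Substituting into the descent inequality and rearranging gives
\begin{align*}
    (\eps''+\eps''')\tetat\gradtsq \leq \Et{\ft-\ftplus} + \tcCommon \Et{\nicefrac{\sgradtsq}{b_t^2}},
\end{align*}
which is precisely the first claim. For $t\not\in\Sgat{\tau}$, we rewrite the descent inequality with the sign of the leading term flipped,
\begin{align*}
    \Et{\ftplus-\ft} \leq \tetat\parens{\sigmaOne\bias - (1-\eps-\eps')}\gradtsq + \tcCommon\Et{\nicefrac{\sgradtsq}{b_t^2}},
\end{align*}
and apply $\bias \leq \constBias$ to obtain the coefficient $\constBias\parens{\sigmaOne - \constBiasInv(1-\eps-\eps')}$ on $\tetat\gradtsq$, which is the second statement.

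Finally, the third claim is immediate: when $\sigmaOne \leq \constBiasInv(1-\eps-\eps')$, the coefficient $\constBias\parens{\sigmaOne - \constBiasInv(1-\eps-\eps')}$ is non-positive, and since $\tetat\gradtsq \geq 0$, the first term on the right-hand side of the bad-time bound can be dropped. The further simplification $\tcCommon \Et{\nicefrac{\sgradtsq}{b_t^2}} \leq \tcCommon$ uses only that $\sgradtsq \leq b_t^2$ almost surely (by the recursion $b_t^2 = b_{t-1}^2 + \sgradtsq$ with $b_0^2 > 0$), so the conditional expectation of the ratio is at most $1$. The main (minor) obstacle is keeping the macro placeholders $\constBias$ and $\constBiasInv$ consistent with the definition of $\bias$; once one observes that $\bias$ already absorbs the factor $\constBias$, the proof is a two-line case split.
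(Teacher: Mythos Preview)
Your proposal is correct and takes essentially the same approach as the paper: both derive all three claims directly from \cref{lem:startingPoint} together with \cref{def:goodTimes}, using the bound $\bias\leq\constBiasVal$ (from $\sgradtsq\leq b_t^2$) for the bad-time case and the non-positivity of the resulting coefficient for the third claim. The paper additionally prefaces its proof with a brief remark that $\{t\in\Sgat{\tau}\}\in\F_{t-1}$ (so that the good/bad classification may be treated as deterministic once conditioned on $\F_{t-1}$), but otherwise the arguments coincide.
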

\begin{proof}
We note that, by construction, $\braces{t < \tau}, \braces{t\in\Sgat{\tau}} \in \F_{t-1}$, since $\tau$ is a stopping time with respect to $(\F_{s-1})_{s\geq 1}$ and  $\Et{\nicefrac{\sgradtsq}{b_t^2}}$ is $\F_{t-1}$-measurable. Since the inequalities we wish to prove are in expectation conditioned on $\F_{t-1}$, the condition that a time $t$ is ``good'' or ``bad'' is (effectively) deterministic.

The proof of the first inequality is an immediate consequence of \cref{lem:startingPoint,lem:refinedStartingPoint,def:goodTimes}. The second follows immediately from \cref{lem:startingPoint}, noting that $\bias = \constBias\sqrt{\Et{\nicefrac{\sgradtsq}{b_t^2}}} \leq \constBiasVal$. The final follows from the second, noting in this case that $\sigmaOne - \constBiasInv\parens{1-(\eps+\eps')}\leq 0$.
\end{proof}

\restateDescentLemma*
\begin{proof}
The proof follows straightforwardly by combining the inequalities from \cref{lem:boundGoodBad}, together with noting that, since $\tau$ is a stopping time with respect to $(\F_{s-1})_{s\geq 1}$, $\braces{s < \tau}\in\F_{s-1}$. Indeed, since $[\tau-1] = \Sgat{\tau} \cup \Sgat{\tau}^c$, we may apply the tower rule and linearity of expectation to conclude that
\begin{align*}
    \EXP{\fat{\w_{\tau}}-\fzero}
    &= \EXP{\sum_{t < \tau} \ftplus - \ft}\\
    &= \sum_{t\in [T]} \EXP{\Et{(\ftplus - \ft)\1{t < \tau}}}\\
    &= \sum_{t\in [T]} \EXP{\Et{\ftplus - \ft}\1{t < \tau}}\\
    &= \sum_{t\in [T]} \EXP{\Et{\ftplus - \ft}\1{t\in\Sgat{\tau}}}\\
    &\quad+ \sum_{t\in[T]}\EXP{\Et{\ftplus - \ft}\1{t\in\Sgat{\tau}^c}}
\end{align*}
Now, we may use the first and second inequalities in \cref{lem:boundGoodBad} to bound the sum over ``good'' and ``bad'' times, respectively, and, collecting terms, we obtain
\begin{align*}
    \EXP{\fat{\w_{\tau}}-\fzero}
    &\leq -(\eps''+\eps''')\EXP{\sum_{t\in\Sgat{\tau}} \tetat\gradtsq}\\
    &\quad+ \EXP{\sum_{t\in\Sgat{\tau}^c} \constBias\parens{\sigmaOne - \constBiasInv\parens{1-(\eps+\eps')}}\tetat\gradtsq}\\
    & \quad + \tcCommon\EXP{\sum_{t\in[\tau-1]}\frac{\sgradtsq}{b_t^2}}
\end{align*}
Thus, applying \cref{lem:refinedStartingPoint} to bound the final term above, and using the fact that $\EXP{\fzero - \fat{\w_\tau}} \leq \fzero - \fstar$ by \cref{assump:lowerBounded}, we obtain:
\begin{align*}
    \eps'''\EXP{\sum_{t\in\Sgat{\tau}}\tetat\gradtsq} 
    &\leq \fzero - \fstar
    + 2\tcCommon\logp{\frac{(2+\sigmaOne^2)\tcCommon \EXP{\tau-1}}{\eta\eps'' b_0}}
    + \frac{2\eta\eps''\sigmaZero}{(2+\sigmaOne^2)}\\
    &\quad+\constBias\EXP{\sum_{t\in\Sgat{\tau}^c}\parens{\sigmaOne-\constBiasInv\parens{1-\eps-\eps'}}\tetat\gradtsq)}.
\end{align*}
Thus, for any $\tSgat{\tau} \subset \Sgat{\tau}$, we can subtract $\eps'''\EXP{\sum_{t\in\Sgat{\tau}\setminus\tSgat{\tau}}\tetat\gradtsq}$ from both sides of the above inequality to obtain the first claimed inequality.

The second follows immediately by noting in this case that $\sigmaOne - \constBiasInv\parens{1-(\eps+\eps')}\leq 0$.
The third follows immediately from the second, recalling that, whenever $\sigmaOne\leq \constBiasInv\parens{1-(\eps + \eps' + \eps'' + \eps''')}$, then $\Sgat{\tau}=[\tau-1]$ by \cref{def:goodTimes}.
\end{proof}

\subsection{Constructing the ``nice'' stopping time}
Let us recall the definition of $\stopat{T+1}$, the ``nice'' stopping time:
\restateNiceStoppingDef*

Here, we show that these random variables are well-defined, and enumerate the crucial properties that they satisfy.

\begin{restatable}[Nice stopping; Full version of \cref{lem:niceStoppingMain}]{lemma}{restateNiceStopping}\label{lem:niceStopping}
For any $\delta\in (0,1]$ and $t\geq 1$, let $\stopat{t}$, $S_t(\delta)$, and $X_t(\delta)$ be recursively-defined random variables from \cref{def:niceStopping}.
Then, we have that, for all $t\geq 1$,
\begin{enumerate}
    \item\label{item:niceStopping1} $\stopt$ is $\F_{t-2}$-measurable, and $S_t(\delta), X_t(\delta)$ are each $\F_{t-1}$-measurable (where we take $\F_0 = \F_{-1}$ to be the trivial $\sigma$-algebra).
    \item\label{item:niceStopping2} $\stopt$ is a stopping time with respect to $(\F_{s-1})_{s\geq 1}$, i.e., for all $s\geq 0$ $\{s < \stopt\} \in \F_{s-1}$.
    \item\label{item:niceStopping3} For all $t\geq 1$, $\stopat{t+1}\geq\stopt$, $S_{t+1}(\delta) \geq S_{t}(\delta)$, and $X_{t+1}(\delta) \leq X_t(\delta)$.
    \item\label{item:niceStopping4} $\EXP{S_t(\delta)} \leq \EXP{\sum_{s < \stopt} \sigmaZero^2 + (1 + \sigmaOne^2 + \constOneStep)\gradsq{s}}$
    \item\label{item:niceStopping5} $S_{\stopt-1}(\delta) \overset{\text{a.s.}}{\leq} \nicefrac{\EXP{S_{t-1}(\delta)}}{\delta}$.
    \item\label{item:niceStopping6} $\stopat{\stopt-1} \overset{\text{a.s.}}{=} \stopt-1$
    \item\label{item:niceStopping7} For every $s < \stopt$, the following inequalities hold deterministically:
    \begin{align*}
    \tetaat{s} &\geq \frac{\eta}{\sqrt{b_{0}^2 + 2\eta^2\Lzero^2 + \sigmaZero^2 + S_{\stopt-1}(\delta)}} \\&\geq  \frac{\eta}{\sqrt{b_{0}^2 + 2\eta^2\Lzero^2 + \frac{(t-1)\sigmaZero^2 + (1+\sigmaOne^2 + \constOneStep)\EXP{\sum_{\ell < \stopat{t-1}}\gradsq{\ell}}}{\delta}}}
    \end{align*}
    \item\label{item:niceStopping8} $t \geq \EXP{\stopt} \geq t\parens{1 - \nicefrac{\delta (t-1)}{2}}$
\end{enumerate}
\end{restatable}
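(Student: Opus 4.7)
I would first establish items~\ref{item:niceStopping1}, \ref{item:niceStopping2}, and \ref{item:niceStopping3} jointly by strong induction on $t$. The crucial observation that unlocks the induction is that $\EXP{S_t(\delta)}$ is a deterministic scalar, so the superficially circular recursion among $\stopt$, $S_t$, and $X_t$ is actually layered. Assuming the hypothesis for all $s \leq t-1$: the time $\stopt = \min\{t,\min\{s\geq 0 : X_s(\delta) = 0\}\}$ is a function of $X_0(\delta),\ldots,X_{t-1}(\delta)$, hence $\F_{t-2}$-measurable; $S_t(\delta) = \sum_{s\geq 1}(\sgradsq{s}+\constOneStep\gradsq{s})\1{s<\stopt}$ has summands that are $\F_s$-measurable for $s\leq \stopt-1\leq t-1$ with $\F_{s-1}$-measurable indicators, making $S_t(\delta)$ itself $\F_{t-1}$-measurable; and $X_t(\delta) = X_{t-1}(\delta)\1{S_t(\delta)\leq \EXP{S_t(\delta)}/\delta}$ is therefore $\F_{t-1}$-measurable. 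The stopping time property of item~\ref{item:niceStopping2} follows from the identity $\{s<\stopt\} = \bigcap_{j\leq s}\{X_j(\delta) = 1\}$ for $s<t$ (which sits in $\F_{s-1}$ by the measurability just proved). Monotonicity (item~\ref{item:niceStopping3}) is immediate from the definitions: expanding the outer min only increases $\stopat{t}$, $S_t$ then picks up more non-negative terms, and $X_{t+1}$ multiplies $X_t$ by an indicator in $\{0,1\}$.

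Next, for item~\ref{item:niceStopping4}, I would use the tower rule together with $\{s<\stopt\}\in\F_{s-1}$ and the bound \eqref{eq:sgradBound}:
\begin{align*}
    \EXP{\sgradsq{s}\1{s<\stopt}} = \EXP{\Et{\sgradsq{s}}\1{s<\stopt}} \leq \EXP{(\sigmaZero^2 + (1+\sigmaOne^2)\gradsq{s})\1{s<\stopt}}.
\end{align*}
For item~\ref{item:niceStopping6}, I split on $\stopt(\omega) = t$ versus $\stopt(\omega) = s^* < t$: in the former, no $X_j(\omega)$ has hit $0$ for $j\leq t-1$, so $\stopat{t-1}(\omega) = t-1$; in the latter, the first hit is at $s^*$, so $\stopat{s^*-1}(\omega) = s^*-1$. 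Item~\ref{item:niceStopping5} then follows from the same case split: on both events, the pathwise equality $X_{\stopt-1}(\omega) = 1$ holds, and by the defining indicator of $X$ this forces $S_{\stopt-1}(\omega) \leq \EXP{S_{\stopt-1}(\delta)}/\delta$ pathwise; monotonicity (item~\ref{item:niceStopping3}) upgrades the deterministic numerator to $\EXP{S_{t-1}(\delta)}$ since $\stopt - 1 \leq t - 1$.

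Item~\ref{item:niceStopping7} is the key step-size bound, and is the place where the constant $\constOneStep$ is used. For any $s < \stopt$, I would expand
\begin{align*}
    \widetilde{b}_s^2 = b_{s-1}^2 + \sigmaZero^2 + \gradsq{s} = b_0^2 + \sigmaZero^2 + \sum_{\ell=1}^{s-1}\sgradsq{\ell} + \gradsq{s},
\end{align*}
and then apply the one-step gradient bound \cref{lem:instantaneousGradBnd} to replace $\gradsq{s}$ by $2\eta^2\Lzero^2 + \constOneStep\gradsq{s-1}$. The resulting trailing $\constOneStep\gradsq{s-1}$ slots into the final summand of $\sum_{\ell\leq s-1}\constOneStep\gradsq{\ell} \leq \sum_{\ell\leq \stopt-2}\constOneStep\gradsq{\ell}$ (using $s-1\leq \stopt-2$), yielding $\widetilde{b}_s^2 \leq b_0^2 + 2\eta^2\Lzero^2 + \sigmaZero^2 + S_{\stopt-1}(\delta)$, which gives the first inequality. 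Chaining with items~\ref{item:niceStopping5} and~\ref{item:niceStopping4}, and using $\delta\leq 1$ to absorb the extra $\sigmaZero^2$ into $(t-1)\sigmaZero^2/\delta$, yields the second.

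Finally, for item~\ref{item:niceStopping8}, the upper bound $\EXP{\stopt} \leq t$ is immediate. For the lower bound, monotonicity of $X$ (item~\ref{item:niceStopping3}) gives $\{\stopt\leq s\} = \{X_s(\delta) = 0\}$, so
\begin{align*}
    \EXP{t - \stopt} = \sum_{s=1}^{t-1}\PRO{X_s(\delta) = 0} \leq \sum_{s=1}^{t-1}\sum_{s'=1}^{s}\PRO{S_{s'}(\delta) > \EXP{S_{s'}(\delta)}/\delta} \leq \sum_{s=1}^{t-1} s\delta = \frac{\delta t(t-1)}{2},
\end{align*}
where each inner probability is bounded by $\delta$ via Markov's inequality applied to the non-negative random variable $S_{s'}(\delta)$. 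The main technical subtlety throughout is item~\ref{item:niceStopping7}: the constant $\constOneStep = 2(1+\eta\Lone)^2$ in the definition of $S_t$ is chosen precisely so that \cref{lem:instantaneousGradBnd} produces a remainder of the form $\constOneStep\gradsq{s-1}$ that fits neatly into the $\constOneStep\gradsq{\ell}$ term of $S_{\stopt-1}$, closing the bookkeeping between $\widetilde{b}_s^2$ and $S_{\stopt-1}$.
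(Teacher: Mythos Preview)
Your proposal is correct and follows essentially the same approach as the paper's proof: induction for measurability, the identity $\{s<\stopt\}=\bigcap_{j\leq s}\{X_j(\delta)=1\}$ for the stopping-time property, the tower rule plus \eqref{eq:sgradBound} for item~\ref{item:niceStopping4}, the observation $X_{\stopt-1}=1$ for item~\ref{item:niceStopping5}, \cref{lem:instantaneousGradBnd} plus item~\ref{item:niceStopping6} for the step-size bound, and union bound with Markov for item~\ref{item:niceStopping8}. Your explicit remark that $\constOneStep$ is chosen precisely so the remainder from \cref{lem:instantaneousGradBnd} slots into $S_{\stopt-1}$ is a nice clarification the paper leaves implicit.
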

Before proving this result, let us briefly discuss an alternative construction to \cref{def:niceStopping} which is (perhaps) more natural and easier to define, but does not satisfy a property we rely on to prove \cref{lem:linearSumBound}:
\begin{remark}\label{rem:niceStopping}
    One might attempt to define the stopping times $\stopat{t}$ from \cref{def:niceStopping} in the following simpler manner. First, denote $\widetilde{S}_t = \sum_{s=1}^{t-1} \sgradsq{s} + \constOneStep\gradsq{s}$. Then, let $\widetilde{\tau}_t(\delta) := \min\braces{t, \min\braces{s \geq 0 : \widetilde{S}_t > \nicefrac{\EXP{\widetilde{S}_t}}{\delta}}}$. On a first impression, this stopping time might seem to capture the same properties as \cref{def:niceStopping}. Unfortunately, this is not the case. To see this, let us examine the quantity $\widetilde{S}_{\widetilde{\tau}_t(\delta)-1}$. This stopping time guarantees the following:
    \begin{align}\label{eq:remarkNiceStoppingAlt}
        \widetilde{S}_{\widetilde{\tau}_t(\delta)-1}
        = \sum_{\ell=1}^{t-1}\widetilde{S}_{\ell}(\delta)\1{\widetilde{\tau}_t(\delta)-1=\ell}
        &\leq \sum_{\ell=1}^{t-1}\frac{\EXP{\widetilde{S}_{\ell}}}{\delta}\1{\widetilde{\tau}_t(\delta)-1=\ell}\nonumber\\
        &= \sum_{\ell=1}^{t-1}\frac{\sum_{s=1}^{\ell}\EXP{\sgradsq{s} + \constOneStep\gradsq{s}}}{\delta}\1{\widetilde{\tau}_t(\delta)-1=\ell}\nonumber\\
        &= \frac{\sum_{s=1}^{\widetilde{\tau}_t(\delta)-1}\EXP{\sgradsq{s} + \constOneStep\gradsq{s}}}{\delta}.
    \end{align}
    Thus, we are only guaranteed deterministically that $\widetilde{S}_{\widetilde{\tau}_t(\delta)-1} \leq \nicefrac{\sum_{\ell=1}^{t-1}\EXP{\sgradsq{\ell} + \constOneStep\gradsq{\ell}}}{\delta}$ (indeed, this is the only inequality we know on any sample path where $\widetilde{\tau}_t(\delta)=t$).
    By contrast, by \cref{item:niceStopping5} of \cref{lem:niceStopping}, we know that, deterministically:
    \begin{align}\label{eq:remarkNiceStopping}
        S_{\tau_t(\delta)-1}(\delta) 
        \leq \frac{\EXP{S_{t-1}(\delta)}}{\delta}
        = \frac{\EXP{\sum_{s=1}^{\stopat{t-1}-1}\sgradsq{s}+\constOneStep\gradsq{s}}}{\delta}
    \end{align}
    Notice that \eqref{eq:remarkNiceStopping} is true no matter the realization of $\stopat{t}$. Indeed, for any realization of $\stopat{t}$, the bound on the right-hand side still involves a random index \emph{inside} the expectation. This is not the case with \eqref{eq:remarkNiceStoppingAlt} (there, the random index is \emph{outside} of the expectation). This difference is crucial, and this special property of $S_{\stopat{t}-1}$ is actually what makes the proof of \cref{lem:linearSumBound} possible.
\end{remark}
\begin{custproof}[of \cref{lem:niceStopping}]
We prove the first claim via induction. The base case of $t=1$ holds trivially, since $X_0(\delta)=1$ deterministically by definition, which implies that $\stopat{1} =1$ and $S_1(\delta) = 0$, and thus $X_1(\delta) = 1$ (so are all measurable in the trivial $\sigma$-algebra). Assuming the claim holds for times $1,\ldots,t$, then we have that $\stopat{t+1}$ is $\F_{t-1}$-measurable, since it depends only on $X_0(\delta),\ldots,X_t(\delta)$, each of which is $\F_{t-1}$-measurable by the induction hypothesis.
Thus, since $S_{t+1}(\delta)$ depends only on $\stopat{t+1}$ and $\braces{\sgradsq{s},\gradsq{s}}_{s=1}^{\stopat{t+1}-1} \subseteq \braces{\sgradsq{s},\gradsq{s}}_{s=1}^{t}$, $S_{t+1}(\delta)$ is $\F_t$-measurable. Further, since $X_t(\delta)$ is $\F_{t-1}\subset\F_t$-measurable and $S_{t+1}(\delta)$ is $\F_t$-measurable, and by definition, $X_{t+1}(\delta) = X_t(\delta) \1{S_{t+1}(\delta) \leq \nicefrac{\EXP{S_{t+1}(\delta)}}{\delta}}$, we conclude that $X_{t+1}(\delta)$ is $\F_t$-measurable. 
Thus, the claim holds by induction.

For the second claim, it suffices to consider $0\leq s\leq t-2$ (since we just established that $\stopt$ is $\F_{t-2}$-measurable, and $\F_{t-2} \subset\F_{t'-2}$ for any $t'\geq t$). Now, for any such $s$, since $s<t$, we have that
\begin{align*}
    \braces{s \geq \stopt} = \cup_{\ell = 0}^s\braces{X_\ell(\delta) = 0} \in \F_{s-1},
\end{align*}
since $X_\ell(\delta)$ is $\F_{s-1}$-measurable for every $\ell \leq s$. Thus, since $\F_{s-1}$ is a $\sigma$-algebra, and hence closed under complements, $\braces{s < \stopt} = \braces{s \geq \stopt}^c \in \F_{s-1}$.

For the third claim, the inequality $\stopat{t+1}\geq \stopt$ follows immediately from the definition, since if $\stopt = s$ for some $s\in[t]$, then either $X_s = 0$, in which case $\stopat{t+1} = s = \stopt$, or $s=t$ and $X_t = 1$, in which case $\stopt = t$ and $\stopat{t+1}=t+1 > \stopt$. The inequality $S_{t+1}(\delta)\geq S_t(\delta)$ follows since $\stopat{t+1}\geq \stopt$ and $S_{t}(\delta)$ is a sum of non-negative terms over the interval $[1,\stopt)$, each of which is contained in the sum $S_{t+1}(\delta)$. The inequality $X_{t+1}(\delta) \leq X_t(\delta)$ follows immediately from the definition, since $\1{S_{t+1}(\delta) \leq \nicefrac{\EXP{S_{t+1}(\delta)}}{\delta}} \in \braces{0,1}$.

For the fourth claim, we have that, by definition of $S_t(\delta)$ and the tower rule of expectation,
\begin{align*}
    \EXP{S_t(\delta)} 
    &= \EXP{\sum_{s=1}^{t-1} (\sgradsq{s} + \constOneStep\gradsq{s})\1{s < \stopt}}\\
    &= \sum_{s=0}^{t-1} \EXP{\Econd{s-1}{(\sgradsq{s} + \constOneStep\gradsq{s})\1{s < \stopt}}}.
\end{align*}
Now, since $\braces{s < \stopt}\in\F_{s-1}$, and applying \eqref{eq:sgradBound},
\begin{align*}
    &\EXP{\Econd{s-1}{(\sgradsq{s} + \constOneStep\gradsq{s})\1{s < \stopt}}}\\
    &= \EXP{\Econd{s-1}{\sgradsq{s} + \constOneStep\gradsq{s}}\1{s < \stopt}}\\
    &\leq \EXP{(\sigmaZero^2 + (1+\sigmaOne^2+ \constOneStep)\gradsq{s})\1{s < \stopt}}.
\end{align*}
Summing the above expression over $s\in[t-1]$, we conclude that
\begin{align*}
    \EXP{S_t(\delta)} 
    \leq \EXP{\sum_{s < \stopt}\sigmaZero^2 + (1+\sigmaOne^2+ \constOneStep)\gradsq{s}},
\end{align*}
establishing the third claim.

For the fifth claim, notice that, by definition of $\stopt$, if $\stopt = s$, then $X_{s-1}(\delta) = 1$, which implies that $S_{s-1}(\delta) \leq \nicefrac{\EXP{S_{s-1}(\delta)}}{\delta}$ by construction. Therefore,
\begin{align*}
    S_{\stopt - 1}(\delta) 
    = \sum_{s=1}^t S_{s-1}(\delta)\1{\stopt = s}
    &= \sum_{s=1}^t S_{s-1}(\delta)\1{\stopt = s, S_{s-1}(\delta)\leq \nicefrac{\EXP{S_{s-1}(\delta)}}{\delta}}\\
    &\leq \sum_{s=1}^t \frac{\EXP{S_{s-1}(\delta)}}{\delta}\1{\stopt = s}
    \leq \frac{\EXP{S_{t-1}(\delta)}}{\delta}.
\end{align*}

For the sixth claim, we note that
\begin{align*}
    \stopat{\stopt-1} 
    &= \sum_{s=1}^t \stopat{s-1} \1{\stopt = s}
    = \sum_{s=1}^{t} \stopat{s-1} \1{X_{0}(\delta)=\ldots=X_{s-1}(\delta)=1, \stopt=s}\\
    &= \sum_{s=1}^{t} (s-1) \1{\stopt=s}
    = \sum_{s=0}^{t-1} s \1{\stopt-1=s}
    = \stopt - 1.
\end{align*}

For the seventh claim, assuming $s < \stopt$, we have that, by \cref{lem:localSmoothnessBound},
\begin{align*}
    \tetaat{s} 
    & = \frac{\eta}{\sqrt{b_{0}^2 + \sigmaZero^2 + \sum_{\ell=1}^{s-1} \sgradsq{\ell} + \gradsq{s}}} \\
    & \geq \frac{\eta}{\sqrt{b_{0}^2 + \sigmaZero^2 + 2\eta^2\Lzero^2 + \sum_{\ell < \stopt - 1} \sgradsq{\ell} + \constOneStep\gradsq{\ell}}}.
\end{align*}
Further, since $\stopat{\stopt-1} = \stopt - 1$, and by definition of $S_t(\delta)$, we have that
\begin{align*}
    \tetaat{s} 
    &\geq \frac{\eta}{\sqrt{b_{0}^2 + \sigmaZero^2 + 2\eta^2\Lzero^2 + \sum_{\ell < \stopat{\stopt - 1}} \sgradsq{\ell} + \constOneStep\gradsq{\ell}}} \\
    & = \frac{\eta}{\sqrt{b_{0}^2 + \sigmaZero^2 + 2\eta^2\Lzero^2 + S_{\stopt - 1}(\delta)}}.
\end{align*}
Therefore, since $S_{\stopt - 1}(\delta) \leq \nicefrac{\EXP{S_{t-1}(\delta)}}{\delta}$ almost surely by \cref{item:niceStopping5}, together with our upper-bound on $\EXP{S_t(\delta)}$ from \cref{item:niceStopping4}, we conclude that
\begin{align*}
    \tetaat{s}
    \geq \frac{\eta}{\sqrt{b_{0}^2 + 2\eta^2\Lzero^2 + \frac{(t-1)\sigmaZero^2 + (1+\sigmaOne^2 + \constOneStep)\EXP{\sum_{\ell < \stopat{t-1}}\gradsq{\ell}}}{\delta}}},
\end{align*}
as claimed.

For the final claim, we note that $\stopt \leq t$ deterministically, by construction. Thus, we focus on the lower bound. Indeed, notice that, since $\stopt \in [t]$,
\begin{align*}
    \stopt
    = \sum_{s = 1}^t s\1{\stopt = s}
    &= \sum_{s = 1}^t \1{\stopt = s}\sum_{\ell = 0}^{s-1} \1{\stopt > \ell}\\
    &= \sum_{\ell = 0}^{t-1}\sum_{s=\ell+1}^t \1{\stopt = s} \1{\stopt > \ell}\\
    &= \sum_{\ell = 0}^{t-1}\1{\stopt > \ell}\sum_{s=\ell+1}^t \1{\stopt = s} 
    = \sum_{\ell = 0}^{t-1}\1{\stopt > \ell}
\end{align*}
Next, notice that $X_s(\delta) = 1$ iff $\stopt > s$, which implies that $X_s(\delta) = \1{\stopt > s}$. Additionally, recall that $X_0(\delta) = 1$, and $X_s(\delta) = \1{\cap_{\ell=1}^s \braces{S_\ell(\delta) \leq \nicefrac{\EXP{S_\ell(\delta)}}{\delta}}}$. Hence, we have that
\begin{align*}
    \EXP{\stopt} 
    = \sum_{s=0}^{t-1} \EXP{X_s(\delta)}
    = \sum_{s=0}^{t-1} \PRO{X_s(\delta) = 1}
    &= 1 + \sum_{s=1}^{t-1} 1 - \PRO{X_s(\delta) = 0}\\
    &= 1 + \sum_{s=1}^{t-1} 1 - \PRO{\cup_{\ell=1}^s \braces{S_\ell(\delta) > \nicefrac{\EXP{S_\ell(\delta)}}{\delta}}}.
\end{align*}
Therefore, by applying the union bound and Markov's inequality, we conclude that
\begin{align*}
    \EXP{\stopt} 
    \geq t - \sum_{s=1}^{t-1}\sum_{\ell=1}^s \PRO{S_\ell(\delta) > \nicefrac{\EXP{S_\ell(\delta)}}{\delta}}
    \geq t - \sum_{s=1}^{t-1}\sum_{\ell=1}^s \delta
    = t - \delta \frac{t(t-1)}{2}
    = t\parens{1 - \frac{\delta (t-1)}{2}},
\end{align*}
which establishes the final claim.
\end{custproof}

\subsection{The key consequence of the nice stopping time construction}

The following result is the most crucial place where the properties of \cref{def:niceStopping} are utilized. It tells us that, as long as the sum of ``bad'' gradients is comparable to the sum of ``good'' ones, and as long as the descent inequality (\cref{lem:descentLemma}) holds, then the sum of gradients scales (roughly) as $\O(\nicefrac{b(T)^2}{\delta} + b(T)\sqrt{\nicefrac{T}{\delta}})$. One can compare this result to that of \citet[Lemma 13]{FTCMSW22}, which obtained a similar bound in the simpler $\Lzero$-smooth setting. Their argument utilized a technique they termed ``recursive improvement,'' which required recursively invoking gradually improving bounds in order to reach their desired conclusion after infinitely many calls. Moreover, their argument crucially relies on properties of $\Lzero$-smoothness in order to obtain worst-case upper bounds on the sum of gradients, which are no longer true in our setting. Through our construction of the stopping time $\stopat{T+1}$, we are able to obtain a similar bound as in their setting, but with an (arguably) significantly simpler and more general proof which works even in the $\smoothBoth$-smooth setting.

\begin{restatable}{lemma}{restateLinearSumBound}\label{lem:linearSumBound}
Recall the stopping time $\stopat{T+1}$ from \cref{def:niceStopping} and the set of ``good'' times before $\stopat{T+1}$, $\Sgstopped$ from \cref{def:goodTimes}.
Let $\tSgstopped \subseteq \Sgstopped$ be any (random) subset.
Suppose that the following two conditions are satisfied: (i) for some $\cbadone,\cbadtwo \geq 0$ (possibly dependent on $T$):
\begin{align}\label{eq:linearSumBoundAssump1}
    \EXP{\sum_{t\in\tSgstopped^c} \gradtsq} \leq \cbadone + \cbadtwo\EXP{\sum_{t \in\tSgstopped} \gradtsq}
\end{align}
and (ii) for some $b(T)\geq 0$,
\begin{align}\label{eq:linearSumBoundAssump2}
    \EXP{\sum_{t\in\tSgstopped} \tetat\gradtsq}\leq b(T).
\end{align}
Then, we obtain the inequality given below:
\begin{align*}
    \EXP{\sum_{t\in \tSgstopped} \gradtsq} 
    &\leq \frac{2(1+\cbadtwo)(1+\constOneStep+\sigmaOne^2)b(T)^2}{\eta^2\delta} \\
    &\quad+ \frac{2b(T)}{\eta}\sqrt{b_0^2 + 2\eta^2 \Lzero^2 + \frac{T\sigmaZero^2 + (1+\constOneStep+\sigmaOne^2)\cbadone}{\delta}}.
\end{align*}
\end{restatable}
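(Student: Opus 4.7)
The plan is to build on the deterministic lower bound for the decorrelated step sizes $\tetat$ furnished by item 7 of \cref{lem:niceStopping}, instantiated at $t = T+1$. By construction, every $t\in\tSgstopped$ satisfies $t < \stopat{T+1}$, so this yields
\begin{align*}
    \tetat \overset{\text{a.s.}}{\geq} \frac{\eta}{\sqrt{b_0^2 + 2\eta^2 \Lzero^2 + \frac{T\sigmaZero^2 + (1+\sigmaOne^2 + \constOneStep)\EXP{\sum_{\ell < \stopat{T}}\gradsq{\ell}}}{\delta}}}.
\end{align*}
Crucially, the right-hand side is a \emph{deterministic number} (an expectation sits inside the square root), which is the entire point of the recursive construction in \cref{def:niceStopping}.

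Multiplying this inequality by $\gradtsq$, summing over $t\in\tSgstopped$, and taking expectations, the deterministic denominator can be pulled outside the expectation, giving
\begin{align*}
    \EXP{\sum_{t\in\tSgstopped} \tetat\gradtsq} \geq \frac{\eta\, \EXP{\sum_{t\in\tSgstopped}\gradtsq}}{\sqrt{b_0^2 + 2\eta^2 \Lzero^2 + \frac{T\sigmaZero^2 + (1+\sigmaOne^2 + \constOneStep)\EXP{\sum_{\ell < \stopat{T}}\gradsq{\ell}}}{\delta}}}.
\end{align*}
Assumption (ii) bounds the left-hand side by $b(T)$. To control the denominator, the plan is to apply monotonicity (item 3 of \cref{lem:niceStopping}) to pass from $\stopat{T}$ to $\stopat{T+1}$, split the resulting sum as $\EXP{\sum_{t\in\tSgstopped}\gradtsq} + \EXP{\sum_{t\in\tSgstopped^c}\gradtsq}$, and then invoke assumption (i) to bound the second piece by $\cbadone + \cbadtwo \EXP{\sum_{t\in\tSgstopped}\gradtsq}$.

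Writing $G := \EXP{\sum_{t\in\tSgstopped}\gradtsq}$, $A := 1+\sigmaOne^2 + \constOneStep$, and $B := b_0^2 + 2\eta^2 \Lzero^2 + \frac{T\sigmaZero^2 + A\,\cbadone}{\delta}$, the above manipulations reduce to the quadratic inequality
\begin{align*}
    \eta^2 G^2 \leq b(T)^2\!\left(B + \frac{A(1+\cbadtwo)\,G}{\delta}\right).
\end{align*}
Solving this via the quadratic formula and applying $\sqrt{x+y}\leq \sqrt{x}+\sqrt{y}$ on the discriminant yields
\begin{align*}
    G \leq \frac{2 A (1+\cbadtwo) b(T)^2}{\eta^2 \delta} + \frac{2 b(T) \sqrt{B}}{\eta},
\end{align*}
which is exactly the claimed inequality.

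The only conceptually nontrivial step is the first one, where we exploit the deterministic lower bound for $\tetat$; everything else is bookkeeping and algebra. There is no real technical obstacle here, because the hard work of decorrelating the step sizes from the gradients has already been absorbed into the construction of $\stopat{T+1}$ and its properties catalogued in \cref{lem:niceStopping}. The lemma is essentially a clean packaging of how those properties combine with the two structural hypotheses (i) and (ii) to yield a quasi-linear (in $T$) bound on the expected sum of squared gradients over $\tSgstopped$.
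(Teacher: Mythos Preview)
Your proposal is correct and follows essentially the same approach as the paper's proof: invoke the deterministic lower bound on $\tetat$ from item~7 of \cref{lem:niceStopping}, pull the deterministic denominator outside the expectation, split $\EXP{\sum_{\ell<\stopat{T+1}}\gradsq{\ell}}$ into the $\tSgstopped$ and $\tSgstopped^c$ parts, apply hypothesis~(i), and solve the resulting quadratic. The only cosmetic difference is that the paper first applies $\sqrt{x+y}\leq\sqrt{x}+\sqrt{y}$ to the denominator and solves a quadratic in $\sqrt{\Eg}$, whereas you square first and solve a quadratic directly in $G$; your route is slightly cleaner and in fact loses a smaller constant, but both reach the stated bound.
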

\begin{proof}
Let $\tSgstopped \subseteq \Sgstopped$ be any (possibly random) subset.
By \eqref{eq:linearSumBoundAssump2},
\begin{align*}
    b(T)
    \geq \EXP{\sum_{t\in \tSgstopped} \tetat \gradtsq}.
\end{align*}
Now, by \cref{item:niceStopping7} in \cref{lem:niceStopping}, since $t < \stopt$ for any $t\in\tSgstopped$, 
\begin{align*}
    &\EXP{\sum_{t\in \tSgstopped} \tetat \gradtsq}\\
    &\geq \EXP{\sum_{t\in \tSgstopped} \frac{\eta\gradtsq}{\sqrt{b_0^2 + 2\eta\Lzero^2 + \frac{T\sigmaZero^2 + (1+\sigmaOne^2+\constOneStep)\EXP{\sum_{\ell < \stopat{T+1}} \gradsq{\ell}}}{\delta}}}}\\
    &= \frac{\EXP{\sum_{t\in \tSgstopped} \eta\gradtsq}}{\sqrt{b_0^2 + 2\eta\Lzero^2 + \frac{T\sigmaZero^2 + (1+\sigmaOne^2+\constOneStep)\EXP{\sum_{\ell < \stopat{T+1}} \gradsq{\ell}}}{\delta}}}\\
    &= \frac{\eta\Eg}{\sqrt{b_0^2 + 2\eta\Lzero^2 + \frac{T\sigmaZero^2 + (1+\sigmaOne^2+\constOneStep)(\Eg + \Eb)}{\delta}}},
\end{align*}
where
$\Eg = \EXP{\sum_{t\in\tSgstopped}\gradtsq}$ and $\Eb = \EXP{\sum_{t\in\tSgstopped^c}\gradtsq}$.
Rearranging, we have the following inequality:
\begin{align*}
    \eta \Eg \leq \sqrt{b_0^2 + 2 \eta^2 \Lzero^2 + \frac{T\sigmaZero^2 + (1 + \constOneStep + \sigmaOne^2)\Eb +(1 + \constOneStep + \sigmaOne^2)\Eg}{\delta}}b(T).
\end{align*}
Notice that this is a quadratic inequality in $\sqrt{\Eg}$. Assuming that
\begin{align*}
    \Eb \leq \cbadone + \cbadtwo\Eg,
\end{align*}
then we may solve this inequality to conclude that
\begin{align*}
    \sqrt{\Eg} 
    &\leq \frac{\sqrt{(1+\cbadtwo)(1+\constOneStep + \sigmaOne^2)}b(T)}{2\eta\sqrt{\delta}} \\
    &\quad+ \frac{1}{2\eta}\sqrt{\frac{(1+\cbadtwo)(1+\constOneStep + \sigmaOne^2)b(T)^2}{\delta} + 4\eta\sqrt{b_0^2 + 2\eta^2 \Lzero^2 + \frac{T\sigmaZero^2 + (1+\constOneStep + \sigmaOne^2)\cbadone}{\delta}}b(T)}\\
    &\leq \frac{\sqrt{(1+\cbadtwo)(1+\constOneStep + \sigmaOne^2)}b(T)}{\eta\sqrt{\delta}} + \sqrt{\frac{b(T)}{\eta}}\sqrt[4]{b_0^2 + 2\eta^2 \Lzero^2 + \frac{T\sigmaZero^2 + (1+\constOneStep+\sigmaOne^2)\cbadone}{\delta}},
\end{align*}
from which we conclude
\begin{align*}
    \EXP{\sum_{t\in\tSgstopped} \gradtsq} = \Eg
    &\leq \frac{2(1+\cbadtwo)(1+\constOneStep+\sigmaOne^2)b(T)^2}{\eta^2\delta}\\
    &\quad+ \frac{2b(T)}{\eta}\sqrt{b_0^2 + 2\eta^2 \Lzero^2 + \frac{T\sigmaZero^2 + (1+\constOneStep+\sigmaOne^2)\cbadone}{\delta}}.
\end{align*}
\end{proof}

\subsection{Convergence for $\smoothBoth$-smooth functions}

Here, we provide our main theorem for $\smoothBoth$-smooth functions. We emphasize that, unlike in the statement of \cref{thm:mainInformal} from the main body, this theorem does not (directly) require $\sigmaOne < 1$. Instead, it requires that $\sum_{t\in\tSgstopped^c}\gradtsq$, $\EXP{|\tSgstopped^c|}$, and $\compstopthm$ can each be upper-bounded by sufficiently-small quantities. While these quantities can each be (trivially) upper-bounded when $\sigmaOne < 1$, this is not a necessary condition. Indeed, we prove in \cref{cor:polyBoundedConvergence} convergence for a subset of $\smoothBoth$-smooth functions without a restriction on $\sigmaOne$ using this theorem as well.

\begin{restatable}[Formal statement of \cref{thm:mainInformal}]{theorem}{restateMainConvergenceThm}\label{thm:main}
Fix any $\eps,\eps',\eps'',\eps'''\in (0,1)$.
Consider \eqref{eq:alg} with any parameters $\eta \leq \nicefrac{2\eps'}{\Lone\parens{4 + \sigmaOne^2}}$ and $b_0^2 > 0$, running for $T\geq 1$ time steps on an objective function satisfying \cref{assump:smooth}, and given access to a stochastic gradient oracle satisfying \cref{assump:unbiasedGrad,assump:affineVariance}. 
Let, for any $\delta'\in(0,1)$, $\stopthm := \stopatdel{T+1}{\nicefrac{\delta'}{4 T}}$ be the stopping time from \cref{def:niceStopping}.
Let $\Sgthm$ by the set of ``good times'' from \cref{def:goodTimes}, let $\tSgthm \subseteq \Sgthm$, and denote $\Scthm := \Sgthm \setminus\tSgthm$ to be the compensating ``good'' times for the bad times $\Sgthm^c$. Suppose there is a (possibly random) $\Cbadone \geq 0$ and constant $\cbadtwo \geq 0$ such that $\EXP{\Cbadone} \leq \cbadone < \infty$ and which (deterministically) satisfy:
\begin{align*}
    \sum_{t\in\tSgthm^c} \gradtsq
    \leq \Cbadone + \cbadtwo\sum_{t\in\tSgthm} \gradtsq.
\end{align*}
then for any $T \geq 1$ and $\delta'\in (0,1)$, with probability at least $1-\delta' - \nicefrac{2 \EXP{|\tSgthm^c|}}{T}$, \eqref{eq:alg} satisfies: 
\begin{align*}
    &\min_{t\in[T]} \gradtsq\\
    &\leq \frac{32(1+\cbadtwo)b(T)^2}{\eta^2(\delta')^2 T} + \frac{16 b(T)}{\eta(\delta')^2 T}\sqrt{b_0^2 + \sigmaZero^2 + 2(1 + \sigmaOne^2)\cbadone + \frac{4 b(T)}{\eta}\sigmaOne^2(1+\cbadtwo)\sqrt{b_0^2 + 2\eta^2\Lzero^2}}\\
    &\quad+\frac{32 b(T)^{\nicefrac{3}{2}}}{\eta^{\nicefrac{3}{2}}(\delta')^{2.25} T^{\nicefrac{3}{4}}}\sqrt{2\sigmaOne^2(1+\cbadtwo)\sqrt{(1 + \constOneStep+\sigmaOne^2)\cbadone}}\\
    &\quad+\frac{16 b(T)}{\eta (\delta')^2 \sqrt{T}}\sqrt{2\sigmaZero^2 + \frac{8\sigmaOne^2(1+\cbadtwo)b(T)}{\eta\sqrt{\delta'}}\parens{\frac{2(1+\cbadtwo)(1+\constOneStep+\sigmaOne^2)b(T)}{\eta \sqrt{\delta'}} + \sigmaZero}},
\end{align*}
where $\constOneStep = 2(1+\eta\Lone)^2$,
\begin{align*}
    b(T) := \frac{1}{\eps'''}\parens{\fzero - \fstar + 2 \tcCommon\logp{\frac{(2+\sigmaOne^2)\tcCommon \EXP{\stopthm-1}}{\eta \eps''b_0}} + \frac{2\eta\eps'' \sigmaZero}{(2+\sigmaOne^2)} + \compstopthm},
\end{align*}
and
\begin{align*}
    \compstopthm = \EXP{\sum_{t\in\Sgthm^c} (\sigmaOne - (1-\eps-\eps'))\tetat\gradtsq - \sum_{t'\in\Scthm}\eps'''\tetaat{t'}\gradsq{t'}},
\end{align*}
and $\tcCommon = \frac{\eta\sigmaZero}{2\eps} + \eta^2 \frac{\Lzero + \sigmaZero \Lone}{2}$.

In particular, whenever $\sigmaOne \leq \constBiasInv\parens{1-(\eps+\eps' + \eps'' + \eps''')}$, then we have that $\Sgthm = [\stopthm-1]$, so we can take $\tSgthm = \Sgthm$ so that $\cbadone = 0 = \cbadtwo$ and $\compstopthm\leq 0$, so, with probability at least $1-\delta'$, the following inequality holds:
\begin{align*}
    \min_{t\in[T]} \gradtsq
    &\leq \frac{32b(T)^2}{\eta^2(\delta')^2 T} + \frac{16 b(T)}{\eta(\delta')^2 T}\sqrt{b_0^2 + \sigmaZero^2 + \frac{4 b(T)}{\eta}\sigmaOne^2\sqrt{b_0^2 + 2\eta^2\Lzero^2}}\\
    &\quad+\frac{16 b(T)}{\eta (\delta')^2 \sqrt{T}}\sqrt{2\sigmaZero^2 + \frac{8\sigmaOne^2b(T)}{\eta\sqrt{\delta'}}\parens{\frac{2(1+\constOneStep+\sigmaOne^2)b(T)}{\eta \sqrt{\delta'}} + \sigmaZero}},
\end{align*}
\end{restatable}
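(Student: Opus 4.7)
The plan is to combine three tools already established in the excerpt: (i) the descent inequality of \cref{lem:descentLemma}, (ii) the nice stopping time $\stopthm = \stopatdel{T+1}{\delta'/(4T)}$ from \cref{def:niceStopping} together with the properties in \cref{lem:niceStopping}, and (iii) the linear sum bound of \cref{lem:linearSumBound}. With this choice, $\delta = \delta'/(4T)$ in every downstream estimate. Applying \cref{lem:descentLemma} with $\tau = \stopthm$ and the specified subset $\tSgthm \subseteq \Sgthm$ immediately yields $\EXP{\sum_{t\in\tSgthm}\tetat\gradtsq} \leq b(T)$, because $b(T)$ is by construction $1/\eps'''$ times the RHS of the descent lemma (with $\compstopthm$ absorbing the residual bad-time and compensating contributions). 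This fulfills hypothesis \eqref{eq:linearSumBoundAssump2} of \cref{lem:linearSumBound}. The hypothesized deterministic inequality $\sum_{t\in\tSgthm^c}\gradtsq \leq \Cbadone + \cbadtwo\sum_{t\in\tSgthm}\gradtsq$, after taking expectations and using $\EXP{\Cbadone}\leq \cbadone$, supplies hypothesis \eqref{eq:linearSumBoundAssump1}. I may then invoke \cref{lem:linearSumBound} to obtain an upper bound on $\EXP{\sum_{t\in\tSgthm}\gradtsq}$ which is polynomial in $T$ and forms the starting point for the remaining work.

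Next, I would convert this expected-sum bound into a high-probability bound on $\min_{t\in[T]}\gradtsq$ via three applications of Markov's inequality, whose failure budgets are chosen so that a union bound yields the claimed total failure probability $\delta' + 2\EXP{|\tSgthm^c|}/T$. The three events are: (a) Markov applied to $\sum_{t\in\tSgthm}\gradtsq$, giving $\sum_{t\in\tSgthm}\gradtsq \leq (\text{const}/\delta') \EXP{\sum_{t\in\tSgthm}\gradtsq}$; (b) Markov applied to the non-negative quantity $T+1-\stopthm$, using the lower bound $\EXP{\stopthm} \geq (T+1)(1-\delta'/8)$ from \cref{item:niceStopping8} of \cref{lem:niceStopping}, which guarantees that $\stopthm$ is at least a constant fraction of $T$; and (c) Markov applied to $|\tSgthm^c|$, giving $|\tSgthm^c| \leq T/c$ with probability at least $1 - c\EXP{|\tSgthm^c|}/T$ for a suitable constant $c$. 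On the intersection of these three events, $|\tSgthm| = (\stopthm-1) - |\tSgthm^c| \gtrsim T$, so $\min_{t\in[T]}\gradtsq \leq \min_{t\in\tSgthm}\gradtsq \leq \sum_{t\in\tSgthm}\gradtsq / |\tSgthm|$ can be controlled by the Markov-inflated expectation divided by $|\tSgthm|$.

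The main obstacle, and the most tedious step, will be unpacking the square-root expression $\sqrt{b_0^2 + 2\eta^2\Lzero^2 + (T\sigmaZero^2 + (1+\constOneStep+\sigmaOne^2)\cbadone)/\delta}$ from \cref{lem:linearSumBound} into the explicit four-summand decomposition in the theorem. I plan to apply the subadditivity $\sqrt{x+y} \leq \sqrt{x}+\sqrt{y}$ repeatedly, splitting the interior into (i) the deterministic offset $b_0^2 + 2\eta^2\Lzero^2 + \sigmaZero^2$ (feeding the $T^{-1}$ prefactor of the first and second summands), (ii) the $\cbadone$-dependent piece (feeding the intermediate $T^{-3/4}$ scaling of the third summand), and (iii) the $T^2 \sigmaZero^2/\delta'$ piece (feeding the leading $T^{-1/2}$ rate of the fourth summand). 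The nested cross term $\sigmaOne^2(1+\cbadtwo)b(T)\cdot (1+\cbadtwo)(1+\constOneStep+\sigmaOne^2)b(T)/(\eta^2\delta')$ inside the fourth-summand square root arises from the first-term contribution $(1+\cbadtwo)(1+\constOneStep+\sigmaOne^2)b(T)^2/(\eta^2\delta)$ of \cref{lem:linearSumBound} being carried inside the second-term square root with a $\sigmaOne^2$-weighted coefficient. For the special case $\sigmaOne \leq \constBiasInv(1-(\eps+\eps'+\eps''+\eps'''))$, \cref{lem:descentLemma} already forces $\Sgthm = [\stopthm-1]$ and $\compstopthm \leq 0$; I then take $\tSgthm = \Sgthm$, so that $\tSgthm^c = \emptyset$, $\Cbadone = \cbadtwo = 0$, and $\EXP{|\tSgthm^c|} = 0$. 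The general bound collapses to the claimed simplified form and the extra $2\EXP{|\tSgthm^c|}/T$ probability-mass term vanishes, recovering the $1-\delta'$ statement.
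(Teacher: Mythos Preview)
Your high-level plan has the right ingredients—descent lemma, nice stopping time, \cref{lem:linearSumBound}, and Markov—but there is a missing mechanism that the paper's proof uses and that your plan does not supply. You propose to bound $\EXP{\sum_{t\in\tSgthm}\gradtsq}$ via \cref{lem:linearSumBound} and then apply Markov to the sum. The paper instead works with $Z:=\sqrt{\sum_{t\in\tSgthm}\gradtsq}$: it introduces the pathwise step-size lower bound
\[
\tetalb=\frac{\eta}{\sqrt{b_0^2+\sigmaZero^2+2\sum_{s<\stopthm}\normSq{\sgrad{s}-\grad{s}}+\gradsq{s}}},
\]
applies H\"older's inequality in the form $\EXP{Z}^2\leq b(T)\,\EXP{1/\tetalb}$, and then bounds $\EXP{1/\tetalb}$ using Jensen, \cref{assump:affineVariance}, and the deterministic hypothesis on $\Cbadone,\cbadtwo$. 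This produces a quadratic inequality in $\EXP{Z}$ whose solution has leading term $\sqrt{2(1+\cbadtwo)}\,b(T)/\eta$—crucially without an extra $(1+\constOneStep+\sigmaOne^2)$ factor. Only \emph{after} solving that quadratic is \cref{lem:linearSumBound} invoked, to bound the $\Eg=\EXP{\sum_{t\in\tSgthm}\gradtsq}$ that sits inside the $\sqrt[4]{\cdot}$ of the quadratic solution. Markov is then applied to $Z$, not to $Z^2$.

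Your plan, as written, would yield a leading term proportional to $(1+\cbadtwo)(1+\constOneStep+\sigmaOne^2)\,b(T)^2/(\eta^2(\delta')^2 T)$, whereas the theorem states $32(1+\cbadtwo)\,b(T)^2/(\eta^2(\delta')^2 T)$. Your vague remark that the first term of \cref{lem:linearSumBound} ``is carried inside the second-term square root with a $\sigmaOne^2$-weighted coefficient'' is exactly what the H\"older step accomplishes, but you have not named that step nor explained where the outer square root (with its $\sigmaOne^2$ weight) originates; in the paper it is the $2\sigmaOne^2\Eg$ term inside $\EXP{\eta/\tetalb}$. The fractional exponents $(\delta')^{2.25}$ and, inside the fourth summand, effectively $(\delta')^{2.5}$ are signatures of this two-layer structure (a $\sqrt[4]{\cdot}$ from the quadratic solution, multiplied by the $1/(\delta')^2$ from Markov on $Z$). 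A single application of \cref{lem:linearSumBound} followed by Markov on the sum cannot reproduce them. The remainder of your plan (the probability budget split and the $\sigmaOne<1$ specialization) matches the paper.
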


\begin{proof}
Let us assume that $\eta \leq \nicefrac{2\eps'}{\Lone(4+\sigmaOne^2)}$.
Let $\tSgstopped \subseteq\Sgstopped$.
Then, by \cref{lem:descentLemma}, and using the fact that $\Sgstopped \subset \Sg$ (recalling $\stopat{T+1}$ from \cref{def:niceStopping}),
\begin{align*}
    b(T) 
    &\geq \EXP{\sum_{t\in\tSgstopped} \tetat \gradtsq}\\
    &= \EXP{\sum_{t\in\tSgstopped} \frac{\eta\gradtsq}{\sqrt{b_0^2 + \sigmaZero^2 + \sum_{s=1}^{t-1} \sgradsq{s} + \gradtsq}}}\\
    &\geq \EXP{\sum_{t\in\tSgstopped} \frac{\eta\gradtsq}{\sqrt{b_0^2 + \sigmaZero^2 + \sum_{s=1}^{t-1} (2\normSq{\sgrad{s} - \grad{s}} + 2\gradsq{s}) + \gradtsq}}}\\
    &\geq \EXP{\tetalb\sum_{t\in\tSgstopped} \gradtsq}.
\end{align*}
where we denote
\begin{align*}
    \tetalb = \frac{\eta}{\sqrt{b_0^2 + \sigmaZero^2 + 2\sum_{s < \stopat{T+1}}\normSq{\sgrad{s}-\grad{s}} + \gradsq{s}}}.
\end{align*}
Now, applying H\"older's inequality to the above, we have:
\begin{align*}
    \frac{\EXP{\sqrt{\sum_{t\in\tSgstopped} \gradtsq}}^2}{\EXP{\nicefrac{1}{\tetalb}}}
    \leq b(T),
\end{align*}
where we used the following version of H\"older's:
\begin{align*}
    \EXP{X^2} \geq \frac{\EXP{X Y}^2}{\EXP{Y^2}},
\end{align*}
with $X = \sqrt{\tetalb \sum_{t\in\tSgstopped} \gradtsq}$ and $Y = \sqrt{\nicefrac{1}{\tetalb}}$.

Now, we have that
\begin{align*}
    \EXP{\nicefrac{\eta}{\tetalb}} 
    &= \EXP{\sqrt{b_0^2 + \sigmaZero^2 + 2\sum_{s < \stopat{T+1}} \normSq{\sgrad{s} - \grad{s}} + \gradsq{s}}}\\
    &\leq \EXP{\sqrt{b_0^2 + \sigmaZero^2 + 2\Cbadone + 2\sum_{s < \stopat{T+1}} \normSq{\sgrad{s} - \grad{s}} + 2(1 + \cbadtwo)\sum_{s\in\tSgstopped}\gradsq{s}}}\\
    &\leq \EXP{\sqrt{b_0^2 + \sigmaZero^2 + 2\Cbadone + 2\sum_{s < \stopat{T+1}} \normSq{\sgrad{s} - \grad{s}}}}\\
    &\quad+ \sqrt{2(1+\cbadtwo)}\EXP{\sqrt{\sum_{s\in\tSgstopped}\gradsq{s}}}\\
    &\leq \sqrt{b_0^2 + \sigmaZero^2 + 2\Cbadone + 2\EXP{\sum_{s < \stopat{T+1}} \normSq{\sgrad{s} - \grad{s}}}}\\
    &\quad+ \sqrt{2(1+\cbadtwo)}\EXP{\sqrt{\sum_{s\in\tSgstopped}\gradsq{s}}},
\end{align*}
Recalling that $\braces{s < \stopat{T+1}}\in\F_{s-1}$ by \cref{item:niceStopping2} of \cref{lem:niceStopping}, we may apply \cref{assump:affineVariance} to obtain
\begin{align*}
    \EXP{\normSq{\sgrad{s} - \grad{s}}\1{s < \stopat{T+1}}}
    &=\EXP{\Et{\normSq{\sgrad{s} - \grad{s}}}\1{s<\stopat{T+1}}}\\
    &\leq\EXP{\sigmaZero^2 + \sigmaOne^2\gradsq{s}\1{s < \stopat{T+1}}}.
\end{align*}
Using this bound, we obtain
\begin{align*}
    \EXP{\nicefrac{\eta}{\tetalb}} 
    &\leq \sqrt{b_0^2 + (2T+1)\sigmaZero^2 + 2\cbadone + 2\sigmaOne^2\EXP{\sum_{s < \stopat{T+1}} \gradsq{s}}} \\
    &\quad+ \sqrt{2(1+\cbadtwo)}\EXP{\sqrt{\sum_{s\in\tSgstopped}\gradsq{s}}}\\
    &\leq \sqrt{b_0^2 + (2T+1)\sigmaZero^2 + 2\cbadone + 2\sigmaOne^2 \Eg + 2\sigmaOne^2\Eb}\\
    &\quad+ \sqrt{2(1+\cbadtwo)}\EXP{\sqrt{\sum_{s\in\tSgstopped}\gradsq{s}}},
\end{align*}
where we denote $\Eg = \EXP{\sum_{s\in\tSgstopped}\gradsq{s}}$ and $\Eb = \EXP{\sum_{s\in\tSgstopped^c}\gradsq{s}}$.
Collecting our results so far, and denoting $Z = \sqrt{\sum_{s\in\tSgstopped} \gradsq{s}}$, we have that
\begin{align*}
    \eta\EXP{Z}^2 \leq b(T)\parens{\sqrt{b_0^2 + (2T + 1)\sigmaZero^2 + 2\cbadone + 2\sigmaOne^2 \Eg + 2\sigmaOne^2\Eb} +\sqrt{2(1+\cbadtwo)}\EXP{Z}}.
\end{align*}
Thus, we obtain a quadratic inequality in $\EXP{Z}$, which we may solve to conclude that
\begin{align*}
    \EXP{Z} 
    &\leq \frac{\sqrt{2(1+\cbadtwo)}b(T)}{2\eta} \\& \quad + \frac{1}{2\eta}\sqrt{2(1+\cbadtwo) b(T)^2 + 4\eta b(T)\sqrt{b_0^2 + (2T+1)\sigmaZero^2 + 2\cbadone + 2\sigmaOne^2 \Eg + 2\sigmaOne^2\Eb}}\\
    &\leq \frac{\sqrt{2(1+\cbadtwo)}b(T)}{\eta} + \sqrt{\frac{b(T)}{\eta}}\sqrt[4]{b_0^2 + (2T+1)\sigmaZero^2 + 2\cbadone + 2\sigmaOne^2 \Eg + 2\sigmaOne^2\Eb}\\
    &\leq \frac{\sqrt{2(1+\cbadtwo)}b(T)}{\eta} + \sqrt{\frac{b(T)}{\eta}}\sqrt[4]{b_0^2 + (2T+1)\sigmaZero^2 + 2(1 + \sigmaOne^2)\cbadone + 2\sigmaOne^2(1+\cbadtwo) \Eg}
\end{align*}
where in the last inequality, we used the fact that $\Eb \leq \cbadone + \cbadtwo\Eg$. Thus, applying the bound on $\Eg$ from \cref{lem:linearSumBound}, we obtain:
\begin{align*}
    \EXP{Z}^2
    &\leq \frac{4(1+\cbadtwo)b(T)^2}{\eta^2} + \frac{2 b(T)}{\eta}\sqrt{b_0^2 + \sigmaZero^2 + 2(1 + \sigmaOne^2)\cbadone + \frac{4 b(T)}{\eta}\sigmaOne^2(1+\cbadtwo)\sqrt{b_0^2 + 2\eta^2\Lzero^2}}\\
    &\quad+\frac{2 b(T)}{\eta}\sqrt{2T\sigmaZero^2 + \frac{4\sigmaOne^2(1+\cbadtwo)b(T)}{\eta\sqrt{\delta}}\parens{\frac{(1+\cbadtwo)(1+\constOneStep+\sigmaOne^2)b(T)}{\eta \sqrt{\delta}} + \sqrt{T\sigmaZero^2 + (1+\constOneStep+\sigmaOne^2)\cbadone}}},
\end{align*}
where $\delta\in(0,1)$ is a parameter of our choosing. In particular, choosing (with foresight) $\delta = \nicefrac{\delta'}{4 T}$ for any $\delta'\in(0,1)$, the above can be rewritten as:
\begin{align*}
    \EXP{Z}^2
    &\leq \frac{4(1+\cbadtwo)b(T)^2}{\eta^2} + \frac{2 b(T)}{\eta}\sqrt{b_0^2 + \sigmaZero^2 + 2(1 + \sigmaOne^2)\cbadone + \frac{4 b(T)}{\eta}\sigmaOne^2(1+\cbadtwo)\sqrt{b_0^2 + 2\eta^2\Lzero^2}}\\
    &\quad+\frac{4 b(T)^{\nicefrac{3}{2}} \sqrt[4]{T}}{\eta^{\nicefrac{3}{2}}(\delta')^{\nicefrac{1}{4}}}\sqrt{2\sigmaOne^2(1+\cbadtwo)\sqrt{(1 + \constOneStep+\sigmaOne^2)\cbadone}}\\
    &\quad+\frac{2 b(T) \sqrt{T}}{\eta}\sqrt{2\sigmaZero^2 + \frac{8\sigmaOne^2(1+\cbadtwo)b(T)}{\eta\sqrt{\delta}}\parens{\frac{2(1+\cbadtwo)(1+\constOneStep+\sigmaOne^2)b(T)}{\eta \sqrt{\delta}} + \sigmaZero}}\\
    &:= C_T^2
\end{align*}
Observe that, by \cref{item:niceStopping8} of \cref{lem:niceStopping},
\begin{align*}
    \EXP{|[T]\setminus\tSgstopped|} 
    = \EXP{|[\stopat{T+1}, T] \cup \tSgstopped^c|} 
    &\leq \EXP{T - \stopat{T+1} + 1} + \EXP{|\tSgstopped^c|}\\
    &\leq \frac{\delta T(T+1)}{2} + \EXP{|\tSgstopped^c|}
\end{align*}
To obtain a convergence rate, we begin by noting, for any $\delta'\in(0,1)$, we can decompose
\begin{align*}
    \PRO{\min_{t\in [T]} \gradtsq > \frac{8 C_T^2}{(\delta')^2 T}}
    &=\PRO{\min_{t\in [T]} \gradtsq > \frac{8 C_T^2}{(\delta')^2 T}, |\tSgstopped| \leq \nicefrac{T}{2}}\\
    &\quad+\PRO{\min_{t\in [T]} \gradtsq > \frac{8 C_T^2}{(\delta')^2 T}, |\tSgstopped| > \nicefrac{T}{2}}
\end{align*}
The first term is easy to bound via Markov's inequality, since, choosing $\delta = \frac{\delta'}{4T} \leq \frac{\delta'}{2(T+1)}$ (since $T\geq 1$),
\begin{align*}
    \PRO{\min_{t\in [T]} \gradtsq > \frac{8 C_T^2}{(\delta')^2 T}, |\tSgstopped| \leq \nicefrac{T}{2}}
    &\leq\PRO{|\tSgstopped| \leq \nicefrac{T}{2}}\\
    &\leq \frac{2\EXP{|[T]\setminus\tSgstopped|}}{T}\\
    &\leq \frac{2}{T}\parens{\frac{\delta T(T+1)}{2} + \EXP{|\tSgstopped^c|}}\\
    &\leq \frac{\delta'}{2} + \frac{2 \EXP{|\tSgstopped^c|}}{T}.
\end{align*}
To bound the second term, we note that, whenever $|\tSgstopped| > \nicefrac{T}{2}$, then
\begin{align*}
    \min_{t\in[T]} \gradtsq
    \leq \min_{t\in \tSgstopped} \gradtsq
    &\leq \frac{1}{|\tSgstopped|} \sum_{t\in \tSgstopped} \gradtsq\\
    &\leq \frac{2}{T} \underbrace{\sum_{t\in \tSgstopped} \gradtsq}_{Z^2}
\end{align*}
Hence, we have by Markov's inequality and our previous bounds,
\begin{align*}
    \PRO{\min_{t\in [T]} \gradtsq > \frac{8 C_T^2}{(\delta')^2 T}, |S| > \nicefrac{T}{2}}
    \leq \PRO{Z > \frac{2 C_T}{\delta'}}
    \leq \delta'\frac{\EXP{Z}}{2 C_T}
    \leq \frac{\delta'}{2}.
\end{align*}
Now, we may apply the bound on $\Eg$ from \cref{lem:linearSumBound} with $\delta = \frac{\delta'}{4(T+1)}$ to conclude that, with probability at least $1-\delta' - \nicefrac{2\EXP{|\tSgstopped^c|}}{T}$,
\begin{align*}
    &\min_{t\in[T]} \gradtsq\\
    &\leq \frac{8 C_T^2}{(\delta')^2 T}\\
    &\leq \frac{32(1+\cbadtwo)b(T)^2}{\eta^2(\delta')^2 T} + \frac{16 b(T)}{\eta(\delta')^2 T}\sqrt{b_0^2 + \sigmaZero^2 + 2(1 + \sigmaOne^2)\cbadone + \frac{4 b(T)}{\eta}\sigmaOne^2(1+\cbadtwo)\sqrt{b_0^2 + 2\eta^2\Lzero^2}}\\
    &\quad+\frac{32 b(T)^{\nicefrac{3}{2}}}{\eta^{\nicefrac{3}{2}}(\delta')^{2.25} T^{\nicefrac{3}{4}}}\sqrt{2\sigmaOne^2(1+\cbadtwo)\sqrt{(1 + \constOneStep+\sigmaOne^2)\cbadone}}\\
    &\quad+\frac{16 b(T)}{\eta (\delta')^2 \sqrt{T}}\sqrt{2\sigmaZero^2 + \frac{8\sigmaOne^2(1+\cbadtwo)b(T)}{\eta\sqrt{\delta'}}\parens{\frac{2(1+\cbadtwo)(1+\constOneStep+\sigmaOne^2)b(T)}{\eta \sqrt{\delta'}} + \sigmaZero}},
\end{align*}
as claimed
\end{proof}

\subsection{A deferred proof for establishing \cref{lem:startingPoint}}
\label{sec:app-deferred}

Here, we give a bound which is used in proving \cref{lem:startingPoint}. We remark that this inequality is an extension of a similar one from \citep{FTCMSW22} (in the $\Lzero$-smooth setting) to the more general $\smoothBoth$-smooth setting. We additionally note that this bound has a better dependence on $\sigmaOne\bias$ than the analogous one in theirs.

\begin{lemma}\label{lem:intermediateStartingPoint}
Fix any $\eps \in (0,1)$.
Suppose that $\eta \leq \nicefrac{1}{\Lone}$. Then, for any time $t$, the iterates of \eqref{eq:alg} satisfy
\begin{align*}
    \Et{\ftplus - \ft} 
    &\leq - \tetat\parens{1 - \eps - \sigmaOne\bias}\gradtsq
    + \cCommon \Et{\frac{\sgradtsq}{b_t^2}}\\
    &\quad+\frac{\Lone \gradtnorm}{2} \Et{\etat^2 \sgradtsq},
\end{align*}
where 
\begin{align*}
    \cCommon = \frac{\eta \sigmaZero}{2\eps} + \frac{\eta^2 \Lzero}{2}
    \quad\text{and}\quad
    \bias = \constBias\sqrt{\Et{\frac{\sgradtsq}{b_t^2}}}.
\end{align*}
\end{lemma}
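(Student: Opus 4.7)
The plan is to begin with the local smoothness bound \cref{lem:localSmoothnessBound}, which applies since $\eta \leq \nicefrac{1}{\Lone}$ together with \cref{fact:boundedStep} guarantees $\norm{\w_{t+1}-\w_t}\leq\nicefrac{1}{\Lone}$. Substituting the update $\w_{t+1}-\w_t = -\etat\sgradt$ and taking $\Et{\cdot}$ yields
\begin{align*}
\Et{\ftplus - \ft} \leq -\Et{\etat\innerProd{\gradt}{\sgradt}} + \frac{\Lzero}{2}\Et{\etat^2\sgradtsq} + \frac{\Lone\gradtnorm}{2}\Et{\etat^2\sgradtsq}.
\end{align*}
The last term already matches the target. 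Using $\etat^2 = \nicefrac{\eta^2}{b_t^2}$, the middle term is $\nicefrac{\eta^2\Lzero}{2}\Et{\sgradtsq/b_t^2}$, which is exactly the non-$\sigmaZero$ piece of $\cCommon$.

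I next decorrelate the inner-product term by inserting the $\F_{t-1}$-measurable $\tetat$ (\cref{def:stepSizeProxy}). By \cref{assump:unbiasedGrad},
\begin{align*}
-\Et{\etat\innerProd{\gradt}{\sgradt}} = -\tetat\gradtsq + \Et{(\tetat - \etat)\innerProd{\gradt}{\sgradt}}.
\end{align*}
The first summand is the leading $-\tetat\gradtsq$ of the target statement. The main obstacle is the correction term: neither $\tetat-\etat$ nor $\innerProd{\gradt}{\sgradt}$ has a definite sign, and the usual mean-zero cancellation argument does not apply because $\etat$ is correlated with $\sgradt$.

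To bound this correction, I compute $\tetat - \etat = \nicefrac{\eta(\sgradtsq - \tgradtsq)}{b_t\tbt(b_t+\tbt)} = \tetat \cdot \nicefrac{(\sgradtsq - \tgradtsq)}{b_t(b_t+\tbt)}$, and invoke the factorization $|\sgradtsq - \tgradtsq| = |\sgradtnorm - \tgradtnorm|(\sgradtnorm + \tgradtnorm)$ together with $\gradtnorm \leq \tgradtnorm$ to get
\begin{align*}
\Et{(\tetat - \etat)\innerProd{\gradt}{\sgradt}} \leq \tetat\gradtnorm \cdot \Et{\frac{|\sgradtnorm - \tgradtnorm|(\sgradtnorm + \tgradtnorm)\sgradtnorm}{b_t(b_t + \tbt)}}.
\end{align*}
A Cauchy--Schwarz splits the expectation on the right, and the second factor is exactly $\bias$ as defined in the statement; the first factor $\sqrt{\Et{(\sgradtnorm - \tgradtnorm)^2}}$ carries the noise scale.

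I would then estimate the noise factor via the triangle inequality $|\sgradtnorm - \tgradtnorm| \leq \norm{\sgradt - \gradt} + |\gradtnorm - \tgradtnorm|$, combined with $|\gradtnorm - \tgradtnorm| = \sqrt{\sigmaZero^2 + \gradtsq} - \gradtnorm \leq \sigmaZero$ (from $\sqrt{a+b}\leq\sqrt{a}+\sqrt{b}$). By \cref{assump:affineVariance} this gives $\sqrt{\Et{(\sgradtnorm - \tgradtnorm)^2}} \lesssim \sigmaZero + \sigmaOne\gradtnorm$. Substituting back produces a bound of the form $\sigmaOne\bias\tetat\gradtsq + \sigmaZero\bias\tetat\gradtnorm$. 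The first summand supplies the $\sigmaOne\bias$ contribution to the coefficient of $-\tetat\gradtsq$. For the second summand, Young's inequality gives $\sigmaZero\bias\tetat\gradtnorm \leq \eps\tetat\gradtsq + \nicefrac{\sigmaZero^2\bias^2\tetat}{4\eps}$; using the trivial bound $\bias^2 \leq \Et{\sgradtsq/b_t^2}$ (since $(\tgradtnorm + \sgradtnorm)^2 \leq (\tbt + b_t)^2$) and $\sigmaZero\tetat \leq \eta$ (since $\tbt \geq \sigmaZero$), the residual reduces to at most $\nicefrac{\eta\sigmaZero}{4\eps}\Et{\sgradtsq/b_t^2}$, which (up to a harmless tightening of $\eps$) delivers the $\nicefrac{\eta\sigmaZero}{2\eps}$ piece of $\cCommon$ and completes the proof.
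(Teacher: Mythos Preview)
Your proposal is correct and follows essentially the same route as the paper: local smoothness, decorrelation via $\tetat$, the algebraic identity for $\etat-\tetat$, Cauchy--Schwarz, then splitting the noise factor into a $\sigmaOne$ piece (absorbed into the coefficient of $-\tetat\gradtsq$) and a $\sigmaZero$ piece (handled by Young and $\sigmaZero\tetat\leq\eta$).

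The one place where the paper differs is in bounding $\Et{(\sgradtnorm-\tgradtnorm)^2}$. You go through the triangle inequality $|\sgradtnorm-\tgradtnorm|\leq\norm{\sgradt-\gradt}+\sigmaZero$; done carefully (via Minkowski) this yields $\sqrt{\Et{(\sgradtnorm-\tgradtnorm)^2}}\leq 2\sigmaZero+\sigmaOne\gradtnorm$, so the crucial $\sigmaOne$ coefficient is exactly $1$ as needed, but the $\sigmaZero$ coefficient is $2$. The paper instead expands the square directly and uses Jensen's inequality $\Et{\sgradtnorm}\geq\norm{\Et{\sgradt}}=\gradtnorm$ to control the cross term, obtaining the sharper $\Et{(\sgradtnorm-\tgradtnorm)^2}\leq 2\sigmaZero^2+\sigmaOne^2\gradtsq$. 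This gives $\sqrt{2}\sigmaZero$ rather than $2\sigmaZero$, and after Young produces exactly the stated $\nicefrac{\eta\sigmaZero}{2\eps}$ without any re-tuning of $\eps$. Your route is fine for the substance of the lemma; the paper's is just the one that hits the stated constants on the nose.
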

\begin{proof}
The proof proceeds using similar arguments as in \citep[Lemma 5]{FTCMSW22}. By \cref{lem:localSmoothnessBound} and the definition of \eqref{eq:alg}, we know that
\begin{align*}
    \Et{\ftplus - \ft}
    &\leq -\Et{\etat\innerProd{\gradt}{\sgradt}} + \frac{\Lzero + \Lone\gradtnorm}{2} \Et{\etat^2\sgradtsq}\\
    &\leq -\tetat\gradtsq - \Et{(\etat - \tetat)\innerProd{\gradt}{\sgradt}}\\
    &\quad+ \frac{\Lzero + \Lone\gradtnorm}{2} \Et{\etat^2\sgradtsq}.
\end{align*}
We begin by bounding the inner product term above as:
\begin{align*}
    -(\etat - \tetat)\innerProd{\gradt}{\sgradt} \leq \abs{\etat - \tetat}\gradtnorm \sgradtnorm.
\end{align*}
To bound this quantity, we begin by rewriting $\etat - \tetat$. Denoting $\tbt^2 := b_t^2 + \tgradtsq$, we have that
\begin{align*}
    \abs{\etat - \tetat}
    = \eta\abs{\frac{1}{\sqrt{b_{t-1}^2 + \sgradtsq}} - \frac{1}{\sqrt{b_{t-1}^2 + \tgradtsq}}}
    = \eta\frac{\abs{\tgradtsq - \sgradtsq}}{\tbt b_t(\tbt + \tbt)}
    = \eta\frac{\abs{\tgradtnorm - \sgradtnorm}\parens{\tgradtnorm + \sgradtnorm}}{\tbt b_t(\tbt + \tbt)}.
\end{align*}
Combining the above arguments, and applying H\"older's inequality, we have that
\begin{align*}
    &-\Et{(\etat-\tetat)\innerProd{\gradt}{\sgradt}}\\
    &\leq \Et{\abs{\etat-\tetat}\gradtnorm\sgradtnorm}\\
    &= \tetat\gradtnorm \Et{\frac{\sgradtnorm\parens{\sgradtnorm + \tgradtnorm}}{b_t(\tbt + b_t)}\abs{\tgradtnorm - \sgradtnorm}}\\
    &\leq \tetat\gradtnorm \sqrt{\Et{\frac{\sgradtsq\parens{\sgradtnorm + \tgradtnorm}^2}{b_t^2(\tbt + b_t)^2}}}\sqrt{\Et{\abs{\tgradtnorm - \sgradtnorm}^2}}.
\end{align*}
By \eqref{eq:sgradBound}, $\Et{\sgradtsq}\leq \sigmaZero^2 + (1+\sigmaOne^2)\gradtsq$, and by \cref{assump:unbiasedGrad} and Jensen's inequality, $\Et{\sgradtnorm} \geq \norm{\Et{\sgradt}} = \gradtnorm$. Therefore,
\begin{align*}
    \Et{\abs{\tgradtnorm - \sgradtnorm}^2}
    &= \tgradtsq + \Et{\sgradtsq} - 2\tgradtnorm\Et{\sgradtnorm}\\
    &\leq \tgradtsq + \sigmaZero^2 + (1+\sigmaOne^2)\gradtsq - 2\tgradtnorm\gradtnorm\\
    &\leq 2\sigmaZero^2 + \sigmaOne^2\gradtsq,
\end{align*}
where the last step comes from $\tgradtnorm \geq \gradtnorm$.
Collecting our bounds so far yields:
\begin{align*}
    &-\Et{(\etat - \tetat)\innerProd{\gradt}{\sgradt}}\\
    &\leq \tetat\gradtnorm\sqrt{\Et{\frac{\sgradtsq(\tgradtnorm + \sgradtnorm)^2}{b_t^2(\tbt+b_t)^2}}}\sqrt{2\sigmaZero^2 + \sigmaOne^2\gradtsq}
\end{align*}
Focusing on the term depending on $\sigmaZero$, we have that for any $\eps > 0$,
\begin{align*}
    &\sqrt{2}\sigmaZero\tetat \gradtnorm\sqrt{\Et{\frac{\sgradtsq\parens{\tgradtnorm + \sgradtnorm}^2}{b_t^2(\tbt + b_t)^2}}}\\
    &\leq \eps\tetat \gradtsq + \frac{\sigmaZero^2\tetat}{2\eps}\Et{\frac{\sgradtsq\parens{\tgradtnorm + \sgradtnorm}^2}{b_t^2(\tbt + b_t)^2}}\\
    &\leq \eps\tetat \gradtsq + \frac{\sigmaZero\eta}{2\eps}\Et{\frac{\sgradtsq}{b_t^2}}.
\end{align*}
Thus, denoting $\bias = \constBias\sqrt{\Et{\nicefrac{\sgradtsq}{b_t^2}}}$ and $\cCommon = \nicefrac{\eta\sigmaZero}{2\eps} + \nicefrac{\eta^2\Lzero}{2}$, we have that
\begin{align*}
    \Et{\ftplus - \ft}
    &\leq -\tetat\parens{1-\eps - \sigmaOne\bias}\gradtsq + \cCommon \Et{\frac{\sgradtsq}{b_t^2}}\\
    &\quad+ \frac{\Lone\gradtnorm}{2} \Et{\etat^2\sgradtsq},
\end{align*}
as claimed by the lemma.
\end{proof}

\section{Proofs for Polynomially-bounded functions for general $\sigmaOne$}
\label{sec:polyBound-app}

In this section, we show that \cref{thm:main} can be used to establish a $\tO(\nicefrac{1}{\sqrt{T}})$ convergence rate without the restriction of $\sigmaOne < 1$. The key is to restrict our attention to $\smoothBoth$-smooth functions which satisfy the following additional property:

\subsection{The key definition and its properties}

\restatePolyBoundedDef*

The following result provides a characterization of these functions relative to $\Lzero$-smooth functions and $\smoothBoth$-smooth functions. In particular, it tells us that \cref{def:polynomiallyBounded} is a richer function class than $\smoothBoth$-smooth functions. However, not all $\smoothBoth$-smooth functions satisfy \cref{def:polynomiallyBounded}.

\begin{restatable}{proposition}{restatePolyBoundProperties}\label{prop:polyBoundProperties}
    We have the following:
    \begin{enumerate}
        \item Every $\Lzero$-smooth function satisfies \cref{def:polynomiallyBounded} with $k=2$, $c_k=1$, and $c_k' = \Lzero$.
        \item Every $\smoothBoth$-smooth function satisfies \cref{def:polynomiallyBounded} \emph{locally} (i.e., when $\norm{\w - \w'} \leq \nicefrac{1}{\Lone}$) with $k=2, c_k=2$ and $c_k' = \Lzero$.
        \item There is a $(0,\Lone)$-smooth function which does not satisfy \cref{def:polynomiallyBounded} for any fixed $k, c_k, c_k'$.
        \item For any $k\geq 2$, $F(\w) = \norm{\w - \w^*}^k$ satisfies \cref{def:polynomiallyBounded} with $k=k$, $c_k = 2^{k-2}$, and $c_k' = k 2^{k-2}$. Additionally, for any $\Lone > 0$, $F(\w)$ is $(\nicefrac{2k(k-1)}{\Lone^{k-2}}, (e-1)(k-1)\Lone)$-smooth. However, this $F(\w)$ is not $\Lzero$-smooth when $k > 2$. 
    \end{enumerate}
    In particular, this implies that:
    \begin{align*}
        \braces{\text{$\Lzero$-smooth functions}}
        \subsetneq
        \braces{\text{$\smoothBoth$-smooth functions satisfying \cref{def:polynomiallyBounded}}}
        \subsetneq
        \braces{\text{$\smoothBoth$-smooth functions}}
    \end{align*}
\end{restatable}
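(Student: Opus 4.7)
The plan is to handle the four items in order, leveraging progressively more computation, and then combine them to establish the strict set inclusions.

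For item 1, the approach is a direct triangle inequality: $L_0$-smoothness gives $\|\nabla F(w)\| \le \|\nabla F(w')\| + L_0\|w-w'\|$, which is exactly \cref{def:polynomiallyBounded} with $k=2$, $c_k = 1$, $c_k' = L_0$. For item 2, I apply \cref{assump:smooth} and triangle inequality to get $\|\nabla F(w)\| \le \|\nabla F(w')\|(1 + L_1\|w - w'\|) + L_0\|w-w'\|$, and then use $L_1\|w-w'\| \le 1$ in the local regime to absorb the factor into $c_k = 2$, leaving $c_k' = L_0$ as the $\|w-w'\|$-multiplier.

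For item 3, the plan is to exhibit $F(w) = \exp(L_1 w)$, which is $(0, L_1)$-smooth since $F'' = L_1 F'$. Taking $w' = 0$ so $\|\nabla F(w')\| = L_1$ is a fixed constant, the inequality in \cref{def:polynomiallyBounded} would require $L_1 e^{L_1 w} - c_k L_1 \le \max\{c_k'\,w^{k-1}, L_0 w\}$ for all $w$, which fails for large $w$ since the left side grows exponentially while the right side grows polynomially for any fixed $k, c_k, c_k', L_0$.

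Item 4 is the main obstacle, since it requires establishing three properties of $F(w) = \|w - w^\ast\|^k$ simultaneously. For the \cref{def:polynomiallyBounded} bound, I compute $\nabla F(w) = k\|w-w^\ast\|^{k-2}(w-w^\ast)$, so $\|\nabla F(w)\| = k\|w-w^\ast\|^{k-1}$; then triangle inequality combined with $(a+b)^{k-1} \le 2^{k-2}(a^{k-1} + b^{k-1})$ for $k \ge 2$ (convexity of $x^{k-1}$) yields the stated $c_k = 2^{k-2}$ and $c_k' = k 2^{k-2}$. For the $(L_0, L_1)$-smoothness, I compute the Hessian $\nabla^2 F(w) = k r^{k-2} I + k(k-2) r^{k-4}(w - w^\ast)(w-w^\ast)^\top$ with $r := \|w - w^\ast\|$, giving $\|\nabla^2 F(w)\| \le k(k-1)r^{k-2}$; I then verify the Hessian bound $k(k-1)r^{k-2} \le \frac{k(k-1)}{L_1^{k-2}} + (k-1)L_1\|\nabla F(w)\|$ by case-splitting on whether $r \le 1/L_1$ (bound by the constant term) or $r > 1/L_1$ (bound by $(k-1)L_1 \cdot k r^{k-1}$), and invoke \cref{prop:smoothPartialEquiv} to convert this Hessian bound into $(2L_0, (e-1)L_1)$-smoothness in the sense of \cref{assump:smooth}. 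Finally, to show $F$ is not $L_0$-smooth for $k > 2$, I fix $w' = w^\ast$ so $\nabla F(w') = 0$; $L_0$-smoothness would then require $k r^{k-1} \le L_0 r$ for all $r \ge 0$, forcing $k r^{k-2} \le L_0$ globally, which fails as $r \to \infty$.

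Finally, for the summary set inclusions: item 1 shows the first inclusion, item 4 exhibits a function (with $k>2$) that is $(L_0,L_1)$-smooth and polynomially bounded but not $L_0$-smooth, making the first inclusion strict; item 2 (applied globally via \cref{assump:smooth}) shows the second inclusion, and item 3 furnishes $\exp(L_1 w)$ as an $(L_0,L_1)$-smooth function not in the polynomially bounded class, making the second inclusion strict.
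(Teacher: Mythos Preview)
Your proposal is correct and follows essentially the same approach as the paper: items 1 and 2 via triangle inequality plus the smoothness assumption, item 3 via the exponential $\exp(\Lone w)$ outgrowing any polynomial (the paper uses L'H\^opital's rule repeatedly, but your direct growth comparison is equally valid and arguably cleaner), and item 4 via the same Hessian computation, the convexity bound $(a+b)^{k-1}\le 2^{k-2}(a^{k-1}+b^{k-1})$, and the case split on $r\lessgtr 1/\Lone$ followed by \cref{prop:smoothPartialEquiv}.

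One small slip in your closing paragraph: you say ``item 2 (applied globally via \cref{assump:smooth}) shows the second inclusion.'' The non-strict part of the second inclusion is automatic---any function that is $\smoothBoth$-smooth \emph{and} satisfies \cref{def:polynomiallyBounded} is in particular $\smoothBoth$-smooth---so nothing from item 2 is needed there (and item 2 is only local, so it could not be ``applied globally'' anyway). The strictness of the second inclusion is exactly your item 3, which you already invoke. This does not affect the substance of your proof.
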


\begin{proof}
    The first claim follows by noting that $\Lzero$-smooth functions satisfy, for every $\w,\w'\in\R^d$,
    \begin{align*}
        \gradnormat{\w} - \gradnormat{\w'}
        \leq \abs{\gradnormat{\w} - \gradnormat{\w'}}
        \leq \norm{\gradat{\w} - \gradat{\w'}}
        \leq \Lzero\norm{\w - \w'}.
    \end{align*}
    The second follows since, for any $\smoothBoth$-smooth function, for every $\norm{\w-\w'}\leq \Lone$,
    \begin{align*}
        \gradnormat{\w} - \gradnormat{\w'}
        \leq \norm{\gradat{\w} - \gradat{\w'}}
        &\leq (\Lzero + \Lone\gradnormat{\w'})\norm{\w - \w'}\\
        &\leq \Lzero\norm{\w - \w'} + \gradnormat{\w'}.
    \end{align*}

    For the third claim, consider the function $F(w) = \exp(\Lone w)$. Since $F''(w) = \Lone^2\exp(\Lone w) = \Lone F'(w)$. Suppose there were some $k, c_k, c_k'$ such that \cref{def:polynomiallyBounded} is satisfied. Then, it must be the case that, for any $x > 0$:
    \begin{align*}
        1 \geq \lim_{\alpha\to\infty} \frac{\exp(\Lone \alpha x) - c_k \exp(\Lone 0)}{c_k' (\alpha x)^{k-1}}
        & = \lim_{\alpha\to\infty} \frac{\Lone x \exp(\Lone \alpha x)}{c_k'(k-1) x^{k-1}\alpha^{k-2}} \\
        & = \frac{\Lone}{c_k'(k-1)x^{k-2}}\lim_{\alpha\to\infty} \frac{\exp(\Lone \alpha x)}{\alpha^{k-2}},
    \end{align*}
    where the inequality follows from the definition of \cref{def:polynomiallyBounded}, the first equality by L'H\^opital's rule, and the second by rewriting the previous expression. Repeating this argument $k-1$ times, this implies that
    \begin{align*}
        1 &\geq \lim_{\alpha\to\infty} \frac{\exp(\Lone \alpha x) - c_k \exp(\Lone 0)}{c_k' (\alpha x)^{k-1}}
        = \frac{\Lone^{k-1}}{c_k'(k-1)!}\lim_{\alpha\to\infty} \exp(\Lone \alpha x)
        = \infty,
    \end{align*}
    a contradiction. Hence, $\exp(\Lone x)$ cannot satisfy \cref{def:polynomiallyBounded}.
    
    For the final claim, we see that $F(\w)$ satisfies \cref{def:polynomiallyBounded} with $c_k = 2^{k-2}$ and $c_k'=k 2^{k-2}$ since, by Jensen's inequality,
    \begin{align*}
        \gradnormat{\w} 
        = k\norm{\w - \w^*}^{k-1}
        &= k 2^{k-1}\norm{\frac{1}{2}(\w - \w') + \frac{1}{2}(\w' - \w^*)}^{k-1}\\
        &\leq k 2^{k-2}(\norm{\w - \w'}^{k-1} + \norm{\w' - \w^*}^{k-1})\\
        &\leq 2^{k-2}(k \norm{\w - \w'}^{k-1} + \gradnormat{\w'}).
    \end{align*}
    Further, $F(\w)$ is also $(2k(k-1),(e-1)(k-1))$-smooth, since simple calculations yield that
    \begin{align*}
        \hessat{\w} = k(k-2)\norm{\w - \w^*}^{k-4}(\w - \w^*)(\w - \w^*)^\top + k\norm{\w - \w^*}^{k-2} I.
    \end{align*}
    In particular, this implies that $\w - \w^*$ is an eigenvector with largest eigenvalue, so, for any $\Lone > 0$,
    \begin{align*}
        \norm{\hessat{\w}} 
        = k(k-1)\norm{\w - \w^*}^{k-2}
        &\leq k(k-1)\max\braces{\Lone\norm{\w - \w^*}^{k-1}, \frac{1}{\Lone^{k-2}}}\\
        &\leq \frac{k(k-1)}{\Lone^{k-2}} + (k-1)\Lone\gradnormat{\w}.
    \end{align*}
    Therefore, by \citep[Corollary A.4]{ZJFW20}, for any $\norm{\w - \w'} \leq \nicefrac{1}{(k-1)\Lone}$,
    \begin{align*}
        \norm{\gradat{\w} - \gradat{\w'}} \leq \parens{\frac{2 k (k-1)}{\Lone^{k-2}} + (e-1)(k-1)\Lone\gradnormat{\w'}}\norm{\w - \w'}.
    \end{align*}
    Hence, $F$ is $(\nicefrac{2k(k-1)}{\Lone^{k-2}}, (e-1)(k-1)\Lone)$-smooth, as claimed. It is clear that this $F$ is not $\Lzero$-smooth for any $\Lzero$ when $k > 2$, since for any $\w\in\R^d$ such that $\norm{\w} > 0$,
    \begin{align*}
        \lim_{\alpha\to\infty} \frac{\norm{\gradat{\alpha\w + \w^*} - \gradat{\w^*}}}{\Lzero\norm{\alpha\w + \w^* - \w^*}}
        =\lim_{\alpha\to\infty} \frac{\gradnormat{\alpha\w + \w^*}}{\alpha\Lzero\norm{\w}}
        =\lim_{\alpha\to\infty} \frac{k\alpha^{k-2}\norm{\w}^{k-1}}{\Lzero\norm{\w}}
        = \infty.
    \end{align*}
\end{proof}

The following result demonstrates the difference in worst-case gradient norm scaling that $\smoothBoth$-smooth functions provide, versus the worst-case scaling of functions satisfying \cref{def:polynomiallyBounded}.

\begin{restatable}{proposition}{restatePolyBoundedScaling}\label{prop:polyBoundedScaling}
For any function satisfying \cref{assump:smooth}, and any algorithm producing iterates $(\w_s)_{s\geq 1}$ satisfying $\norm{\w_{s+1}-\w_s}\leq \eta\leq \nicefrac{1}{\Lone}$ for every $s\geq 1$, the following inequality holds for every $t > t'$:
\begin{align*}
    \gradtnorm - (1 + \eta\Lone)^{t-t'}\gradnorm{t'} \leq ((1+\eta\Lone)^{t-t'} - 1)\frac{\Lzero}{\Lone}.
\end{align*}
Moreover, this inequality is essentially unimprovable, in the sense that there exists a $(0,\O(\Lone))$-smooth function and $\eta \leq \nicefrac{1}{\Lone}$ such that $\gradnorm{T+1} = (1+\O(\eta\Lone))^{T+1}\gradnorm{1}$ for any $T\geq 1$, and a $(\O(\Lzero), \O(\Lone))$-smooth function such that $\gradnorm{1}=0$ and $\gradnorm{T+1} = \O(\nicefrac{\Lzero}{\Lone})((1 + \O(\eta\Lone))^T - 1)$. By contrast, any function satisfying \cref{def:polynomiallyBounded} satisfies:
\begin{align*}
    \gradtnorm - c_k\gradnorm{t'} \leq \max\braces{c_k' \eta^{k-1}(t-t')^{k-1}, \Lzero\eta(t-t')}.
\end{align*}
\end{restatable}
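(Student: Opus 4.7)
For the first inequality, the plan is to apply \cref{assump:smooth} one step at a time and iterate the resulting linear recursion. Since the step-size bound $\norm{\w_{s+1}-\w_s} \leq \eta \leq \nicefrac{1}{\Lone}$ holds, \cref{assump:smooth} yields $\norm{\grad{s+1}-\grad{s}} \leq (\Lzero + \Lone\gradnorm{s})\eta$, and the triangle inequality then gives $\gradnorm{s+1} \leq (1+\eta\Lone)\gradnorm{s} + \eta\Lzero$. Setting $r := 1+\eta\Lone$ and iterating this affine recursion from index $t'$ to index $t$ yields the closed form $\gradnorm{t} \leq r^{t-t'}\gradnorm{t'} + \eta\Lzero \sum_{i=0}^{t-t'-1} r^i = r^{t-t'}\gradnorm{t'} + \frac{\Lzero}{\Lone}(r^{t-t'}-1)$, which is precisely the claimed inequality upon rearranging.

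For the ``unimprovability'' claim, I would exhibit two explicit constructions. For tightness of the geometric factor multiplying $\gradnorm{t'}$, consider $F(w) = \exp(\Lone w)$, which is $(0,\Lone)$-smooth with $F'(w) = \Lone\exp(\Lone w)$. Starting at $w_1 = 0$ and letting $w_{s+1} = w_s + \eta$ (so the step-size constraint is saturated) gives $\gradnorm{T+1}/\gradnorm{1} = \exp(\Lone T\eta) = \parens{1 + (e^{\eta\Lone}-1)}^{T}$, and since $e^{\eta\Lone}-1 = \O(\eta\Lone)$ whenever $\eta\Lone = \O(1)$, this matches the form $(1+\O(\eta\Lone))^{T+1}$ up to a constant factor. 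For tightness of the additive $\nicefrac{\Lzero}{\Lone}$ term, take $F(w) = (\nicefrac{\Lzero}{\Lone^2})(\exp(\Lone w) - 1 - \Lone w)$; a direct computation gives $F''(w) = \Lzero\exp(\Lone w) = \Lzero + \Lone F'(w)$, so $F$ is $(\Lzero,\Lone)$-smooth, and $F'(0) = 0$ so $\gradnorm{1} = 0$. The same iterates then yield $\gradnorm{T+1} = (\nicefrac{\Lzero}{\Lone})(\exp(\Lone T\eta) - 1) = \O(\nicefrac{\Lzero}{\Lone})\parens{(1+\O(\eta\Lone))^T - 1}$, as claimed.

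For the final inequality, I would directly apply \cref{def:polynomiallyBounded} to the pair $(\w_t, \w_{t'})$ to obtain $\gradtnorm - c_k\gradnorm{t'} \leq \max\braces{c_k'\norm{\w_t-\w_{t'}}^{k-1},\,\Lzero\norm{\w_t-\w_{t'}}}$. The step-size constraint combined with the triangle inequality gives $\norm{\w_t-\w_{t'}} \leq \sum_{s=t'}^{t-1}\norm{\w_{s+1}-\w_s}\leq\eta(t-t')$, and substituting this bound into both arguments of the max (noting that $x \mapsto x^{k-1}$ and $x\mapsto x$ are both monotone on $[0,\infty)$) yields the claim.

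The main technical obstacle is lining up the constants in the two ``unimprovable'' constructions so that both the smoothness constants and the growth of $\gradnorm{s}$ match the form $(1+\O(\eta\Lone))^T$ advertised in the statement; this essentially reduces to verifying $e^{\eta\Lone}-1 = \O(\eta\Lone)$ in the relevant regime $\eta\Lone = \O(1)$. The rest of the argument is a straightforward iteration of the local smoothness inequality and a direct application of \cref{def:polynomiallyBounded}.
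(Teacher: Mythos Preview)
Your argument is correct and largely mirrors the paper's own proof: the first inequality is handled identically (iterate the one-step bound $\gradnorm{s+1}\leq(1+\eta\Lone)\gradnorm{s}+\eta\Lzero$ and sum the geometric series), the first tightness example is the same function $\exp(\Lone w)$, and the final inequality is obtained exactly as in the paper by applying \cref{def:polynomiallyBounded} and the telescoped step bound $\norm{\w_t-\w_{t'}}\leq\eta(t-t')$.

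The one genuine difference is your second tightness construction. The paper uses the more elaborate $F(x)=\tfrac{\Lzero}{2}x^2\exp(\Lone x)-\tfrac{\Lzero}{\Lone}x$ and spends some effort verifying its smoothness constants by splitting into cases on $|x|$. Your choice $F(w)=\tfrac{\Lzero}{\Lone^2}(\exp(\Lone w)-1-\Lone w)$ is cleaner: the identity $F''(w)=\Lzero+\Lone F'(w)$ for $w\geq 0$ (and $|F''|\leq \Lzero+\Lone|F'|$ for $w<0$) gives $\smoothBoth$-smoothness in one line, and the gradient at $w_{T+1}=T\eta$ is exactly $\tfrac{\Lzero}{\Lone}(e^{\Lone T\eta}-1)$, matching the claimed form directly. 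Both constructions achieve the same conclusion; yours just avoids the case analysis. One minor caveat worth stating explicitly is that the Hessian bound you verify yields $(2\Lzero,(e-1)\Lone)$-smoothness in the sense of \cref{assump:smooth} via \cref{prop:smoothPartialEquiv}, which is still $(\O(\Lzero),\O(\Lone))$-smooth as the statement requires.
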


\begin{proof}
We begin by proving the first claim by by induction on $t - t'$. The base case of $t - t' = 1$ holds by definition, since
\begin{align*}
    \gradtnorm - \gradnorm{t-1}
    \leq \abs{\gradtnorm - \gradnorm{t-1}}
    &\leq \norm{\gradt - \grad{t-1}}\\
    &\leq (\Lzero + \Lone\gradnorm{t-1})\norm{\w_t - \w_{t-1}}\\
    &\leq \eta(\Lzero + \Lone\gradnorm{t-1}).
\end{align*}
Now, supposing the claim holds for $t - t' = 1,\ldots,s$, we have that:
\begin{align*}
    \gradtnorm 
    &\leq ((1 + \eta\Lone)^s - 1)\frac{\Lzero}{\Lone} + (1 + \eta\Lone)^s\gradnorm{t-s}\\
    &\leq ((1 + \eta\Lone)^s - 1)\frac{\Lzero}{\Lone} + (1 + \eta\Lone)^s\parens{\eta\Lzero + (1 + \eta\Lone)\gradnorm{t-(s+1)}}\\
    &= ((1 + \eta\Lone)^{s+1} - 1)\frac{\Lzero}{\Lone} + (1 + \eta\Lone)^{s+1}\gradnorm{t-(s+1)},
\end{align*}
where the first inequality follows by applying the induction hypothesis for $t-t'=s$, the second by applying the induction hypothesis for $t-t'=1$, and the final equality follows by rearranging the prior line. Thus, the inequality holds also at $t-t'=s+1$, and thus our claim holds by induction.

To see that this inequality is essentially unimprovable, let us consider first consider, for any $\Lone > 0$, the function:
\begin{align*}
    \fat{x} = \exp(\Lone x).
\end{align*}
Since $F''(x) = \Lone^2 \exp(\Lone x) = \Lone F'(x)$, it follows from \cref{prop:smoothPartialEquiv} that $\fat{\cdot}$ is $(0,(e-1)\Lone)$-smooth. Notice that, if $x_{s+1} - x_s = \eta = \nicefrac{1}{(e-1)\Lone}$, then, taking $x_1 = 0$ and $t \geq 1$,
\begin{align*}
    (1 + (e-1)(e^{\nicefrac{1}{(e-1)}}-1)\eta\Lone)^{t} F'(x_1)
    = \exp(\nicefrac{t}{(e-1)})
    = \exp(\eta\Lone t)
    = F'(x_{t+1}).
\end{align*}
Further, for any $\Lzero,\Lone > 0$, consider the function:
\begin{align*}
    \fat{x} = \frac{\Lzero}{2}x^2\exp(\Lone x) - \frac{\Lzero x}{\Lone}.
\end{align*}
Clearly,
\begin{align*}
    F'(x) &=  \Lzero x \exp(\Lone x) + \frac{\Lone\Lzero}{2} x^2 \exp(\Lone x) - \frac{\Lzero}{\Lone}\\
    F''(x) &= \Lzero \exp(\Lone x) + 2\Lone\Lzero x \exp(\Lone x) + \frac{\Lone^2\Lzero x^2}{2}\exp(\Lone x)\\
    &= \Lzero\parens{1 - \frac{\Lone^2 x^2}{2}} \exp(\Lone x) + 2\Lzero + 2 \Lone F'(x).
\end{align*}
Noting that $F''(x) \leq 2\Lzero + 2\Lone F'(x)$ when $\abs{x} \geq \nicefrac{\sqrt{2}}{\Lone}$, and $F''(x) \leq \Lone(2+\exp(\sqrt{2})) + 2\Lone F'(x)$ otherwise, it follows that $F$ is $(2(2+\exp(\sqrt{2}))\Lzero, 2(e-1)\Lone)$-smooth (by \cref{prop:smoothPartialEquiv}). Therefore, whenever $\eta = \nicefrac{1}{2(e-1)\Lone}$,
\begin{align*}
    \fat{x_{T+1}}
    &= \frac{\Lzero}{\Lone}\parens{\exp\parens{\eta\Lone T + \log(\Lone\eta T + \frac{\Lone^2\eta^2 T^2}{2})} - 1}\\
    &= \frac{\Lzero}{\Lone}\parens{\exp\parens{\O(\eta\Lone T)} - 1}\\
    &= \frac{\Lzero}{\Lone}\parens{\exp\parens{T\log(1 + \O(2(e-1)\eta\Lone))} - 1}\\
    &= \O\parens{\frac{2(2 + \exp(\sqrt{2}))\Lzero}{2(e-1)\Lone}}\parens{\parens{1 + \O(2(e-1)\eta\Lone)}^T - 1},
\end{align*}
where the first equality follows by rearranging the definition, the second since $\eta\Lone = \Theta(1)$, the third since $2\frac{c}{1 + c}(e-1)\eta\Lone =\frac{2c(e-1)\eta\Lone}{1 + 2c(e-1)\eta\Lone} \leq \log(1 + c 2(e-1)\eta \Lone) \leq 2c(e-1)\eta\Lone$, and the fourth by rearranging.

The final inequality follows immediately from \cref{fact:boundedStep,def:polynomiallyBounded}, which together imply that
\begin{align*}
    \gradtnorm - c_k\gradnorm{t'} \leq \max\braces{c_k'\eta^{k-1}(t-t')^{k-1}, \Lzero \eta (t-t')}.
\end{align*}
\end{proof}

\subsection{Bounding $\comp{\tau}$ from \cref{lem:descentLemma}}

In order to use \cref{thm:main}, recall that we must be able to bound the quantity $\compstop$. To accomplish this, we show that, if one can find ``good'' times $t'$ near to the ``bad'' time $t$ (that is, $t-t'$ is ``small''), then it is possible to bound $\comp{\tau}$. We remark that this result generalizes the compensation argument of \citep{FTCMSW22} to functions satisfying \cref{def:polynomiallyBounded}.

\begin{restatable}{lemma}{restateCompensationInsight}\label{lem:compensationInsight}
    Suppose that $\fat{\cdot}$ satisfies \cref{def:polynomiallyBounded} for some constants $k \geq 2, c_k \geq 1, c_k' > 0$. Fix any time $t \in [T]$, and let $\Scomp{t} \subset [T]$ be any set such that $t > \max(\Scomp{t})$ and $|\Scomp{t}| \leq \ncomp := \ceil{\frac{4 c_k^3(\sigmaOne - (1-\eps-\eps'))_+}{\eps'''}}$ (where $(x)_+:=\max\braces{0,x}$). Then, assuming $\braces{\w_t}_{t\geq 1}$ are the iterates corresponding to \eqref{eq:alg}, we have that either $|\Scomp{t}| < \ncomp$, or:
    \begin{align*}
    &(\sigmaOne - (1-\eps-\eps'))\tetat\gradtsq - \sum_{t'\in\Scomp{t}} \eps'''\tetaat{t'}\gradsq{t'}\\
    &\leq \frac{\eps''' \eta \ncomp }{2 c_k^2} \max\braces{c_k'\eta^{k-1}, \Lzero\eta}(t-\min(\Scomp{t}))^{k-1}.
    \end{align*}
\end{restatable}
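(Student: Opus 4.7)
The plan is to combine \cref{def:polynomiallyBounded} (applied between the bad time $t$ and each compensating time $t' \in \Scomp{t}$) with the deterministic inequality $\tetat\gradtnorm \leq \eta$, which follows directly from $\tetat = \eta/\sqrt{b_{t-1}^2 + \sigmaZero^2 + \gradtsq}$, and then average over $\Scomp{t}$ to obtain the claimed bound. Throughout I may assume $(\sigmaOne-(1-\eps-\eps'))_+ > 0$ and $|\Scomp{t}| = \ncomp$, since otherwise either the alternative clause of the lemma is already satisfied or the LHS is non-positive.

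The main chain of inequalities goes as follows. From \cref{def:polynomiallyBounded} and the iteration bound $\norm{\w_t - \w_{t'}} \leq \eta(t-t')$ of \cref{fact:boundedStep}, I get, for every $t' \in \Scomp{t}$,
\[\gradtnorm \leq c_k\gradnorm{t'} + D(t-t'), \qquad D(s) := \max\{c_k'\eta^{k-1}s^{k-1},\, \Lzero\eta s\}.\]
Multiplying by $\gradtnorm$ and applying the Young-type inequality $c_k\gradtnorm\gradnorm{t'} \leq \tfrac{1}{2}\gradtsq + \tfrac{c_k^2}{2}\gradsq{t'}$ decouples the two gradient norms into $\gradtsq \leq c_k^2\gradsq{t'} + 2D(t-t')\gradtnorm$. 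Multiplying by $\tetat$, absorbing the residual via $\tetat\gradtnorm \leq \eta$, and averaging over $t'\in\Scomp{t}$ yields
\[\tetat\gradtsq \leq \frac{c_k^2}{\ncomp}\sum_{t'\in\Scomp{t}}\tetat\gradsq{t'} + \frac{2\eta}{\ncomp}\sum_{t'\in\Scomp{t}}D(t-t').\]
Multiplying through by $(\sigmaOne-(1-\eps-\eps'))$ and invoking $\ncomp \geq 4c_k^3(\sigmaOne-(1-\eps-\eps'))_+/\eps'''$ together with $c_k \geq 1$ shrinks the coefficient in front of the first sum to at most $\eps'''/(4c_k)$, while the same sizing, combined with monotonicity $D(t-t') \leq D(t-\min(\Scomp{t}))$ and the comparison $D(s) \leq \max\{c_k'\eta^{k-1},\Lzero\eta\}s^{k-1}$ for $s\geq 1$, bounds the second term by exactly $\tfrac{\eps'''\eta\ncomp}{2c_k^2}\max\{c_k'\eta^{k-1},\Lzero\eta\}(t-\min(\Scomp{t}))^{k-1}$.

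The main technical obstacle is converting $\sum_{t'\in\Scomp{t}}\tetat\gradsq{t'}$ into $\sum_{t'\in\Scomp{t}}\tetaat{t'}\gradsq{t'}$, which is what the lemma actually subtracts on the LHS. Since $t' < t$ implies $\tbt^2 - \tbtp^2 = \sum_{s=t'}^{t-1}\sgradsq{s} + \gradtsq - \gradsq{t'}$, the inequality $\tetat \leq \tetaat{t'}$ holds whenever the accumulated stochastic mass $\sum_{s=t'}^{t-1}\sgradsq{s} + \gradtsq$ exceeds $\gradsq{t'}$, and the extra factor of $c_k$ of slack in the coefficient $\eps'''/(4c_k)$ provides the cushion needed to handle the remaining corner cases. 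Once this comparison is in hand, the first term is dominated by $\sum_{t'\in\Scomp{t}}\eps'''\tetaat{t'}\gradsq{t'}$, which cancels with the subtracted sum on the LHS, leaving only the stated polynomial residual.
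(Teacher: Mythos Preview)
Your derivation up to the inequality
\[
(\sigmaOne-(1-\eps-\eps'))\,\tetat\gradtsq \;\leq\; \frac{\eps'''}{4c_k}\sum_{t'\in\Scomp{t}}\tetat\gradsq{t'} \;+\; \frac{\eps'''\eta\ncomp}{2c_k^2}\max\{c_k'\eta^{k-1},\Lzero\eta\}(t-\min(\Scomp{t}))^{k-1}
\]
is correct. The gap is precisely where you flag it, and your proposed fix does not close it. You need $\frac{\eps'''}{4c_k}\tetat\gradsq{t'} \leq \eps'''\tetaat{t'}\gradsq{t'}$, i.e.\ $\tetat \leq 4c_k\,\tetaat{t'}$, which is equivalent to $\tbtp^2 \leq 16c_k^2\,\tbt^2$. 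But this inequality is \emph{not} available deterministically: on a sample path where $\gradnorm{t'}$ is large while the realized stochastic gradients $\sgrad{t'},\dots,\sgrad{t-1}$ happen to be small (nothing in \cref{assump:affineVariance} rules this out pathwise), one has $\tbtp^2 \approx \gradsq{t'}$ and $\tbt^2 \approx b_{t'-1}^2 + \gradtsq$, and the ratio $\tbtp^2/\tbt^2$ is not bounded by any fixed multiple of $c_k^2$. The ``extra factor of $c_k$'' is therefore not enough cushion; no constant cushion is.

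The paper sidesteps this by never isolating $\tetat$ from $\gradtsq$. It compares the combined quantities directly via the identity
\[
\frac{\tetat/c_k - \tetaat{t'}}{\eta} \;=\; \frac{\tbtp^2 - c_k^2\tbt^2}{c_k\,\tbt\,\tbtp(c_k\tbt+\tbtp)} \;\leq\; \frac{\gradsq{t'} - c_k^2\gradtsq}{c_k\,\tbt\,\tbtp(c_k\tbt+\tbtp)} \;=\; \frac{(\gradnorm{t'}-c_k\gradtnorm)(\gradnorm{t'}+c_k\gradtnorm)}{c_k\,\tbt\,\tbtp(c_k\tbt+\tbtp)},
\]
using $b_{t-1}\geq b_{t'-1}$ and $c_k\geq 1$ for the inequality. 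The factor $\gradnorm{t'}-c_k\gradtnorm$ is then bounded by $D(t-t')$ via \cref{def:polynomiallyBounded}, and the second factor cancels against the denominator since $\tbt\geq\gradtnorm$, $\tbtp\geq\gradnorm{t'}$. A short case split on whether $\gradtnorm \geq 2D(t-t')$ (in which case $c_k\gradnorm{t'}\geq\tfrac12\gradtnorm$ by \cref{def:polynomiallyBounded} again) then yields
\[
\frac{\tetat}{4c_k^3}\gradtsq - \tetaat{t'}\gradsq{t'} \;\leq\; \frac{\eta}{2c_k^2}\,D(t-t'),
\]
with the correct step size $\tetaat{t'}$ already in place. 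Summing over $t'\in\Scomp{t}$ and invoking the definition of $\ncomp$ then proceeds exactly as in your final paragraph. The key point is that the algebraic route through $\tbtp^2 - c_k^2\tbt^2$ lets the $b$-terms drop out with the right sign, whereas your route forces a direct comparison of $\tetat$ and $\tetaat{t'}$ that the assumptions do not support.
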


\begin{proof}
    We first show that
    \begin{align}\label{eq:compensation}
        \frac{\tetat}{4 c_k^3} \gradtsq - \tetaat{t'}\gradsq{t'} 
        \leq 
        \frac{\eta}{2 c_k^2} \max\braces{c_k'\norm{\w_t - \w_{t'}}^{k-1}, \Lzero\norm{\w_t-\w_{t'}}}.
    \end{align}
    To see this, first observe that, recalling the definition of $\tetat$ from \cref{def:stepSizeProxy},
    \begin{align*}
        \frac{\frac{\tetat}{c_k} - \tetaat{t'}}{\eta}
        &= \frac{1}{c_k\tbt} - \frac{1}{\tbtp}\\
        &= \frac{\tbtp^2 - c_k^2 \tbt^2}{c_k\tbt\tbtp(c_k \tbt + \tbtp)}\\
        &= \frac{b_{t'-1}^2 - c_k^2 b_{t-1}^2 + \gradsq{t'} - c_k^2\gradtsq}{c_k\tbt\tbtp(c_k \tbt + \tbtp)}\\
        &\leq \frac{\gradsq{t'} - c_k^2 \gradtsq}{c_k\tbt\tbtp(c_k \tbt + \tbtp)}\\
        &= \frac{(\gradnorm{t'} - c_k \gradtnorm)(\gradnorm{t'} + c_k \gradtnorm)}{c_k\tbt\tbtp(c_k \tbt + \tbtp)}\\
        &\leq \frac{\max\braces{c_k'\norm{\w_t - \w_{t'}}^{k-1}, \Lzero\norm{\w_t-\w_{t'}}}(\gradnorm{t'} + c_k \gradtnorm)}{c_k\tbt\tbtp(c_k \tbt + \tbtp)}\\
        &\leq \frac{\max\braces{c_k'\norm{\w_t - \w_{t'}}^{k-1}, \Lzero\norm{\w_t-\w_{t'}}}}{c_k\gradtnorm\gradnorm{t'}}
    \end{align*}
    where we use the fact that $c_k \geq 1$ and $b_{t-1}^2 \geq b_{t'-1}^2$ (since $t \geq t'$) for the first inequality, the definition of \cref{def:polynomiallyBounded} for the second, and the definition of $\tbt$ from \cref{def:stepSizeProxy} for the third.
    Now, either $\gradtnorm \geq 2 \max\braces{c_k'\norm{\w_t - \w_{t'}}^{k-1}, \Lzero\norm{\w_t-\w_{t'}}}$, or not. In the first case,  we note that
    \begin{align*}
        c_k\gradnorm{t'} 
        &\geq \gradtnorm - \max\braces{c_k'\norm{\w_t - \w_{t'}}^{k-1}, \Lzero\norm{\w_t-\w_{t'}}}\\
        &\geq \gradtnorm - \frac{1}{2}\gradtnorm
        = \frac{1}{2}\gradtnorm,
    \end{align*}
    from which we may conclude that
    \begin{align*}
        \frac{\tetat}{4c_k^3}\gradtsq - \tetaat{t'}\gradsq{t'}
        &\leq \frac{1}{4 c_k^2}\parens{\frac{\tetat}{c_k} - \tetaat{t'}}\gradtsq\\
        &\leq \frac{\eta \max\braces{c_k'\norm{\w_t - \w_{t'}}^{k-1}, \Lzero\norm{\w_t-\w_{t'}}}}{4 c_k^3 \gradtnorm\gradnorm{t'}}\gradtsq\\
        &\leq \frac{\eta \max\braces{c_k'\norm{\w_t - \w_{t'}}^{k-1}, \Lzero\norm{\w_t-\w_{t'}}}}{4 c_k^3 \gradtnorm\frac{1}{2 c_k}\gradtnorm}\gradtsq\\
        &= \frac{\eta \max\braces{c_k'\norm{\w_t - \w_{t'}}^{k-1}, \Lzero\norm{\w_t-\w_{t'}}}}{2 c_k^2 }.
    \end{align*}
    In the alternate case that $\gradtnorm < 2 \max\braces{c_k'\norm{\w_t - \w_{t'}}^{k-1}, \Lzero\norm{\w_t-\w_{t'}}}$, we obtain:
    \begin{align*}
        \frac{\tetat}{4c_k^3}\gradtsq - \tetaat{t'}\gradsq{t'}
        &\leq\frac{\tetat}{4c_k^3}\gradtsq\\
        &\leq\frac{\eta}{4c_k^3}\gradtnorm\\
        &<\frac{\eta}{4c_k^3}2 \max\braces{c_k'\norm{\w_t - \w_{t'}}^{k-1}, \Lzero\norm{\w_t-\w_{t'}}}\\
        &=\frac{\eta}{2 c_k^3} \max\braces{c_k'\norm{\w_t - \w_{t'}}^{k-1}, \Lzero\norm{\w_t-\w_{t'}}},
    \end{align*}
    which, since $c_k \geq 1$, establishes \eqref{eq:compensation}.
    The lemma follows straightforwardly from \eqref{eq:compensation}. Indeed, note that the claimed inequality is trivially true whenever $|\Scomp{t}| = \ncomp = 0$, since this implies that $\sigmaOne \leq 1 - \eps - \eps'$. Otherwise, when $\ncomp > 0$, we have that
    \begin{align*}
        &(\sigmaOne - (1-\eps-\eps'))\tetat\gradtsq - \sum_{t'\in\Scomp{t}} \eps'''\tetaat{t'}\gradsq{t'}\\
        &=\sum_{t'\in\Scomp{t}} \frac{\sigmaOne - (1-\eps-\eps')}{\ncomp}\tetat\gradtsq - \eps'''\tetaat{t'}\gradsq{t'}\\
        &\leq\eps'''\sum_{t'\in\Scomp{t}} \frac{\tetat}{4 c_k^2}\gradtsq - \tetaat{t'}\gradsq{t'}\\
        &\leq\eps'''\sum_{t'\in\Scomp{t}} \frac{\eta \max\braces{c_k'\norm{\w_t - \w_{t'}}^{k-1}, \Lzero\norm{\w_t-\w_{t'}}}}{2 c_k^2}.
    \end{align*}
    Thus, by \cref{fact:boundedStep}, we conclude that:
    \begin{align*}
        &(\sigmaOne - (1-\eps-\eps'))\tetat\gradtsq - \sum_{t'\in\Scomp{t}} \eps'''\tetaat{t'}\gradsq{t'}\\
        &\leq\ncomp\eps''' \frac{\eta \max\braces{c_k'\eta^{k-1}(t-\min(\Scomp{t}))^{k-1}, \Lzero\eta(t-\min(\Scomp{t}))}}{2 c_k^2}\\
        &\leq \ncomp\eps''' \frac{\eta \max\braces{c_k'\eta^{k-1}, \Lzero\eta}}{2 c_k^2}(t-\min(\Scomp{t}))^{k-1}\\
    \end{align*}
    as claimed.
\end{proof}

We now show how to translate \cref{lem:compensationInsight} directly into a bound on $\comp{\tau}$. This shows that, in order to bound $\comp{\tau}$, it suffices to bound $\EXP{|\Sgstopped^c|^k}$ by a ``sufficiently small'' quantity (say, $\O(\log(T))$).

\restateCompBound*
\begin{proof}
First, let us construct $\tSgat{\tau}$ in the same manner as in \citep[Lemma 11]{FTCMSW22}. In particular, denote $\tb{i}$ as the $i$th largest ``bad'' time in $\Sgat{\tau}^c$, i.e., $\tb{1} = \max(\Sgat{\tau}^c)$, and, for every $i\in [2, |\Sgat{\tau}^c|]$,
    \begin{align*}
        \tb{i} = \max\braces{t \in \Sgat{\tau}^c : t < \tb{i-1}}.
    \end{align*}
    Then, to every ``bad'' time $\tb{i}$, associate a set $\Scomp{i}$ of the largest (at most) $\ncomp = \max\braces{0, \ceil{\frac{4 c_k^3 (\sigmaOne - (1-\eps-\eps'))}{\eps'''}}}$ ``good'' times before $\tb{i}$ that are not assigned to another $\tb{i'} > \tb{i}$. That is, denoting 
    \begin{align*}
        \tg{0,\ncomp} &:= +\infty\\
        \tg{i,1} &:= \max\braces{\Sgat{\tau} \cap [1,\min\braces{\tb{i},\tg{i-1,\ncomp}})}\\
        \tg{i,j+1} &:= \max\braces{t \in \Sgat{\tau} : t < \min\braces{\tb{i},\tg{i,j}}},
    \end{align*}
    where, when the maximum does not exist, we take $\tg{i,j} = -\infty$. We can then take
    \begin{align*}
        \Scomp{i} := \braces{\tg{i,j} : j \in [\ncomp], \tg{i,j} > - \infty}.
    \end{align*}
    Then, by \citep[Lemma 11]{FTCMSW22}, we have that, for some index $i^* \in [|\Sgat{\tau}^c|]$, and for every $i < i^*$,
    \begin{align}\label{eq:compBoundTimeDiff}
        |\Scomp{i}| = \ncomp \text{ and } \tb{i} - \min(\Scomp{i}) \leq \ncomp|\Sgat{\tau}^c| \text{ if $\ncomp > 0$.}
    \end{align}
    For the remaining $i \geq i^*$, $\tb{i} \leq \tb{i^*} \leq \ncomp|\Sgat{\tau}^c|$. Finally, we take:
    \begin{align*}
        \Scat{\tau} = \cup_{i \in [|\Sgat{\tau}^c|]} \Scomp{i}
        \quad\text{and}\quad
        \tSgat{\tau} = \Sgat{\tau} \setminus \Scat{\tau}
    \end{align*}

    We use these compensation sets to bound the quantity $\comp{\tau}$ from \cref{lem:descentLemma}. Indeed, we can decompose this quantity as follows:
    \begin{align*}
        \comp{\tau}
        &= \EXP{\sum_{t\in\Sgat{\tau}^c} (\sigmaOne - (1-\eps-\eps'))\tetat\gradtsq - \sum_{t'\in\Scat{\tau}} \eps'''\tetaat{t'}\gradsq{t'}}\\
        &= \EXP{\sum_{i < i^*} (\sigmaOne - (1-\eps-\eps'))\tetaat{\tb{i}}\gradsq{\tb{i}} - \sum_{t'\in\Scomp{i}} \eps'''\tetaat{t'}\gradsq{t'}}\\
        &\quad+ \EXP{\sum_{i\geq i^*} (\sigmaOne - (1-\eps-\eps'))\tetaat{\tb{i}}\gradsq{\tb{i}} - \sum_{t'\in\Scomp{i}} \eps'''\tetaat{t'}\gradsq{t'}},
    \end{align*}
    To obtain a bound on the first term, we can use \cref{lem:compensationInsight}. For the second, we trivially lower-bound $\sum_{t'\in\Scomp{i}}\eps'''\tetaat{t'}\gradsq{t'} \geq 0$. The resulting bound is:
    \begin{align*}
        \comp{\tau} 
        &\leq 
        \frac{\eps''' \eta \max\braces{c_k'\eta^{k-1}, \Lzero\eta} \ncomp}{2 c_k^2} \EXP{\sum_{i < i^*} (\tb{i} - \min(\Scomp{i}))^{k-1}}\\
        &\quad+(\sigmaOne - (1-\eps-\eps'))\EXP{\sum_{i\geq i^*} \tetaat{\tb{i}}\gradsq{\tb{i}}}.
    \end{align*}
    Next, using \eqref{eq:compBoundTimeDiff} to bound $\tb{i} - \min(\Scomp{i})$ for each $i<i^*$, and recalling $\tetat\gradtsq\leq \eta\gradtnorm$ for every $t$, the above bound becomes:
    \begin{align*}
        \comp{\tau} 
        &\leq 
        \frac{\eps''' \eta \max\braces{c_k'\eta^{k-1}, \Lzero\eta} \ncomp^k}{2 c_k^3} \EXP{\sum_{i < i^*} |\Sgat{\tau}^c|^{k-1}}\\
        &\quad+(\sigmaOne - (1-\eps-\eps'))_+\EXP{\sum_{i\geq i^*} \eta\gradnorm{\tb{i}}}.
    \end{align*}
    Now, notice that both summation ranges $i<i^*$ and $i\geq i^*$ are of size at most $|\Sgat{\tau}^c|$. Thus, the first term can be bounded as:
    \begin{align*}
        \EXP{\sum_{i < i^*} |\Sgat{\tau}^c|^{k-1}}
        \leq\EXP{|\Sgat{\tau}^c|^{k}}.
    \end{align*}
    To bound the second term, we apply \cref{def:polynomiallyBounded,fact:boundedStep}, together with the above construction, to obtain:
    \begin{align*}
        \EXP{\sum_{i\geq i^*}\gradnorm{\tb{i}}}
        &\leq \EXP{\sum_{i\geq i^*}c_k\gradzeronorm + \max\braces{c_k'\eta^{k-1},\Lzero\eta}(\tb{i})^{k-1}}\\
        &\leq \EXP{\sum_{i\geq i^*}c_k\gradzeronorm + \max\braces{c_k'\eta^{k-1},\Lzero\eta}\ncomp^{k-1}|\Sgat{\tau}^c|^{k-1}}\\
        &\leq c_k\gradzeronorm\EXP{|\Sgat{\tau}^c|} + \max\braces{c_k'\eta^{k-1},\Lzero\eta}\ncomp^{k-1}\EXP{|\Sgat{\tau}^c|^{k}}
    \end{align*}
    Collecting results, we have that:
    \begin{align*}
        \comp{\tau}
        &\leq \eta(\sigmaOne - (1-\eps-\eps'))_+ c_k\gradzeronorm \EXP{|\Sgat{\tau}^c|}\\
        &\quad+ \eta\ncomp^{k-1}\max\braces{c_k'\eta^{k-1},\Lzero\eta}\parens{(\sigmaOne - (1-\eps-\eps'))_+ + \frac{\eps'''\ncomp}{2 c_k^3}}\EXP{|\Sgat{\tau}^c|^k},
    \end{align*}
    as claimed.
\end{proof}

The next result, combined with \cref{lem:compBound}, completes our goal of bounding $\compstop$ by $\poly\log(T)$.

\restateBadSetBound*

\begin{proof}
    Note that we can write $|\Sgstopped^c|$ as:
    \begin{align*}
        |\Sgstopped^c| = \sum_{t < \stopat{T+1}} \1{t \in\Sgstopped^c}.
    \end{align*}
    Thus, by the Multinomial theorem, we have that
    \begin{align*}
        |\Sgstopped^c|^k
        &=\sum_{\substack{k_1,\ldots,k_{\stopat{T+1}-1} \geq 0\\k_1 + \ldots + k_{\stopat{T+1}-1} = k}} {k \choose k_1,\ldots,k_{\stopat{T+1}-1}} \prod_{t < \stopat{T+1}} \1{t \in \Sgstopped^c}^{k_t}\\
        &=\sum_{s=1}^k \sum_{1 \leq t_1 < \ldots < t_s \leq \stopat{T+1}-1}\sum_{\substack{k_{t_1},\ldots,k_{t_s} > 0\\k_{t_1} + \ldots + k_{t_s} = k}} {k \choose k_{t_1},\ldots,k_{t_s}} \prod_{\ell\in [s]} \1{t_\ell \in \Sgstopped^c}^{k_{t_\ell}}\\
        &=\sum_{s=1}^k \sum_{1 \leq t_1 < \ldots < t_s \leq \stopat{T+1}-1}\sum_{\substack{k_{t_1},\ldots,k_{t_s} > 0\\k_{t_1} + \ldots + k_{t_s} = k}} {k \choose k_{t_1},\ldots,k_{t_s}} \prod_{\ell\in [s]} \1{t_\ell \in \Sgstopped^c}\\
        &=\sum_{s=1}^k \sum_{1 \leq t_1 < \ldots < t_s \leq \stopat{T+1}-1} \prod_{\ell\in [s]} \1{t_\ell \in \Sgstopped^c} \sum_{\substack{k_{t_1},\ldots,k_{t_s} > 0\\k_{t_1} + \ldots + k_{t_s} = k}} {k \choose k_{t_1},\ldots,k_{t_s}},
    \end{align*}
    where in the second line, we rewrite the first summation as a sum over all possible support sets $\braces{t_1,\ldots,t_s} \subset [\stopat{T+1}-1]$ of size $s \in [k]$ of terms included in the summation. The third equality follows immediately from the second, since each $k_\ell > 0$. The final equality follows by rearranging the terms in the prior one. Now, by another application of the Multinomial theorem, we have that
    \begin{align*}
        \sum_{\substack{k_{t_1},\ldots,k_{t_s} > 0\\k_{t_1} + \ldots + k_{t_s} = k}} {k \choose k_{t_1},\ldots,k_{t_s}}
        \leq\sum_{\substack{k_{t_1},\ldots,k_{t_s} \geq 0\\k_{t_1} + \ldots + k_{t_s} = k}} {k \choose k_{t_1},\ldots,k_{t_s}}
        = s^k.
    \end{align*}
    Combining this with the above, we have the following:
    \begin{align*}
        |\Sgstopped^c|^k
        \leq \sum_{s=1}^k s^k \sum_{1 \leq t_1 < \ldots < t_s \leq \stopat{T+1}-1} \prod_{\ell\in [s]} \1{t_\ell \in \Sgstopped^c}.
    \end{align*}
    We claim that, for any $s\geq 1$, the inner summation term above is bounded in expectation by:
    \begin{align}\label{eq:innerTermBound}
        &\EXP{\sum_{1\leq t_1 < \ldots <t_s \leq \stopat{T+1}-1} \prod_{\ell\in[s]}\1{t_\ell \in \Sgstopped^c}}\nonumber\\
        &\leq \parens{\frac{\sigmaOne^2}{(1 - (\eps + \eps' + \eps'' + \eps'''))^2}
        \logp{f(\stopat{T+1})}}^s,
    \end{align}
    where $f(T) = e + \frac{e\sigmaZero^2(T-1) +e(1+\sigmaOne^2 + \constOneStep)\EXP{\sum_{t<\stopat{T}}\gradtsq}}{\delta b_0^2}$.
    We prove \eqref{eq:innerTermBound} via induction on $s$. We begin by observing that, for any $t'\geq 0$,
    \begin{align}\label{eq:logSumBound}
        \Econd{t'-1}{\sum_{t = t' + 1}^{\stopat{T+1}-1} \1{t\in\Sgstopped^c}}
        \leq \frac{\sigmaOne^2}{(1 - (\eps + \eps' + \eps'' + \eps'''))^2}
        \logp{f(\stopat{T+1})}.
    \end{align}
    To see this, first note that, by \cref{def:goodTimes}, and since $\braces{t < \stopat{T+1}} \in \F_{t-1}$ by \cref{lem:niceStopping}, for any $t' \geq 0$,
    \begin{align*}
        &\frac{(1 - (\eps + \eps'+\eps''+\eps'''))^2)}{\sigmaOne^2} \Econd{t'-1}{\sum_{t=t'+1}^T \1{t\in\Sgstopped^c}}\\
        &\leq \sum_{t=t'+1}^T \Econd{t'-1}{\Et{\frac{\sgradtsq}{b_t^2}}\1{t\in\Sgstopped^c}}\\
        &\leq \sum_{t=t'+1}^T \Econd{t'-1}{\Et{\frac{\sgradtsq}{b_t^2}}\1{t < \stopat{T+1}}}\\
        &= \sum_{t=t'+1}^T \Econd{t'-1}{\Et{\frac{\sgradtsq}{b_t^2}\1{t < \stopat{T+1}}}}\\
        &= \Econd{t'-1}{\sum_{t=t'+1}^{\stopat{T+1}-1} \frac{\sgradtsq}{b_t^2}}.
    \end{align*}
    Now, by \cref{lem:logSumIneq}, we have that
    \begin{align*}
        \sum_{t=t'+1}^{\stopat{T+1}-1} \frac{\sgradtsq}{b_t^2}
        \leq\sum_{t=t'+1}^{\stopat{T+1}-1} \frac{\sgradtsq}{b_0^2 + \sum_{s=t'+1}^t\sgradsq{s}}
        &\leq 1 + \sum_{t=t'+1}^{\stopat{T+1}-2} \frac{\sgradtsq}{b_0^2 + \sum_{s=t'+1}^t\sgradtsq}\\
        &\leq 1 + \logp{\frac{b_0^2 + \sum_{t=t'+1}^{\stopat{T+1}-2} \sgradtsq}{b_0^2}}.
    \end{align*}
    Now, by \cref{item:niceStopping4,item:niceStopping5,item:niceStopping6} of \cref{lem:niceStopping}, we have that, almost surely,
    \begin{align*}
        \sum_{t=t'+1}^{\stopat{T+1}-2} \sgradtsq
        &\leq \sum_{t < \stopat{T+1}-1} \sgradtsq + \constOneStep\gradtsq
        = \sum_{t < \stopat{\stopat{T+1}-1}} \sgradtsq + \constOneStep\gradtsq\\
        &= S_{\stopat{T+1}-1}
        \leq \frac{\EXP{S_{T}}}{\delta}
        \leq \frac{(T-1)\sigmaZero^2 + (1+\sigmaOne^2 + \constOneStep)\EXP{\sum_{t < \stopat{T}}\gradtsq}}{\delta}.
    \end{align*}
    Therefore, collecting these results, we conclude that, for any $t'\geq 0$,
    \begin{align*}
        &\Econd{t'-1}{\sum_{t=t'+1}^{\stopat{T+1}-1} \1{t \in\Sgstopped^c}}\\
        &\leq \frac{\sigmaOne^2}{(1-(\eps+\eps'+\eps''+\eps'''))^2}\Econd{t'-1}{\sum_{t=t'+1}^{\stopat{T+1}-1} \frac{\sgradtsq}{b_t^2}}\\
        &\leq \frac{\sigmaOne^2}{(1-(\eps+\eps'+\eps''+\eps'''))^2}\Econd{t'-1}{1 + \logp{1 + \frac{(T-1)\sigmaZero^2 + (1+\sigmaOne^2 + \constOneStep)\EXP{\sum_{t<\stopat{T}}\gradtsq}}{\delta b_0^2}}}\\
        &= \frac{\sigmaOne^2}{(1-(\eps+\eps'+\eps''+\eps'''))^2}\logp{f(T)},
    \end{align*}
    as claimed.
    
    Now, the base case of $s=1$ for \eqref{eq:innerTermBound} follows immediately from \eqref{eq:logSumBound} with $t'=0$. Let us now suppose that the claim \eqref{eq:innerTermBound} holds for some $s\geq 1$. Then, to apply the induction hypothesis, we begin by decomposing:
    \begin{align*}
        &\EXP{\sum_{1\leq t_1 < \ldots <t_{s+1} \leq \stopat{T+1}-1} \prod_{\ell\in[s+1]}\1{t_\ell \in \Sgstopped^c}}\\
        &= \sum_{1\leq t_1 < \ldots <t_{s} \leq T} \EXP{\1{t_i\in\Sgstopped^c~\forall i\in[s]}\sum_{t_{s+1} = t_{s}+1}^{\stopat{T+1}-1} \1{t_{s+1}\in\Sgstopped^c}}.
    \end{align*}
    Notice that the above expectation is a product of two terms: indicators depending of times $t_1,\ldots,t_s$, and those depending on $t_{s+1}>t_s$. Therefore, since, by \cref{lem:niceStopping,def:goodTimes},
    \begin{align*}
        \braces{t \in \Sgstopped^c}=\braces{t < \stopat{T+1}} \cap \braces{\text{$t$ is ``good''}}\in \F_{t_{s}-1},
    \end{align*}
    we may apply the tower rule of expectations and the inequality from \eqref{eq:logSumBound}:
    \begin{align*}
        &\EXP{\1{t_i\in\Sgstopped^c~\forall i\in[s]}\sum_{t_{s+1} = t_{s}+1}^{\stopat{T+1}-1} \1{t_{s+1}\in\Sgstopped^c}}\\
        &=\EXP{\Econd{t_{s}-1}{\1{t_i\in\Sgstopped^c~\forall i\in[s]}\sum_{t_{s+1} = t_{s}+1}^{\stopat{T+1}-1} \1{t_{s+1}\in\Sgstopped^c}}}\\
        &=\EXP{\1{t_i\in\Sgstopped^c~\forall i\in[s]}\Econd{t_{s}-1}{\sum_{t_{s+1} = t_{s}+1}^{\stopat{T+1}-1} \1{t_{s+1}\in\Sgstopped^c}}}\\
        &\leq \frac{\sigmaOne^2}{(1-(\eps+\eps'+\eps''+\eps'''))^2}\log(f(T))\EXP{\1{t_i\in\Sgstopped^c~\forall i\in[s]}}.
    \end{align*}
    Therefore, summing the above expression over $1\leq t_1<\ldots<t_s\leq T$ and applying the induction hypothesis, we conclude that:
    \begin{align*}
        &\EXP{\sum_{1\leq t_1 < \ldots <t_{s+1} \leq \stopat{T+1}-1} \prod_{\ell\in[s+1]}\1{t_\ell \in \Sgstopped^c}}\\
        &\leq \frac{\sigmaOne^2}{(1-(\eps+\eps'+\eps''+\eps'''))^2}\log(f(T))\EXP{\sum_{1\leq t_1 < \ldots <t_{s} \leq \stopat{T+1}-1} \1{t_i\in\Sgstopped^c~\forall i\in[s]}}\\
        &= \frac{\sigmaOne^2}{(1-(\eps+\eps'+\eps''+\eps'''))^2}\log(f(T))\EXP{\sum_{1\leq t_1 < \ldots <t_{s} \leq \stopat{T+1}-1} \prod_{\ell\in[s]}\1{t_\ell \in \Sgstopped^c}}\\
        &\leq \parens{\frac{\sigmaOne^2}{(1-(\eps+\eps'+\eps''+\eps'''))^2}\log(f(T))}^{s+1},
    \end{align*}
    which establishes \eqref{eq:innerTermBound} by induction.

    Finally, using \eqref{eq:innerTermBound}, we conclude that
    \begin{align*}
        \EXP{|\Sgstopped^c|^k} 
        &\leq \sum_{s\in[k]} s^k \EXP{\sum_{1 \leq t_1 < \ldots < t_s \leq T} \prod_{\ell\in [s]} \1{t_\ell\not\in\Sgstopped}}\\
        &\leq \sum_{s\in[k]} s^k \parens{\frac{\sigmaOne^2 \log(f(T))}{(1-(\eps+\eps'+\eps''+\eps'''))^2}}^s.
    \end{align*}
    Now, finally noting that, for any $x \geq 1$,
    \begin{align*}
        \sum_{s\in[k]} s^k x^s
        \leq x^k\sum_{s\in[k]} s^k
        \leq x^k\int_1^{k+1} s^k
        = \frac{x^k ((k+1)^{k+1} - 1)}{k+1}
        \leq x^k (k+1)^{k},
    \end{align*}
    we conclude that
    \begin{align*}
        \EXP{|\Sgstopped^c|^k} \leq \parens{\frac{(k+1) \sigmaOne^2 \log(f(T))}{(1- (\eps+\eps'+\eps''+\eps'''))^2}}^k,
    \end{align*}
    as claimed.
\end{proof}

\subsection{Bounding the sum of ``bad'' gradients by the sum of ``good'' ones}

We recall from \cref{thm:main} that, in order to use this bound, we need to show that the sum of ``bad'' gradients can be upper-bounded (relatively) by the sum of ``good'' ones. It turns out, for functions satisfying \cref{def:polynomiallyBounded}, this is possible, as we now show.

\begin{restatable}{lemma}{restateBadGradBound}\label{lem:badGradBound}
    Let $\tau\geq 1$ be any (possibly random) time, and consider any (possibly random) set $S(\tau) \subseteq [\tau-1]$. Denote $S(\tau)^c = [\tau - 1]\setminus S(\tau)$. Then, assuming $\fat{\cdot}$ satisfies \cref{def:polynomiallyBounded}, the following is satisfied deterministically:
    \begin{align*}
        \sum_{t\in S(\tau)^c} \gradtsq 
        &\leq 2 \max\braces{c_k'^2\eta^{2(k-1)}, \Lzero^2\eta^2}|S(\tau)^c|^{2k - 1} + 2 c_k^2\gradzerosq |S(\tau)^c|\\
        &\quad+ 2 c_k^2\sum_{t\in S(\tau)} \gradtsq.
    \end{align*}
    In particular, recalling $\stopat{T+1}$ as the stopping time from \cref{def:niceStopping} and $\Sgstopped$ the set of ``good'' times before $\stopat{T+1}$ from \cref{def:goodTimes}, we have that, for any $\tSgstopped \subseteq \Sgstopped$ such that $\EXP{|\tSgstopped^c|} \leq (1+\ncomp)\EXP{|\Sgstopped^c|}$, we have that:
    \begin{align*}
        \EXP{\sum_{t\in\tSgstopped^c} \gradtsq} 
        \leq \cbadone + \cbadtwo\EXP{\sum_{t\in\tSgstopped} \gradtsq},
    \end{align*}
    where
    \begin{align*}
        \cbadone
        &=2\max\braces{c_k'^2\eta^{2(k-1)}, \Lzero^2\eta^2}(\ncomp+1)^{2k-1}\parens{\frac{2k\sigmaOne^2\log(f(\stopat{T+1}))}{(1-(\eps+\eps'+\eps''+\eps'''))^2}}^{2k-1}\\
        &\quad+ 2c_k^2\gradzerosq(\ncomp+1)\parens{\frac{2\sigmaOne^2\log(f(\stopat{T+1}))}{(1-(\eps+\eps'+\eps''+\eps'''))^2}},\\
        \cbadtwo &= 2 c_k^2
    \end{align*}
\end{restatable}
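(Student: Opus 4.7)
The plan is to establish the deterministic inequality by assigning each bad time $t\in S(\tau)^c$ a reference $\pi(t)\in S(\tau)\cup\{1\}$ with good properties, and then lift it to expectations using the moment control from \cref{lem:badSetBound}. For each $t\in S(\tau)^c$ I would apply \cref{def:polynomiallyBounded} with $\w=\w_t$, $\w'=\w_{\pi(t)}$, bound the iterate distance via \cref{fact:boundedStep} by $\|\w_t-\w_{\pi(t)}\|\le\eta|t-\pi(t)|$, square via $(u+v)^2\le 2u^2+2v^2$, and absorb the $\Lzero\eta|t-\pi(t)|$ term into $|t-\pi(t)|^{2(k-1)}$ (valid since $|t-\pi(t)|\ge 1$ and $k\ge 2$) to obtain
\[
\gradnorm{t}^2\le 2c_k^2\gradnorm{\pi(t)}^2+2\max\braces{c_k'^2\eta^{2(k-1)},\Lzero^2\eta^2}|t-\pi(t)|^{2(k-1)}.
\]
Summing over $t\in S(\tau)^c$ reduces the problem to producing a $\pi$ satisfying (a) $\sum_t\gradnorm{\pi(t)}^2\le\gradzerosq|S(\tau)^c|+\sum_{t\in S(\tau)}\gradtsq$ and (b) $\sum_t|t-\pi(t)|^{2(k-1)}\le|S(\tau)^c|^{2k-1}$.

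For the assignment I would take $\pi$ injective on $S(\tau)$ (each good time used as reference at most once), with $\w_1$ as an unlimited fallback contributing $\gradzerosq$ per use; property (a) then follows immediately by counting, since there are at most $|S(\tau)^c|$ fallback uses. For (b), a Hall-type matching argument shows such a $\pi$ can be found with $|t-\pi(t)|\le N:=|S(\tau)^c|$ for every $t$: within any window $[t-N,t+N]\cap[1,\tau-1]$ there are at most $N$ bad indices, so any subfamily of bad times has enough non-bad neighbors to satisfy the Hall condition (boundary cases and the regime $|S(\tau)|<N$, where $\tau-1<2N$, are absorbed by the $\w_1$ fallback whose displacement remains $\O(N)$). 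Summing then gives $\sum_t|t-\pi(t)|^{2(k-1)}\le N\cdot N^{2(k-1)}=N^{2k-1}$. The main delicate step is this injective matching: the naive choice of ``preceding good time in $t$'s run'' satisfies (b) trivially via $\sum_l J_l^{2k-1}\le(\sum_l J_l)^{2k-1}$, but assigns multiplicity equal to the run length $J_l$ to a single good reference, which would inflate the coefficient of $\sum_{t\in S(\tau)}\gradtsq$ beyond the claimed $2c_k^2$.

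For the ``in particular'' consequence, I would apply the deterministic bound with $S(\tau)=\tSgstopped$ and $\tau=\stopat{T+1}$ and take expectations. By the construction of $\Scstopped$ in \cref{lem:compBound}, $\tSgstopped^c=\Sgstopped^c\cup\Scstopped$ with $|\Scstopped|\le\ncomp|\Sgstopped^c|$ almost surely, hence $|\tSgstopped^c|^j\le(1+\ncomp)^j|\Sgstopped^c|^j$ deterministically; combining with the moment bound $\EXP{|\Sgstopped^c|^j}\le\parens{\tfrac{(j+1)\sigmaOne^2\log f(\stopat{T+1})}{(1-(\eps+\eps'+\eps''+\eps'''))^2}}^j$ of \cref{lem:badSetBound} at $j=2k-1$ and $j=1$ yields precisely the two summands defining $\cbadone$, while $\cbadtwo=2c_k^2$ is inherited directly from the deterministic inequality.
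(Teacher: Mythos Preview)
Your proposal follows the same overall strategy as the paper: apply the per-time bound from \cref{def:polynomiallyBounded}, assign each bad time a reference (either a distinct good time or the initial iterate), and control the resulting displacement and reference-gradient sums. The divergence is in how the matching $\pi$ is built.

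The paper constructs $\pi$ explicitly via a greedy rule: process the bad times in \emph{decreasing} order, matching each $\ttb{i}$ to the largest good time \emph{before} it that has not yet been used; if no such time exists (index $i\geq i^*$), fall back to $\w_1$. A short counting argument then shows that $\ttb{i}-\ttg{i}\leq N$ for $i<i^*$ and $\ttb{i}\leq N$ for $i\geq i^*$, so the displacement is $\leq N$ in \emph{both} branches. This gives your (a) and (b) with exactly the stated constants, and summing yields $N\cdot N^{2(k-1)}=N^{2k-1}$.

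Your Hall-based route has two soft spots. First, the justification you give for the Hall condition (``each window of radius $N$ contains at most $N$ bad indices, hence at least one good one'') only verifies $|N(\{t\})|\geq 1$; it does not establish $|N(A)|\geq|A|$ for general $A$. The Hall condition does in fact hold here, but it requires an argument---and the paper's greedy is, in effect, a constructive proof of it, so the detour through Hall buys nothing. Second, your fallback for the regime $|S(\tau)|<N$ only yields displacement $<2N$ (since $\tau-1<2N$), not $\leq N$; this inflates the coefficient of $|S(\tau)^c|^{2k-1}$ by a factor $\approx 4^{k-1}$, so the deterministic inequality would not hold with the stated constant $2$. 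The paper's greedy avoids this entirely because the fallback bad times automatically satisfy $\ttb{i}\leq N$, giving displacement $\leq N-1$ to $\w_1$. Your handling of the ``in particular'' clause is correct and matches the paper.
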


\begin{proof}
    The proof of this result follows a similar argument as used in \cref{lem:compBound}. The main idea here is to, for every $t\in S(\tau)^c$ in decreasing order, find the first available time $t'\in S(\tau)$ which has not been associated with an earlier time from $S(\tau)^c$. Then, using \cref{def:polynomiallyBounded}, we show that, as long as $t$ and $t'$ are not too far apart, then $\gradtsq$ and $\gradsq{t'}$ must also be close. For some times $t\in S(\tau)^c$, there may not be such a $t'\in S(\tau).$ However, because of the greedy construction, these times must be relatively small (roughly within the first $|S(\tau)^c|$ time steps). Thus, as long as $|S(\tau)^c|$ is not ``too big'' (in expectation), then we can still bound these remaining terms. We now make these arguments precise.

    To begin, note that for every $t,t'\geq 1$, by \cref{def:polynomiallyBounded},
    \begin{align*}
        \gradtsq \leq 2 c_k^2\gradsq{t'} + 2 c_k'^2\norm{\w_t - \w_{t'}}^{2(k-1)}.
    \end{align*}
    We use this bound as follows: let us index the times in $S(\tau)^c$, denoting $\ttb{i}$ to be the $i$th largest time in $S(\tau)^c$, i.e., 
    \begin{align*}
        \ttb{1} = \max(S(\tau)^c) 
        \quad\text{and}\quad 
        \ttb{i} = \max(S(\tau)^c\setminus\parens{\cup_{i'=1}^{i-1}\braces{\ttb{i'}}}) ~ \forall i\in[2,|S(\tau)^c|].
    \end{align*}
    To each $\ttb{i}$ in decreasing order of time, associate the largest time $\ttg{i}$ in $S(\tau)$ before $\ttb{i}$ which has not already been associated with some other $\ttb{i'} > \ttb{i}$, as long as such a time exists. In particular, we take
    \begin{align*}
        \ttg{i} = \max\braces{t \in S(\tau) : t < \min\braces{\ttb{i}, \ttg{i-1}}},
    \end{align*}
    if such a time exists, and $\ttg{i} = -\infty$ otherwise. Let $i^*$ be the index of the largest time $\ttb{i^*}$ such that $\ttg{i^*}$ does not exist, i.e.,
    \begin{align*}
        i^* = \min\braces{i \in [|S(\tau)^c|] : S(\tau) \cap \left[1,\min\braces{\ttb{i}, \ttg{i-1}}\right) = \emptyset}.
    \end{align*}
    Notice that $\ttg{i} = -\infty$ for every $i \geq i^*$, and $\ttg{i} \in S(\tau)$ otherwise. Notice that, for every $i < i^*$, we have that
    \begin{align}
        \ttb{i} - \ttg{i} \leq |S(\tau)^c|.
    \end{align}
    Indeed, this follows by first decomposing
    \begin{align*}
        \ttb{i} - \ttg{i} 
        = |(\ttg{i}, \ttb{i}) \cap S(\tau)^c| + |(\ttg{i}, \ttb{i}) \cap S(\tau)| + 1.
    \end{align*}
    Notice that $|(\ttg{i}, \ttb{i}) \cap S(\tau)| \leq i-1$, since there are exactly $i-1$ times $\ttb{i'} > \ttb{i}$, and each has a time $\ttg{i'}\in S(\tau)$, which may lie on that interval. Note that there cannot be more than $i-1$ times $t\in S(\tau)$ on this interval, since this would violate our choice of $\ttg{i}$ as the largest time in $S(\tau)$ smaller than $\ttb{i}$ which wasn't assigned to an earlier $\ttb{i'}$. Further, notice that $|(\ttg{i}, \ttb{i}) \cap S(\tau)^c| \leq |S(\tau)^c| - i$ by definition of $\ttb{i}$. Combining these two bounds yields the claim.

    Next, notice that, for every $i \geq i^*$, 
    \begin{align}
        \ttb{i} \leq \ttb{i^*} \leq |S(\tau)^c|,
    \end{align}
    where the first inequality is by definition of $\ttb{i}$. To see the second inequality, we follow a similar argument as before. Indeed, observe that
    \begin{align*}
        \ttb{i^*} 
        = |[1, \ttb{i^*}) \cap S(\tau)^c| + |[1, \ttb{i^*}) \cap S(\tau)| + 1.
    \end{align*}
    By definition of $i^*$, $|[1, \ttb{i^*}) \cap S(\tau)| \leq i^* - 1$, since the only times $t\in S(\tau)$ on this interval can be $\ttg{1},\ldots,\ttg{i^*-1}$ by definition of $i^*$ (otherwise, we would have $\ttg{i^*} > -\infty$). Further, $|[1, \ttb{i^*}) \cap S(\tau)^c| \leq |S(\tau)^c| - i^*$ by definition of $\ttb{i^*}$. Combining these two bounds yields the claim.

    As a result, we have the following:
    \begin{align*}
        &\sum_{t\in S(\tau)^c} \gradtsq
        = \sum_{i =1}^{i^*-1} \gradsq{\ttb{i}} + \sum_{i=i^*}^{|S(\tau)^c|} \gradsq{\ttb{i}}\\
        &\leq \sum_{i=1}^{i^*-1} 2 c_k^2\gradsq{\ttg{i}} + 2 c_k'^2\norm{\w_{\ttb{i}} - \w_{\ttg{i}}}^{2(k-1)}\\
        &\quad+ \sum_{i=i^*}^{|S(\tau)^c|} 2 c_k^2\gradzerosq + 2 \max\braces{c_k'^2\norm{\w_{\ttb{i}}-\wzero}^{2(k-1)}, \Lzero^2\norm{\w_{\ttb{i}}-\wzero}^{2}}.
    \end{align*}
    Hence, by \cref{fact:boundedStep}, we have the bound
    \begin{align*}
        &\sum_{t\in S(\tau)^c} \gradtsq\\
        &\leq \sum_{t\in S(\tau)} 2c_k^2\gradsq{t} 
        + \sum_{i=1}^{i^*-1}2 \max\braces{c_k'^2\eta^{2(k-1)}, \Lzero^2\eta^2}(\ttb{i} - \ttg{i})^{2(k-1)}\\
        &\quad+ 2c_k^2 \gradzerosq |S(\tau)^c|
        + \sum_{i=i^*}^{|S(\tau)^c|} 2 \max\braces{c_k'^2\eta^{2(k-1)}, \Lzero^2\eta^2}\ttb{i^*}^{2(k-1)}\\
        &\leq \sum_{t\in S(\tau)} 2c_k^2\gradsq{t} 
        + 2 \max\braces{c_k'^2\eta^{2(k-1)}, \Lzero^2\eta^2}(i^*-1)|S(\tau)^c|^{2(k-1)}\\
        &\quad+ 2c_k^2 \gradzerosq |S(\tau)^c|
        + 2 \max\braces{c_k'^2\eta^{2(k-1)}, \Lzero^2\eta^2}(|S(\tau)^c| - (i^*-1))|S(\tau)^c|^{2(k-1)}\\
        &= \sum_{t\in S(\tau)} 2c_k^2\gradsq{t} 
        + 2 \max\braces{c_k'^2\eta^{2(k-1)}, \Lzero^2\eta^2}|S(\tau)^c|^{2k-1}
        + 2c_k^2 \gradzerosq |S(\tau)^c|,
    \end{align*}
    which is the first stated bound.

    To obtain the second, we apply the first, where we choose $\tau := \stopat{T+1}$ (the stopping time from \cref{def:niceStopping}) and $S(\tau) := \Sgstopped$ (the set of ``good'' times before $\stopat{T+1}$ from \cref{def:goodTimes}). Thus, for any $\tSgstopped \subseteq \Sgstopped$ for which $\EXP{|\tSgstopped^c|} \leq (1+\ncomp)\EXP{|\Sgstopped|}$, we conclude that:
    \begin{align*}
        &\EXP{\sum_{t\in\tSgstopped^c} \gradtsq} \\
        &\leq 2c_k^2\EXP{\sum_{t\in\tSgstopped} \gradtsq}\\ 
        &\quad+2\max\braces{c_k'^2\eta^{2(k-1)}, \Lzero^2\eta^2}(\ncomp+1)^{2k-1}\EXP{|\tSgstopped^c|^{2k-1}}\\
        &\quad+ 2c_k^2\gradzerosq(\ncomp+1)\EXP{|\Sgstopped^c|}\\
        &\leq 2c_k^2\EXP{\sum_{t\in\tSgstopped} \gradtsq}\\
        &\quad+2\max\braces{c_k'^2\eta^{2(k-1)}, \Lzero^2\eta^2}(\ncomp+1)^{2k-1}\parens{\frac{2k\sigmaOne^2\log(f(\stopat{T}))}{(1-(\eps+\eps'+\eps''+\eps'''))^2}}^{2k-1} \\
        &\quad+ 2c_k^2\gradzerosq(\ncomp+1)\parens{\frac{2\sigmaOne^2\log(f(\stopat{T}))}{(1-(\eps+\eps'+\eps''+\eps'''))^2}},
    \end{align*}
    where in the second inequality, we applied \cref{lem:badSetBound}.
    Thus, we obtain the claimed result.
\end{proof}

\subsection{Applying \cref{thm:main} to polynomially-bounded functions with no restriction on $\sigmaOne$}

Now that we have shown in the previous results how to upper bound $\compstop$, the sum of ``bad'' gradients, and the moments of the size of the ``bad'' set, we are now ready to establish our second main result: a convergence guarantee for functions satisfying $\smoothBoth$-smoothness and \cref{def:polynomiallyBounded}, which holds for arbitrary $\sigmaZero,\sigmaOne \geq 0$.

\begin{restatable}[of \cref{thm:main}; Formal statement of \cref{thm:polyBoundedConvergenceMain}]{corollary}{repPolyBoundedConvergence}\label{cor:polyBoundedConvergence}
    Fix any $\eps,\eps',\eps'',\eps'''\in (0,1)$ satisfying $\eps+\eps'+\eps''+\eps'''<1$. Consider \eqref{eq:alg} with any parameters $\eta \leq \nicefrac{2\eps'}{\Lone(4+\sigmaOne^2)}$ and $b_0^2 > 0$, running for $T\geq 1$ time steps on an objective function satisfying \cref{assump:smooth} as well as \cref{def:polynomiallyBounded} for some constants $k\geq 2, c_k\geq 1, c_k'>0$. Suppose that the stochastic gradient oracle satisfies \cref{assump:affineVariance} for any $\sigmaZero,\sigmaOne \geq 0$. Then, for any $\delta' \in (0,1)$ and $T \geq 1$, with probability at least $1 - \delta' - \frac{4(1+\ncomp)\sigmaOne^2\log(f(T))}{(1-(\eps+\eps'+\eps''+\eps'''))^2 T}$, \eqref{eq:alg} satisfies:
    \begin{align*}
        &\min_{t\in[T]} \gradtsq\\
        &\leq \frac{32(1+\cbadtwo)b(T)^2}{\eta^2(\delta')^2 T} + \frac{16 b(T)}{\eta(\delta')^2 T}\sqrt{b_0^2 + \sigmaZero^2 + 2(1 + \sigmaOne^2)\cbadone + \frac{4 b(T)}{\eta}\sigmaOne^2(1+\cbadtwo)\sqrt{b_0^2 + 2\eta^2\Lzero^2}}\\
        &\quad+\frac{32 b(T)^{\nicefrac{3}{2}}}{\eta^{\nicefrac{3}{2}}(\delta')^{2.25} T^{\nicefrac{3}{4}}}\sqrt{2\sigmaOne^2(1+\cbadtwo)\sqrt{(1 + \constOneStep+\sigmaOne^2)\cbadone}}\\
        &\quad+\frac{16 b(T)}{\eta (\delta')^2 \sqrt{T}}\sqrt{2\sigmaZero^2 + \frac{8\sigmaOne^2(1+\cbadtwo)b(T)}{\eta\sqrt{\delta'}}\parens{\frac{2(1+\cbadtwo)(1+\constOneStep+\sigmaOne^2)b(T)}{\eta \sqrt{\delta'}} + \sigmaZero}},
    \end{align*}
where $\ncomp = \ceil{\frac{4 c_k^3(\sigmaOne - (1-\eps-\eps'))_+}{\eps'''}}$, $\constOneStep = 2(1+\eta\Lone)^2$, $\cbadtwo = 2 c_k^2$,
\begin{align*}
    b(T) &:= \frac{1}{\eps'''}\parens{\fzero - \fstar + 2 \tcCommon\logp{\frac{(2+\sigmaOne^2)\tcCommon T}{\eta \eps''b_0}} + \frac{2\eta\eps'' \sigmaZero}{(2+\sigmaOne^2)} + \compstopcor}\\
        \compstopcor
        &= \eta(\sigmaOne - (1-\eps-\eps'))_+c_k\gradzeronorm \ell_1(T)\\
        &\quad+ \eta\ncomp^{k-1}\max\braces{c_k'\eta^{k-1},\Lzero\eta}\parens{(2c_k+1)\sigmaOne + \nicefrac{1}{2}}\ell_k(T)\\
        \cbadone
        &=2c_k^2\gradzerosq(\ncomp+1)\ell_1(T)
        + 2\max\braces{c_k'\eta^{k-1}, \Lzero\eta}^2(\ncomp+1)^{2k-1}\ell_{2k-1}(T),\\
       \ell_k(T) &= \parens{\frac{(k+1)\sigmaOne^2\log(f(T))}{(1-(\eps+\eps'+\eps''+\eps'''))^2}}^k\\
       f(T) &= e + 4e T\frac{\sigmaZero^2 T + (1 + \sigmaOne^2 + \constOneStep)\parens{2 T c_k^2\gradzerosq + 2 \max\braces{c_k'\eta^{k-1},\Lzero\eta}^2 T^{2(k-1)}}}{b_0^2 \delta'},
\end{align*}
and $\tcCommon = \frac{\eta\sigmaZero}{2\eps} + \eta^2 \frac{\Lzero + \sigmaZero \Lone}{2}$ (where we use the notation $(x)_+ := \max\braces{0,x}$).
\end{restatable}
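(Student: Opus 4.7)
The plan is to apply \cref{thm:main} as the main engine, so the bulk of the work is constructing the set $\tSgthm \subseteq \Sgthm$ and verifying its three hypotheses: a bound on $\EXP{|\tSgthm^c|}$, a relative bound of the form $\sum_{t\in\tSgthm^c}\gradtsq \le \Cbadone + \cbadtwo \sum_{t\in\tSgthm}\gradtsq$, and a bound on $\compstopthm$. For $\tSgthm$ we will take $\Sgthm \setminus \Scthm$, where $\Scthm$ is the explicit compensation set built in \cref{lem:compBound}. Since each ``bad'' time contributes at most $\ncomp$ times to $\Scthm$, we immediately get $|\tSgthm^c| \le (1+\ncomp)|\Sgthm^c|$.

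The key quantitative inputs come from \cref{lem:badSetBound}, which controls $\EXP{|\Sgstopped^c|^k}$ in terms of $\ell_k(T)$; however, the $\log(f(\stopthm))$ factor there depends on $\EXP{\sum_{t<\stopat{T}}\gradtsq}$, which is itself an unknown. The trick to resolve this (apparent) circularity is to invoke \cref{def:polynomiallyBounded} together with \cref{fact:boundedStep} to obtain a deterministic polynomial-in-$t$ upper bound $\gradtsq \lesssim c_k^2\gradzerosq + \max\{c_k'\eta^{k-1},\Lzero\eta\}^2 t^{2(k-1)}$; summing over $t\le T$ gives the worst-case $f(T)$ appearing in the corollary statement, and since $\stopthm \le T+1$, the monotone log factor is bounded by $\log(f(T))$. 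This turns the bound on $\EXP{|\Sgstopped^c|^k}$ into the concrete $\ell_k(T)$ advertised.

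With the moment bounds in hand, \cref{lem:compBound} (applied with $k$ and $\ncomp = \ceil{4c_k^3(\sigmaOne-(1-\eps-\eps'))_+/\eps'''}$) yields the bound on $\compstopthm$ that defines $\compstopcor$, and \cref{lem:badGradBound} (applied with $S(\stopthm) = \Sgstopped$ and noting $\EXP{|\tSgstopped^c|} \le (1+\ncomp)\EXP{|\Sgstopped^c|}$) gives the relative sum inequality with $\cbadone,\cbadtwo$ as stated. Similarly, $\EXP{|\tSgthm^c|} \le (1+\ncomp)\EXP{|\Sgstopped^c|} \le (1+\ncomp)\ell_1(T)$, which controls the extra probability-of-failure term $2\EXP{|\tSgthm^c|}/T$ that appears in \cref{thm:main}, yielding the $\tO(1/T)$ slack in the corollary.

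Feeding $b(T) = h(T) + \compstopcor/(\eps'''\eta)$ (noting that the $\EXP{\stopthm-1}\le T$ upper bound replaces the stopping-time expectation in the logarithmic term of $b(T)$) and the values of $\Cbadone,\cbadtwo$ into the conclusion of \cref{thm:main} produces the final probabilistic bound. The main obstacle is the self-referential flavor of \cref{lem:badSetBound}: we need a gradient-norm bound that is not itself expressed in terms of the quantity we are trying to bound, and this is exactly the place where \cref{def:polynomiallyBounded} is essential --- $(\Lzero,\Lone)$-smoothness alone would permit exponential gradient growth via \cref{prop:polyBoundedScaling} and collapse the logarithmic bound. Beyond this step, the remainder of the argument is assembly: substitute the bounds sequentially into \cref{thm:main} and collect the resulting $\sigmaZero, \sigmaOne, c_k, c_k', \Lzero, \Lone, b_0, \eta, T, \delta'$ dependencies into the stated expression.
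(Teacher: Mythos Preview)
Your proposal is correct and follows essentially the same route as the paper's proof: invoke \cref{thm:main} after verifying its hypotheses via \cref{lem:compBound}, \cref{lem:badSetBound}, and \cref{lem:badGradBound}, with $\tSgthm$ taken as $\Sgthm\setminus\Scthm$. Your explicit discussion of how \cref{def:polynomiallyBounded} together with \cref{fact:boundedStep} is used to replace the implicit $\EXP{\sum_{t<\stopat{T}}\gradtsq}$ inside $f(\cdot)$ by the concrete polynomial $f(T)$ is in fact more detailed than the paper's own (very terse) proof, which leaves that substitution implicit in the statement of the corollary.
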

\begin{proof}
    We apply \cref{thm:main} as follows. First, we observe that, as a consequence of \cref{lem:compBound}, together with the bound on $\EXP{|\Sgstopped^c|^k}$ from \cref{lem:badSetBound}, we have that:
    \begin{align*}
        &\compstop\\
        &\leq \eta(\sigmaOne - (1-\eps-\eps'))_+ c_k\gradzeronorm \parens{\frac{2 \sigmaOne^2 \log(f(\stopat{T}))}{(1- (\eps+\eps'+\eps''+\eps'''))^2}}\\
        &\quad+ \eta\ncomp^{k-1}\max\braces{c_k'\eta^{k-1},\Lzero\eta}\parens{\frac{(k+1) \sigmaOne^2 \log(f(\stopat{T}))}{(1- (\eps+\eps'+\eps''+\eps'''))^2}}^k\parens{(\sigmaOne - (1-\eps-\eps'))_+ + \frac{\eps'''\ncomp}{2 c_k^3}}.
    \end{align*}
    Next, by \cref{lem:badGradBound}, we know that
    \begin{align*}
        \sum_{t\in\tSgstopped^c} \gradtsq 
        \leq \Cbadone + \cbadtwo\sum_{t\in\tSgstopped} \gradtsq,
    \end{align*}
    where
    \begin{align*}
        \EXP{\Cbadone}
        \leq\cbadone
        &=2\max\braces{c_k'^2\eta^{2(k-1)}, \Lzero^2\eta^2}(\ncomp+1)^{2k-1}\parens{\frac{2k\sigmaOne^2\log(f(\stopat{T}))}{(1-(\eps+\eps'+\eps''+\eps'''))^2}}^{2k-1}\\
        &\quad+ 2c_k^2\gradzerosq(\ncomp+1)\parens{\frac{2\sigmaOne^2\log(f(\stopat{T}))}{(1-(\eps+\eps'+\eps''+\eps'''))^2}},\\
        \cbadtwo &= 2 c_k^2.
    \end{align*}
    Thus, the conditions to apply \cref{thm:main} are satisfied, and we obtain the convergence rate.
\end{proof}

\section{Many common algorithms for $\smoothBoth$-smooth optimization can diverge in the presence of multiplicative noise}
\label{sec:mostAlgsDiverge}

In this section, we consider the convergence behavior of several natural candidate algorithms which have been studied in the literature on $\smoothBoth$-smooth optimization.
These algorithms take the form $\w_{t+1} = \w_t - \ut$,
where $\ut$ takes a number of different forms, including: in Normalized SGD:
\begin{align}\label{eq:normSGD}\tag{NormSGD}
    \ut = \eta\frac{\sgradt}{\gamma + \sgradtnorm},
\end{align}
Clipped SGD:
\begin{align}\label{eq:clippedSGD}\tag{ClippedSGD}
    \ut = \eta\frac{\sgradt}{\max\braces{\gamma,\sgradtnorm}}
\end{align}
and Sign-SGD with Momentum (operations performed element-wise):
\begin{align}\label{eq:signSGD}\tag{SignSGD-M}
    \ut = \eta\frac{\mt}{\abs{\mt}}
    \quad\text{where}\quad
    \m_0 = \0,\quad
    \mt = \beta \m_{t-1} + (1-\beta)\sgradt
\end{align}
\citet{ZHSJ20,ZJFW20,CLOZZ22} prove $\O(\nicefrac{1}{\sqrt{T}})$ convergence of these algorithms in the setting of \eqref{eq:boundedSuppIntro}. 
In this section, we show that these step-size
choices for $\smoothBoth$-smooth optimization fail under
\eqref{eq:affineVarianceIntro}, despite working in the noiseless and
\eqref{eq:boundedSuppIntro} settings. Our negative results rely on the following stochastic gradient oracle construction:

\restateStochasticOracleConstruction*

\begin{proof}
Fix any $\eps, \sigmaOne \geq 0$.
We begin by establishing that \cref{assump:unbiasedGrad} holds for our construction of $\sgradat{\w}$. Begin by denoting $\delta = \nicefrac{(1+\eps)^2}{((1+\eps)^2 + \sigmaOne^2)}$. Under this notation, we have that
\begin{align*}
    1 + \frac{\sigmaOne^2}{1+\eps}
    = \frac{(1+\eps)^2 + \sigmaOne^2(1+\eps)}{(1+\eps)^2}
    = \frac{1}{\delta} + \eps\frac{(1-\delta)}{\delta}
    = \frac{1 + \eps(1-\delta)}{\delta}.
\end{align*}
Therefore, it follows that
\begin{align*}
    \EXP{\multnoise(\w)}
    = \parens{-\eps(1-\delta) + \eps(1-\delta)}
    = 1.
\end{align*}
Further, $\EXP{\addnoise(\w)} = 0$ by construction. Therefore, $\EXP{\sgradat{\w}} = \gradat{\w}$,
which establishes \cref{assump:unbiasedGrad}.  As for \cref{assump:affineVariance}, denote $c = 1+\eps$, then we have that
\begin{align*}
    \EXP{\multnoise(\w)^2} 
    &= (c-1)^2 \frac{\sigmaOne^2}{c^2 + \sigmaOne^2} + \parens{1 + \frac{\sigmaOne^2}{c}}^2 \frac{c^2}{c^2 + \sigmaOne^2}\\
    &= \frac{(c^2 + 1 - 2c)\sigmaOne^2 + c^2 + \sigmaOne^4 + 2 c \sigmaOne^2}{c^2 + \sigmaOne^2}\\
    &= (1+\sigmaOne^2).
\end{align*}
Further, $\EXP{\normSq{\addnoise(\w)}} = \sigmaZero^2$ by construction. Therefore, since $\multnoise(\w)$ and $\addnoise(\w)$ are independent, we conclude that
\begin{align*}
    \EXP{\sgradsqat{\w}}
    &= \EXP{\multnoise(\w)^2}\gradsqat{\w} + \EXP{\normSq{\addnoise(\w)}} + 2 \EXP{\innerProd{\multnoise(\w) \gradat{\w}}{\addnoise(\w)}}\\
    &= (1+\sigmaOne^2)\gradsqat{\w} + \sigmaZero^2 + 2 \innerProd{\EXP{\multnoise(\w)} \gradat{\w}}{\EXP{\addnoise(\w)}}\\
    &= (1+\sigmaOne^2)\gradsqat{\w} + \sigmaZero^2,
\end{align*}
which establishes \cref{assump:affineVariance} for any $\sigmaZero, \sigmaOne \geq 0$.
\end{proof}

\subsection{Overview of main negative results}

We establish all of the following negative results using the stochastic gradient oracle described in \cref{lem:stochasticOracleConstruction}. Before stating our results, let us briefly discuss some intuition behind why one should expect \eqref{eq:normSGD}, \eqref{eq:clippedSGD}, and \eqref{eq:signSGD} to fail under \cref{lem:stochasticOracleConstruction}. Consider the setting where $\sigmaOne \gg 1+\eps$. Then, notice that the stochastic gradient $\sgradt$ only has the same sign as $\gradt$ with roughly $\nicefrac{1}{\sigmaOne^2}$ probability. Otherwise, $\sgradt$ has the opposite sign as $\gradt$. Now, for an algorithm which incorporates the \emph{magnitude} of the stochastic gradients together with the signs, the oracle in \cref{lem:stochasticOracleConstruction} may not be so problematic -- indeed, even though the updates with correct sign are somewhat ``rare'', they are also of significantly larger magnitude compared to the updates with proper sign. However, notice that \eqref{eq:normSGD}, \eqref{eq:clippedSGD}, and \eqref{eq:signSGD} are (effectively) unit step-length algorithms (at least, in the setting where $\sgradtnorm \geq \gamma$). Thus, in many parameter regimes, all of these algorithms effectively disregard the magnitude of the stochastic gradients and only use their signs. This results in a biased random walk which never finds an iterate better than the initial one with constant probability. We formalize this intuition in the following:

\begin{lemma}[Informal statement of \cref{lem:divergenceNormSGD}]\label{lem:divergenceNormSGDMain}
    Fix any smoothness parameter $\Lzero > 0$, initial gap $\Delta > 0$, and affine variance parameter $\sigmaOne > 2\sqrt{2}$.
    Suppose that either: (i) \eqref{eq:signSGD} is run with parameter $0\leq \beta \leq 1 - \nicefrac{2\sqrt{2}}{3}\approx 0.057$ and $\eta > 0$ for $T\geq 1$ time steps, or (ii) \eqref{eq:normSGD} or \eqref{eq:clippedSGD} is run with $0\leq \gamma \leq \nicefrac{\sqrt{\sigmaOne^2 \Delta\Lzero}}{2}$ and $\eta > 0$ for $T\geq 1$ time steps, where, in either case, the algorithms are allowed an arbitrary initialization $x_1\in \R$, and each of these parameters can depend on $\Lzero, \Delta$ and $\sigmaOne$. Then, there exists a $1$-dimensional $(\Lzero,0)$-smooth function (which is also $\Lzero$-strongly convex) with $\fat{x_1}-\inf_{x\in\R}\fat{x}=\Delta$, and stochastic gradient oracle satisfying \cref{assump:unbiasedGrad,assump:affineVariance} with $\sigmaZero=0$ and the specified $\sigmaOne$, and for which, with constant probability (independent of $T$), $\min_{t\in[T]}\gradsqat{x_t} = \gradsqat{x_1}$.
\end{lemma}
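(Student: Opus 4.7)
The plan is to build a single one-dimensional failure instance that defeats all three algorithm families simultaneously. Take $F(x) = (\Lzero/2) x^2$ (which is $\Lzero$-smooth and $\Lzero$-strongly convex), initialize at $x_1 = \sqrt{2\Delta/\Lzero}$ so that $F(x_1) - F^* = \Delta$ and $|F'(x_1)| = \sqrt{2\Delta\Lzero}$, and pair $F$ with the oracle of \cref{lem:stochasticOracleConstruction} using $\sigmaZero = 0$ and a parameter $\eps > 0$ chosen as a function of $\sigmaOne$ (for concreteness, $\eps = \sigmaOne/(2\sqrt{2})$). Under this oracle $g_t = \xi_{\mathrm{mult}} \Lzero x_t$, where $\xi_{\mathrm{mult}} = 1 + \sigmaOne^2/(1+\eps)$ with probability $\delta = (1+\eps)^2/((1+\eps)^2 + \sigmaOne^2)$ and $\xi_{\mathrm{mult}} = -\eps$ with the complementary probability. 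For $\sigmaOne > 2\sqrt{2}$ one checks directly that $\delta \leq 1/3$ with this choice, so the stochastic gradient carries the \emph{wrong} sign (opposite to $F'(x_t)$) with probability $p := 1 - \delta \geq 2/3$; simultaneously, $\eps \Lzero x_1 = \sqrt{\sigmaOne^2 \Lzero \Delta}/2 \geq \gamma$ under the hypothesis $\gamma \leq \sqrt{\sigmaOne^2 \Delta \Lzero}/2$.

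The unifying reduction across the three algorithms is to a biased $\pm\eta$ random walk on $|x_t|$. In the regime $|x_t| \geq x_1$, which we maintain inductively, the bound on $\gamma$ forces $|g_t| \geq \gamma$ for \emph{both} realizations of $\xi_{\mathrm{mult}}$, so both \eqref{eq:normSGD} and \eqref{eq:clippedSGD} collapse to $x_{t+1} = x_t - \eta\,\mathrm{sign}(g_t)$. Hence $(|x_t|)$ performs a $\pm\eta$ walk with up-probability $p \geq 2/3$, and a standard gambler's-ruin computation yields
\[
    \PRO{|x_t| \geq x_1 \text{ for all } t \in [T]} \geq \frac{2p - 1}{p} > 0,
\]
a lower bound independent of $T$. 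On this event $|F'(x_t)|^2 = \Lzero^2 x_t^2 \geq \Lzero^2 x_1^2 = |F'(x_1)|^2$ for every $t$, so the minimum over $t \in [T]$ is attained at $t = 1$, which is exactly the claimed failure mode. A short case check handles $\eta > x_1$: even if the first step points ``toward the origin,'' it overshoots to $|x_2| = \eta - x_1 \geq x_1$ whenever $\eta \geq 2x_1$; when $\eta$ is small compared to $x_1$, the walk simply stays positive and the analysis above applies directly.

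For \eqref{eq:signSGD} the update is $x_{t+1} = x_t - \eta\,\mathrm{sign}(m_t)$ with $m_t = (1-\beta)\sum_{s \leq t} \beta^{t-s} g_s$, so the $\pm\eta$ structure is preserved; the extra work is to show $\mathrm{sign}(m_t) = -\mathrm{sign}(F'(x_t))$ with probability still bounded below by some $p' > 1/2$ independent of $T$. The key observation is that a single rare ``correct-for-the-algorithm'' gradient $g_s > 0$ can keep $m_t > 0$ for at most $K$ subsequent steps, where $K$ is the smallest integer with $\beta^K[(1-\beta)(1+\sigmaOne^2/(1+\eps)) + \eps] \leq \eps$. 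The threshold $\beta \leq 1 - 2\sqrt{2}/3$, together with the chosen $\eps$, is exactly what forces $\delta K \leq 1/3$ uniformly over $\sigmaOne > 2\sqrt{2}$, so the fraction of time steps on which $\mathrm{sign}(m_t)$ disagrees with the target sign is at most $\delta K < 1/2$. Consequently $(|x_t|)$ is again a biased $\pm\eta$ walk with up-probability $p' \geq 1 - \delta K > 1/2$, and the gambler's-ruin argument closes the case as in the previous paragraph.

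The main obstacle is the SignSGD-M analysis: whereas \eqref{eq:normSGD} and \eqref{eq:clippedSGD} reduce essentially algebraically to a memoryless biased walk once the $|g_t| \geq \gamma$ regime is verified, the momentum couples successive updates and requires the bounded-influence calculation at precisely the threshold $1 - 2\sqrt{2}/3$, which must be traded carefully against the large but rare positive jumps of $\xi_{\mathrm{mult}}$. Verifying the oracle, maintaining the $|x_t| \geq x_1$ regime inductively, and applying the gambler's-ruin estimate are then standard once $F$, $x_1$, and $\eps$ are fixed.
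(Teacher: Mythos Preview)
Your high-level construction—the quadratic $F$, the oracle of \cref{lem:stochasticOracleConstruction}, and the parameter $\eps=\sigmaOne/(2\sqrt 2)$—matches the paper, and for \eqref{eq:clippedSGD} the $\pm\eta$ walk reduction is sound. Two of the three reductions, however, have genuine gaps.

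For \eqref{eq:normSGD} with $\gamma>0$ the step has magnitude $\eta|g_t|/(\gamma+|g_t|)<\eta$, so the update does \emph{not} collapse to $x_{t+1}=x_t-\eta\,\mathrm{sign}(g_t)$ as you claim. Since $y\mapsto y/(\gamma+y)$ is increasing, the rare toward-origin step (multiplier $1+\sigmaOne^2/(1+\eps)$) is in fact \emph{larger} than the frequent away-from-origin step (multiplier $\eps$); the asymmetry works against you, and you must track it. The paper handles this by coupling all three algorithms to a single dominated process (\cref{lem:negativeResultCoupling}) with away-step $\clipStepMult\eta$ and toward-step $\eta$, verifying the drift condition $\delta<\clipStepMult/(1+\clipStepMult)$, and then applying Chernoff--Hoeffding to that i.i.d.\ process rather than a symmetric gambler's-ruin bound.

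The more substantive gap is \eqref{eq:signSGD}. With momentum, $\mathrm{sign}(m_t)$ depends on the full history $g_1,\dots,g_t$, so $(|x_t|)$ is not a walk with independent increments and gambler's ruin does not apply. Your bounded-influence heuristic—that one correct gradient perturbs at most $K$ subsequent signs—ignores that two or more correct gradients in quick succession compound in $m_t$; and even granting the heuristic, ``fraction of down-steps $\le\delta K$'' is an average statement, not a per-step up-probability exceeding $1/2$, so it does not feed a gambler's-ruin argument. The paper instead proves a deterministic claim (\cref{lem:signSGDBadStepCharacterization}): under the stated $\beta$ threshold, whenever $t<\tau^*$ and $\multnoiset=-\eps$, the momentum $m_t$ already has the wrong sign regardless of prior realizations. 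This ties $\mathrm{sign}(u_t)$ directly to the i.i.d.\ noise $\multnoiset$ (one direction of the domination $u_t(1)\ge u_t(4)$; the other is trivial since $u_t(4)=-\eta$ when $\multnoiset\neq-\eps$), and only then does the reduction to a genuine biased walk go through.
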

We note that the statement \cref{lem:divergenceNormSGDMain} follows from \cref{lem:divergenceNormSGD} by choosing the parameter $\eps = \nicefrac{\sigmaOne}{2\sqrt{2}}$. The main takeaway here is that, for a reasonably wide range of parameters, \eqref{eq:normSGD}, \eqref{eq:clippedSGD}, and \eqref{eq:signSGD} can diverge in the affine variance setting, even for very simple smooth and strongly convex problems (in fact, even on a $1$-dimensional quadratic function). In particular, this says that, whenever \eqref{eq:normSGD} is run with $\gamma = 0$ (or \eqref{eq:signSGD} with $\beta=0$), then there is no parameter tuning with respect to $\eta$ such that $\min_{t\in[T]}\gradsqat{x_t}$ converges!

We also give a (weaker) negative result for the \eqref{eq:alg} in the ``large
variance'' regime. This result establishes that, whenever $\eta$ is not carefully tuned with respect to both $\Lone$ and $\sigmaOne$, then the algorithm does not converge with constant probability. The intuition for this result is that, with constant probability, the first $\approx \sigmaOne^2$ stochastic gradients all have the wrong sign. Whenever $\sigmaOne$ is ``large'' (i.e., scaling as $\poly\log(T)$), then after only $\poly\log(T)$ steps, the algorithm can reach an objective value which is $\poly(T)$-times larger than the initial condition. Further, after reaching such a large gradient value, the step sizes are always too small for the algorithm to recover from these wrong initial steps. This is because the \eqref{eq:alg} updates are normalized by the large previous gradients.

\begin{lemma}[Informal statement of \cref{lem:divergenceAGNorm}]\label{lem:divergenceAGNormMain}
    Fix any $\Lone > 0$, time horizon $T > 1$, and affine variance parameter 
    $$\sigmaOne \geq \max\braces{\parens{\frac{4(1+\sqrt{2})^2-2}{\log(\nicefrac{4}{3})}}^{2/3}, \parens{\frac{16\log(T-1))^2}{\log(\nicefrac{4}{3})}}^2}.$$
    Suppose that \eqref{eq:alg} is initialized at $x_1\in\R$ and run with any parameters $\eta \geq\nicefrac{1}{(2\Lone\sqrt{\sigmaOne})}$ and $0 < b_0^2 \leq \sqrt{\sigmaOne}\Lone^2\exp(2\Lone x_1)$ (where these parameter choices may depend on $\Lone$).
    Then, there exists a $1$-dimensional $(0,(e-1)\Lone)$-smooth function such that $\inf_{x\in\R} \fat{x} = 0$, and a stochastic gradient oracle satisfying \cref{assump:unbiasedGrad,assump:affineVariance} with $\sigmaZero=0$ and the specified $\sigmaOne$, for which, with probability at least $\nicefrac{3}{4}$, $\min_{t\in[T]}\gradsqat{x_t} = \gradsqat{x_1}$.
\end{lemma}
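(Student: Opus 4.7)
I would take the worst-case instance $F(x) = \exp(\Lone x)$, which by \cref{prop:smoothPartialEquiv} is $(0,(e-1)\Lone)$-smooth (since $F''(x) = \Lone F'(x)$) and has $\inf_{x\in\R} F(x) = 0$, together with the oracle of \cref{lem:stochasticOracleConstruction} instantiated with $\sigmaZero = 0$ and $\eps = \sqrt{2}$ (matched to the $(1+\sqrt{2})^2$ appearing in the threshold). Since $F'(x)^2$ is strictly increasing in $x$, the target event $\{\min_{t\in[T]} \gradsqat{x_t} = \gradsqat{x_1}\}$ is equivalent to $\{x_t \geq x_1 \text{ for every } t \in [T]\}$, and I will force this by conditioning on the ``all wrong sign'' event $E_K$ that the multiplicative noise equals $-\sqrt{2}$ in each of the first $K$ rounds, with $K$ chosen as the largest integer satisfying $(1-\delta)^K \geq \nicefrac{3}{4}$ for $\delta = (1+\sqrt{2})^2/\bigl((1+\sqrt{2})^2 + \sigmaOne^2\bigr)$; the first threshold on $\sigmaOne$ in the statement is exactly what forces $K \gtrsim \sigmaOne^2$.

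\paragraph{Growth on $E_K$.} On $E_K$ the dynamics becomes the deterministic recursion $x_{s+1} = x_s + \eta\sqrt{2}\,F'(x_s)/b_s$ with $b_s^2 = b_{s-1}^2 + 2 F'(x_s)^2$. Changing variables to $v_s := b_s^2/F'(x_s)^2$ reduces this to $v_s = v_{s-1} e^{-2\sqrt{2}\eta\Lone/\sqrt{v_{s-1}}} + 2$, which I would analyse as a discrete relaxation toward the attracting equilibrium $v_* = 1/(2\eta^2\Lone^2)$, which equals $2\sigmaOne$ at the extremal parameter $\eta = 1/(2\Lone\sqrt{\sigmaOne})$. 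Under the adversarial initialization $b_0^2 = \sqrt{\sigmaOne}\,\Lone^2 e^{2\Lone x_1}$, one has $v_1 \approx \sqrt{\sigmaOne} < v_*$, so $v_s$ grows at rate $\approx 2$ until it saturates around $s \approx \sigmaOne$. Integrating the step size $\eta\sqrt{2}/\sqrt{v_s}$ then yields
\begin{align*}
x_K - x_1 \;\gtrsim\; \eta\sqrt{\sigmaOne} \;+\; \frac{(K-\sigmaOne)_+\,\eta}{\sqrt{\sigmaOne}} \;\gtrsim\; \frac{\sigmaOne}{\Lone}
\end{align*}
provided $K$ is taken at its allowed upper bound $\Theta(\sigmaOne^2)$.

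\paragraph{Rigidity after step $K$.} For $t > K$, I would split the remaining rounds into ``bad'' ones (multiplicative noise $-\sqrt{2}$, which only raises $x$) and ``right'' ones (multiplicative noise $1 + \sigmaOne^2/(1+\sqrt{2})$, which can lower $x$), and enumerate the right rounds as $K < t_1 < \cdots < t_{N_r} \leq T$. The key bound is that $b_{t_1}^2$ picks up a single term of $\bigl(1+\sigmaOne^2/(1+\sqrt{2})\bigr)^2 F'(x_{t_1})^2$, so monotonicity of $b$ gives $b_{t_k} \geq \bigl(1 + \sigmaOne^2/(1+\sqrt{2})\bigr)F'(x_{t_1})$ for every $k \geq 1$, which bounds the drop at right step $k$ by $\eta F'(x_{t_k})/F'(x_{t_1}) = \eta e^{-\Lone(x_{t_1} - x_{t_k})}$. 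Writing $M_k := x_{t_1} - x_{t_k^+}$ for the net drop below $x_{t_1}$ and $I_k \geq 0$ for the bad-step gains in $(t_k, t_{k+1})$, the recursion $M_{k+1} \leq M_k - I_k + \eta e^{-\Lone(M_k - I_k)_+}$ follows; maximising over $I_k \geq 0$ (and using $\Lone\eta \leq 1$) collapses this to $M_{k+1} \leq M_k + \eta e^{-\Lone M_k}$, whose continuous analog integrates deterministically to $M_k \leq \log(1 + \Lone\eta k)/\Lone$. Hence on $E_K$, $\min_{t > K} x_t \geq x_{t_1} - \max_k M_k \geq x_K - \log(1+\Lone\eta T)/\Lone$.

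\paragraph{Combining, and the main obstacle.} Putting the two pieces together, $\{x_t \geq x_1 \text{ for all } t\}$ holds on $E_K$ whenever $x_K - x_1 \geq \log(1+\Lone\eta T)/\Lone$, i.e., $\sigmaOne \gtrsim \log(1 + \Lone\eta T)$; for the worst-case $\eta = 1/(2\Lone\sqrt{\sigmaOne})$ this reduces to $\sigmaOne \gtrsim \log T$, which is comfortably implied by the second threshold $\sigmaOne \gtrsim (\log T)^4$ with room to spare for absorbing the explicit constants from the growth and rigidity estimates. The hard part will be the growth step: the adversary's worst-case choice $b_0^2 = \sqrt{\sigmaOne}\Lone^2 e^{2\Lone x_1}$ makes the first iterates move very slowly (step size $\sim \eta/\sigmaOne^{1/4}$), so in order to accumulate $x_K - x_1 \gtrsim \sigmaOne/\Lone$ the linear-growth phase of $v_s$ must be carried for a full $\Theta(\sigmaOne^2)$ bad rounds, which is precisely what the first threshold on $\sigmaOne$ permits via $(1-\delta)^K \geq \nicefrac{3}{4}$.
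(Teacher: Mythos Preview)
Your overall structure—wrong-sign burn-in, a growth estimate, then a rigidity argument bounding the total downward drift afterwards—matches the paper's. The two proofs diverge in one essential place: how the constraint $b_0^2 \le \sqrt{\sigmaOne}\,\Lone^2 e^{2\Lone x_1}$ is absorbed.

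The paper does \emph{not} keep $\eps$ constant. It takes $\eps=\sqrt[4]{\sigmaOne}-1$, which forces $b_0^2\le \eps^2 F'(x_1)^2$ automatically and lets the growth step go through via the one-line estimate $\sum_{t\le t_0}\eps F'(x_t)/b_t \gtrsim \sum_{t\le t_0}1/\sqrt{t+1}\gtrsim \sqrt{t_0}$. The price is a larger $\delta\approx \sqrt{\sigmaOne}/\sigmaOne^2$ and hence a smaller burn-in budget $t_0\lesssim \sigmaOne^{3/2}$; combined with $t_0\approx(\log T/(\eta\Lone))^2$ this is exactly what produces the $(\log T)^4$ threshold. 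Your route keeps $\eps=\sqrt 2$ and handles the large $b_0$ dynamically via the scalar recursion for $v_s=b_s^2/F'(x_s)^2$; this is more work but buys $K\approx\sigmaOne^2$ burn-in rounds, and if carried through cleanly would actually yield $\sigmaOne\gtrsim \log T$, well inside the stated threshold. (Your remark that $\eps=\sqrt 2$ is ``matched to the $(1+\sqrt 2)^2$ in the threshold'' is a misreading: that constant comes from the $1+\sqrt 2$ in the paper's formula $t_0=(1+\sqrt 2+\log(T-1)/(2\eta\Lone))^2-2$, not from $1+\eps$.) For the rigidity step the paper uses stopping times $\tau_i$ that mark successive new lows of the gradient rather than enumerating all ``right'' rounds, but the underlying bound—$b_t$ is at least the first large right-sign stochastic gradient after burn-in—is the same as yours.

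Two places in your sketch need tightening before it is a proof. First, your maximisation over $I_k\ge 0$ collapsing the rigidity recursion to $M_{k+1}\le M_k+\eta e^{-\Lone M_k}$ uses $\Lone\eta\le 1$; the statement places no upper bound on $\eta$, and for $\Lone\eta>1$ the function $M\mapsto M+\eta e^{-\Lone M}$ dips below $\eta$, so the maximum over $I_k$ can land at $I_k=M_k$ instead of $I_k=0$. The fix is routine (work with $\max\{\eta,\,M_k+\eta e^{-\Lone M_k}\}$, which still integrates to $O(\eta)+\log(1+\Lone\eta T)/\Lone$), but it should be stated. Second, your closing paragraph mislabels the mechanism: the linear-growth phase of $v_s$ lasts only $\Theta(\sigmaOne)$ rounds and contributes $O(1/\Lone)$ to $x_K-x_1$; the $\sigmaOne/\Lone$ you need comes from the remaining $\Theta(\sigmaOne^2)$ rounds near equilibrium $v_s\approx v_*$, each contributing $\eta/\sqrt{v_*}\approx 1/(\Lone\sigmaOne)$. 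Your displayed estimate is correct, but the narrative around it is backwards.
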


We note that the statement \cref{lem:divergenceAGNormMain} follows from \cref{lem:divergenceAGNorm} by choosing the parameters $\alpha = \nicefrac{1}{(2\sqrt{\sigmaOne})}$, $\eps = \sqrt[4]{\sigmaOne}-1$, and $\delta=\nicefrac{1}{4}$.
Let us compare the negative result in \cref{lem:divergenceAGNormMain} with the
convergence result in the $\Lzero$-smooth regime for the same algorithm from
\citep{FTCMSW22}. Indeed, their main result was that a
$\tO(\nicefrac{1}{\sqrt{T}})$ convergence rate is achievable without tuning
the parameters of the algorithm with respect to $\sigmaZero, \sigmaOne,$ or $\Lzero$.
Since their convergence rate depends only polynomially on $\sigmaOne$, this
rate is maintained (up to poly-logarithmic factors) even when $\sigmaOne =
\poly\log(T)$, without adjusting the parameters $\eta$ or $b_0$ of the
algorithm. By contrast, \cref{lem:divergenceAGNormMain} tells us that, in the
$\smoothBoth$-smooth regime, such a result is no longer possible. Indeed, if
$\eta$ is not sufficiently small, then the algorithm does not converge
with constant probability when $\sigmaOne^2 \gtrsim \poly\log(T)$!

\subsection{Full statement and proof of negative results for \eqref{eq:signSGD}, \eqref{eq:normSGD}, and \eqref{eq:clippedSGD}}

Here, we give the complete negative result for \eqref{eq:signSGD}, \eqref{eq:normSGD}, and \eqref{eq:clippedSGD}, and formalize the intuition given there.

\begin{restatable}[Formal statement of \cref{lem:divergenceNormSGDMain}]{lemma}{restateDivergenceNormSGD}\label{lem:divergenceNormSGD}
    Fix any $\Lzero> 0$, $\eps > 0$, $\sigmaOne^2 > (1+\eps)^2$, and $\Delta > 0$. 
    Let $x_1\in \R$, $\eta > 0$, $\gamma \in
    [0,\eps\sqrt{2\Delta\Lzero}]$, $\beta \in \left[0,1-\sqrt{1-\frac{\eps}{1+\eps + \nicefrac{\sigmaOne^2}{(1+\eps)}}}\right)\supset \left[0,\frac{\eps}{2(1+\eps+\nicefrac{\sigmaOne^2}{(1+\eps)})}\right)$, and $T\geq 1$ be arbitrary
    parameters (possibly dependent on $\Lzero$, $\eps$, $\sigmaOne$, and $\Delta$). For
    any $t\in[T]$, consider the (one-dimensional) process $\braces{x_t}_{t\geq 1}$ given in \eqref{eq:signSGD}, \eqref{eq:normSGD},
    or \eqref{eq:clippedSGD}.
    where, in the case that $m_t = 0$ (in the case of \eqref{eq:signSGD}) or $\mu + \abs{g_t}=0$ (in the case of \eqref{eq:normSGD}), $u_t\in\braces{\pm \eta}$ may be chosen arbitrarily as a (possibly randomized) function of $\braces{g_1,\ldots,g_t}$.
    Then, assuming that $\sigmaOne^2 > (1+\nicefrac{\gamma}{\eps\sqrt{2\Delta\Lzero}})(1+\eps)^2 \in [(1+\eps)^2,2(1+\eps)^2]$, there exists an $1$-dimensional $(\Lzero,0)$-smooth function (which is also $\Lzero$-strongly convex) with $\fat{x_1} - \inf_{x\in\R}\fat{x} = \Delta$, and stochastic gradient oracle which outputs stochastic gradients $g_t$ of $\gradat{x_t}$ which satisfy \cref{assump:unbiasedGrad,assump:affineVariance}, and such that:
    \begin{align*}
        \PRO{\min_{t\in[T]} \gradsqat{x_t} = \gradsqat{x_1}}
        \geq \parens{1-\delta}^{t_0 + 1},
    \end{align*}
    where
    \begin{align*}
        t_0 = \ceil{\frac{\sqrt{2\delta(1-\delta)\log(1+\nicefrac{2}{\delta})}}{\delta_0}\parens{1+\frac{2\delta(1-\delta)}{(\delta-\delta_0)^2}\logp{\frac{4(1-\delta)}{(\delta_0-\delta)^2}}}},
    \end{align*}
    and $\delta = \frac{1}{(1+\nicefrac{\sigmaOne^2}{(1+\eps)^2})} < \frac{1}{1+\nicefrac{1}{\clipStepMult}} = \delta_0\in [\nicefrac{1}{3}, \nicefrac{1}{2}]$ and $\nicefrac{1}{\clipStepMult} = 1 + \nicefrac{\gamma}{(\eps\sqrt{2\Delta\Lzero})}\in[1,2]$.
\end{restatable}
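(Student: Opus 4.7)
The plan is to exhibit a single adversarial one-dimensional instance that defeats all three algorithms. I would take $\fat{x} = \frac{\Lzero}{2} x^2$, which is $\Lzero$-smooth and $\Lzero$-strongly convex with $\inf_x \fat{x} = 0$, and the initial point $x_1 = \sqrt{2\Delta/\Lzero}$, so that $\fat{x_1} - \fstar = \Delta$ and $\gradnormat{x_1} = \sqrt{2\Delta\Lzero}$. For the oracle I would use the construction of \cref{lem:stochasticOracleConstruction} with $\sigmaZero = 0$, the given $\sigmaOne$, and the parameter $\eps$ from the lemma. Since there is no additive noise, $g_t = \multnoiset \cdot \gradat{x_t}$, so conditional on $x_t > 0$ the sign of $g_t$ matches that of $\gradat{x_t}$ with probability $\delta = 1/(1 + \sigmaOne^2/(1+\eps)^2)$ and is opposite (a ``bad'' step) with probability $1-\delta > 1/2$, where the last inequality uses exactly the hypothesis $\sigmaOne^2 > (1+\eps)^2$.

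Next, I would show by induction on $t$ that, so long as $x_s \geq x_1$ for every $s \leq t$, the update $u_t$ has magnitude at least $\clipStepMult\eta$ (with $1/\clipStepMult = 1 + \nicefrac{\gamma}{\eps\sqrt{2\Delta\Lzero}} \in [1,2]$) and its sign agrees with $\mathrm{sign}(g_t)$ for \eqref{eq:normSGD} and \eqref{eq:clippedSGD}, or with $\mathrm{sign}(m_t)$ for \eqref{eq:signSGD}. The key inequality is $\abs{g_t} \geq \eps \gradnormat{x_t} \geq \eps\sqrt{2\Delta\Lzero} \geq \gamma$, which follows from $|\multnoiset| \geq \eps$, monotonicity of $\gradnormat{\cdot}$ in $|x|$, and the hypothesis $\gamma \leq \eps\sqrt{2\Delta\Lzero}$. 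For \eqref{eq:clippedSGD} this immediately gives $u_t = \eta\,\mathrm{sign}(g_t)$; for \eqref{eq:normSGD} it gives $\abs{u_t} = \eta \abs{g_t}/(\gamma + \abs{g_t}) \geq \clipStepMult\eta$ with the same sign; and \eqref{eq:signSGD} always produces a signed step of magnitude exactly $\eta$. Hence a bad stochastic gradient shifts $x_t$ upward by at least $\clipStepMult\eta$ while a good one shifts it downward by at most $\eta$.

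With this reduction, I would condition on the event $E$ that the first $t_0+1$ multiplicative noises $\multnoiseat{1},\ldots,\multnoiseat{t_0+1}$ are all negative, which has probability $(1-\delta)^{t_0+1}$ since these noises are i.i.d.\ across rounds. On $E$, the inductive step gives $x_{t_0+2} \geq x_1 + (t_0+1)\clipStepMult\eta$, so the trajectory has built a cushion of linear size above $x_1$. Afterwards, $x_t - x_1$ is bounded from below by a biased random walk whose increments are $+\clipStepMult\eta$ (bad) or $-\eta$ (good), with positive drift $\eta[(1-\delta)\clipStepMult - \delta]$; positivity of the drift is exactly the second hypothesis $\sigmaOne^2 > (1 + \nicefrac{\gamma}{\eps\sqrt{2\Delta\Lzero}})(1+\eps)^2$ rewritten as $\delta < \delta_0 := 1/(1+1/\clipStepMult)$. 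I would then apply Doob's maximal inequality to an exponential supermartingale built from this walk to bound the probability, conditioned on $E$, that the trajectory ever returns to $x_1$ within the horizon $T$; the precise form of $t_0$ in the lemma is chosen so that this tail bound combined with $\PRO{E} = (1-\delta)^{t_0+1}$ yields the claimed lower bound on $\PRO{\min_{t\in[T]}\gradsqat{x_t} = \gradsqat{x_1}}$, using that $\gradsqat{x_t} = \Lzero^2 x_t^2$ is monotone in $|x_t|$.

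The main obstacle will be \eqref{eq:signSGD}, where $u_t$ depends on the entire past through $m_t$ rather than just on $g_t$. A single good gradient arriving inside the burn-in window could in principle flip $\mathrm{sign}(m_t)$ if $\beta$ is too small, breaking the reduction to a biased random walk. Ruling this out requires comparing, inductively over the first $t_0+1$ rounds, the magnitude of a single good gradient (of size $(1 + \nicefrac{\sigmaOne^2}{1+\eps})\gradnormat{x_t}$) against the accumulated bad momentum $\beta m_{t-1}$; the hypothesis $\beta < 1 - \sqrt{1 - \eps/(1 + \eps + \nicefrac{\sigmaOne^2}{1+\eps})}$ is exactly the threshold that guarantees no single good gradient can push $m_t$ across zero in this window, preserving the reduction to the biased walk.
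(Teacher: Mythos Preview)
Your construction of the hard instance (a quadratic, the multiplicative oracle of \cref{lem:stochasticOracleConstruction}, the observation that $\abs{g_t}\geq\gamma$ whenever $|x_t|\geq|x_1|$, and the reduction to a biased random walk on $\{+\clipStepMult\eta,-\eta\}$ increments) matches the paper's route, and the analyses of \eqref{eq:normSGD} and \eqref{eq:clippedSGD} are essentially the same. The paper also conditions on an initial run of bad noises and then controls the walk via Hoeffding summed over time (rather than a Doob/exponential-supermartingale argument); either tail device can reproduce the stated $t_0$, though you should be aware that the exact expression for $t_0$ in the lemma comes from the Hoeffding-plus-geometric-sum calculation, so your Doob bound must be tuned to hit that precise constant.

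The genuine gap is in your treatment of \eqref{eq:signSGD}. For the random-walk domination you need, for \emph{every} $t<\tau^*$, that a \emph{bad} noise $\multnoiset=-\eps$ forces a bad step $u_t$ (i.e.\ $\mathrm{sign}(m_t)=\mathrm{sign}(g_t)$). The dangerous scenario is therefore not a good gradient flipping $m_t$ toward the correct sign during burn-in (that event is harmless for the coupling, since a good step is always dominated by $-\eta$), but rather the following: after one or several \emph{good} gradients have pushed $m_{t-1}$ to the correct sign, a subsequent \emph{bad} gradient of magnitude only $\eps\,\gradnormat{x_t}$ may fail to pull $m_t$ back across zero, producing a \emph{good} step at a bad time and breaking the comparison. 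You have the direction of the $\beta$ constraint inverted: small $\beta$ makes $m_t\approx g_t$ and is exactly what you want; the upper bound $\beta<1-\sqrt{1-\eps/(1+\eps+\sigmaOne^2/(1+\eps))}$ is the threshold below which a single bad gradient always overcomes the residual good momentum, not the threshold above which a good gradient cannot overcome bad momentum. The paper establishes this via an induction over the successive times $\tau_i$ at which either the noise is bad or the step is bad, showing $u_{\tau_i}$ is bad at every such time before $\tau^*$; your plan, which only compares one good gradient against accumulated bad momentum inside the burn-in window, does not cover arbitrary interleavings of good and bad noises in the random-walk phase and therefore does not yield the needed domination.
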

\begin{proof}
    Let us choose, for arbitrary $\Lzero > 0$ and $\Delta > 0$, the $(\Lzero, 0)$-smooth objective $\fat{x} = \nicefrac{\Lzero}{2}\ x^2$, and assume without loss of generality that $x_1 = -\sqrt{\nicefrac{2\Delta}{\Lzero}}$ (indeed, if this is not the case, then we can always translate the function $\fat{x}$ to be $\fat{x} = \nicefrac{\Lzero}{2}(x - x_1 -\sqrt{\nicefrac{2\Delta}{\Lzero}})^2$, and our arguments remain unchanged). Notice that $\fat{x_1} - \fstar = \fat{x_1} = \Delta$.
    
    Consider, for any $\eps > 0$ and $\sigmaOne^2 > (1+\eps)^2$, the stochastic gradient oracle from \cref{lem:stochasticOracleConstruction}, i.e.,
    \begin{align*}
        g(x) := \begin{cases}
            \parens{1 + \frac{\sigmaOne^2}{1+\eps}}\Lzero x & \text{w.p. } \frac{1}{1 + \frac{\sigmaOne^2}{(1+\eps)^2}} := \delta\\
            -\eps \Lzero x & \text{w.p. } 1-\frac{1}{1 + \frac{\sigmaOne^2}{(1+\eps)^2}} = 1-\delta,
        \end{cases}
    \end{align*}
    where the multiplicative noise is sampled i.i.d for each $x$.
    Since $\gradat{x} = \Lzero x$, this construction satisfies \cref{assump:unbiasedGrad,assump:affineVariance} by \cref{lem:stochasticOracleConstruction}.
    Further, denoting $\xClip := -\nicefrac{\gamma}{\eps\Lzero}$, our assumption that $\gamma \leq \eps\sqrt{2\Delta\Lzero} = -\eps\Lzero x_1$ and $\sigmaOne^2 > (1+\eps)^2$ (and thus also $\eps < 1+\nicefrac{\sigmaOne^2}{(1+\eps)}$) ensures:
    \begin{align}\label{eq:negResultStoppingTime}
        x_1 \leq \xClip < 0
        \quad\text{and}\quad
        \abs{g(\xClip)} \geq \eps\Lzero\abs{\xClip} = \gamma.
    \end{align}
    Let $\tau^*$ be the first time when an iterate becomes larger than the original one, i.e.,
    \begin{align*}
        \tau^* = \min\braces{t > 1 : x_1 \leq x_t}.
    \end{align*}
    Notice that this implies that, for any $1 \leq t < \tau^*$:
    \begin{align}\label{eq:negResultsBeforeStopping}
        x_t \leq x_1 \leq \xClip < 0
        \quad\text{and}\quad
        \abs{g_t} \geq \gamma
        \quad\text{and}\quad
        \gradsqat{x_t} \geq \gradsqat{x_1}.
    \end{align}
    This guarantees that, before $\tau^*$, (i) the iterates are always to the left of the minimizer, (ii) that the algorithm \eqref{eq:clippedSGD} never ``clips'' (i.e., $u_t = \nicefrac{\eta g_t}{\abs{g_t}}$), and (iii), $\min_{t < \tau^*} \gradsqat{x_t} = \gradsqat{x_1}$ (i.e., the algorithm never achieves any nontrivial target minimization criterion).
    Additionally, it must be the case that: 
    \begin{align}
        u_{\tau^*-1} < 0,
    \end{align}
    since $x_{\tau^*-1} < x_1$ and $x_{\tau^*}=x_{\tau^*-1}-u_{\tau^*-1} \geq x_1$.

    Now, let us distinguish the updates of \eqref{eq:signSGD}, \eqref{eq:normSGD}, and \eqref{eq:clippedSGD} as $(x_t(1),u_t(1))$, $(x_t(2),u_t(2))$, and $(x_t(3),u_t(3))$, respectively. Now, instead of reasoning about the dynamics of each of these algorithms individually, we instead reason about an algorithm with simpler dynamics, and draw conclusions about each of these processes via a stochastic dominance argument.
    
    To do this, we utilize the coupling of these algorithms defined in \cref{lem:negativeResultCoupling} -- namely, we let $x_1(i) = x_1 = \sqrt{\nicefrac{2\Delta}{\Lzero}}$ (as discussed above), and $g(x_t(i)) = \multnoiset \gradat{x_t(i)}$ for every $i$, where $\multnoiset$ is $-\eps$ with probability $1-\delta$, and $1+\nicefrac{\sigmaOne^2}{(1+\eps)}$ otherwise. That is, each process starts from the same initial iterate, and receives the same multiplicative noise on the stochastic gradient at time $t$.
    
    Similarly, let us define the ``simpler'' comparison process as:
    \begin{align*}
        u_s(4) = \begin{cases}
            \clipStepMult\eta & \text{if $\multnoiseat{s} = -\eps$}\\
            -\eta & \text{o.w.}
        \end{cases}
        \quad\text{and}\quad
        \clipStepMult := \frac{1}{1 + \frac{\gamma}{\eps\Lzero|x_1|}} = \frac{1}{1+\frac{\gamma}{\eps\sqrt{2\Delta\Lzero}}} \in [\nicefrac{1}{2}, 1]
    \end{align*}
    and take $x_1(4)=x_1$ and $x_{t+1}(4) = x_t(4)-u_t(4)$.
    Now, denote $\tau^*(i)$ as the stopping time from \eqref{eq:negResultStoppingTime} corresponding to the process $i\in [4]$.
    Then, by \cref{lem:negativeResultCoupling}, we have that, under our coupling of these algorithms, $\tau^*(4) \leq \min_{i\in[3]}\tau^*(i)$, which implies that, for each algorithm $i\in[3]$:
    \begin{align*}
        \PRO{\min_{t\in[T]} \gradsqat{x_t(i)} = \gradsqat{x_1}}
        \geq \PRO{\tau^*(i) > T} 
        \geq \PRO{\tau^*(4) > T},
    \end{align*}
    where the first inequality follows from \eqref{eq:negResultsBeforeStopping}. Thus, to lower bound the failure probability of algorithm $i$, it suffices to lower bound $\PRO{\tau^*(4) > T}$, and thus to reason only about the dynamics of this ``simpler'' process.

    By \cref{lem:negResultStoppingTimeProbBound}, we have that, for any $t_0 \geq 0$:
    \begin{align*}
        \PRO{\tau^*(4) > T}
        \geq (1-\delta)^{t_0}\parens{1-\sum_{t=t_0+2}^T\PRO{\frac{1}{t-t_0-1}(X_t - \EXP{X_t})  \leq -\parens{\delta_0-\delta}-\frac{\delta_0 t_0}{t-t_0-1}}},
    \end{align*}
    where $X_t$ is a sum of $t-t_0-1$ i.i.d Bernoulli random variables, each with mean $1-\delta = 1-\frac{1}{1+\nicefrac{\sigmaOne^2}{(1+\eps)^2}} > \nicefrac{1}{2}$ (since, by assumption, $\sigmaOne > (1+\eps)$), and $\delta_0 = \nicefrac{\clipStepMult}{(1+\clipStepMult)}$. We may therefore apply the Chernoff-Hoeffding inequality \citep[Theorem 1, Eq. (2.2)]{H63} to obtain:
    \begin{align*}
        &\PRO{\frac{1}{t-t_0-1}(X_t - \EXP{X_t})  \leq -\parens{\delta_0-\delta}-\frac{\clipStepMult t_0}{(1+\clipStepMult)(t-t_0-1)}}\\
        &\leq \exp\parens{-\frac{t-t_0-1}{2\delta(1-\delta)}\parens{\delta_0-\delta + \frac{\delta_0 t_0}{t-t_0-1}}^2}.
    \end{align*}
    Notice that, since $\sigmaOne^2 > \nicefrac{(1+\eps)^2}{\clipStepMult}=(1+\eps)^2(1+\nicefrac{\gamma}{\eps\sqrt{2\Delta\Lzero}})$,
        $\delta_0 - \delta
        = \frac{1}{2+\frac{\gamma}{\eps\sqrt{2\Delta\Lzero}}} - \frac{1}{1 + \frac{\sigmaOne^2}{(1+\eps)^2}} > 0,$
    which implies the above bound is always nontrivial.
    Thus, we can use that bound to obtain, for any $\ell > 0$:
    \begin{align*}
        &\sum_{t=t_0+2}^T\PRO{\frac{1}{t-t_0-1}(X_t - \EXP{X_t})  \leq -\parens{\delta_0-\delta}-\frac{\delta_0 t_0}{t-t_0-1}}\\
        &\leq \sum_{t=t_0+2}^{t_0+1+\floor{\frac{\delta_0 t_0}{\ell}}} \exp\parens{-\frac{t-t_0-1}{2\delta(1-\delta)}\parens{\delta_0-\delta + \ell}^2}
        + \sum_{t=t_0+2+\floor{\frac{\delta_0 t_0}{\ell}}}^{T} \exp\parens{-\frac{t-t_0-1}{2\delta(1-\delta)}\parens{\delta_0-\delta}^2}\\
        &= \sum_{t=0}^{\floor{\frac{\delta_0 t_0}{\ell}}-1} \exp\parens{-\frac{t + 1}{2\delta(1-\delta)}\parens{\delta_0-\delta + \ell}^2}
        + \sum_{t=0}^{T-t_0-2-\floor{\frac{\delta_0 t_0}{\ell}}} \exp\parens{-\frac{t + 1 + \floor{\frac{\delta_0 t_0}{\ell}}}{2\delta(1-\delta)}\parens{\delta_0-\delta}^2}\\
    \end{align*}
    Thus, using the geometric summation formula, we can bound the above summations as:
    \begin{align*}
        &\sum_{t=t_0+2}^T\PRO{\frac{1}{t-t_0-1}(X_t - \EXP{X_t})  \leq -\parens{\delta_0-\delta}-\frac{\delta_0 t_0}{t-t_0-1}}\\
        &\leq \exp\parens{-\frac{1}{2\delta(1-\delta)}\parens{\delta_0-\delta + \ell}^2} \frac{1 - \exp\parens{-\frac{\floor{\frac{\delta_0 t_0}{\ell}}}{2\delta(1-\delta)}\parens{\delta_0-\delta + \ell}^2}}{1 - \exp\parens{-\frac{1}{2\delta(1-\delta)}\parens{\delta_0-\delta + \ell}^2}}\\
        &\quad+ \exp\parens{-\frac{1 + \floor{\frac{\delta_0 t_0}{\ell}}}{2\delta(1-\delta)}\parens{\delta_0-\delta}^2} \frac{1}{1-\exp\parens{-\frac{1}{2\delta(1-\delta)}\parens{\delta_0-\delta}^2}}\\
        &\leq \frac{\exp\parens{-\frac{\ell^2}{2\delta(1-\delta)}}}{1 - \exp\parens{-\frac{\ell^2}{2\delta(1-\delta)}}} \parens{1 - \exp\parens{-\frac{\floor{\frac{\delta_0 t_0}{\ell}}}{2\delta(1-\delta)}\parens{\delta_0-\delta + \ell}^2}}\\
        &\quad+ \frac{\exp\parens{-\frac{1}{2\delta(1-\delta)}\parens{\delta_0-\delta}^2}}{1-\exp\parens{-\frac{1}{2\delta(1-\delta)}\parens{\delta_0-\delta}^2}}\exp\parens{-\frac{\floor{\frac{\delta_0 t_0}{\ell}}}{2\delta(1-\delta)}\parens{\delta_0-\delta}^2}.
    \end{align*}
    Now, let us focus on bounding the two terms in the above expression. To do this, we first observe that, for any $\mu,x>0$ and $i\geq 0$,
    \begin{align}\label{eq:negResultUsefulEquiv}
        \frac{\exp(-(1+i)x)}{1-\exp(-x)} \leq \mu
        \iff
        i \geq \frac{1}{x}\logp{\frac{\exp(-x)}{\mu(1-\exp(-x))}}
        \quad\text{or}\quad
        i=0 \quad\text{and}\quad x \geq \logp{1+\frac{1}{\mu}}.
    \end{align}
    Taking $i=0$ and $x = \nicefrac{\ell^2}{(2\delta(1-\delta))}$, the above implies that the first term is upper-bounded by $\mu=\nicefrac{\delta}{2}$ whenever $\ell\geq\sqrt{2\delta(1-\delta)\log(1+\nicefrac{2}{\delta})}$. For the second term, we take $i=\floor{\frac{\delta_0 t_0}{\ell}}$, $x=\nicefrac{(\delta_0-\delta)^2}{(2\delta(1-\delta))}$, and conclude that the second term is upper-bounded by $\mu=\nicefrac{\delta}{2}$ whenever
    \begin{align*}
        \floor{\frac{\delta_0 t_0}{\ell}} \geq \frac{2\delta(1-\delta)}{(\delta-\delta_0)^2}\logp{\frac{2\exp\parens{-\frac{(\delta_0-\delta)^2}{2\delta(1-\delta)}}}{\delta\parens{1-\exp\parens{-\frac{(\delta_0-\delta)^2}{2\delta(1-\delta)}}}}}.
    \end{align*}
    In particular, since $\exp(-x)<\nicefrac{1}{(1+x)}$ for any $x>0$, and thus also $\nicefrac{\exp(-x)}{1-\exp(-x)} < \nicefrac{1}{x}$, since $\floor{x} > x-1$, we have that the above inequality is satisfied whenever:
    \begin{align*}
        t_0 \geq \frac{\ell}{\delta_0}\parens{1+\frac{2\delta(1-\delta)}{(\delta-\delta_0)^2}\logp{\frac{4(1-\delta)}{(\delta_0-\delta)^2}}}.
    \end{align*}
    Therefore, we can choose $\ell = \sqrt{2\delta(1-\delta)\log(1+\nicefrac{2}{\delta})}$ and:
    \begin{align*}
        t_0 = \ceil{\frac{\sqrt{2\delta(1-\delta)\log(1+\nicefrac{2}{\delta})}}{\delta_0}\parens{1+\frac{2\delta(1-\delta)}{(\delta-\delta_0)^2}\logp{\frac{4(1-\delta)}{(\delta_0-\delta)^2}}}},
    \end{align*}
    and, combining our results, we conclude that, for any algorithm $i\in[3]$:
    \begin{align*}
        \PRO{\min_{t\in[T]} \gradsqat{x_t(i)} = \gradsqat{x_1}} > (1-\delta)^{t_0+1},
    \end{align*}
    as claimed.
\end{proof}

\begin{lemma}\label{lem:signSGDBadStepCharacterization}
    Consider the process $\braces{x_t}_{t\geq 1}$ from \eqref{eq:signSGD} as defined in \cref{lem:divergenceNormSGD}, where $x_1 < 0$, $\fat{x} := \nicefrac{\Lzero}{2}\ x^2$ for some $\Lzero > 0$, and $g_t$ are the stochastic gradients output by the oracle from \cref{lem:stochasticOracleConstruction}. Suppose that the parameter $\beta$ of \eqref{eq:signSGD} satisfies:
    \begin{align*}
        \beta \in \left[0, \frac{\eps}{1+\eps+\frac{\sigmaOne^2}{1 +\eps}}\right).
    \end{align*}
    Let $\tau^* = \min\braces{t > 1 : x_1 \leq x_t}$. Then, if $t < \tau^*$ and $g_t = -\eps\gradat{x_t}$, then $u_t = \eta$.
\end{lemma}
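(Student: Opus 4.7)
I will proceed by strong induction on $t$, proving the equivalent claim that for every $1 \leq s < t$ with $s < \tau^*$, $g_s > 0$ implies $m_s > 0$ (equivalently, $u_s = \eta$ when $g_s = -\eps\gradat{x_s}$, using $x_s < 0$ strictly before $\tau^*$). The base case $t=1$ is immediate since $m_0 = 0$ forces $m_1 = (1-\beta) g_1 > 0$ whenever $g_1 > 0$.

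For the inductive step, I suppose for contradiction that $g_t > 0$ but $m_t \leq 0$. Writing $a_s := |x_s|$, the recursion $m_t = \beta m_{t-1} + (1-\beta)\eps\Lzero a_t$ forces $m_{t-1} < -(1-\beta)\eps\Lzero a_t / \beta < 0$. The contrapositive of the induction hypothesis says that every $m_s < 0$ with $s < t$ must have arisen from the ``good'' oracle draw $g_s = -C\Lzero a_s$, where $C := 1 + \sigmaOne^2/(1+\eps)$, forcing $u_s = -\eta$ and hence $a_{s+1} = a_s - \eta$. Setting $s^* := \max\{0 \leq s \leq t-1 : m_s \geq 0\}$ (which exists since $m_0 = 0$) and $k_0 := t-1-s^*$, I obtain a maximal ``bad chain'' $[s^*+1, t-1]$ of length $k_0 \geq 1$ along which every $g_s$ is good and $a$ strictly decreases by $\eta$; unrolling gives $a_{s^*+i} = a_t + (k_0 - i + 1)\eta$ for $i = 1, \ldots, k_0$.

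Unrolling the momentum recursion across this chain yields
\[ m_{t-1} = \beta^{k_0} m_{s^*} - (1-\beta) C\Lzero \sum_{j=1}^{k_0} \beta^{j-1}(a_t + j\eta), \]
and since $m_{s^*} \geq 0$, a geometric and arithmetico-geometric sum estimate gives the $k_0$-uniform bound $|m_{t-1}| \leq C\Lzero a_t + C\Lzero \eta/(1-\beta)$. Substituting into $\beta|m_{t-1}| \geq (1-\beta)\eps\Lzero a_t$ and rearranging produces $[(1-\beta)\eps - \beta C]\,a_t \leq \beta C \eta / (1-\beta)$. To close the contradiction I lower-bound $a_t$: I may assume $t \geq 2$ (the base case is done), and combining $t < \tau^*$ with the fact that $a_s$ evolves by $\pm\eta$ with parity tied to $s-1$, the strict inequality $a_t > a_1 > 0$ forces $a_t \geq a_1 + \eta > \eta$. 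Plugging $a_t > \eta$ into the displayed inequality reduces it to a purely algebraic constraint on $\beta, \eps, C$ that contradicts the lemma's upper bound on $\beta$, completing the induction. I expect the key technical subtlety to be the $C\Lzero \eta/(1-\beta)$ correction term in the momentum bound: without the $\tau^*$-parity argument pushing $a_t$ strictly above $\eta$, this additive correction would dominate and the induction would not close, and it is precisely this term that calibrates the correct threshold on $\beta$.
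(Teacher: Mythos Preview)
Your argument is correct and mirrors the paper's approach closely. Both proofs unroll the momentum over a maximal ``chain'' of consecutive steps on which the oracle returns the correctly-signed (``good'') draw and $u_s=-\eta$, then use $m_{s^*}\geq 0$ at the chain's left endpoint to bound $|m_{t-1}|$ and derive the algebraic constraint $(1-\beta)^2\eps < \beta C(2-\beta)$ with $C = 1+\sigmaOne^2/(1+\eps)$. The only structural difference is bookkeeping: the paper indexes the chain endpoints \emph{forward} via stopping times $\tau_i := \min\{t>\tau_{i-1}: g_t=-\eps\Lzero x_t \text{ or } u_t=\eta\}$ and inducts on $i$, whereas you look \emph{backward} from $t$ to $s^*=\max\{s\leq t-1: m_s\geq 0\}$ and induct on $t$. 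Your setup is arguably cleaner, since $m_{s^*}\geq 0$ is immediate from the definition rather than from an inductive hypothesis.

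One point deserves care. Your final step (``reduces it to a purely algebraic constraint \ldots\ that contradicts the lemma's upper bound on $\beta$'') is left vague, and if you carry it out you will find that the constraint you derive, $(1-\beta)^2\eps < \beta C(2-\beta)$, is equivalent to $\beta > 1-\sqrt{1-\eps/(C+\eps)}$. This is \emph{not} contradicted by the lemma's stated hypothesis $\beta < \eps/(C+\eps)$, since $1-\sqrt{1-y}<y$ for $y\in(0,1)$. The paper's own proof has exactly the same gap: it too only establishes the result for $\beta < 1-\sqrt{1-\eps/(C+\eps)}$, and indeed the theorem that invokes this lemma (\cref{lem:divergenceNormSGD}) states the correct $1-\sqrt{1-y}$ threshold. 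So the discrepancy is an inconsistency in the lemma's statement, not in your argument; just be explicit about which threshold you actually prove.
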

\begin{proof}
    Recall that, by construction of the stochastic gradient oracle from \cref{lem:stochasticOracleConstruction}, and since $\gradat{x} = \Lzero x$:
    \begin{align*}
        g(x) := \begin{cases}
            \parens{1 + \frac{\sigmaOne^2}{1+\eps}}\Lzero x & \text{w.p. } \frac{1}{1 + \frac{\sigmaOne^2}{(1+\eps)^2}} := \delta\\
            -\eps \Lzero x & \text{w.p. } 1-\frac{1}{1 + \frac{\sigmaOne^2}{(1+\eps)^2}} = 1-\delta.
        \end{cases}
    \end{align*}
    We wish to show that the process from \eqref{eq:signSGD} has the property that, whenever $g_t = -\eps\Lzero x_t$ and $x_s < 0$ for every $s\in [t]$, then $u_t = \eta$. 
    We consider any initialization $x_1 < 0$, and denote $\tau^*$ to be the first time when an iterate becomes non-negative, i.e.,
    \begin{align*}
        \tau^* = \min\braces{t > 1 : x_1 \leq x_t}.
    \end{align*}
    Further, take:
    \begin{align*}
        \tau_0 := 0
        \quad\text{and}\quad
        \tau_{i+1} := \min\braces{t > \tau_i : \sgradt = -\eps\Lzero x_t \text{ or } u_t = \eta}.
    \end{align*}
    Notice that, since $u_t \in \braces{\pm \eta}$ by definition of \eqref{eq:signSGD}, and by construction of the stochastic gradient oracle:
    \begin{align}\label{eq:intermediateDescentSteps}
        \sgradt = \parens{1 + \frac{\sigmaOne^2}{1+\eps}}\Lzero x_t
        \quad\text{and}\quad
        u_t = -\eta
        \quad
        \forall t \in (\tau_i, \tau_{i+1}) \ \forall i \geq 0.
    \end{align}
    Thus, it suffices to prove by induction that, for any $i\geq 0$, either $\tau_i \geq \tau^*$, or $u_{\tau_i} = \eta$, as long as
    \begin{align*}
        \beta < 1-\sqrt{1-\frac{\eps}{1 +\eps+\frac{\sigmaOne^2}{1+\eps}}} = 1-\sqrt{\frac{\eps}{1+\eps}\delta}.
    \end{align*}
    For the base case of $i=0$, we may assume without loss of generality that $u_{\tau_0} = u_0 = \eta$, since $m_0=0$ and the dynamics of the update rule do not depend on $u_0$ (i.e., the dynamics begin at time $t=1$ and $x_1$ is the starting point of the process). Thus, the base case is true by construction.

    Now, suppose the claim holds for some $i\geq 1$. Either $\tau_{i+1} \geq \tau^*$ or not. In the former case, the claim follows trivially, so let us assume that $\tau_{i+1} < \tau^*$. Since $\tau_i < \tau_{i+1} < \tau^*$ by construction, $u_{\tau_i} = \eta$ by the induction hypothesis. Further, let us assume that $g_{\tau_{i+1}} = -\eps\Lzero x_{\tau_{i+1}}$, since otherwise the claim again follows trivially by definition of $\tau_{i+1}$. Thus, we can write:
    \begin{align*}
        m_{\tau_{i+1}} 
        &= \beta^{\tau_{i+1} - \tau_i}m_{\tau_i} 
        +(1- \beta)\sum_{t=\tau_i+1}^{\tau_{i+1}} \beta^{\tau_{i+1} - t} g_t\\ 
        &= \beta^{\tau_{i+1} - \tau_i}m_{\tau_i} 
        +(1- \beta)\Lzero\parens{1+\frac{\sigmaOne^2}{1+\eps}}\sum_{t=\tau_i+1}^{\tau_{i+1}-1} \beta^{\tau_{i+1} - t} (x_{\tau_i+1} + \eta(t - \tau_i - 1))\\ 
        &\quad-(1-\beta)\Lzero \eps (x_{\tau_i+1} + \eta(\tau_{i+1} - \tau_i - 1))
    \end{align*}
    where the first equality is the definition of $m_{\tau_{i+1}}$. The second inequality follows from observation \eqref{eq:intermediateDescentSteps}. Further, since $u_{\tau_i} = \eta$, then by definition of \eqref{eq:signSGD}, either $m_{\tau_i} > 0$, or $m_{\tau_i} = 0$ and the algorithm chooses $u_{\tau_i}=\eta$. In either case, $m_{\tau_i} \geq 0$. Therefore, since, for $\beta\in [0,1)$:
    \begin{align*}
        &(1- \beta)\sum_{t=\tau_i+1}^{\tau_{i+1}-1} \beta^{\tau_{i+1} - t} (x_{\tau_i+1} + \eta(t - \tau_i - 1))\\
        &= \beta(x_{\tau_i+1} + \eta(\tau_{i+1}-\tau_i-1))
        -\beta^{\tau_{i+1}-\tau_i} x_{\tau_i+1}
        -\beta\eta\frac{1-\beta^{\tau_{i+1}-\tau_i-1}}{1-\beta},
    \end{align*}
    we obtain, using the fact that $x_{\tau_{i+1}} = x_{\tau_i+1} + \eta(\tau_{i+1} - (\tau_i+1))$ and $m_{\tau_i}\geq 0$:
    \begin{align*}
        \frac{m_{\tau_{i+1}}}{\Lzero} 
        &=\frac{\beta^{\tau_{i+1}-\tau_i}m_{\tau_i}}{\Lzero} -\parens{(1-\beta)\eps - \beta\parens{1+\frac{\sigmaOne^2}{1+\eps}}}(x_{\tau_i+1}+\eta(\tau_{i+1}-\tau_i-1))\\
        &\quad-\beta^{\tau_{i+1}-\tau_i}\parens{1+\frac{\sigmaOne^2}{1+\eps}}x_{\tau_i+1}\\
        &\quad-\beta\eta\parens{1+\frac{\sigmaOne^2}{1+\eps}}\frac{1-\beta^{\tau_{i+1}-\tau_i-1}}{1-\beta}\\
        &\geq-\parens{(1-\beta)\eps - \beta\parens{1+\frac{\sigmaOne^2}{1+\eps}}}(x_{\tau_{i+1}}+\eta)\\
        &\quad-\beta^{\tau_{i+1}-\tau_i}\parens{1+\frac{\sigmaOne^2}{1+\eps}}x_{\tau_i+1}\\
        &\quad+\eta\parens{(1-\beta)\eps - \beta\parens{1+\frac{\sigmaOne^2}{1+\eps}}\parens{1+\frac{1-\beta^{\tau_{i+1}-\tau_i-1}}{1-\beta}}}.
    \end{align*}
    Thus, since $x_{\tau_{i}+1} \leq x_{\tau_{i+1}} < 0$, and since $\tau_{i+1} < \tau^*$ (which implies, since each update of \eqref{eq:signSGD} satisfies $u_t\in\braces{\pm \eta}$ and by definition of $\tau^*$, $x_{\tau_{i+1}} \leq x_{\tau^*-1} = x_1-\eta < 0$), the above inequality implies that $m_{\tau_{i+1}} > 0$ as long as:
    \begin{align*}
        (1-\beta)\eps - \beta\parens{1+\frac{\sigmaOne^2}{1+\eps}}\parens{1+\frac{1-\beta^{\tau_{i+1}-\tau_i-1}}{1-\beta}}
        > (1-\beta)\eps - \beta\parens{1+\frac{\sigmaOne^2}{1+\eps}}\parens{1+\frac{1}{1-\beta}} > 0.
    \end{align*}
    Since we require $0 \leq \beta < 1$, the second inequality is equivalent to:
    \begin{align*}
        (1-\beta)^2\eps > \parens{1+\frac{\sigmaOne^2}{1+\eps}}\beta(2-\beta),
    \end{align*}
    which is satisfied as long as:
    \begin{align*}
        \beta 
        < 1 - \sqrt{\frac{1+\frac{\sigmaOne^2}{1+\eps}}{1+\eps+\frac{\sigmaOne^2}{1+\eps}}}
        = 1 - \sqrt{1 - \frac{\eps}{1+\eps}\frac{1}{1+\frac{\sigmaOne^2}{1+\eps}}}
        = 1 - \sqrt{1 - \frac{\eps}{1+\eps}\delta}.
    \end{align*}
    Thus, since $\sqrt{1-x} < 1-\nicefrac{x}{2}$ for $0 < x\leq 1$, it suffices to choose $\beta$ as:
    \begin{align*}
        \beta \leq \frac{\eps}{2(1+\eps)}\delta < 1-\sqrt{1-\frac{\eps}{1+\eps}\delta}.
    \end{align*}
    In this case, $m_{\tau_{i+1}} > 0$, and thus $u_t = \eta$, which establishes the induction step. Thus, for every $i\geq 0$, either $\tau_i \geq \tau^*$ or $u_{\tau_i} = \eta$, as claimed.
\end{proof}

\begin{lemma}\label{lem:negativeResultCoupling}
    Let us recall the i.i.d random process $\braces{\multnoiset}_{t\geq 1}$ from \cref{lem:stochasticOracleConstruction}, where each $\multnoiset$ is $-\eps$ with probability $1-\delta$, and $(1+\nicefrac{\sigmaOne^2}{(1+\eps)})$ otherwise.
    Let us distinguish the three processes from \cref{lem:divergenceNormSGD} (\cref{eq:signSGD,eq:clippedSGD,eq:normSGD}) as, respectively, $\braces{x_t(i)}_{t\geq 1}$ for $i\in[3]$.
    Consider the coupling of these three processes, where $x_1(i) = x_1 := -\sqrt{\nicefrac{2\Delta}{\Lzero}}$ for every $i\in[4]$, and for each $t\geq 1$ and $i\in [3]$, $g(x_t(i)) = \multnoiset \gradat{x_t(i)}$.
    Further, let us denote, for each $t\geq 1$:
    \begin{align*}
        u_t(4) = \begin{cases}
            \clipStepMult \eta & \text{if $\multnoiset=-\eps$}\\
            -\eta & \text{o.w.}
        \end{cases}
        \quad\text{where}\quad
        \clipStepMult := \frac{1}{1+\frac{\gamma}{\eps\sqrt{2\Delta\Lzero}}} \in [\nicefrac{1}{2},1],
    \end{align*}
    and take $x_1(4)=x_1$ and $x_{t+1}(4)=x_t(4)-u_t(4)$.
    Further, let, for each $i\in[4]$,
    \begin{align*}
        \tau^*(i) = \min\braces{t > 1 : x_1 \leq x_t(i)}.
    \end{align*}
    Then, under the constraints on parameters of the three algorithms as imposed in \cref{lem:divergenceNormSGD}, we have that:
    \begin{align*}
        \tau^*(4) \leq \min_{i\in[3]}\tau^*(i).
    \end{align*}
\end{lemma}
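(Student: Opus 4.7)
The goal is to show the auxiliary ``simple'' process $\{x_t(4)\}$ reaches the initial iterate $x_1$ no later than any of the three algorithms. The plan is to prove the stronger pathwise inequality $x_s(i) \leq x_s(4)$ for every $i\in[3]$ and every $s \leq \tau^*(4)$. If this holds, then for every $s < \tau^*(4)$ we have $x_s(i) \leq x_s(4) < x_1$, which forces $\tau^*(i) \geq \tau^*(4)$.

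I would prove the inequality by induction on $s$. The base case is trivial since $x_1(i) = x_1 = x_1(4)$ by the coupling. For the inductive step, from $x_{s+1}(i) - x_{s+1}(4) = (x_s(i)-x_s(4)) + (u_s(4)-u_s(i))$ and the hypothesis, it suffices to show $u_s(i) \geq u_s(4)$. I would split on which value of $\multnoiset$ is realized. In the ``bad'' case $\multnoiset = -\eps$, the stochastic gradient is $g_s(x_s(i)) = -\eps \Lzero x_s(i) > 0$ (since the hypothesis $x_s(i) \leq x_s(4) < x_1 < 0$ gives $|x_s(i)| \geq |x_1|$), so every algorithm moves downward. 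I would then check that each algorithm's downward step has magnitude at least $\clipStepMult\eta = u_s(4)$: for \eqref{eq:normSGD}, monotonicity of $y\mapsto y/(\gamma+y)$ and $|x_s(2)|\geq|x_1|$ give $\eta\eps\Lzero|x_s(2)|/(\gamma+\eps\Lzero|x_s(2)|) \geq \eta/(1+\gamma/(\eps\sqrt{2\Delta\Lzero})) = \clipStepMult\eta$; for \eqref{eq:clippedSGD}, the assumption $\gamma\leq\eps\sqrt{2\Delta\Lzero}=\eps\Lzero|x_1|$ together with $|x_s(3)|\geq |x_1|$ puts us in the clipped regime, giving $u_s(3)=\eta\geq\clipStepMult\eta$; for \eqref{eq:signSGD}, \cref{lem:signSGDBadStepCharacterization} applies (once we verify $s<\tau^*(1)$ from the induction hypothesis and that $\beta$ is in the allowable range) and yields $u_s(1)=\eta\geq\clipStepMult\eta$. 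In the ``good'' case $\multnoiset=1+\sigmaOne^2/(1+\eps)$, we simply use that $u_s(4)=-\eta$ while $u_s(i) \geq -\eta$ for each of the three algorithms: \eqref{eq:normSGD} and \eqref{eq:clippedSGD} have $|u_s|\leq\eta$ by direct computation, and \eqref{eq:signSGD} has $u_s(1)\in\{\pm\eta\}$ regardless of the history of $m_t$.

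The main subtlety, and the only step requiring care beyond arithmetic, is the SignSGD bad-step case: I need the momentum-driven update to actually be $+\eta$ (downward), which is not immediate from the sign of the instantaneous stochastic gradient. This is exactly what \cref{lem:signSGDBadStepCharacterization} provides, but it requires $s<\tau^*(1)$; this follows from the induction hypothesis since $x_s(1)\leq x_s(4)<x_1$ for all $s<\tau^*(4)$ already inductively established. Once this is in place, the induction closes and the lemma follows, since for every $s<\tau^*(4)$ we obtain $x_s(i)\leq x_s(4) < x_1$, hence $\tau^*(i)>s$ for all such $s$, i.e., $\tau^*(i) \geq \tau^*(4)$ for every $i\in[3]$.
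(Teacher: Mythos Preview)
Your proposal is correct and follows essentially the same approach as the paper. Both arguments reduce to the per-step comparison $u_s(i)\geq u_s(4)$, handled case-by-case on the algorithm and on the sign of $\multnoiset$, with \cref{lem:signSGDBadStepCharacterization} supplying the only nontrivial case. The paper organizes this by proving $u_t(4)\leq u_t(i)$ directly for all $t<\tau^*(i)$ (where $x_t(i)\leq x_1$ is immediate from the definition of $\tau^*(i)$) and then telescoping to get $x_{\tau^*(i)}(4)\geq x_1$; you instead run an induction on $s\leq\tau^*(4)$ maintaining $x_s(i)\leq x_s(4)$. These are equivalent packagings of the same inequality, and your verification that $s<\tau^*(1)$ follows from the inductive hypothesis is exactly what is needed to invoke \cref{lem:signSGDBadStepCharacterization}.
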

\begin{proof}
    We claim that, for each $i\in [3]$, and any $t < \tau^*(i)$, $u_t(4) \leq u_t(i)$. Notice that, supposing this claim is true, then $\tau^*(4)\leq \tau^*(i)$ for each $i\in[3]$, since, by definition of $\tau^*(i)$:
    \begin{align*}
        x_1 
        \leq x_{\tau^*(i)}(i)
        = x_1 - \sum_{s=1}^{\tau^*(i)-1} u_s(i)
        \leq x_1 - \sum_{s=1}^{\tau^*(i)-1} u_s(4)
        = x_{\tau^*(i)}(4).
    \end{align*}
    Thus, since $\tau^*(i)$ is the first time $t>1$ for which $x_{t}(i) \geq x_1$, it follows that $\tau^*(4) \leq \tau^*(i)$.
    Having established this implication, it suffices to prove the claim for each of the $u_t(i)$s.
    
    For the case of $i=1$ (i.e., algorithm \eqref{eq:signSGD}), this follows immediately from \cref{lem:signSGDBadStepCharacterization}, since this result tells us that whenever $t < \tau^*(1)$ and $\multnoiset = -\eps$, then $u_t(1) = \eta > \eta\clipStepMult = u_t(4)$. Otherwise, whenever $t < \tau^*(1)$ and $\multnoiset = (1 + \nicefrac{\sigmaOne^2}{(1+\eps)})$, then by construction, $u_t(4) = -\eta$, while $u_t(1) \in \braces{\pm \eta}$.

    For the case of $i=2$ (i.e., algorithm \eqref{eq:normSGD}), for every $t < \tau^*(2)$, since $\abs{g(x_t(2))} \geq \eps\Lzero|x_1| \geq \gamma$ by \eqref{eq:negResultsBeforeStopping} and since $\nicefrac{x}{(x + y)}$ is non-decreasing in $x$ on the interval $x\in(0,\infty)$ for any fixed $y \geq 0$,
    \begin{align*}
        \eta \geq \abs{u_t(2)} 
        = \eta \frac{\abs{g(x_t(2))}}{\gamma + \abs{g(x_t(2))}}
        \geq \eta\frac{\eps\Lzero\abs{x_1}}{\gamma + \eps\Lzero\abs{x_1}}
        = \frac{\eta}{\frac{\gamma}{\eps\Lzero\abs{x_1}} + 1}
        = \clipStepMult\eta
        \geq \frac{\eta}{2}.
    \end{align*}
    Thus, when $t < \tau^*(2)$ and $\multnoiset= -\eps$, $u_t(2) \geq \clipStepMult\eta = u_t(4)$, and when $t < \tau^*(2)$ and $g_t = (1+\nicefrac{\sigmaOne^2}{(1+\eps)})\Lzero x_t$, $u_t(2) \geq - \eta = \tu_t$.

    For the case of $i=3$ (i.e., algorithm \eqref{eq:clippedSGD}), for every $t < \tau^*$, $\abs{g_t} > \gamma$ by \eqref{eq:negResultsBeforeStopping}, which implies that $\abs{u_t} = \nicefrac{\eta\abs{g_t}}{\abs{g_t}} = \eta$. 
    Thus, when $t < \tau^*$ and $g_t = -\eps\Lzero x_t$ (notice $g_t > 0$ in this case), $u_t(3) = \eta \geq \clipStepMult\eta = \tu_t$, and when $t < \tau^*$ and $g_t = (1+\nicefrac{\sigmaOne^2}{(1+\eps)})\Lzero x_t$, $u_t(3) = - \eta = \tu_t$. 
    
    Therefore, the claim is established in all three cases, which also concludes the proof.
\end{proof}

\begin{lemma}\label{lem:negResultStoppingTimeProbBound}
    Consider the algorithm $4$ as defined in \cref{lem:negativeResultCoupling}. Then, under the assumptions of \cref{lem:divergenceNormSGD}, we have that, for any $T\geq 1$ and any $t_0 \geq 0$,
    \begin{align*}
        \PRO{\tau^*(4) > T}
        \geq (1-\delta)^{t_0}\parens{1-\sum_{t=t_0+2}^T \PRO{\frac{1}{t-t_0-1}(X_t - \EXP{X_t}) \leq - (\delta_0-\delta) - \frac{\delta_0 t_0}{t - t_0 -1}}},
    \end{align*}
    where $X_t = \sum_{s=t_0+1}^{t-1}\1{\E_s}$ is a sum of $t-t_0-1$ i.i.d Bernoulli random variables with mean $1-\delta=1-\frac{1}{(1+\nicefrac{\sigmaOne^2}{(1+\eps)^2})}$. $\nicefrac{1}{\clipStepMult} := 1 + \nicefrac{\gamma}{\eps\sqrt{2\Delta\Lzero}})$ and $\delta_0 = \nicefrac{1}{1+\nicefrac{1}{\clipStepMult}}$.
\end{lemma}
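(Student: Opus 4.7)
The plan is to decompose the analysis by conditioning on the first $t_0$ steps being ``downward'' steps (so that they contribute a deterministic buffer away from $x_1$), and then applying a union bound over the remaining times. Let $N_t := \sum_{s=1}^{t-1}\1{\multnoiseat{s}=-\eps}$ count the steps in $[1,t-1]$ on which $u_s(4) = \clipStepMult\eta$; on the remaining $t-1-N_t$ steps, $u_s(4) = -\eta$. Summing the updates gives
\begin{align*}
x_t(4) = x_1 + (t-1)\eta - N_t(1+\clipStepMult)\eta,
\end{align*}
and since $1 - \delta_0 = 1/(1+\clipStepMult)$, we obtain the clean equivalence $\braces{x_t(4) \geq x_1} = \braces{N_t \leq (1-\delta_0)(t-1)}$. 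Hence $\braces{\tau^*(4) > T}$ is the intersection over $t \in [2,T]$ of $\braces{N_t > (1-\delta_0)(t-1)}$.

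Let $\A_{t_0} := \cap_{s=1}^{t_0}\braces{\multnoiseat{s}=-\eps}$, which has probability $(1-\delta)^{t_0}$ by i.i.d.\ of the noise. Conditioning on $\A_{t_0}$ leaves the indicators $\1{\multnoiseat{s}=-\eps}$ for $s \geq t_0+1$ still i.i.d.\ Bernoulli$(1-\delta)$, so $X_t := \sum_{s=t_0+1}^{t-1}\1{\multnoiseat{s}=-\eps}$ retains the distribution stated in the lemma. For $t \in [2,t_0+1]$, on $\A_{t_0}$ we have $N_t = t-1 > (1-\delta_0)(t-1)$ (using $\delta_0 > 0$), so no failure can occur at such $t$. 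For $t \in [t_0+2,T]$ on $\A_{t_0}$, we have $N_t = t_0 + X_t$, and the failure event $\braces{N_t \leq (1-\delta_0)(t-1)}$ becomes $\braces{X_t \leq (1-\delta_0)(t-1) - t_0}$. A short computation using $\EXP{X_t} = (1-\delta)(t-t_0-1)$ shows
\begin{align*}
(1-\delta_0)(t-1) - t_0 - \EXP{X_t} = (\delta-\delta_0)(t-t_0-1) - \delta_0 t_0,
\end{align*}
so dividing through by $t-t_0-1$ rewrites this failure event as the normalized deviation event on the right-hand side of the lemma.

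Combining via a union bound over $t \in [t_0+2,T]$ and the lower bound $\PRO{\tau^*(4) > T} \geq \PRO{\A_{t_0}}\PRO{\tau^*(4) > T \mid \A_{t_0}}$ yields the claimed inequality. The only subtlety is the algebraic translation of $\braces{N_t \leq (1-\delta_0)(t-1)}$ into the normalized deviation event on $X_t$ with the extra additive $\delta_0 t_0/(t-t_0-1)$ term; once that bookkeeping is in place, the argument is purely conditioning plus union bound, and no concentration input is needed at this stage (the Chernoff-Hoeffding bound is applied only later, when \cref{lem:negResultStoppingTimeProbBound} is invoked in the proof of \cref{lem:divergenceNormSGD}).
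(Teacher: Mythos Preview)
The proposal is correct and follows essentially the same approach as the paper: both condition on the event that the first $t_0$ multiplicative noises are $-\eps$ (giving the factor $(1-\delta)^{t_0}$ and a deterministic buffer), use independence to decouple the remaining steps, apply a union bound over $t\in[t_0+2,T]$, and translate the failure event into the normalized-deviation form via the identity $1-\delta_0=1/(1+\clipStepMult)$. The only cosmetic difference is that you track the count $N_t$ of ``down'' steps directly while the paper tracks the net displacement $\tN_{t_1,t_2}$; the algebra is the same.
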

\begin{proof}
    Recall the construction of algorithm $4$ from \cref{lem:negativeResultCoupling}.
    Denote $\E_s = \braces{\multnoiseat{s} = (1+\nicefrac{\sigmaOne^2}{(1+\eps)})}$, and recall that $\PRO{\E_s}=\delta =\frac{1}{1+\nicefrac{\sigmaOne^2}{(1+\eps)^2}}$. Let us write:
    \begin{align*}
        \tN_{t_1,t_2}
        = -(x_t(4) - x_1)
        =\sum_{s=t_1}^{t_2} u_s(4)
        =\sum_{s=t_1}^{t_2} -\eta\1{\E_s} + \clipStepMult\eta\1{\E_s^c},
    \end{align*}
    as the ``net movement'' of algorithm $4$ to the left of $x_{t_1}$ after $t_2-t_1 +1$ time steps.
    and observe that
    \begin{align*}
        \EXP{\tN_{t_1,t_2}} 
        = \sum_{s=t_1}^{t_2} -\eta\delta + \clipStepMult\eta(1-\delta)
        = \clipStepMult\eta\parens{1-\parens{1 + \frac{1}{\clipStepMult}}\delta} (t_2-t_1+1).
    \end{align*}
    Additionally, note that, recalling the definition of $\tau^*(4)$ from \cref{eq:negResultStoppingTime},
    \begin{align*}
        \braces{\tau^*(4) > T}
        = \braces{\forall t \in [2,T] : x_t(4) < x_1}
        &= \braces{\forall t \in [2,T] : -(x_t(4)-x_1) > 0}\\
        &= \braces{\forall t \in [2,T] : \tN_{1,t-1} > 0}.
    \end{align*}
    Therefore, we have that, for any $t_0\geq 0$,
    \begin{align*}
        \PRO{\tau^*(4) > T}
        = \PRO{\forall t \in [2,T] : \tN_{1,t-1} > 0}
        \geq \PRO{\braces{\forall t \in [2,T] : \tN_{1,t-1} > 0} \cap \bigcap_{s\in[t_0]} \E_s^c}.
    \end{align*}
    Further, since the stochastic gradient of algorithm $4$ uses i.i.d multiplicative noise at each round (i.e., the events $\braces{\E_s}_{s\in[T]}$ are mutually independent and $\PRO{\E_s}=\delta$ for every $s$), and since the event $\E_s^c$ implies that $x_{s+1}(i) = x_s(i) - \clipStepMult\eta$ for each algorithm $i$, we have that for any $t_0 \geq 0$,
    \begin{align*}
        &\PRO{\braces{\forall t \in [2,T] : \tN_{1,t-1} > 0} \cap \bigcap_{s\in[t_0]} \E_s^c}\\
        &=\PRO{\braces{\forall t \in [t_0+2,T] : \tN_{t_0+1,t-1} > -\clipStepMult t_0\eta} \cap \bigcap_{s\in[t_0]} \E_s^c}\\
        &=(1-\delta)^{t_0}\PRO{\forall t \in [t_0+2,T] : \tN_{t_0+1,t-1} > -\clipStepMult t_0\eta}.
    \end{align*}
    Now, since
    \begin{align*}
        &\PRO{\forall t \in [t_0+2,T] : \tN_{t_0+1,t-1} > -\clipStepMult t_0\eta}\\
        &=1-\PRO{\exists t \in [t_0+2,T] : \tN_{t_0+1,t-1} \leq -\clipStepMult t_0\eta}\\
        &\geq 1-\sum_{t=t_0+2}^T\PRO{\tN_{t_0+1,t-1} \leq -\clipStepMult t_0\eta},
    \end{align*}
    it remains only to upper-bound each probability inside of the above summation.
    To do this, let us denote, for any $t \in [t_0+2, T]$, 
    \begin{align*}
        X_t 
        = \sum_{s=t_0+1}^{t-1} \1{\E_s^c}
        = \frac{1}{(1+\clipStepMult)\eta} \tN_{t_0+1,t-1} + \frac{t-t_0-1}{1+\clipStepMult}.
    \end{align*}
    Thus, $X_t$ is a sum of i.i.d Bernoulli random variables, each with mean $1-\delta = 1-\frac{1}{1+\nicefrac{\sigmaOne^2}{(1+\eps)^2}} > \nicefrac{1}{2}$ (since, by assumption, $\sigmaOne > (1+\eps)$). We may therefore apply \citep[Theorem 1, Eq. (2.2)]{H63}, denoting $\delta_0 := 1-\frac{1}{1+\clipStepMult} = \nicefrac{\clipStepMult}{(1+\clipStepMult)}$, to obtain:
    \begin{align*}
        \PRO{\tN_{t_0+1,t-1} \leq -\clipStepMult t_0\eta}
        &=\PRO{(1+\clipStepMult)\eta X_t - \eta (t-t_0-1) \leq -\clipStepMult t_0\eta}\\
        &=\PRO{\frac{1}{t-t_0-1}(X_t - \EXP{X_t})  \leq -\parens{\delta_0-\delta}-\frac{\delta_0 t_0}{t-t_0-1}}.
    \end{align*}
    Collecting the above results, we arrive at the claimed lower bound.
\end{proof}

\subsection{Full statement and proof for negative result for \eqref{eq:alg} in the ``large $\sigmaOne$'' regime}

\begin{restatable}[Formal statement of \cref{lem:divergenceAGNormMain}]{lemma}{restateDivergenceAGNorm}\label{lem:divergenceAGNorm}
    Fix any $\Lone > 0$, $x_1 \in \R$, and $\sigmaOne > 1$. Let $T\geq 1$, $\eta > 0$, $\eps\in (0,1)$ and $0 < b_0^2 \leq \eps^2\Lone^2\exp(2\Lone x_1)$ be arbitrary parameters (possibly dependent on $\Lone, x_1$, and $\sigmaOne$). Then, there exists a $1$-dimensional $(0, (e-1)\Lone)$-smooth function such that $\fstar = 0$, and a stochastic gradient oracle satisfying \cref{assump:unbiasedGrad,assump:affineVariance} with $\sigmaZero=0$ and the specified $\sigmaOne$, such that, if \eqref{eq:alg} is run for $T$ time steps using parameters $\eta$ and $b_0^2$, then then the resulting iterates $\braces{x_t}_{t\in[T]}$ satisfy:
    \begin{align*}
        \PRO{\min_{t\in[T]} \gradsqat{x_t} = \gradsqat{x_1}} \geq \parens{1 - \frac{1}{1 + \frac{\sigmaOne^2}{(1+\eps)^2}}}^{t_0},
    \end{align*}
    where 
    \begin{align*}
        t_0 = \parens{1+\sqrt{2} + \frac{\log(T-1)}{2\eta\Lone}}^2 - 2.
    \end{align*}
    In particular, whenever $\eta \geq \nicefrac{\alpha}{\Lone}$ for some $\alpha > 0$, and, for any $\delta \in (0,1)$,
    \begin{align*}
        \sigmaOne^2 \geq \frac{1}{\log(\nicefrac{1}{(1-\delta)})}(1+\eps)^2 \parens{\parens{1 + \sqrt{2} + \frac{\log(T-1)}{\alpha}}^2 - 2},
    \end{align*}
    then
    \begin{align*}
        \PRO{\min_{t\in[T]} \gradsqat{x_t} = \gradsqat{x_1}} \geq 1-\delta.
    \end{align*}
\end{restatable}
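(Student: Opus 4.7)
My plan is to take $F(x) := \exp(\Lone x)$, which by \cref{prop:smoothPartialEquiv} is $(0,(e-1)\Lone)$-smooth (since $F''=\Lone F'$) and satisfies $\fstar = \inf_{x\in\R}F(x) = 0$, paired with the stochastic gradient oracle of \cref{lem:stochasticOracleConstruction} instantiated with $\sigmaZero = 0$ and the given $\sigmaOne, \eps$. Writing $G_t := F'(x_t) = \Lone e^{\Lone x_t}>0$, strict monotonicity of $F'$ turns the target event $\{\min_{t\in[T]}\gradsqat{x_t}=\gradsqat{x_1}\}$ into $\{x_t\geq x_1~\forall t\in[T]\}$. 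Let $\E$ be the event that the first $t_0$ multiplicative noises all equal $-\eps$; by independence, $\PRO{\E}=(1-\delta)^{t_0}$ with $\delta = 1/(1+\sigmaOne^2/(1+\eps)^2)$, so it suffices to show $x_t\geq x_1$ for every $t\in[T]$ on $\E$.

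First I would analyze the bad phase. On $\E$, $g_s=-\eps G_s<0$ for $s\leq t_0$, so $x_{s+1}-x_s=\eta\eps G_s/b_s>0$ and the $G_s$ are strictly increasing. The hypothesis $b_0^2\leq \eps^2 G_1^2$ combined with $G_r\leq G_s$ for $r\leq s$ yields $b_s^2\leq \eps^2 G_s^2(s+1)$, hence $x_{s+1}-x_s\geq \eta/\sqrt{s+1}$. Summing and using $\sum_{s=1}^{t_0}1/\sqrt{s+1}\geq 2(\sqrt{t_0+2}-\sqrt{2})$ together with the definition of $t_0$ gives
\[
    \Lone(x_{t_0+1}-x_1)\geq 2\Lone\eta(\sqrt{t_0+2}-\sqrt{2}) = \log(T-1)+2\Lone\eta,
\]
so $G_{t_0+1}/G_1\geq (T-1)e^{2\Lone\eta}$; an analogous one-step-earlier estimate (using $\sqrt{t_0+1}\geq \sqrt{t_0+2}-1$) gives $G_{t_0}/G_1\geq T-1$, whence $b_{t_0}\geq \eps G_{t_0}\geq \eps(T-1)G_1$.

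For $t>t_0$, a bad step only pushes $x_t$ rightward, so it suffices to preclude descent below $x_1$ in the worst-case ``good-only'' scenario where $\multnoiseat{s}=\gamma:=1+\sigmaOne^2/(1+\eps)$ for every $s>t_0$; then $\{x_t\}_{t>t_0}$ monotonically decreases. Writing $H_j:=G_{t_0+j}$, $B_j:=b_{t_0+j}$, $K:=T-t_0-1$, I would pair the two telescoping inequalities
\[
    B_j-B_{j-1} = \frac{\gamma^2 H_j^2}{B_j+B_{j-1}} \geq \frac{\gamma^2 H_j^2}{2B_j}, \qquad H_j-H_{j+1}\leq \frac{\Lone\eta\gamma H_j^2}{B_j}
\]
(the first algebraic; the second from $1-e^{-y}\leq y$) and divide termwise to get $B_K-B_0\geq \gamma(H_1-H_{K+1})/(2\Lone\eta)$, a discrete analogue of the ODE identity $dB/dH=-\gamma/(2\Lone\eta)$. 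Combining with a matching upper bound $B_K-B_0\leq \gamma H_1/(2\Lone\eta)\cdot(1+o(1))$---obtained by bootstrapping the near-geometric decay of the $H_j$ to control $\sum_{r\leq K}H_r^2$---assuming $H_{K+1}<G_1$ produces the algebraic inequality $(T-1)e^{2\Lone\eta}\leq 2\Lone^2\eta^2 T$. This fails whenever $2\Lone\eta-2\log(\Lone\eta)\geq \log 4$, which holds for \emph{every} $\Lone\eta>0$ because $u\mapsto 2u-2\log u$ attains its global minimum value $2>\log 4$ at $u=1$. Hence $H_{K+1}\geq G_1$, so $x_T\geq x_1$; combining with $\PRO{\E}=(1-\delta)^{t_0}$ gives the main claim, and the constant-probability corollary follows via $(1-\delta)^{t_0}\geq 1-\delta t_0$ and the hypothesized lower bound on $\sigmaOne^2$.

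The main obstacle is closing the bootstrap on $\sum_{r\leq K}H_r^2$: the decay rate of $H_j$ depends on $B_j$, while $B_j^2$ is built from the $\{H_r^2\}_{r\leq j}$, creating a self-referential estimate. Naive alternatives---$\sum H_r^2\leq KH_1^2$, or Cauchy--Schwarz applied to $\sum H_j/B_j$---loosen by a factor of $\sqrt{\Lone\eta K}$, precisely enough to break the contradiction argument above, so one must instead exploit the exponential shrinkage of the $H_j$ in the very first few steps of the recovery phase, where the bound $B_j\gtrsim \gamma H_1\sqrt{j}$ is tight.
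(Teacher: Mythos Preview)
Your bad-phase analysis matches the paper's, but your recovery-phase argument takes a genuinely different route and both obstacles you flag are real gaps. The reduction to the ``all-good'' scenario is not justified: a bad step after $t_0$ does push $x_t$ rightward, but it contributes only $\eps^2 G_t^2$ (rather than $\gamma^2 G_t^2$) to $b_t^2$, so subsequent descent steps become \emph{larger} than in the all-good path---the trajectory is not monotone in the noise in the way you need, so there is no one-line coupling that makes the all-good realization the pointwise minimizer. And the self-referential bootstrap on $\sum_j H_j^2$ does not obviously close with the constant you need: the decay rate of $H_j$ is $L_1\eta\gamma H_j/B_j$, while $B_j$ depends on precisely the partial sums you are trying to control, so the circularity you identify is the heart of the matter rather than a detail.

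The paper avoids both issues by handling arbitrary noise realizations after $t_0$ directly. It tracks record-low times $\tau_0<\tau_1<\cdots$, where $\tau_0$ is the first $t\geq t_0$ with $\nabla F(x_{t+1})\leq \nabla F(x_{t_0+1})e^{-L_1\eta\alpha}$ (for a free parameter $\alpha$) and $\tau_{i+1}$ is the first $t>\tau_i$ with $\nabla F(x_{t+1})<\nabla F(x_{\tau_i+1})$. The observation that replaces your bootstrap entirely is this: the \emph{first} good step after $t_0$ necessarily occurs at a point with gradient at least $\nabla F(x_{t_0+1})$ (since only bad steps precede it), so from that moment on $b_t\geq\gamma\,\nabla F(x_{t_0+1})$ deterministically. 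Meanwhile at each record-low time one has $\nabla F(x_{\tau_{j+1}})\leq \nabla F(x_{t_0+1})e^{-L_1\eta(\alpha-1)}$ (one bounded step past the threshold), so the descent there is at most $\eta e^{-L_1\eta(\alpha-1)}$. Summing over at most $T$ record lows and taking $\alpha=\log(T-1)/(\eta L_1)$ bounds the total drop from $x_{\tau_0+1}$ directly---no worst-case reduction, no self-referential estimate. As a minor point, your final corollary step via $(1-\delta)^{t_0}\geq 1-\delta t_0$ does not recover the stated condition on $\sigmaOne^2$; the paper instead uses $(1-\delta)^{t_0}>\exp(-\delta t_0/(1-\delta))$ with $\delta/(1-\delta)=(1+\eps)^2/\sigmaOne^2$.
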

\begin{proof}
    Let $\fat{x} = \exp(\Lone x)$. Notice that, since $\hessat{x} = \Lone \gradat{x}$, it follows from \cref{prop:smoothPartialEquiv} that $\fat{\cdot}$ is $(0,(e-1)\Lone)$-smooth. Clearly $\fstar = \inf_{x\in\R}\exp(\Lone x) = 0$. Further, consider the stochastic gradient oracle from \cref{lem:stochasticOracleConstruction}, which, for the iterate $x_t$ at time $t$, first draws an i.i.d sample:
    \begin{align*}
        \multnoiset = \begin{cases}
            - \eps & \text{w.p. $1-\delta = 1-\frac{1}{1+\nicefrac{\sigmaOne^2}{(1+\eps)^2}}$}\\
            \parens{1 + \frac{\sigmaOne^2}{1 + \eps}} & \text{w.p. $\delta = \frac{1}{1+\nicefrac{\sigmaOne^2}{(1+\eps)^2}}$},
        \end{cases}
    \end{align*}
    and $g(x_t) = \multnoiset \gradat{x_t}$.
    As established in \cref{lem:stochasticOracleConstruction}, this oracle satisfies \cref{assump:unbiasedGrad,assump:affineVariance} with $\sigmaZero = 0$ and the specified $\sigmaOne > 1$.

    Let us define, for a parameter $t_0 \geq 1$ to be determined shortly:
    \begin{align*}
        \ncEvent := \braces{\forall t\in[t_0] : g(x_t) = -\eps\gradat{x_t}}.
    \end{align*}
    Now, since the noise is sampled i.i.d at each time step, we have that, for any $t_0 \geq 0$:
    \begin{align*}
        \PRO{\ncEvent}
        =\PRO{\forall t\in[t_0] : \multnoiset = -\eps}
        =(1-\delta)^{t_0}.
    \end{align*}
    Whenever $\ncEvent$ is true, notice that:
    \begin{align*}
        \gradat{x_{t_0+1}}
        = \Lone\exp\parens{\Lone x_{t_0 + 1}}
        &= \Lone\exp\parens{\Lone x_{1} + \Lone\sum_{t=1}^{t_0} x_{t+1}-x_t}\\
        &= \Lone\exp\parens{\Lone x_{1} + \Lone\eta\sum_{t=1}^{t_0} \frac{g(x_t)}{\sqrt{b_0^2 + \sum_{s=1}^t \normSq{g(x_s)}}}}\\
        &= \Lone\exp\parens{\Lone x_{1} + \Lone\eta\sum_{t=1}^{t_0} \frac{\eps\gradat{x_t}}{\sqrt{b_0^2 + \sum_{s=1}^t \eps^2\normSq{\gradat{x_s}}}}}.
    \end{align*}
    Now, using the fact that, whenever $\ncEvent$ is true, then $\gradat{x_t} \leq \gradat{x_{t+1}}$ for each $t\in[t_0]$, and assuming $b_0^2 \leq \eps^2 \normSq{\gradat{x_1}}$, we can bound
    \begin{align*}
        \sum_{t=1}^{t_0} \frac{\eps\gradat{x_t}}{\sqrt{b_0^2 + \sum_{s=1}^t \eps^2\normSq{\gradat{x_s}}}}
        &\geq \sum_{t=1}^{t_0} \frac{\eps^2\gradat{x_t}}{\sqrt{\eps\normSq{\gradat{x_1}} + \eps^2 t\normSq{\gradat{x_t}}}}\\
        &\geq \sum_{t=1}^{t_0} \frac{1}{\sqrt{t+1}}\\
        &\geq \int_{2}^{t_0+2} \frac{1}{\sqrt{t}} dt\\
        &= 2(\sqrt{t_0 + 2} - \sqrt{2}).
    \end{align*}
    Thus, we conclude that:
    \begin{align*}
        \gradat{x_{t_0+1}} \geq \Lone\exp(\Lone(x_1 + 2\eta\sqrt{t_0+2} - 2\eta\sqrt{2})).
    \end{align*}
    Now, for a parameter $\alpha > 0$ to be determined shortly, let us define:
    \begin{align*}
        \tau_0 = \min\braces{t \geq t_0 : \gradat{x_{t+1}} \leq \gradat{x_{t_0+1}}\exp(-\Lone\eta\alpha)},
    \end{align*}
    and let, for each $i\geq 0$,
    \begin{align*}
        \tau_{i+1} = \min\braces{t \geq \tau_{i} : \gradat{x_{t+1}} < \gradat{x_{\tau_{i}}}}.
    \end{align*}
    Notice that, by construction, $\multnoiseat{\tau_i} = 1+\nicefrac{\sigmaOne^2}{(1+\eps)}$ for every $i\geq 0$. Further, $x_{\tau_i+1} \leq x_{\tau_{i+1}}$ since $\tau_{i+1}$ is the first time after $\tau_i$ satisfying $\gradat{x_{\tau_{i+1}+1}} < \gradat{x_{\tau_i+1}}$, or equivalently, $x_{\tau_{i+1}+1} < x_{\tau_{i}+1}$. This implies that
    \begin{align*}
        x_{\tau_i + 1}
        &= x_{\tau_0+1} + \sum_{j=0}^{i-1} x_{\tau_{j+1}+1} - x_{\tau_j+1}
        \geq x_{\tau_0+1} + \sum_{j=0}^{i-1} x_{\tau_{j+1}+1} - x_{\tau_{j+1}}\\
        &= x_{\tau_0+1} - \sum_{j=0}^{i-1} \frac{\eta \parens{1 + \frac{\sigmaOne^2}{1+\eps}}\gradat{x_{\tau_{j+1}}}}{\sqrt{b_0^2 + \sum_{s=1}^t g_t^2}}
        \geq x_{\tau_0+1} - \sum_{j=0}^{i-1} \frac{\eta \parens{1 + \frac{\sigmaOne^2}{1+\eps}}\gradat{x_{\tau_{j+1}}}}{\sqrt{\parens{1+\frac{\sigmaOne^2}{1+\eps}}^2\normSq{\gradat{x_{t_{0}+1}}}}}.
    \end{align*}
    Now, notice that:
    \begin{align*}
         \gradat{x_{\tau_{j+1}}} 
         &= \Lone\exp(\Lone x_{\tau_{j+1}+1} + \Lone(x_{\tau_{j+1}} - x_{\tau_{j+1}+1}))\\
         &= \gradat{x_{\tau_{j+1}+1}}\exp(\Lone(x_{\tau_{j+1}} - x_{\tau_{j+1}+1}))\\
         &< \gradat{x_{\tau_{0}+1}}\exp(\Lone(x_{\tau_{j+1}} - x_{\tau_{j+1}+1}))\\
         &\leq \gradat{x_{t_{0}+1}}\exp(\Lone(x_{\tau_{j+1}} - x_{\tau_{j+1}+1} - \eta\alpha))\\
         &\leq \gradat{x_{t_{0}+1}}\exp(-\eta\Lone(\alpha-1)),
    \end{align*}
    from which we obtain the bound:
    \begin{align*}
        x_{\tau_i + 1} 
        \geq x_{\tau_0+1} - \eta\sum_{j=0}^{i-1}\exp(-\eta\Lone(\alpha-1))
        = x_{\tau_0+1} - i\eta\exp(-\eta\Lone(\alpha-1)).
    \end{align*}
    Now, by construction of the $\tau_i$, we have that, assuming $\ncEvent$ is true, then
    \begin{align*}
        \min_{t\in[T]} \gradsqat{x_t}
        &= \min_{t\in[t_0+2, T]} \min\braces{\gradsqat{x_1}, \gradsqat{x_t}}\\
        &\geq \min_{i\in[0,T-1]} \min\braces{\gradsqat{x_1}, \gradsqat{x_{\tau_i+1}}}.
    \end{align*}
    Thus, to ensure that $\min_{t\in[T]}\gradsqat{x_t} = \gradsqat{x_1}$, it suffices to have that, for every $i < T$, $\gradsqat{x_{\tau_i}} \geq \gradsqat{x_1}$. Now, notice that
    \begin{align*}
        \gradat{x_{\tau_i+1}}
        &= \Lone \exp(\Lone(x_{\tau_i+1}))\\
        &\geq \Lone \exp(\Lone(x_{\tau_0+1}) - i\eta\exp(-\eta\Lone(\alpha-1)))\\
        &= \gradat{x_{\tau_0 + 1}}\exp(-i\Lone\eta\exp(-\eta\Lone(\alpha-1)))\\
        &= \gradat{x_{\tau_0}}\exp(\Lone(x_{\tau_0+1} - x_{\tau_0}) - i\Lone\eta\exp(-\eta\Lone(\alpha-1)))\\
        &> \gradat{x_{t_0+1}}\exp(-\Lone\eta\alpha + \Lone(x_{\tau_0+1} - x_{\tau_0}) - i\Lone\eta\exp(-\eta\Lone(\alpha-1)))\\
        &\geq \gradat{x_{1}}\exp(2\eta\Lone(\sqrt{t_0+2} - \sqrt{2})-\Lone\eta(\alpha+1) - i\Lone\eta\exp(-\eta\Lone(\alpha-1))).
    \end{align*}
    Thus, it suffices to establish conditions under which
    \begin{align*}
        2\sqrt{t_0 + 2} \geq 2\sqrt{2} + \alpha+1 + i\exp(-\Lone\eta(\alpha-1)).
    \end{align*}
    Thus, if we choose $\alpha = \nicefrac{\log(T-1)}{\eta\Lone}$, then it suffices to take:
    \begin{align*}
        t_0 = \parens{1+\sqrt{2} + \frac{\log(T-1)}{2\eta\Lone}}^2 - 2,
    \end{align*}
    in which case $\min_{t\in[T]} \gradsqat{x_t} = \gradsqat{x_1}$ under $\ncEvent$. Hence,
    \begin{align*}
        \PRO{\min_{t\in[T]} \gradsqat{x_t} = \gradsqat{x_1}}
        &\geq \PRO{\ncEvent}\\
        &= \parens{1-\frac{1}{1 + \frac{\sigmaOne^2}{1+\eps}}}^{t_0}.
    \end{align*}
    In particular, using the fact that $1 - x > \exp(\nicefrac{-x}{(1-x)})$ for $x<1$, it follows that:
    \begin{align*}
        \PRO{\min_{t\in[T]} \gradsqat{x_t} = \gradsqat{x_1}}
        &\geq \exp\parens{-\frac{(1+\eps)^2 t_0}{\sigmaOne^2}}\\
        &\geq \exp\parens{-\frac{(1+\eps)^2 \parens{\parens{1 + \sqrt{2} + \frac{\log(T-1)}{\eta\Lone}}^2 - 2}}{\sigmaOne^2}}
    \end{align*}
    Hence, as long as, for some $\delta \in (0,1)$,
    \begin{align*}
        \sigmaOne^2 \geq \frac{1}{\log(\nicefrac{1}{(1-\delta)})}(1+\eps)^2 \parens{\parens{1 + \sqrt{2} + \frac{\log(T-1)}{\eta\Lone}}^2 - 2},
    \end{align*}
    then $\PRO{\min_{t\in[T]} \gradsqat{x_t} = \gradsqat{x_1}} \geq 1-\delta$.
\end{proof}

\end{document}